\documentclass[10pt,fullpage,letterpaper]{article}

\oddsidemargin 0in
\evensidemargin 0in
\textwidth 6.5in
\topmargin -0.5in
\textheight 9.0in
\usepackage[linesnumbered,ruled,vlined]{algorithm2e}
\usepackage{enumerate}
\usepackage{amsmath, amsfonts, amssymb}
\usepackage{amsthm}
\usepackage{mathrsfs}
\usepackage{bm}
\usepackage{graphicx}
\usepackage{alltt}
\usepackage{tikz}
\usepackage{float}
\usepackage{booktabs}
\usepackage{hyperref}
\usepackage{array}
\usepackage{dsfont}
\usepackage{nicefrac}
\usepackage{comment}
\usepackage{anyfontsize}
\usepackage{longtable}
\usepackage{natbib}
\usepackage{wrapfig}
\allowdisplaybreaks

\DeclareMathOperator*{\argmax}{argmax}

\newcommand{\Sp}[1]{\left(#1\right)}
\newcommand{\Mp}[1]{\left[#1\right]}
\newcommand{\Bp}[1]{\left\{#1\right\}}
\newcommand{\abs}[1]{\left|#1\right|}
\newcommand{\Norm}[1]{\left\|#1\right\|}
\newcommand{\ve}[1]{\mathbf{#1}}

\newcommand{\inner}[1]{\left\langle#1\right\rangle}

\newcommand{\E}{\mathbb{E}}
\renewcommand{\P}{\mathbb{P}}
\newcommand{\Pp}{\mathcal{P}}
\renewcommand{\S}{\mathcal{S}}
\newcommand{\R}{\mathbb{R}}
\newcommand{\Rr}{\mathcal{R}}
\newcommand{\A}{\mathcal{A}}
\newcommand{\G}{\mathcal{G}}

\newcommand{\bV}{\overline{V}}

\newcommand{\hsa}{\left(h, s, a\right)}
\newcommand{\bbV}{\mathbb{V}}
\newcommand{\ogk}{\mathds{1}\left\{\mathcal{G}_k\right\}}
\newcommand{\sumogk}{\sum_{k=1}^{K}\sum_{h=1}^{H-1}\mathds{1}\left\{\mathcal{G}_k\right\}}
\newcommand{\Be}{\mathrm{Be}}
\newcommand{\Ho}{\mathrm{Ho}}
\newcommand{\ty}{\mathrm{ty}}
\newcommand{\ob}{\overline{b}}
\newcommand{\ub}{\underline{b}}

\newtheorem{theorem}{Theorem}
\newtheorem{lemma}{Lemma}
\newtheorem{definition}{Definition}

\newcommand{\Oo}{\mathcal{O}}
\newcommand{\hz}{\hat{z}}
\newcommand{\bv}{\overline{V}}
\newcommand{\ov}{\overline{V}}
\newcommand{\uv}{\underline{V}}

\newcommand{\oh}{\overline{\mathcal{H}}}

\newcommand{\pik}{{\pi^k}}
\newcommand{\hk}{{h,k}}

\newcommand{\hsahk}{{h,s^k_h,a^k_h}}

\newcommand{\uw}{\underline{w}}
\newcommand{\od}{\overline{\delta}}
\newcommand{\ud}{\underline{\delta}}

\newcommand{\hR}{\hat{R}}
\newcommand{\hP}{\hat{P}}
\newcommand{\M}{\mathcal{M}}
\newcommand{\um}{\underline{M}}

\newcommand{\upm}{\overline{\overline{M}}}
\newcommand{\upv}{\overline{\overline{V}}}
\newcommand{\upw}{\overline{w}}
\newcommand{\upd}{\overline{\overline{\delta}}}

\newcommand{\clip}{\mathrm{clip}}

\newcommand{\algonamefull}{Single Seed Randomization}
\newcommand{\algoname}{\textsf{SSR}}

\usepackage{color}
\definecolor{ForestGreen}{rgb}{0.1333,0.5451,0.1333}
\hypersetup{colorlinks,
	linkcolor=ForestGreen,
	citecolor=ForestGreen,
	urlcolor=black,
	linktocpage,
	plainpages=false}

\title{Near-Optimal Randomized Exploration for Tabular Markov Decision Processes}
\author{Zhihan Xiong\footnote{Equal contribution}\\ \url{zhihanx@cs.washington.edu} \and Ruoqi Shen\footnotemark[\value{footnote}] \\ \url{shenr3@cs.washington.edu} \and Qiwen Cui\footnotemark[\value{footnote}] \\ \url{qwcui@cs.washington.edu} \and Maryam Fazel \\ \url{mfazel@uw.edu} \and Simon S. Du \\\url{ssdu@cs.washington.edu}}
\date{}

\begin{document}
\maketitle

\renewcommand*{\thefootnote}{\arabic{footnote}}

\begin{abstract}
We study algorithms using randomized value functions for exploration in reinforcement learning. This type of algorithms enjoys appealing empirical performance. 
We show that when we use 1) a single random seed in each episode, and 2) a Bernstein-type magnitude of noise, we obtain a worst-case $\widetilde{O}\left(H\sqrt{SAT}\right)$ regret bound for episodic time-inhomogeneous Markov Decision Process where $S$ is the size of state space, $A$ is the size of action space, $H$ is the planning horizon and $T$ is the number of interactions. 
This bound polynomially improves all existing bounds for algorithms based on randomized value functions, and for the first time, matches the $\Omega\left(H\sqrt{SAT}\right)$ lower bound up to logarithmic factors. 
Our result highlights that randomized exploration can be near-optimal, which was previously achieved only by optimistic algorithms. 
To achieve the desired result, we develop 1) a new clipping operation to ensure both the probability of being optimistic and the probability of being pessimistic are lower bounded by a constant, and 2) a new recursive  formula for the absolute value of estimation errors to analyze the regret.


\end{abstract}

\section{Introduction}
This paper concerns learning in tabular Markov Decision Processes (MDP), arguably the most fundamental model for reinforcement learning (RL).
Existing algorithms that achieve the near-optimal minimax $\widetilde{O}\left(H\sqrt{SAT}\right)$ regret bound are based on the principle of  \emph{Optimism in the face of Uncertainty} (OFU), such as upper confidence bound (UCB)~\citep{azar2017minimax,zanette2019tighter,dann2019policy,zhang2020model,zhang2020reinforcement}.\footnote{This bound is for time-inhomogeneous MDP with each reward bounded by $1$ and $T$ is sufficiently large. }
Here $S$ is the number of states, $A$ is the number of actions, $H$ is the planning horizon, and $T$ is the total number of interactions between the agent and the environment.

Another broad category is algorithms with randomized exploration such as Thompson Sampling~\citep{osband2013more,agrawal2017optimistic,osband2014generalization}.
These algorithms inject (carefully tuned) random noise to  value function to encourage exploration. UCB-type algorithms enjoy well-established theoretical guarantees but suffer from difficult implementation since an upper confidence bound is usually infeasible for many practical models like neural networks.
Instead, practitioners prefer randomized exploration such as noisy networks in \cite{fortunato2018noisy}, and algorithms with randomized exploration have been widely used in practice  ~\citep{osband2017deep,chapelle2011empirical,burda2018exploration,osband2018randomized}. However, how to design randomized exploration algorithms in a principled way and perform randomized exploration optimally is far from clear. 
While randomized exploration can have great performance in practice, theoretically, the best known worst-case regret bound for algorithms with randomized exploration is $\widetilde{O}\left(H^{2}S\sqrt{AT}\right)$ \citep{agrawal2020improved}, which is far worse than that of the UCB-type algorithms.
In this paper, we introduce a new randomized exploration algorithm and show it enjoys a near-optimal $\widetilde{O}\left(H\sqrt{SAT}\right)$ worst-case regret bound, thus closing the gap. 
Our work sheds new light on randomized exploration on both the algorithmic side and the theoretical side. 
%
\paragraph{Our Contributions.}
Our contributions are summarized below:

$\bullet$ We propose a new algorithm, \algonamefull~(\algoname), which incorporates a crucial  algorithmic idea: using a single random seed for the entire episode, in contrast to previous methods of randomized exploration which use one seed for each time step. \algoname~is able to explore more efficiently than previous methods by avoiding having noise at different time steps canceling with each other.  
Theoretically, we show, thanks to this new idea, if one uses a Hoeffding-type magnitude of noise, \algoname~achieves an $\widetilde{O}\left(H^{1.5}\sqrt{SAT}\right)$ regret bound, improving upon the best existing result on randomized exploration algorithm~\citep{agrawal2020improved}.

$\bullet$ We further design a new Bernstein-type magnitude of noise for our algorithm, and achieve an $\widetilde{O}\left(H\sqrt{SAT}\right)$ regret bound, resolving an open problem raised in \cite{agrawal2020improved}.
To our knowledge, this is the first time that a Bernstein-type bound is used in randomized exploration.
More importantly, our upper bound matches the $\Omega\left(H\sqrt{SAT}\right)$ minimax lower bound up to logarithmic factors.

We note that our goal is not to show randomized exploration is better than optimistic algorithms \citep{azar2017minimax} in the tabular setting. Instead, we aim to provide a solid theoretical understanding of a practically relevant algorithm. Indeed, understanding randomized exploration itself is an important theoretical research direction and has attracted much interest in the community \citep{agrawal2012analysis, agrawal2017near, agrawal2017posterior, russo2019worst, zanette2020frequentist, pmlr-v108-vaswani20a, agrawal2020improved, osband2013more, osband2014generalization, osband2017deep, osband2018randomized}.


\paragraph{Main Challenge and Technical  Overview.}
Besides the aforementioned algorithmic ideas (single random seed and Bernstein-type magnitude of noise), we also need additional ideas in analysis to prove the desired regret bound.
The main challenge is that unlike UCB-type algorithms, the estimated value in algorithm with randomized exploration, is not an upper bound of the true optimal value. This leads to the failure of directly utilizing their analysis, which only need to analyze the one-sided error in estimation. 
We instead work on the \emph{absolute value} of the estimation error, whose analysis is more complicated than that for the one-sided error in UCB-type algorithms. 
Working with absolute value forces us to ensure that both the probability that the estimated value is optimistic and the probability that the estimated value is pessimistic are lower bounded.
However, the clipping strategy in existing algorithm cannot maintain pessimism.
To tackle with this issue, we develop a new clipping method. Below we list our technical contributions.

{\bf 1.} First, we propose a new clipping strategy to constrain the estimated value function (cf. Eqn.~\eqref{eq:clipping}). Previous clipping strategies in \citep{zanette2020frequentist,agrawal2020improved} are based on uncertainty and can only maintain optimism. Our clipping strategy directly works on the value function, which is similar to those used in UCB-type algorithms \citep{azar2017minimax,jin2018q,zhang2020model}. Our clipping strategy can maintain both the optimism and pessimism. In addition, the number of times that the clipping is used can still be bounded. 

{\bf 2.} Second, we prove that the single seed randomization ensures that the estimated value function can both be optimistic or pessimistic with constant probability at all states and timesteps.
This is stronger than previous randomized exploration algorithms that are only shown to be optimistic at the initial state with constant probability. With this property, we can then bound the difference between the optimal value function and estimated value function from both above and below, which results in a bound on its absolute value. See Section~\ref{sec:po_concentration}, Appendix~\ref{sec:optimism} and Appendix~\ref{sec:pessimism}.

{\bf 3.} Third, we prove a novel recursion argument on the absolute value of the policy estimation error. As mentioned in \citep{agrawal2020improved}, the recursion in UCB-type algorithms can not be directly utilized because our estimated value function is not a high-probability upper bound of the true optimal value function. With the bound of absolute value, we are able to prove new recursion formulas and together we can control the policy estimation error. See Section~\ref{sec:po_regdecomp} and Appendix~\ref{sec:regret_decomp}.

{\bf 4.} At last, we bound the sum of variance in a novel manner. In \citep{azar2017minimax}, the UCB-type estimation guarantees that the policy estimation error is always positive so the difference of the variance can be directly bounded. We generalize the argument to the absolute value of the estimation error to bound the sum of variance. See Section~\ref{sec:po_sumvar} and Appendix~\ref{sec:sum_variance}.



\section{Related Work}
\label{sec:related}
In this section we review existing provably efficient algorithms for tabular MDP.
There is a long list of sample complexity guarantees for tabular MDP~\citep{kearns2002near,brafman2002r,kakade2003sample,strehl2006pac,strehl2008analysis,kolter2009near,bartlett2009regal,jaksch2010near,szita2010model,lattimore2012pac,osband2013more,dann2015sample,azar2017minimax,dann2017unifying,osband2017posterior,agrawal2017optimistic,jin2018q,fruit2018near,talebi2018variance,dann2019policy,dong2019q,simchowitz2019non,russo2019worst,zhang2019regret,cai2019provably,zhang2020almost,yang2020q,pacchiano2020optimism,neu2020unifying,zhang2020reinforcement,wang2020long,agrawal2020improved,russo2019worst,agrawal2017optimistic,domingues2021episodic,menard2021ucb,li2021breaking}.
The state-of-the-art methods are based on upper confidence bound (UCB)~\citep{azar2017minimax,zanette2019tighter,dann2019policy,zhang2020model,zhang2020reinforcement,menard2021ucb,li2021breaking}.
For the setting considered in this paper where the transition is  time-inhomogeneous and the reward is bounded by $1$, one can achieve an $\widetilde{O}\left(H\sqrt{SAT}\right)$ in the regime where $T$ is sufficiently large.

Algorithms with randomized exploration have been proved to enjoy favorable regret bounds in bandit problems~\citep{lai1985asymptotically,agrawal2012analysis,kaufmann2012thompson,bubeck2014prior,agrawal2017near}.
In certain settings, randomized exploration can match the worst-case regret bound of UCB-based approaches and achieve nearly minimax optimal regret bounds~\citep{jin2020mots,agrawal2017near}.
However, for RL, existing theory for randomized exploration are far from optimal~\citep{agrawal2020improved,russo2019worst,agrawal2017optimistic,xu2019worst,zanette2020frequentist}.
For the setting considered in this paper, the sharpest existing regret bound among algorithms with randomized exploration is 
$\widetilde{O}\left(H^2S\sqrt{AT}\right)$ proved in \citep{agrawal2020improved}.
Our paper closes this gap and thus deepens our understanding about randomized exploration.

\section{Preliminaries}
\label{sec:setting}
We consider time-inhomogeneous finite-horizon MDP $M=\Sp{H, \mathcal{S}, \A, P, R, s_1}$,  where $\abs{\mathcal{S}}=S$ and $\abs{\A}=A$. Here, $\mathcal{S} = \{1,...,S\}$ is the finite state space. $\A = \{1,...,A\}$ is the finite action space. $H$ is the length of an episode. For convenience, we take $s_1$ to be the fixed initial state, although a more general initial distribution will not change the conclusion.  $P:  \S \times\A \times [H] \rightarrow \triangle(\S)$ is the transition function, where if the agent stays at state $s$ and takes action $a$ at time $h$, it transits to state $s'$ with probability $P_{h,s,a}(s')\in\Mp{0, 1}$. $R: \S \times\A \times [H] \rightarrow [0,1]$ is the reward function, where if the agent stays at $s$ and takes action $a$ at time $h$, it will receive reward $r_{h, s, a}\in\Mp{0, 1}$ such that $\E\Mp{r_{h, s, a}}=R_{h, s, a}$.

A deterministic policy for such a MDP is defined as a tuple $\pi=\Sp{\pi_1, \dots, \pi_H}$, where $\pi_h:\mathcal{S}\mapsto\A$. The associated value function at state $s\in\mathcal{S}$ and level $h\in\Bp{1, \dots, H}$ is recursively defined as
$$V_h^{\pi}\Sp{s}=R_{h, s, \pi_h\Sp{s}}+\sum_{s'\in\mathcal{S}}P_{h, s, \pi_h(s)}\Sp{s'}V^{\pi}_{h+1}\Sp{s'}.$$

For convenience, we set $V^{\pi}_{H+1}=\ve{0}\in\R^S$. The corresponding optimal value function is $V^{*}_h\Sp{s}=\max_{\pi\in\Pi}V^{\pi}_h\Sp{s}$, where $\Pi$ is the set of all possible deterministic policies. For a particular algorithm $\mathsf{Alg}$, let $\pi^k$ denote the policy that $\mathsf{Alg}$ employs during episode $k$. Then, the regret of running $\mathsf{Alg}$ on MDP $M$ for $K$ episodes is defined as
\begin{align}
\mathrm{Reg}\Sp{M, K, \mathsf{Alg}}=\sum_{k=1}^{K}\Sp{V_1^{*}(s_1)-V_1^{\pi^k}(s_1)}. \label{eq:regret}
\end{align}
Note that the regret, $\mathrm{Reg}\Sp{M, K, \mathsf{Alg}}$, is a random variable due to randomness in state transition and the algorithm, $\mathsf{Alg}$. In this paper, we show the regret of our proposed algorithm can be upper bounded with high probability, and the upper bound matches the known lower bound up to logarithmic factors. 

To facilitate our later analysis, we introduce some notations for empirical estimation. At episode $k$, we collect a trajectory $(s_1^k,a_1^k,r_1^k,\cdots,s_H^k,a_H^k,r_H^k)$ as specified in Algorithm \ref{algo:cub-rlsvi}. Let $n_k\Sp{h, s, a}=\sum_{l=1}^{k-1}\mathds{1}\{(s_h^l, a_h^l)=(s, a)\}$ be the number of times action $a$ is taken at state $s$ and time $h$ before episode $k$, where $\mathds{1}\Bp{\cdot}$ is the indicator function. We define
\begin{align}
    \hat{R}^k_{h, s, a} & =\frac{\sum_{l=1}^{k-1}\mathds{1}\{(s_h^l, a_h^l)=(s, a)\}r^l_{h, s_h^l, a_h^l}}{n_k\Sp{h, s, a}+1},\label{eq:premR}\\
   \hat{P}^k_{h, s, a}\Sp{s'} &=\frac{\sum_{l=1}^{k-1}\mathds{1}\{(s_h^l, a_h^l, s_{h+1}^l)=(s, a, s')\}}{n_k\Sp{h, s, a}+1}.\label{eq:premP}
\end{align}

Then, define empirical MDP based on our observation and estimation before episode $k$ as the tuple $\hat{M}^k=(H, \S, \A, \hat{P}^k, \hat{R}^k, s_1)$.  
Since $\hat{P}^k_{h, s, a}$ is not a valid distribution over $\mathcal{S}$, for being rigorous, we can imagine there is an additional virtual absorbing state that every state will transit to with remaining probability.

In addition to the above notations, let $\widetilde{O}\Sp{\cdot}, \widetilde{\Theta}\Sp{\cdot}$ and $\widetilde{\Omega}\Sp{\cdot}$ be asymptotic notations ignoring all poly-logarithmic terms. For distribution $D\in\Delta^S$ and value function $V\in\R^S$, let $\bbV\Sp{D, V}$ denote the variance of $V$ under distribution $D$, which is defined as $\bbV\Sp{D, V}=\sum_{s\in\S}D(s)\Sp{V(s)-\inner{D, V}}^2$. For constant $a>0$, we define the corresponding clipping function as $\clip_a(\cdot)=\max\Bp{-a,\min\Bp{a,\cdot}}$. Immediately we have $\abs{\clip_a(x)}\leq a$ for any $a>0$. We introduce the definitions of other notations when used. In appendix, we summarize the notations and definitions used in this paper. 

\section{Main Results}
\label{sec:main_result}

\subsection{Algorithm}

\begin{algorithm}[tb]
    \caption{\algonamefull~(\algoname)}
	\label{algo:cub-rlsvi}
	\SetAlgoLined
	\KwIn{$H, S, A$, perturbation type $\mathrm{ty}\in\Bp{\Ho, \Be}$}
	\For{episodes $k=1, 2, \dots, K$}{
	    Sample $\hat{z}_k\sim\mathcal{N}\Sp{0, 1}$\\
	    Define terminal value function $\overline{Q}_{H+1, k}=\ve{0}\in\R^{SA}$ and $\overline{V}_{H+1, k}=\ve{0}\in\R^S$\\
	    \For{time periods $h=H, \dots, 1$}{
	        $\overline{Q}_{h, k}\Sp{s, a}\leftarrow\hat{R}^k_{h, s, a}+\inner{\hat{P}^k_{h, s, a}, \overline{V}_{h+1, k}}+\sigma^k_{\mathrm{ty}}\Sp{h, s, a}\hat{z}_k$\tcp*{\textcolor{red}{$\sigma^k_{\mathrm{ty}}\Sp{h, s, a}$ is defined in \eqref{equ:def_ho_noise} and \eqref{equ:def_be_noise}.}}
	        Define $\overline{V}_{h, k}\Sp{s}=\clip_{2(H-h+1)}\Sp{\max_{a\in\A}\overline{Q}_{h, k}(s, a)}$ for all $s\in\S$
	    }
	    Agent takes actions $a_h^k=\argmax_{a\in\A}\overline{Q}_{h, k}(s_h^k, a)$ throughout this episode\\
	    Observe data $s_1^k, a_1^k, r_1^k, \dots, s_H^k, a_H^k, r_H^k$ and compute $\hat{R}^{k+1}_{h, s, a}$, $\hat{P}^{k+1}_{h, s, a}$ and $n_{k+1}\Sp{h, s, a}$ for all $\Sp{h, s, a}\in\Mp{H}\times\S\times\A$
	}
\end{algorithm}

The main contribution of this paper is that we show algorithm with randomized value functions can achieve regret that matches the known lower bound $\Omega\Sp{H\sqrt{SAT}}$ \citep{jaksch2010near,domingues2021episodic} up to logarithmic factors in the tabular setting. To facilitate exploration, this
type of algorithms uses random value perturbation instead of deterministic bonus. The algorithm we consider is summarized in Algorithm \ref{algo:cub-rlsvi}. In our algorithm, \algoname, the random perturbation ensures that optimism/pessimism can be obtained with constant probability in each episode. Moreover, randomized value function has its origin from posterior sampling for reinforcement learning (Thompson sampling). The randomized perturbation can be interpreted as approximate sampling from the posterior distribution of the value function on randomized training data \citep{russo2019worst}. 

We first give an overview of \algoname. In Algorithm \ref{algo:cub-rlsvi}, the policy used at episode $k$ is computed using the empirical MDP, $\hat{M}^k=(H, \S, \A, \hat{P}^k, \hat{R}^k, s_1)$, which is based on observation and estimation before episode $k$. However, instead of directly choosing optimal policy for $\hat{M}^k$, we add a small random perturbation when computing the value of each state and action pair. To be more precise, at each episode $k$, we first estimate the reward and transition function for each state $s$ and action $a$ based on \eqref{eq:premR} and \eqref{eq:premP}. Then, we compute the value function for state $s$ and action $a$,
\begin{align*}
    \overline{Q}_{h, k}\Sp{s, a}\leftarrow\hat{R}^k_{h, s, a}+\inner{\hat{P}^k_{h, s, a}, \overline{V}_{h+1, k}}+\sigma^k_{\mathrm{ty}}\Sp{h, s, a}\hat{z}_k.
\end{align*}
Here, $\hat{z}_k\sim \mathcal{N}(0, 1)$ is a standard Gaussian random variable sampled once every episode. The magnitude of the perturbation, $\sigma^k_{\mathrm{ty}}$ depends on how many samples $n_k(h, s, a)$ we have observed and how confident we are on the estimations $\hR_{h,s,a}^k$ and $\hP^k_{h,s,a}$. We will discuss more about the choice of the magnitude later in this section.

In order to prevent estimated value function from behaving badly, we add a clipping to the value function:
\begin{equation}
    \label{eq:clipping}
    \begin{split}
        \ov_{h,k}(s)&=\clip_{2(H-h+1)}\Sp{\max_{a\in\A}\overline{Q}_{h,k}(s,a)}
    \end{split}
\end{equation}
As our analysis will show, this kind of clipping can bound the value function, maintain optimism and pessimism and also guarantee that clipping will not happen for a lot of times.
The constant $2$ (instead of $1$) plays a crucial role because it means the value function grows at an additive rate of $2$ from $h=H$ to $h=1$. 
If we do not consider the added noise, then the value function should at most grow $1$ at each timestep because the reward is at most $1$. For our clipping technique, if a clip is triggered, there exists a timestep such that the added noise is more than $1$, which is equivalent to a small number of visits (cf. Definition~\ref{def:event_no_clip} and Lemma~\ref{lemma:q_bar_bound}).
As our later analysis will show, the clipping only affects the lower-order term and will not compromise the long-term performance of the algorithm. Finally, after computing the value function and clipping, \algoname\ chooses the action $a_h^k$ that maximizes $\overline{Q}_{h, k}(s_h^k,a)$ at each time step, $h=1, ..., H$, throughout the episode.

Note that from a Bayesian perspective, when there is no clipping, in Algorithm \ref{algo:cub-rlsvi}, $\overline{Q}_{h, k}$ follows distribution
\begin{align*}
    \overline{Q}_{h, k}(s, a) \mid \overline{V}_{h+1, k} \sim \mathcal{N}\Sp{\hat{R}^k_{h, s, a}+\inner{\hat{P}^k_{h, s, a}, \overline{V}_{h+1, k}}, \Sp{\sigma^k_{\mathrm{ty}}(h, s, a)}^2}.
\end{align*}
This resembles posterior sampling because when estimating some parameter $\theta^* \sim \mathcal{N}\Sp{0,\beta^2}$ based on noisy observations $\theta_1,...,\theta_n \sim \mathcal{N}(\theta,\beta^2)$, the posterior distribution of $\theta^*$ given $\{\theta_i\}_{i=1}^n$ is $\theta^* \mid \{\theta_i\}_{i=1}^n \sim \mathcal{N}\Sp{\frac{1}{n+1}\sum_{i=1}^n\theta_i, \frac{\beta^2}{n+1}}$. Although exact posterior sampling may not be possible in complex reinforcement learning settings, in \algoname, $\sigma^k_{\mathrm{ty}}(h, s, a)$ is chosen at scale $\widetilde{\Theta}\Sp{1/\sqrt{n_k(h,s,a)}}$ and therefore can be interpreted as doing approximate posterior sampling. Moreover, \algoname\ can be viewed as a variant of Randomized Least Square Value Iteration (RLSVI). The major differences are at the clipping function and a single random seed used in each episode instead of different random seeds at different tuples $(h, s, a)$. We will discuss more about the choice of the random seed later in this section. We refer to \cite{osband2017deep} and \cite{russo2019worst} for a more detailed discussion on the relationship among RLSVI, posterior sampling and randomized value function. 

In the following paragraphs, we discuss in more details about the three major algorithmic innovations:

\paragraph{Single Random Seed in Each Episode.} \algoname\ is similar to the algorithms analyzed in \cite{russo2019worst} and \cite{agrawal2020improved}. The major difference is that in the algorithm we propose, we use a single random seed $\hz_k$ to generate the perturbations for all time steps $h = 1,...,H$ in an episode $k$. 

When using different random seeds in an episode, the algorithm can be optimistic in some time step while being pessimistic in others. Then, the effects of the perturbations at different time steps will cancel with each other. As a result, to ensure sufficient exploration, the magnitude of the perturbation has to large.
This issue was also pointed out in \cite{agrawal2020improved,abeille2017linear}.

A large perturbation magnitude can increase the instability of the algorithm and worsen the algorithm's performance. When a single random seed is used, a small perturbation magnitude is enough to guarantee that the algorithm is optimistic with constant probability in any episode. We are able to show that using a single random seed can significantly increase the stability of the algorithm and therefore enjoy much smaller regret. Coincidentally, \cite{pmlr-v108-vaswani20a} also uses a similar single randomization in bandit problems to build a near-optimal randomized exploration algorithm and our work can be treated as its natural extension to RL problems.

\paragraph{Clipping.} To obtain a tight regret bound, the estimated value function needs to be well bounded. In \citep{russo2019worst}, no clipping is used and the estimated value function is at the order of $\widetilde{O}(H^{5/2}S)$, which results in a suboptimal regret bound. Generally there are two types of clipping methods. The first one is uncertainty-based, i.e. the value is clipped to $H-h+1$ at timestep $h$ whenever the uncertainty is large \citep{zanette2020frequentist,agrawal2020improved}. However this type of clipping cannot maintain pessimism which is critical in our analysis. The other kind of clipping is value-based, mostly in UCB-type algorithms \citep{jin2019provably}. These algorithms truncate estimated value greater than a certain threshold, i.e. $H-h+1$ at time step $h$. The problem here is that the number of clippings cannot be bounded because if the true value function is close to $H-h+1$ at timestep $h$, the clipping will happen with some constant probability.

Our clipping method leverages both type of clipping methods in the existing literature. Though our clipping is based on the value function, we show that whenever the clip is triggered, the estimation error must be large, which implies that the uncertainty at that state is large. This clipping method inherits the desired properties from both uncertainty-based and value-based clipping, i.e. the optimism/pessimism is maintained and the number of clippings can be bounded.

\paragraph{Magnitude of Perturbation.} 
A large magnitude of perturbation can encourage exploration, but at the same time increase instability. In our algorithm, the magnitudes are chosen as the smallest values so that the algorithm can be optimistic with constant probability. Since the value function can roughly be bounded by $O(H)$, a naive choice of the perturbation magnitude can be $\Theta\Sp{H/\sqrt{n_k(h,s,a)}}$. In this way, by Hoeffding's inequality, as long as the random Gaussian variable sampled $\hz_k$ is bigger than a constant, which happens with constant probability, the estimated value function will be optimistic. By similar reasoning, we can see that the estimated value function will also be pessimistic with constant probability.

To make the magnitude even smaller, inspired by \citep{azar2017minimax} who showed one can use an (empirical) Bernstein's inequality to derive a sharp exploration bonus for UCB-based algorithms, we propose a new choice of perturbation magnitude based on Bernstein's inequality. The Bernstein-based perturbation uses the empirical variance of the value function, which makes it smaller than the Hoeffiding-based one mostly, but still maintains optimism with constant probability.

In our paper, we study both types of magnitudes. In particular, we show that the regret of \algoname\ based on Bernstein's inequality matches the known lower bound $\Omega\Sp{H\sqrt{SAT}}$. Following are the two choices:
\fontsize{8.5}{8.5}
\begin{equation}
    \label{equ:def_ho_noise}
    \sigma^k_{\mathrm{Ho}}\Sp{h, s, a}=H\sqrt{\frac{\log\Sp{2HSAk^2}}{n_k\Sp{h, s, a}+1}}+\frac{H}{n_k\Sp{h, s, a}+1},
\end{equation}
\begin{equation}
    \label{equ:def_be_noise}
    \begin{split}
        \sigma^k_{\mathrm{Be}}\Sp{h, s, a}=\sqrt{\frac{16\mathbb{V}\Sp{\tilde{P}^k_{h, s, a}, \overline{V}_{k, h+1}}\log\Sp{2HSAk^2}}{n_k\Sp{h, s, a}+1}}+\frac{65H\log\Sp{2HSAk^2}}{n_k\Sp{h, s, a}+1}+\sqrt{\frac{\log\Sp{2HSAk^2}}{n_k\Sp{h, s, a}+1}},
    \end{split}
\end{equation}
\normalsize
where subscript ``Ho'' represents that the perturbation is based on Hoeffding's inequality and ``Be'' represents Bernstein's inequality, correspondingly. Here, for proof convenience, $\tilde{P}^k_{h, s, a}$ is defined by replacing the denominator in $\hat{P}^k_{h, s, a}$ by $\max\Bp{n_k(h, s, a), 1}$.
To clarify, when subscript ``ty'' is used, which stands for ``type'' as a placeholder for ``Ho'' or ``Be'', it means that there is no need to write two copies of expressions for Hoeffding-based and Bernstein-based noises separately.

\paragraph{Practical Considerations.}
Here, we explain why randomized exploration is widely used in practice and why our algorithmic formulation practically has advantage over UCB-type algorithms.
In randomized exploration, there are usually two important components: (1) the algorithm (e.g., Algorithm \ref{algo:cub-rlsvi}) and (2) the noise magnitude ($\sigma_{\ty}$). In practice, the main advantage of randomized exploration lies in the algorithm component. The generalization from the tabular setting to the function approximation setting is straightforward: one can just add a random regularization term in the value estimation step, whose details can be found in \citep{osband2018randomized}. On the other hand, the generalization of optimistic algorithms from the tabular setting to the function approximation setting is more non-trivial because it often requires an explicit construction of the confidence set. For the second component, although generalizing our strategy of tuning noise magnitude to the real-world function approximation setting is indeed not straightforward, it is often set as a hyper-parameter in practice.


\subsection{Regret Analysis}
We analyze the regret, defined in \eqref{eq:regret}, of our algorithm \algoname\ using both types of perturbations. Our main theorems are presented in Theorem \ref{theo:main_1} and \ref{theo:main_2}. 
In particular, Theorem~\ref{theo:main_2} shows \algoname\ with Bernstein-based perturbation can achieve the regret that matches the known lower bound $\Omega\Sp{H\sqrt{SAT}}$ up to logarithmic factors. We sketch the proof of Theorem~\ref{theo:main_1} and Theorem~\ref{theo:main_2} in Section~\ref{sec:proofoutline}.

\begin{theorem}
	\label{theo:main_1}
	If the Hoeffding-type noise \eqref{equ:def_ho_noise} is used, then for any MDP $M=\Sp{H, \S, \A, P, R, s_1}$, with probability at least $1-\delta$, Algorithm \ref{algo:cub-rlsvi} satisfies
	$$\mathrm{Reg}(M, K, \mathsf{\algoname}_\Ho)\leq\widetilde{O}\Sp{H^{1.5}\sqrt{SAT}+H^4S^2A}.$$
	In particular, when $T\geq\widetilde{\Omega}\Sp{H^5S^3A}$, it holds that $\mathrm{Reg}(M, K, \mathsf{\algoname}_\Ho)\leq\widetilde{O}\Sp{H^{1.5}\sqrt{SAT}}$.
\end{theorem}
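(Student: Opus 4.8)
The plan is to reduce the per-episode regret $V_1^{*}(s_1)-V_1^{\pik}(s_1)$ to an accumulation of perturbation magnitudes along the trajectory actually visited, and then to sum these magnitudes over all $K$ episodes by a pigeonhole and Cauchy–Schwarz argument. The whole argument hinges on the two-sided constant-probability optimism supplied by the single seed and on a recursion carried on the \emph{absolute} estimation error. I would begin by decomposing $V_1^{*}(s_1)-V_1^{\pik}(s_1)=\Sp{V_1^{*}(s_1)-\ov_{1,k}(s_1)}+\Sp{\ov_{1,k}(s_1)-V_1^{\pik}(s_1)}$ and treating the two brackets separately: the first is the optimism gap, the second the ``surplus'' of the perturbed value over the realized policy value.

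Before this, I would fix high-probability good events $\G_k$ on which the empirical estimates concentrate, so that for each visited $\hsa$ both $\abs{\hR^k_{h,s,a}-R_{h,s,a}}$ and $\abs{\inner{\hP^k_{h,s,a}-P_{h,s,a}, V}}$ are dominated by $\sigma^k_{\Ho}\Sp{h,s,a}$; the episodes on which $\G_k$ fails are rare and feed only into the lower-order $\widetilde{O}(H^4S^2A)$ term. For the optimism gap I would prove that, because the \emph{single} Gaussian seed $\hz_k$ scales every level's perturbation coherently, the event $\{\hz_k\ge c\}$ for an absolute constant $c$ forces $\ov_{h,k}(s)\ge \svh_h(s)$ simultaneously at all $(h,s)$ by backward induction, so optimism holds with constant probability; symmetrically $\{\hz_k\le -c\}$ yields pessimism with constant probability. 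Since $\ov_{1,k}(s_1)$ is, up to clipping, an increasing affine function of $\hz_k$, this constant-probability optimism together with Gaussian anti-concentration bounds $\E\Mp{\Sp{V_1^{*}(s_1)-\ov_{1,k}(s_1)}_+}$ by an $O(1)$ multiple of the initial perturbation scale, a bonus-like quantity.

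The heart of the proof is bounding the surplus. Since $\pik$ is greedy with respect to $\ov_{h,k}$, the Bellman equations give a per-level identity in which $\ov_{h,k}(s)-V_h^{\pik}(s)$ equals a per-step error (reward and transition estimation errors, the injected noise $\sigma^k_{\Ho}\hz_k$, and clipping slack) plus the carry-over term $\inner{\hP^k_{h,s,\cdot}, \ov_{h+1,k}-V_{h+1}^{\pik}}$. Because $\ov_{h+1,k}-V_{h+1}^{\pik}$ is \emph{not} sign-definite, I would pass to absolute values, using $\abs{\inner{\hP^k, X}}\le\inner{\hP^k, \abs{X}}$ to obtain the recursion $\abs{\ov_{h,k}(s)-V_h^{\pik}(s)}\le(\text{per-step error})+\inner{\hP^k_{h,s,\cdot}, \abs{\ov_{h+1,k}-V_{h+1}^{\pik}}}$ and unrolling it to express the surplus as an expectation of summed per-step errors under the empirical transitions. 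The two-sided optimism/pessimism from the previous step is what keeps each per-step error bounded by $O(\sigma^k_{\Ho})$ plus clipping slack, so that no spurious factor of $H$ is introduced by the absolute-value detour.

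Finally I would aggregate. Converting the empirical-transition expectations into quantities along the realized trajectory introduces martingale differences controlled by Azuma (here a crude range-$O(H)$ bound on the summed variances suffices for the Hoeffding case, whereas the sharper law-of-total-variance control is what later yields the improvement for the Bernstein noise), and the dominant contribution is $\sum_{k,h}\sigma^k_{\Ho}\Sp{h,s_h^k,a_h^k}\approx\sum_{k,h}H/\sqrt{n_k(h,s_h^k,a_h^k)}$. Using $\sum_k 1/\sqrt{n_k(h,s,a)}\lesssim\sqrt{K_{h,s,a}}$ and Cauchy–Schwarz over $(s,a)$ and then over $h$ gives $\widetilde{O}\Sp{H\cdot H\sqrt{SAK}}=\widetilde{O}\Sp{H^{1.5}\sqrt{SAT}}$, while the clipping slack, low-visitation stages and failure events contribute the lower-order $\widetilde{O}(H^4S^2A)$. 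I expect the main obstacle to be precisely the absolute-value recursion of the third paragraph: in UCB analyses the monotonicity $\ov\ge\svh$ keeps every recursion term nonnegative and telescoping, but here that structure is lost, so the two-sided constant-probability guarantees must be combined delicately with the clipping control to prevent the surplus bound from losing an extra $\sqrt{H}$ or from incurring unbounded clipping counts.
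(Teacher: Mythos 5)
Your skeleton matches the paper's at a high level (concentration events, single-seed backward-induction optimism and pessimism each holding with constant probability, an absolute-value recursion, then pigeonhole), but two of your steps have genuine gaps, and they are exactly the places where the paper's proof does its real work. The first is your claim that ``converting the empirical-transition expectations into quantities along the realized trajectory introduces martingale differences controlled by Azuma.'' Your recursion carries the continuation error under $\hP^k$, so unrolling it produces expectations over trajectories drawn from the \emph{empirical} kernel, whereas Azuma only relates the realized trajectory to expectations under the \emph{true} kernel $P$. The discrepancy is the bias term $\inner{\hP^k_{h,s,a}-P_{h,s,a},\,\ov_{h+1,k}-V^*_{h+1}}$ (equivalently, your per-step ``transition estimation error'' $\inner{\hP^k-P,V^{\pik}_{h+1}}$), and it is \emph{not} controlled by the good event: the confidence set only bounds $\inner{\hP^k-P,V^*_{h+1}}$ for the fixed function $V^*_{h+1}$, while $\ov_{h+1,k}$ and $V^{\pik}_{h+1}$ are random and correlated with $\hP^k$. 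The only free bound is H\"older, $\Norm{\hP^k-P}_1\Norm{\ov_{h+1,k}-V^*_{h+1}}_\infty=\widetilde{O}\Sp{H\sqrt{S/n_k}}$, which after summation gives $\widetilde{O}\Sp{H^{1.5}S\sqrt{AT}}$ --- an extra $\sqrt{S}$, i.e.\ not the claimed bound. The paper's Lemma~\ref{lemma:p_bound} together with Lemmas~\ref{lem:upd^pi}--\ref{lem:sum of delta} exists precisely to remove this factor: a per-state Bernstein bound, a split over next states according to whether $P_{h,s,a}(s')n_k$ exceeds $\widetilde{\Theta}(H^2)$, absorption of the heavy part into $\frac{1}{H}\abs{\upd_{h+1,k}}$ (which forces a \emph{coupled} recursion among $\abs{\od},\abs{\ud},\abs{\upd}$ and produces the $(1+3C_1/H)^H\leq e^{3C_1}$ factor), and an $\widetilde{O}\Sp{SH^2/n_k}$ remainder that feeds the $\widetilde{O}\Sp{H^4S^2A}$ lower-order term. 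Without some version of this step the $\sqrt{SA}$ dependence in your final bound does not follow.

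The second gap is the pessimism term. Bounding $\E\Mp{\Sp{V^*_1(s_1)-\ov_{1,k}(s_1)}_+}$ by ``an $O(1)$ multiple of the initial perturbation scale'' via anti-concentration is not an argument, because that scale is undefined: the slope of $\ov_{1,k}(s_1)$ in $\hz_k$ is a weighted sum of $\sigma^k_\Ho$ over states visited by a greedy policy that itself changes with $\hz_k$, so there is no single quantity that simultaneously dominates the gap and can later be charged to the visits of $\pi^k$ in the pigeonhole step. The paper's resolution (Lemma~\ref{lem:pessimism}) is structurally different: it introduces the auxiliary value functions $\uv_{h,k}$ and $\upv_{h,k}$ --- the policy $\pi^k$ evaluated on MDPs with \emph{deterministic} bonuses $-\gamma^k_\ty$ and $+\gamma^k_\ty$ --- notes the sandwich $\uv_{h,k}\leq\ov_{h,k}\leq\upv_{h,k}$ under $\mathcal{G}_k$, and then uses \emph{both} constant-probability optimism and constant-probability pessimism in a conditional-expectation argument to obtain $\abs{V^*_h(s_h^k)-\ov_{h,k}(s_h^k)}\leq C_1\Sp{\upv_{h,k}(s_h^k)-\uv_{h,k}(s_h^k)}\leq C_1\Sp{\abs{\upd^\pik_{h,k}(s_h^k)}+\abs{\ud^\pik_{h,k}(s_h^k)}}$, i.e.\ policy-evaluation errors that enter the same trajectory recursion as the estimation-error term (and at every level $h$, not just $h=1$, which is also needed inside the recursion of the first gap). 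Your sketch never introduces these auxiliary objects, nor an independent-copy substitute as in \cite{agrawal2020improved}, so the pessimism contribution is never reduced to summable trajectory quantities.
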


\begin{theorem}
	\label{theo:main_2}
	If the Bernstein-type noise \eqref{equ:def_be_noise} is used, then when $T\geq\widetilde{\Omega}\Sp{H^5S^2A}$, for any MDP $M=\Sp{H, \S, \A, P, R, s_1}$, with probability at least $1-\delta$, Algorithm \ref{algo:cub-rlsvi} satisfies
	$$\mathrm{Reg}(M, K, \mathsf{\algoname}_\Be)\leq\widetilde{O}\Sp{H\sqrt{SAT}+H^4S^2A}.$$
	In particular, if we further have $T\geq\widetilde{\Omega}\Sp{H^6S^3A}$, it then holds that $\mathrm{Reg}(M, K, \mathsf{\algoname}_\Be)\leq\widetilde{O}\Sp{H\sqrt{SAT}}$.
\end{theorem}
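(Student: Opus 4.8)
The plan is to instantiate the four-part program sketched in the technical overview, with every ingredient specialized to the Bernstein magnitude $\sigma^k_\Be$. First I would fix a high-probability good event $\G$ on which all the concentration statements hold at once: the empirical-Bernstein deviation of $\inner{\hP^k_{h,s,a}, \bV_{h+1,k}}$ from $\inner{P_{h,s,a}, \bV_{h+1,k}}$, the deviation of $\hR^k$ from $R$, the closeness of the empirical variance $\bbV\Sp{\tilde P^k_{h,s,a}, \bV_{h+1,k}}$ to the population variance $\bbV\Sp{P_{h,s,a}, \bV_{h+1,k}}$, and the boundedness of the Gaussian seeds $\hz_k$. A union bound over all $\hsa$ and all $k$ gives $\P\Sp{\G}\geq 1-\delta$, and everything below is carried out on $\G$.

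Next I would use the single random seed to show that, conditioned on the history before episode $k$, the estimate $\bV_{h,k}(s)$ is optimistic ($\bV_{h,k}(s)\geq\svh_h(s)$) with at least a constant probability \emph{simultaneously} at every $\Sp{h,s}$, and pessimistic with at least a constant probability as well. The key is that $\sigma^k_\Be$ is calibrated to reproduce the per-step standard deviation of $\bV_{h+1,k}$, so that when $\hz_k$ exceeds a universal constant the perturbations accumulated along any trajectory dominate --- rather than over- or under-shoot --- the accumulated estimation error; a single shared seed is essential here, since independent per-step seeds would let the perturbations partially cancel. Combined with Gaussian anti-concentration applied at the root, this two-sided guarantee converts the per-episode regret into estimation error: optimism with constant probability forces $\svh_1(s_1)$ to exceed the conditional mean of $\bV_{1,k}(s_1)$ by at most $O\Sp{\sigma^k_\Be\Sp{1,s_1,a^k_1}}$, reducing $V_1^*(s_1)-V_1^{\pik}(s_1)$ to the estimation error of the played policy, while pessimism is what keeps the two-sided error from drifting and will be needed in the variance step.

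The core of the argument is a recursion on the \emph{absolute} policy estimation error $\Delta_{h,k}(s)=\abs{\bV_{h,k}(s)-V^{\pik}_h(s)}$. Unfolding the Bellman updates along the realized trajectory (where the played action $a^k_h=\argmax_a\overline{Q}_{h,k}(s^k_h,a)$ coincides with the one defining $\bV_{h,k}$) and applying the triangle inequality $\abs{\inner{\hP^k_{h,s^k_h,a^k_h},\bV_{h+1,k}-V^{\pik}_{h+1}}}\leq\inner{\hP^k_{h,s^k_h,a^k_h},\Delta_{h+1,k}}$, I would obtain $\Delta_{h,k}(s^k_h)\lesssim\abs{\sigma^k_\Be\Sp{h,s^k_h,a^k_h}\hz_k}+(\text{concentration slack})+\inner{\hP^k_{h,s^k_h,a^k_h},\Delta_{h+1,k}}$. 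Summing over $h$ and $k$, telescoping the transition terms and replacing $\inner{\hP^k,\cdot}$ by the realized next state at the cost of a martingale (Azuma) term, the whole regret reduces to controlling $\sum_{k,h}\sigma^k_\Be\Sp{h,s^k_h,a^k_h}$. Cauchy--Schwarz on the leading Bernstein term factors this as $\sqrt{\sum_{k,h}1/\Sp{n_k\Sp{h,s^k_h,a^k_h}+1}}\cdot\sqrt{\sum_{k,h}\bbV\Sp{\tilde P^k_{h,s^k_h,a^k_h},\bV_{h+1,k}}}$; the first factor is $\widetilde{O}\Sp{\sqrt{HSA}}$ by the standard visitation-count bound, so the target $\widetilde{O}\Sp{H\sqrt{SAT}}$ emerges precisely when the variance sum is $\widetilde{O}\Sp{HT}$.

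I expect the main obstacle to be exactly this bound on the sum of variances, which is where the $H$ rather than $H^{1.5}$ scaling is won. In the UCB analysis of \citep{azar2017minimax} one has $\bV\geq\svh$ pointwise, so the empirical variance of $\bV$ can be compared to that of $\svh$ and then collapsed by the law of total variance to $O\Sp{H^2}$ per episode; here $\bV_{h,k}$ is not one-sided, so I would instead bound $\bbV\Sp{\tilde P^k,\bV_{h+1,k}}$ by $\bbV\Sp{P,V^{\pik}_{h+1}}$ plus correction terms driven by the already-controlled error $\Delta_{h+1,k}$, and only then invoke the law of total variance for the realized policy $\pik$ to reach $\widetilde{O}\Sp{HT}$. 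Because these correction terms feed back the very quantity being analyzed, the absolute-value recursion and the variance bound must be closed together, and this coupling is the technically delicate heart of the proof. It remains to check that the clipping at $2(H-h+1)$ costs only a lower-order term: on $\G$ a triggered clip forces some per-step noise to exceed $1$, hence forces a small visit count, so the number of clipped tuples is $\widetilde{O}\Sp{HSA}$, each contributing $O\Sp{H}$; collecting these with the $1/\Sp{n_k+1}$ tails of $\sigma^k_\Be$ produces the additive $\widetilde{O}\Sp{H^4S^2A}$ term, which $T\geq\widetilde{\Omega}\Sp{H^6S^3A}$ renders negligible against $\widetilde{O}\Sp{H\sqrt{SAT}}$.
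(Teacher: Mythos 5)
Your proposal reproduces the paper's four-part architecture (simultaneous concentration events, two-sided constant-probability optimism/pessimism from the single seed, an absolute-value recursion, and a self-bounding variance-sum argument closed by solving $U\lesssim H\sqrt{SAT}+H^{1.5}\sqrt{U}$), but two of its steps are stated in a way that would fail, and they are exactly the places where the paper has to work hardest. First, in your recursion you keep the empirical transition and claim that replacing $\inner{\hat{P}^k,\cdot}$ by the realized next state costs only an Azuma term. It does not: $s_{h+1}^k$ is drawn from $P_{h,s_h^k,a_h^k}$, not from $\hat{P}^k_{h,s_h^k,a_h^k}$, so after subtracting the true martingale difference you are left with $\inner{\hat{P}^k_{h,s_h^k,a_h^k}-P_{h,s_h^k,a_h^k},\,\Delta_{h+1,k}}$, where $\Delta_{h+1,k}$ is itself built from $\hat{P}^k$; this correlated term cannot be absorbed into generic ``concentration slack.'' Its naive bound $\Norm{\hat{P}^k-P}_1\Norm{\Delta_{h+1,k}}_\infty=\widetilde{O}\Sp{H\sqrt{S/n_k}}$, summed over $(k,h)$, gives $\widetilde{O}\Sp{H^{1.5}S\sqrt{AT}}$ and forfeits the theorem. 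The paper's Lemma \ref{lem:upd^pi} resolves this by routing the error through the deterministic $V^*_{h+1}$, applying the per-state Bennett bound of Lemma \ref{lemma:p_bound}, and splitting next states according to whether $P(s')n_k\geq 4LH^2$; this produces $\frac{1}{H}\inner{P,\abs{\cdot}}+\widetilde{O}\Sp{SH^2/n_k}$, which is simultaneously the source of the benign $(1+O(1)/H)^H=O(1)$ recursion factor and of the $H^4S^2A$ lower-order term you need to account for.

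Second, your reduction of the per-episode regret to the estimation error of the played policy is not justified. Optimism holds only with constant probability, so on the complementary event you still must bound the pessimism term $V^*_1(s_1^k)-\ov_{1,k}(s_1^k)$, and your claim that the conditional-on-optimism mean of $\ov_{1,k}(s_1)$ exceeds the unconditional one by only $O\Sp{\sigma^k_\Be(1,s_1,a_1^k)}$ is false in general: the spread of $\ov_{1,k}$ accumulates noise over the whole horizon. The paper needs dedicated machinery here (Lemma \ref{lem:pessimism}): two history-measurable envelope value functions $\uv_{h,k}\leq\ov_{h,k}\leq\upv_{h,k}$, obtained by running $\pi^k$ on the MDPs perturbed deterministically by $\mp\gamma^k_\ty$, combined with a conditional-expectation argument that uses \emph{both} constant-probability optimism and constant-probability pessimism to conclude $\abs{V^*_h-\ov_{h,k}}\leq C_1\Sp{\abs{\upd^{\pik}_{h,k}}+\abs{\ud^{\pik}_{h,k}}}$. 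This is also why the paper's recursion must track three coupled error sequences rather than your single $\Delta_{h,k}$: the cross-term bound above feeds $\abs{\od_{h+1,k}}$ back in, which is then re-expanded via Lemma \ref{lem:pessimism}. You assign pessimism a role only in the variance step, whereas it is already indispensable in the regret decomposition. The remaining parts of your plan (Cauchy--Schwarz on the Bernstein term, the law-of-total-variance comparison with feedback of the $\abs{\delta}$ terms, and the clipping accounting via small visit counts) do match the paper's proof.
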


We give a brief comparison between \algoname\ and other related works. \cite{russo2019worst} shows that RLSVI, an algorithm similar to \algoname, can achieve $\tilde{O}\Sp{H^{2.5}S^{1.5}\sqrt{AT}}$ regret in expectation over the randomness of MDP and the algorithm. 
In \citep{agrawal2020improved}, an improved high probability regret bound $\widetilde{O}\left(H^2S\sqrt{AT}\right)$ is proposed, which is the sharpest bound for randomized algorithms prior to this work.
Our paper closes the gap between those previous bounds and the lower bound in tabular setting. 

We also run numerical simulations to empirically compare \algoname\ and RLSVI in the deep-sea environment, which is commonly used as a benchmark to test an algorithm's ability to explore. The results show that \algoname\ significantly outperforms RLSVI as predicted by our regret analysis. More details about our experiment can be found in Appendix \ref{sec:simulation}.
\section{Proof Outline}
\label{sec:proofoutline}

In this section, we present an proof outline of Theorem \ref{theo:main_1} and \ref{theo:main_2}. Since their proofs follow the same framework, we will present an unified outline and explain the individual steps particularly for each case when necessary. The details of complete proof are deferred to the appendix.




\paragraph{Notation} For the ease of exposure, we will use a simplified notations during this sketch. Specifically, let $x=\hsa$ and $x_h^k=(\hsahk)$.

\subsection{Concentration and Optimism/Pessimism}
\label{sec:po_concentration}
We start by introducing a set of MDPs $\M^k_\ty$ as a confidence set such that the empirical MDP $\hat{M}^k$ belongs to it with high probability, meaning that we have a good estimation of the true MDP. Specifically, with $M'=\Sp{H, \S, \A, P', R', s_1}$, we define
\[\M^k_\ty:=\Bp{M':\forall x=\hsa,\abs{\Sp{R'_{x}-R_{x}}+\inner{P'_x-P_x, V^*_{h+1}}}\leq\sqrt{e^k_\ty(x)}},\]
where $\sqrt{e^k_\Ho(x)}=\sigma^k_\Ho(x)$ and $\sqrt{e^k_\Be(x)}\approx\sigma^k_\Be(x)$.

Define the event $\mathcal{C}^k_\ty:=\Bp{\hat{M}^k\in\M^k_\ty}$. Then, by applying Hoeffding's inequality or Bernstein's inequality, for both types of perturbation, it is possible to show that
$$\sum_{k=1}^{\infty}\P\Sp{\Sp{\mathcal{C}^k_\ty}^c}=\sum_{k=1}^{\infty}\P\Sp{\hat{M}^k\notin\M^k_\ty}\leq\frac{\pi^2}{3}.$$
Since the value function is bounded in $\Mp{0, H}$, this inequality tells us that the regret incurred by bad estimation is at most $\widetilde{O}\Sp{H}$. To be precise, it holds with high probability that
\begin{align}
	\sum_{k=1}^{K}\mathds{1}\Bp{\Sp{\mathcal{C}^k_\ty}^c}\Sp{V^*_{1}-V_{1, k}^\pik}(s_1^k)\leq\widetilde{O}\Sp{H}.\label{equ:regret_part1}
\end{align}


Then, to better control the estimated value function, we need it to be bounded, which requires us to clip it.
Specifically, we will use two crucial properties of our clipping method. First, if $\overline{Q}_{h,k}(s,a)\geq Q_h^*(s,a)$, $\forall (s,a)\in\S\times\A$, then we have $\ov_{h,k}(s)\geq V_h^*(s),\forall s\in\S$. Similarly if $\overline{Q}_{h,k}(s,a)\leq Q_h^*(s,a)$, $\forall (s,a)\in\S\times\A$, then we have $\ov_{h,k}(s)\leq V_h^*(s),\forall s\in\S$. 

In addition, we can prove that whenever a clip is triggered for $s_h^k$, we have $n_k(h,s_h^k,a_h^k)\leq\alpha_k$ with $\alpha_k=\widetilde{O}(H^2)$. 
As a result, it is possible to show that the total regret incurred by clipping is at most $\widetilde{O}\Sp{H^4SA}$, which is a lower-order term when $T$ is sufficiently large. That is, let $\mathcal{E}^{cum}_{H, k}$ denote the event that there is no clipping during episode $k$. Then, it holds with high probability that\footnote{Technically, this is not precisely how we bound the regret incurred by clipping, but it aligns better with the intuition. Full technical details can be found in Appendix.}
\begin{equation}
	\label{equ:regret_part2}
	\sum_{k=1}^{K}\mathds{1}\Bp{\mathcal{C}^k_\ty\cap\Sp{\mathcal{E}^{cum}_{H, k}}^c}\Sp{V^*_{1}-V_{1, k}^\pik}(s_1^k)\leq\widetilde{O}\Sp{H^4S^2A}.	
\end{equation}


As claimed before, because of the randomness in Gaussian noise, our algorithm \algoname\ will encourage exploration and it takes effect when there is no clipping and the estimation is not too bad. In other words, it can be optimistic. However, also because of this randomness, its optimism only holds in a probabilistic sense. In precise, it is possible to show that
\begin{equation}
	\label{equ:stochastic_opt}
	\P\Sp{\overline{V}_{h, k}(s)\geq V^*_{h}(s), \forall h\in[H], s\in\S\mid\mathcal{C}^k_\ty}\geq C_\ty,
\end{equation}
where the value of constant $C_\ty$ depends on the type of noise we choose. Meanwhile, we can also prove a very similar probabilistic pessimism, which means to have $\overline{V}_{h, k}(s)\leq V^*_{h}(s), \forall h\in[H], s\in\S$ with constant probability. The property of optimism and pessimism will help us upper bound the absolute value of $V^*_{1}(s_1^k)-\overline{V}_{1, k}(s_1^k)$, which will be discussed soon. 


\subsection{Regret Decomposition}
\label{sec:po_regdecomp}
Now, given equations \eqref{equ:regret_part1} and \eqref{equ:regret_part2}, we can see that for each episode $k$, it only remains to bound $\mathds{1}\Bp{\mathcal{C}^k_\ty\cap\mathcal{E}^{cum}_{H, k}}\Sp{V^*_{1}-V^\pik_{1, k}}(s_1^k)$. Technically, the further defined the good event $\mathcal{G}_k$ will help make $\overline{V}_{h, k}$ better-behaved. Its precise definition will be given in the appendix. Therefore, it is sufficient to bound $\ogk\Sp{V^*_1-V^\pik_{1, k}}(s_1^k)$, which means to have
\begin{equation}
	\label{equ:simple_regret_decom}
	\mathrm{Reg}\Sp{M, K, \algoname_\ty}\leq \sum_{k=1}^K\ogk(\underbrace{V^*_1-\overline{V}_{1, k}}_{\text{pessimism}}+\underbrace{\overline{V}_{1, k}-V^\pik_{1, k}}_{\text{estimation error}})(s_1^k)+\widetilde{O}\Sp{H^4SA}.
\end{equation}

To proceed, we need to define two auxiliary value functions $\uv_{h, k}$ and $\upv_{h, k}$, which are obtained by virtually running policy $\pi^k$ on some deliberately perturbed MDPs. In particular, they are designed such that $\uv_{h, k}\leq\overline{V}_{h, k}\leq\upv_{h,k}$ holds under the good event $\mathcal{G}_k$.

\paragraph{Pessimism Term} Here, as a technical novelty, we bound the pessimism term's absolute value. Meanwhile, different from \cite{zanette2020frequentist} and \cite{agrawal2020improved}, by applying both optimism and pessimism, we do not resort to an independent copy of the perturbed MDP to bound the pessimism term and give a conceptually simpler analysis. In particulary, by defining $C_1=1/\min\Bp{C_\Ho, C_\Be}$. it is possible to show that
\fontsize{9}{9}
\begin{equation}
\label{equ:simple_pessi}
    \ogk\abs{V^*_{h,k}(s_h^k)-\ov_{h,k}(s_h^k)}  \leq \ogk C_1\Sp{\abs{\upv^\pik_{h,k}(s_h^k)-V^\pik_{h,k}(s_h^k)}+\abs{\uv^\pik_{h,k}(s_h^k)-V^\pik_{h,k}(s_h^k)}}.
\end{equation}
\normalsize
The full proof is given in Appendix under Lemma \ref{lem:pessimism}.


\paragraph{Estimation Error Term} The sum of pessimism term and estimation error term can be further bounded via the techniques of recursion used in \cite{azar2017minimax}. However, we want to emphasize the difference that in their algorithm, the estimated value is optimistic with high probability, which makes $\overline{V}_{h, k}(s_h^k)-V^*_{h}(s_h^k)$ always positive. Instead, since our optimism only holds with constant probability, we use absolute value to keep the estimation error terms positive. As a result, we show that
\fontsize{9}{9}
\begin{equation}
    \label{equ:simple_esti}
    \begin{split}
        \abs{\ov_{1, k}-V^\pik_{1, k}}(s_1^k)+\abs{\upv^\pik_{1,k}-V^\pik_{1,k}}(s_1^k)+\abs{\uv^\pik_{1,k}-V^\pik_{1,k}}(s_1^k)\lesssim & e^{3C}\sum_{h=1}^{H}\Sp{L\sigma^k_\ty(x_h^k)+\M_{h, k}},
    \end{split}
\end{equation}
\normalsize
where $L$ denotes some poly-logarithmic term and $\M_{h, k}$ denotes some martingale difference sequence term at period $h$, episode $k$. The full proof is given in Appendix under Lemma \ref{lem:sum of delta},

\subsection{Combining Different Terms}
\label{sec:po_combine}

By combining equations \eqref{equ:simple_regret_decom}, \eqref{equ:simple_pessi} and \eqref{equ:simple_esti} and applying concentration inequalities to MDP $\M_{h, k}$, it is possible to show that
\fontsize{9.5}{9.5}
\begin{equation}
\label{equ:reg}
    \mathrm{Reg}\Sp{M, K, \mathsf{\algoname}_\ty}\leq e^{3C_1}\sum_{k=1}^{K}\sum_{h=1}^{H}\ogk L\sigma^k_\ty(x_h^k)+\widetilde{O}\Sp{H\sqrt{T}+H^4S^2A}.
\end{equation}
\normalsize

Then, a final high-probability regret bound can be obtained by summing each individual terms over $k, h$ separately. It is well-known among literature that
\begin{align}
	\sum_{k=1}^{K}\sum_{h=1}^{H}\sqrt{\frac{1}{n_k(x_h^k)+1}}\leq\widetilde{O}\Sp{\sqrt{HSAT}},\qquad \sum_{k=1}^{K}\sum_{h=1}^{H}\frac{1}{n_k(x_h^k)+1}\leq\widetilde{O}\Sp{HSA}.\label{equ:inequ_2}
\end{align}
Recall the definition of $\sigma^k_\Ho$ in equation \eqref{equ:def_ho_noise}. By using these two inequalities, the bound in equation \eqref{equ:reg} can be made explicit if we use Hoeffding-type noise. As a result, we have
$$\mathrm{Reg}\Sp{M, K, \mathsf{\algoname}_\Ho}\leq\widetilde{O}\Sp{H^{1.5}\sqrt{SAT}+H^4S^2A}.$$

\subsubsection{Bound on Sum of Variance}
\label{sec:po_sumvar}
Analyses become more involved when Bernstein-type noise is used. Specifically, notice that inequalities in \eqref{equ:inequ_2} cannot directly be used to bound $\sum_{k, h}\bbV\Sp{\tilde{P}^k_{x_h^k}, \ov_{h+1, k}}$. Here, we apply some techniques developed in \cite{azar2017minimax}. However, since the optimism only holds with constant probability, the details for specific terms are quite different.


For the ease of exposure, we will ignore all constants and define $\hat{\mathbb{V}}^*_{h, k}=\mathbb{V}\Sp{\tilde{P}^k_{x_h^k}, V^*_{h}}$, $\hat{\overline{\mathbb{V}}}_{h, k}=\mathbb{V}\Sp{\tilde{P}^k_{x_h^k}, \overline{V}_{h, k}}$. Then, by using Cauchy-Schwartz inequality and equation \eqref{equ:inequ_2}, we can get
\fontsize{9}{9}
\begin{align}
    U&\overset{\mathrm{def}}{=}\sum_{k, h}\ogk\sqrt{\frac{L}{n_k(x_h^k)+1}}\Sp{\sqrt{\hat{\bbV}^*_{h, k}}+\sqrt{\hat{\overline{\bbV}}_{h, k}}}\leq \sqrt{\widetilde{O}\Sp{HSA}\sum_{k, h}\ogk\Sp{\hat{\bbV}^*_{h, k}+\hat{\overline{\bbV}}_{h, k}}} \label{equ:simple_U}
\end{align}
\normalsize

Here, note that $U\approx \sum_{k, h}\sigma^k_\Be(x_h^k)$. Then, after some steps of algebra, it is possible to show that
\begin{align*}
	\sum_{k=1}^K\sum_{h=1}^H\ogk\Sp{\hat{\bbV}^*_{h, k}+\hat{\overline{\bbV}}_{h, k}}
	&\leq\widetilde{O}\Sp{HT+H^2U}\tag{When $T\geq\widetilde{\Omega}\Sp{H^5S^2A}$}\\
	\implies U\leq\widetilde{O}\Sp{\sqrt{HSA\Sp{HT+H^2U}}}&\leq\widetilde{O}\Sp{H\sqrt{SAT}+H^{1.5}\sqrt{U}}.\tag{By using equation \eqref{equ:simple_U}}
\end{align*}
Now, we can see that $\sum_{k, h}\sigma^k_\Be(x_h^k)\approx U\leq\widetilde{O}\Sp{H\sqrt{SAT}}$ satisfies this inequality. 
Finally, by plugging this result back into equation \eqref{equ:reg}, we can have
$$\mathrm{Reg}\Sp{M, K, \mathsf{\algoname}_\Be}\leq\widetilde{O}\Sp{H\sqrt{SAT}+H^4S^2A},$$
which matches the known lower bound when $T\geq\widetilde{\Omega}\Sp{H^6S^3A}$.
\section{Conclusion}
\label{sec:conclusion}
We gave a new algorithm with randomized exploration, \algoname, for tabular MDP, which enjoys a near-optimal $\widetilde{O}\left(H\sqrt{SAT}\right)$ regret bound in the time-homogeneous model.
Previously, near-optimal regret bounds can only be achieved by optimistic algorithms.
Our result also highlights the importance of using a single random seed for the entire episode and using the variance information in tuning the magnitude of noise (cf. Bernstein's inequality).

One important open problem is whether randomized exploration can a achieve a horizon-free regret bound in the time-homogeneous model where the transition is the same at different levels~\citep{zanette2019tighter,wang2020long,zhang2020reinforcement}. Another possible future direction is to consider whether the sub-optimal lower order terms $\widetilde{O}\Sp{H^4S^2A}$ can be further improved to relax the current requirement $T\geq\widetilde{\Omega}\Sp{H^6S^3A}$ for being near-optimal.

\section*{Acknowledgements} 
We sincerely thank Jinglin Chen and Chao Qin for pointing out mistakes in the initial draft of this paper. We are grateful for their careful reading and insightful discussion.
This work was supported in part by NSF TRIPODS II-DMS 2023166, NSF CCF 2007036, NSF IIS 2110170, NSF DMS 2134106, NSF CCF 2212261, NSF IIS 2143493, NSF CCF 2019844.

\medskip
\bibliography{References,simonduref}

\begin{thebibliography}{57}
\providecommand{\natexlab}[1]{#1}
\providecommand{\url}[1]{\texttt{#1}}
\expandafter\ifx\csname urlstyle\endcsname\relax
  \providecommand{\doi}[1]{doi: #1}\else
  \providecommand{\doi}{doi: \begingroup \urlstyle{rm}\Url}\fi

\bibitem[Abeille and Lazaric(2017)]{abeille2017linear}
Marc Abeille and Alessandro Lazaric.
\newblock Linear thompson sampling revisited.
\newblock In \emph{Artificial Intelligence and Statistics}, pages 176--184.
  PMLR, 2017.

\bibitem[Agrawal et~al.(2021)Agrawal, Chen, and Jiang]{agrawal2020improved}
Priyank Agrawal, Jinglin Chen, and Nan Jiang.
\newblock Improved worst-case regret bounds for randomized least-squares value
  iteration.
\newblock \emph{Thirty-fifth AAAI conference on artificial intelligence}, 2021.

\bibitem[Agrawal and Goyal(2012)]{agrawal2012analysis}
Shipra Agrawal and Navin Goyal.
\newblock Analysis of thompson sampling for the multi-armed bandit problem.
\newblock In \emph{Conference on learning theory}, pages 39--1. JMLR Workshop
  and Conference Proceedings, 2012.

\bibitem[Agrawal and Goyal(2017)]{agrawal2017near}
Shipra Agrawal and Navin Goyal.
\newblock Near-optimal regret bounds for thompson sampling.
\newblock \emph{J. ACM}, 64\penalty0 (5), September 2017.
\newblock ISSN 0004-5411.
\newblock \doi{10.1145/3088510}.
\newblock URL \url{https://doi.org/10.1145/3088510}.

\bibitem[Agrawal and Jia(2017{\natexlab{a}})]{agrawal2017optimistic}
Shipra Agrawal and Randy Jia.
\newblock Optimistic posterior sampling for reinforcement learning: worst-case
  regret bounds.
\newblock In \emph{Advances in Neural Information Processing Systems}, pages
  1184--1194, 2017{\natexlab{a}}.

\bibitem[Agrawal and Jia(2017{\natexlab{b}})]{agrawal2017posterior}
Shipra Agrawal and Randy Jia.
\newblock Posterior sampling for reinforcement learning: worst-case regret
  bounds.
\newblock In \emph{Advances in Neural Information Processing Systems}, pages
  1184--1194, 2017{\natexlab{b}}.

\bibitem[Azar et~al.(2017)Azar, Osband, and Munos]{azar2017minimax}
Mohammad~Gheshlaghi Azar, Ian Osband, and R{\'e}mi Munos.
\newblock Minimax regret bounds for reinforcement learning.
\newblock In \emph{Proceedings of the 34th International Conference on Machine
  Learning-Volume 70}, pages 263--272. JMLR. org, 2017.

\bibitem[Bartlett and Tewari(2009)]{bartlett2009regal}
Peter~L Bartlett and Ambuj Tewari.
\newblock Regal: A regularization based algorithm for reinforcement learning in
  weakly communicating mdps.
\newblock In \emph{Proceedings of the Twenty-Fifth Conference on Uncertainty in
  Artificial Intelligence}, pages 35--42. AUAI Press, 2009.

\bibitem[Brafman and Tennenholtz(2003)]{brafman2002r}
Ronen~I. Brafman and Moshe Tennenholtz.
\newblock R-max - a general polynomial time algorithm for near-optimal
  reinforcement learning.
\newblock \emph{J. Mach. Learn. Res.}, 3\penalty0 (Oct):\penalty0 213--231,
  March 2003.
\newblock ISSN 1532-4435.

\bibitem[Bubeck and Liu(2014)]{bubeck2014prior}
S{\'e}bastien Bubeck and Che-Yu Liu.
\newblock Prior-free and prior-dependent regret bounds for thompson sampling.
\newblock In \emph{2014 48th Annual Conference on Information Sciences and
  Systems (CISS)}, pages 1--9. IEEE, 2014.

\bibitem[Burda et~al.(2018)Burda, Edwards, Storkey, and
  Klimov]{burda2018exploration}
Yuri Burda, Harrison Edwards, Amos Storkey, and Oleg Klimov.
\newblock Exploration by random network distillation.
\newblock \emph{arXiv preprint arXiv:1810.12894}, 2018.

\bibitem[Cai et~al.(2019)Cai, Yang, Jin, and Wang]{cai2019provably}
Qi~Cai, Zhuoran Yang, Chi Jin, and Zhaoran Wang.
\newblock Provably efficient exploration in policy optimization.
\newblock \emph{arXiv preprint arXiv:1912.05830}, 2019.

\bibitem[Chapelle and Li(2011)]{chapelle2011empirical}
Olivier Chapelle and Lihong Li.
\newblock An empirical evaluation of thompson sampling.
\newblock In \emph{Advances in neural information processing systems}, pages
  2249--2257, 2011.

\bibitem[Dann and Brunskill(2015)]{dann2015sample}
Christoph Dann and Emma Brunskill.
\newblock Sample complexity of episodic fixed-horizon reinforcement learning.
\newblock In \emph{Advances in Neural Information Processing Systems}, pages
  2818--2826, 2015.

\bibitem[Dann et~al.(2017)Dann, Lattimore, and Brunskill]{dann2017unifying}
Christoph Dann, Tor Lattimore, and Emma Brunskill.
\newblock Unifying {PAC} and regret: Uniform {PAC} bounds for episodic
  reinforcement learning.
\newblock In \emph{Proceedings of the 31st International Conference on Neural
  Information Processing Systems}, NIPS’17, page 5717–5727, Red Hook, NY,
  USA, 2017. Curran Associates Inc.
\newblock ISBN 9781510860964.

\bibitem[Dann et~al.(2019)Dann, Li, Wei, and Brunskill]{dann2019policy}
Christoph Dann, Lihong Li, Wei Wei, and Emma Brunskill.
\newblock Policy certificates: Towards accountable reinforcement learning.
\newblock In \emph{Proceedings of the 36th International Conference on Machine
  Learning}, volume~97 of \emph{Proceedings of Machine Learning Research},
  pages 1507--1516, Long Beach, California, USA, 09--15 Jun 2019. PMLR.

\bibitem[Domingues et~al.(2021)Domingues, M{\'e}nard, Kaufmann, and
  Valko]{domingues2021episodic}
Omar~Darwiche Domingues, Pierre M{\'e}nard, Emilie Kaufmann, and Michal Valko.
\newblock Episodic reinforcement learning in finite mdps: Minimax lower bounds
  revisited.
\newblock In \emph{Algorithmic Learning Theory}, pages 578--598. PMLR, 2021.

\bibitem[Dong et~al.(2019)Dong, Wang, Chen, and Wang]{dong2019q}
Kefan Dong, Yuanhao Wang, Xiaoyu Chen, and Liwei Wang.
\newblock Q-learning with ucb exploration is sample efficient for
  infinite-horizon mdp.
\newblock \emph{arXiv preprint arXiv:1901.09311}, 2019.

\bibitem[Fortunato et~al.(2018)Fortunato, Azar, Piot, Menick, Hessel, Osband,
  Graves, Mnih, Munos, Hassabis, et~al.]{fortunato2018noisy}
Meire Fortunato, Mohammad~Gheshlaghi Azar, Bilal Piot, Jacob Menick, Matteo
  Hessel, Ian Osband, Alex Graves, Volodymyr Mnih, Remi Munos, Demis Hassabis,
  et~al.
\newblock Noisy networks for exploration.
\newblock In \emph{International Conference on Learning Representations}, 2018.

\bibitem[Fruit et~al.(2018)Fruit, Pirotta, and Lazaric]{fruit2018near}
Ronan Fruit, Matteo Pirotta, and Alessandro Lazaric.
\newblock Near optimal exploration-exploitation in non-communicating markov
  decision processes.
\newblock In \emph{Advances in Neural Information Processing Systems}, pages
  2994--3004, 2018.

\bibitem[Jaksch et~al.(2010)Jaksch, Ortner, and Auer]{jaksch2010near}
Thomas Jaksch, Ronald Ortner, and Peter Auer.
\newblock Near-optimal regret bounds for reinforcement learning.
\newblock \emph{Journal of Machine Learning Research}, 11\penalty0
  (Apr):\penalty0 1563--1600, 2010.

\bibitem[Jin et~al.(2018)Jin, Allen-Zhu, Bubeck, and Jordan]{jin2018q}
Chi Jin, Zeyuan Allen-Zhu, Sebastien Bubeck, and Michael~I Jordan.
\newblock Is {Q}-learning provably efficient?
\newblock In \emph{Advances in Neural Information Processing Systems}, pages
  4863--4873, 2018.

\bibitem[Jin et~al.(2019)Jin, Yang, Wang, and Jordan]{jin2019provably}
Chi Jin, Zhuoran Yang, Zhaoran Wang, and Michael~I Jordan.
\newblock Provably efficient reinforcement learning with linear function
  approximation.
\newblock \emph{arXiv preprint arXiv:1907.05388}, 2019.

\bibitem[Jin et~al.(2020)Jin, Xu, Shi, Xiao, and Gu]{jin2020mots}
Tianyuan Jin, Pan Xu, Jieming Shi, Xiaokui Xiao, and Quanquan Gu.
\newblock Mots: Minimax optimal thompson sampling.
\newblock \emph{arXiv preprint arXiv:2003.01803}, 2020.

\bibitem[Kakade(2003)]{kakade2003sample}
Sham~M Kakade.
\newblock \emph{On the sample complexity of reinforcement learning}.
\newblock PhD thesis, University of London London, England, 2003.

\bibitem[Kaufmann et~al.(2012)Kaufmann, Korda, and Munos]{kaufmann2012thompson}
Emilie Kaufmann, Nathaniel Korda, and R{\'e}mi Munos.
\newblock Thompson sampling: An asymptotically optimal finite-time analysis.
\newblock In \emph{International conference on algorithmic learning theory},
  pages 199--213. Springer, 2012.

\bibitem[Kearns and Singh(2002)]{kearns2002near}
Michael Kearns and Satinder Singh.
\newblock Near-optimal reinforcement learning in polynomial time.
\newblock \emph{Machine learning}, 49\penalty0 (2-3):\penalty0 209--232, 2002.

\bibitem[Kolter and Ng(2009)]{kolter2009near}
J~Zico Kolter and Andrew~Y Ng.
\newblock Near-bayesian exploration in polynomial time.
\newblock In \emph{Proceedings of the 26th annual international conference on
  machine learning}, pages 513--520, 2009.

\bibitem[Lai and Robbins(1985)]{lai1985asymptotically}
Tze~Leung Lai and Herbert Robbins.
\newblock Asymptotically efficient adaptive allocation rules.
\newblock \emph{Advances in applied mathematics}, 6\penalty0 (1):\penalty0
  4--22, 1985.

\bibitem[Lattimore and Hutter(2012)]{lattimore2012pac}
Tor Lattimore and Marcus Hutter.
\newblock Pac bounds for discounted mdps.
\newblock In \emph{International Conference on Algorithmic Learning Theory},
  pages 320--334. Springer, 2012.

\bibitem[Li et~al.(2021)Li, Shi, Chen, Gu, and Chi]{li2021breaking}
Gen Li, Laixi Shi, Yuxin Chen, Yuantao Gu, and Yuejie Chi.
\newblock Breaking the sample complexity barrier to regret-optimal model-free
  reinforcement learning.
\newblock \emph{Advances in Neural Information Processing Systems}, 34, 2021.

\bibitem[Maurer and Pontil(2009)]{maurer2009empirical}
Andreas Maurer and Massimiliano Pontil.
\newblock Empirical {B}ernstein bounds and sample variance penalization.
\newblock \emph{arXiv preprint arXiv:0907.3740}, 2009.

\bibitem[Menard et~al.(2021)Menard, Domingues, Shang, and Valko]{menard2021ucb}
Pierre Menard, Omar~Darwiche Domingues, Xuedong Shang, and Michal Valko.
\newblock Ucb momentum q-learning: Correcting the bias without forgetting.
\newblock \emph{arXiv preprint arXiv:2103.01312}, 2021.

\bibitem[Neu and Pike-Burke(2020)]{neu2020unifying}
Gergely Neu and Ciara Pike-Burke.
\newblock A unifying view of optimism in episodic reinforcement learning.
\newblock \emph{arXiv preprint arXiv:2007.01891}, 2020.

\bibitem[Osband and Van~Roy(2017)]{osband2017posterior}
Ian Osband and Benjamin Van~Roy.
\newblock Why is posterior sampling better than optimism for reinforcement
  learning?
\newblock In \emph{Proceedings of the 34th International Conference on Machine
  Learning-Volume 70}, pages 2701--2710. JMLR. org, 2017.

\bibitem[Osband et~al.(2013)Osband, Russo, and Van~Roy]{osband2013more}
Ian Osband, Daniel Russo, and Benjamin Van~Roy.
\newblock (more) efficient reinforcement learning via posterior sampling.
\newblock In \emph{Advances in Neural Information Processing Systems}, pages
  3003--3011, 2013.

\bibitem[Osband et~al.(2014)Osband, Van~Roy, and Wen]{osband2014generalization}
Ian Osband, Benjamin Van~Roy, and Zheng Wen.
\newblock Generalization and exploration via randomized value functions.
\newblock \emph{arXiv preprint arXiv:1402.0635}, 2014.

\bibitem[Osband et~al.(2017)Osband, Russo, Wen, and Van~Roy]{osband2017deep}
Ian Osband, Daniel Russo, Zheng Wen, and Benjamin Van~Roy.
\newblock Deep exploration via randomized value functions.
\newblock \emph{arXiv preprint arXiv:1703.07608}, 2017.

\bibitem[Osband et~al.(2018)Osband, Aslanides, and
  Cassirer]{osband2018randomized}
Ian Osband, John Aslanides, and Albin Cassirer.
\newblock Randomized prior functions for deep reinforcement learning.
\newblock In \emph{Advances in Neural Information Processing Systems}, pages
  8617--8629, 2018.

\bibitem[Pacchiano et~al.(2020)Pacchiano, Ball, Parker-Holder, Choromanski, and
  Roberts]{pacchiano2020optimism}
Aldo Pacchiano, Philip Ball, Jack Parker-Holder, Krzysztof Choromanski, and
  Stephen Roberts.
\newblock On optimism in model-based reinforcement learning.
\newblock \emph{arXiv preprint arXiv:2006.11911}, 2020.

\bibitem[Russo(2019)]{russo2019worst}
Daniel Russo.
\newblock Worst-case regret bounds for exploration via randomized value
  functions.
\newblock In \emph{Advances in Neural Information Processing Systems}, pages
  14433--14443, 2019.

\bibitem[Simchowitz and Jamieson(2019)]{simchowitz2019non}
Max Simchowitz and Kevin~G Jamieson.
\newblock Non-asymptotic gap-dependent regret bounds for tabular mdps.
\newblock In \emph{Advances in Neural Information Processing Systems}, pages
  1153--1162, 2019.

\bibitem[Strehl and Littman(2008)]{strehl2008analysis}
Alexander~L Strehl and Michael~L Littman.
\newblock An analysis of model-based interval estimation for markov decision
  processes.
\newblock \emph{Journal of Computer and System Sciences}, 74\penalty0
  (8):\penalty0 1309--1331, 2008.

\bibitem[Strehl et~al.(2006)Strehl, Li, Wiewiora, Langford, and
  Littman]{strehl2006pac}
Alexander~L Strehl, Lihong Li, Eric Wiewiora, John Langford, and Michael~L
  Littman.
\newblock Pac model-free reinforcement learning.
\newblock In \emph{Proceedings of the 23rd international conference on Machine
  learning}, pages 881--888. ACM, 2006.

\bibitem[Szita and Szepesv{\'a}ri(2010)]{szita2010model}
Istv{\'a}n Szita and Csaba Szepesv{\'a}ri.
\newblock Model-based reinforcement learning with nearly tight exploration
  complexity bounds.
\newblock In \emph{ICML}, 2010.

\bibitem[Talebi and Maillard(2018)]{talebi2018variance}
Mohammad~Sadegh Talebi and Odalric-Ambrym Maillard.
\newblock Variance-aware regret bounds for undiscounted reinforcement learning
  in mdps.
\newblock \emph{arXiv preprint arXiv:1803.01626}, 2018.

\bibitem[Tan et~al.(2020)Tan, Xiong, and Dwaracherla]{tan2020parameterized}
Tian Tan, Zhihan Xiong, and Vikranth~R Dwaracherla.
\newblock Parameterized indexed value function for efficient exploration in
  reinforcement learning.
\newblock In \emph{Proceedings of the AAAI Conference on Artificial
  Intelligence}, volume~34, pages 5948--5955, 2020.

\bibitem[Vaswani et~al.(2020)Vaswani, Mehrabian, Durand, and
  Kveton]{pmlr-v108-vaswani20a}
Sharan Vaswani, Abbas Mehrabian, Audrey Durand, and Branislav Kveton.
\newblock Old dog learns new tricks: Randomized ucb for bandit problems.
\newblock In \emph{Proceedings of the Twenty Third International Conference on
  Artificial Intelligence and Statistics}, pages 1988--1998, 2020.

\bibitem[Wang et~al.(2020)Wang, Du, Yang, and Kakade]{wang2020long}
Ruosong Wang, Simon~S Du, Lin~F Yang, and Sham~M Kakade.
\newblock Is long horizon reinforcement learning more difficult than short
  horizon reinforcement learning?
\newblock \emph{arXiv preprint arXiv:2005.00527}, 2020.

\bibitem[Xu and Tewari(2019)]{xu2019worst}
Ziping Xu and Ambuj Tewari.
\newblock Worst-case regret bound for perturbation based exploration in
  reinforcement learning.
\newblock \emph{Ann Arbor}, 1001:\penalty0 48109, 2019.

\bibitem[Yang et~al.(2020)Yang, Yang, and Du]{yang2020q}
Kunhe Yang, Lin~F Yang, and Simon~S Du.
\newblock {$Q$}-learning with logarithmic regret.
\newblock \emph{arXiv preprint arXiv:2006.09118}, 2020.

\bibitem[Zanette and Brunskill(2019)]{zanette2019tighter}
Andrea Zanette and Emma Brunskill.
\newblock Tighter problem-dependent regret bounds in reinforcement learning
  without domain knowledge using value function bounds.
\newblock In \emph{International Conference on Machine Learning}, pages
  7304--7312, 2019.

\bibitem[Zanette et~al.(2020)Zanette, Brandfonbrener, Brunskill, Pirotta, and
  Lazaric]{zanette2020frequentist}
Andrea Zanette, David Brandfonbrener, Emma Brunskill, Matteo Pirotta, and
  Alessandro Lazaric.
\newblock Frequentist regret bounds for randomized least-squares value
  iteration.
\newblock In \emph{International Conference on Artificial Intelligence and
  Statistics}, pages 1954--1964, 2020.

\bibitem[Zhang and Ji(2019)]{zhang2019regret}
Zihan Zhang and Xiangyang Ji.
\newblock Regret minimization for reinforcement learning by evaluating the
  optimal bias function.
\newblock In \emph{Advances in Neural Information Processing Systems}, pages
  2823--2832, 2019.

\bibitem[Zhang et~al.(2020{\natexlab{a}})Zhang, Ji, and
  Du]{zhang2020reinforcement}
Zihan Zhang, Xiangyang Ji, and Simon~S Du.
\newblock Is reinforcement learning more difficult than bandits? a near-optimal
  algorithm escaping the curse of horizon.
\newblock \emph{arXiv preprint arXiv:2009.13503}, 2020{\natexlab{a}}.

\bibitem[Zhang et~al.(2020{\natexlab{b}})Zhang, Zhou, and Ji]{zhang2020almost}
Zihan Zhang, Yuan Zhou, and Xiangyang Ji.
\newblock Almost optimal model-free reinforcement learning via
  reference-advantage decomposition.
\newblock \emph{arXiv preprint arXiv:2004.10019}, 2020{\natexlab{b}}.

\bibitem[Zhang et~al.(2020{\natexlab{c}})Zhang, Zhou, and Ji]{zhang2020model}
Zihan Zhang, Yuan Zhou, and Xiangyang Ji.
\newblock Model-free reinforcement learning: from clipped pseudo-regret to
  sample complexity.
\newblock \emph{arXiv preprint arXiv:2006.03864}, 2020{\natexlab{c}}.

\end{thebibliography}
\bibliographystyle{plainnat}

\newpage
\appendix
\tableofcontents
\newpage

\section{Table of Notations}
\label{sec:notation}
\setcounter{theorem}{0}

	\vskip 0.15in
		\begin{longtable}{ll}
			\toprule
			\textbf{Symbol} & \textbf{Meaning} \\
			\midrule
			$\S$ & The state space \\
			$\A$ & The action space \\
			$S$ & Size of state space \\
			$A$ & Size of action space \\
			$H$ & The length of horizon \\
			$K$ & The total number of episodes \\
			$T$ & The total number of steps, $T=HK$\\
			$\pi^k$ & The greedy policy generated in Algorithm~\ref{algo:cub-rlsvi} at episode $k$\\
			$R_{h, s, a}$ & Expected reward function at $(h, s, a)$\\
			${P}_{h, s, a}\Sp{s'}$ & Transition probability\\
			$M$ & Underlying true MDP, $M=(H, \S, \A, R, P, s_1)$\\
			$n_k(h,s,a)$ & $\sum_{k'=1}^{k-1} \mathds{1}\{(s_h^{k'},a_h^{k'})=(s,a)\}$ \\
			$\hat{R}^k_{h, s, a}$ & Estimated reward function, $\frac{1}{n_k\Sp{h, s, a}+1}\sum_{l=1}^{k-1}\mathds{1}\{(s_h^l, a_h^l)=(s, a)\}r^l_{h, s_h^l, a_h^l}$\\
			$\hat{P}^k_{h, s, a}\Sp{s'}$ & Estimated transition kernel,  $\frac{1}{n_k\Sp{h, s, a}+1}\sum_{l=1}^{k-1}\mathds{1}\{(s_h^l, a_h^l, s_{h+1}^l)=(s, a, s')\}$ \\
			$\tilde{P}^k_{h, s, a}\Sp{s'} $ & Estimated transition probability with a slightly different\\
			& denominator, $\frac{1}{\max\Bp{n_k\Sp{h, s, a}, 1}}\sum_{l=1}^{k-1}\mathds{1}\{(s_h^l, a_h^l, s_{h+1}^l)=(s, a, s')\}$\\
			$\hat{M}^k$ & Estimated MDP, $\hat{M}^k=(H, \S, \A, \hat{P}, \hat{R}, s_1)$\\
			$\gamma^k_\ty(h, s, a)$ & $\sigma^k_\ty(h, s, a)\sqrt{\log(40k^4)}$\\
			$\hat{z}_k$ & Perturbation's single random source during episode $k$\\
			& from a standard Gaussian, $\hat{z}_k\sim\mathcal{N}(0, 1)$\\
			$w^k_\ty(h, s, a)$ & Noise of type ``ty'', $w^k_\ty(h, s, a)=\sigma^k_\ty(h, s, a)\hat{z}_k$\\
			$\uw^k_\ty(h, s, a)$ & $-\gamma^k_\ty\hsa$\\
			$\overline{w}^k_\ty(h, s, a)$ & $\gamma^k_\ty\hsa$\\
			$\overline{M}^k_\ty$ & Perturbed estimated MDP with ty-type noise, $\overline{M}^k=(H, \S, \A, \hat{P}, \hat{R}+w^k_\ty, s_1)$\\
			$\underline{M}^k_\ty$ & Negatively perturbed MDP, $\underline{M}^k_\ty=(H, \S, \A, \hat{P}, \hat{R}+\uw^k_\ty, s_1)$\\
			$\overline{\overline{M}}^k_\ty$ & Positively perturbed MDP, $\underline{M}^k_\ty=(H, \S, \A, \hat{P}, \hat{R}+\overline{w}^k_\ty, s_1)$\\
			$V^*_h$ / $V^*_{h, k}$ & Optimal value function at step $h$ for true MDP $M$\\
			$V^\pik_{h}$ / $V^\pik_{h, k}$ & Value function at step $h$ by running policy $\pik$ on true MDP $M$\\
			$\overline{Q}_{h, k}$ & $Q$-value function obtained by running Algorithm \ref{algo:cub-rlsvi}\\
			$\overline{V}_{h, k}$ & Value function obtained by running policy $\pik$ on $\overline{M}^k$\\
			& with a clipping of threshold $2(H-h+1)$\\
			$\underline{V}_{h, k}$ & Value function obtained by running policy $\pik$ on $\underline{M}^k$\\
			& with a clipping of threshold $2(H-h+1)$\\
			$\overline{\overline{V}}_{h, k}$ & Value function obtained by running policy $\pik$ on $\overline{\overline{M}}^k$\\
			& with a clipping of threshold $2(H-h+1)$\\
			$\Rr^k_{h, s, a}$ & $\hat{R}^k_{h, s, a} - R_{h, s, a}$\\
			$\mathcal{P}^k_{h, s, a}$ & $\inner{\hat{P}^k_{h, s, a}-P_{h, s, a}, V^*_{h+1}}$\\
			$\mathcal{H}^{k}_h$ & The historical observations and actions till time $h$ in episode $k$,\\
			  & $\Bp{(s_l^j, a_l^j, r_l^j):j\leq k\text{ and }l\leq H\text{ if }j<k, \text{ else }l\leq h}$ \\
			$\overline{\mathcal{H}}^k_h$ & The historical observations and actions till time $h$ and episode $k$,\\
			 & plus the randomness in episode $k$, $\mathcal{H}_h^k\cup\Bp{ \hat{z}_k}$\\
			$\bbV\Sp{P, V}$ & Variance of $V\in\R^S$ under distribution $P\in\Delta^S$, $\sum_{s\in\S}P(s)(V(s)-\inner{P, V})^2$\\
			$\alpha_k$ & $200H^2\log(2HSAk^2)\log(40k^4)$\\
			$\sigma^k_{\mathrm{ty}}\Sp{h, s, a}$ & Magnitude of perturbation. $\ty\in\Bp{\Ho, \Be}$\\
			$\ty$ & Reserved subscript for denoting perturbation type, $\ty\in\Bp{\Ho, \Be}$,\\
			 & where ``Ho'' denotes Hoeffding-type and ``Be'' denotes Bernstein-type\\
			$\sigma^k_{\mathrm{Ho}}\Sp{h, s, a}$ & $H\sqrt{\frac{\log\Sp{2HSAk^2}}{n_k\Sp{h, s, a}+1}}+\frac{H}{n_k\Sp{h, s, a}+1}$\\
			$\sigma^k_{\mathrm{Be}}\Sp{h, s, a}$ & $\sqrt{\frac{16\mathbb{V}\Sp{\tilde{P}^k_{h, s, a}, \overline{V}_{k, h+1}}\log\Sp{2HSAk^2}}{n_k\Sp{h, s, a}+1}}+\frac{65H\log\Sp{2HSAk^2}}{n_k\Sp{h, s, a}+1}+\sqrt{\frac{\log\Sp{2HSAk^2}}{n_k\Sp{h, s, a}+1}}$\\
			$\sqrt{e^k_{\mathrm{Ho}}(h,s,a)}$ & $H\sqrt{\frac{\log\Sp{2HSAk^2}}{n_k\Sp{h, s, a}+1}}+\frac{H}{n_k\hsa+1}$\\
			$\sqrt{e^k_{\mathrm{Be}}(h,s,a)}$ & $\sqrt{\frac{6\mathbb{V}\Sp{\tilde{P}^k_{h, s, a}, V^*_{h+1}}\log\Sp{2HSAk^2}}{n_k\Sp{h, s, a}+1}}+\frac{9H\log\Sp{2HSAk^2}}{n_k\Sp{h, s, a}+1}+\sqrt{\frac{\log\Sp{2HSAk^2}}{n_k\Sp{h, s, a}+1}}$\\
            $C_1$ & $\frac{1}{\Phi(1.5)-\Phi(1)}$\\
			\bottomrule
		\end{longtable}
	\vskip -0.1in
\section{Good Events}
\label{sec:good_events}
\begin{definition}
	Let $M'=\Sp{H, \S, \A, P', R', s_1}$. We define the following confidence sets for both Bernstein-type and Hoeffding-type noise
	\fontsize{9}{9}
	\begin{equation}
	    \label{def:confi_set}
	    \mathcal{M}^k_{\mathrm{ty}}=\Bp{M':\forall\Sp{h, s, a}, \abs{\Sp{R'_{h, s, a}-R_{h, s, a}}+\inner{P'_{h, s, a}-P_{h, s, a}, V^{*}_{h+1}}}\leq\sqrt{e^k_{\mathrm{ty}}\Sp{h, s, a}}},
	\end{equation}
	\normalsize
	where the confidence widths are set as
	\fontsize{10}{10}
	\begin{equation}
	    \label{def:confi_width_be}
	    \begin{split}
	        \sqrt{e^k_{\mathrm{Be}}\Sp{h, s, a}}=&\sqrt{\frac{6\mathbb{V}\Sp{\tilde{P}^k_{h, s, a}, V^*_{h+1}}\log\Sp{2HSAk^2}}{n_k\Sp{h, s, a}+1}}\\
	        &\qquad+\frac{9H\log\Sp{2HSAk^2}}{n_k\Sp{h, s, a}+1}+\sqrt{\frac{\log\Sp{2HSAk^2}}{n_k\Sp{h, s, a}+1}},
	    \end{split}
	\end{equation}
	\normalsize
	\begin{equation}
	    \label{def:confi_width_ho}
	    \sqrt{e^k_{\mathrm{Ho}}\Sp{h, s, a}}=H\sqrt{\frac{\log\Sp{2HSAk^2}}{n_k\Sp{h, s, a}+1}}+\frac{H}{n_k\hsa+1}.
	\end{equation}
	We also define two events $\mathcal{E}^1_k$ and $\mathcal{E}^2_k$ as the following:
	\begin{equation}
	    \label{def:event_e1}
	    \mathcal{E}^1_k=\Bp{\abs{\hat{R}^k_{h, s, a}-R_{h, s, a}}\leq\sqrt{\frac{\log\Sp{2HSAk^2}}{n_k\Sp{h, s, a}+1}}+\frac{1}{n_k\hsa+1},\ \forall\Sp{h, s, a}},
	\end{equation}
	\begin{equation}
	    \label{def:event_e2}
	    \begin{split}
	        \mathcal{E}^2_k=\Bigg\lbrace\abs{\inner{\hat{P}^k_{h, s, a}-P_{h, s, a}, V^*_{h+1}}}\leq& \sqrt{\frac{6\mathbb{V}\Sp{\tilde{P}^k_{h, s, a}, V^*_{h+1}}\log\Sp{2HSAk^2}}{n_k\Sp{h, s, a}+1}}\\
	        &\qquad +\frac{8H\log\Sp{2HSAk^2}}{n_k\Sp{h, s, a}+1},\forall\hsa\Bigg\rbrace.
	    \end{split}
	\end{equation}
\end{definition}

We have the following lemmas about concentration of events.
\begin{lemma}
	\label{lemma:r_concen}
	For fixed $\Sp{k, h, s, a}$, let $n=n_k\hsa$. Then, if $n\geq 1$, for any fixed $\delta>0$, we have
	$$\P\Sp{|\hat{R}^k_{h, s, a}-R_{h, s, a}|\geq\sqrt{\frac{\log\Sp{2/\delta}}{n+1}}+\frac{1}{n+1}}\leq\delta.$$
\end{lemma}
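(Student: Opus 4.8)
The statement to prove is Lemma~\ref{lemma:r_concen}, a standard concentration bound for the empirical reward estimate. Recall that $\hat{R}^k_{h,s,a}$ is defined as a sum of observed rewards divided by $n_k(h,s,a)+1$, where the rewards $r^l_{h,s,a}\in[0,1]$ are i.i.d.\ (conditional on visiting $(h,s,a)$) with mean $R_{h,s,a}$, and $n = n_k(h,s,a)$ is the number of observations. The only subtlety is the mismatch between the denominator $n+1$ and the number of summands $n$, so the estimator is a slightly shrunk version of the true empirical mean.

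The plan is to first reduce to a clean i.i.d.\ sum. I would write $\hat{R}^k_{h,s,a} = \frac{1}{n+1}\sum_{l} r^l$ where the sum runs over the $n$ episodes that visited $(h,s,a)$, and let $\bar{r} = \frac{1}{n}\sum_l r^l$ denote the ordinary empirical mean. Then the key decomposition is
\begin{equation*}
\hat{R}^k_{h,s,a} - R_{h,s,a} = \frac{n}{n+1}\bar{r} - R_{h,s,a} = \frac{n}{n+1}\Sp{\bar{r} - R_{h,s,a}} - \frac{1}{n+1}R_{h,s,a}.
\end{equation*}
Taking absolute values and using $\frac{n}{n+1}\leq 1$ together with $R_{h,s,a}\in[0,1]$, the bias term is bounded by $\frac{1}{n+1}$, which accounts for the additive $\frac{1}{n+1}$ in the statement. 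It then remains to control $\abs{\bar{r}-R_{h,s,a}}$.

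Next I would apply Hoeffding's inequality to the i.i.d.\ bounded variables $r^l\in[0,1]$: for the empirical mean over $n$ samples, $\P\Sp{\abs{\bar{r}-R_{h,s,a}}\geq t}\leq 2\exp\Sp{-2nt^2}$. Setting $2\exp(-2nt^2)=\delta$ gives $t=\sqrt{\frac{\log(2/\delta)}{2n}}$, so with probability at least $1-\delta$ we have $\abs{\bar{r}-R_{h,s,a}}\leq \sqrt{\frac{\log(2/\delta)}{2n}}$. Combining with the decomposition above and using $\frac{n}{n+1}\leq 1$ yields $\abs{\hat{R}^k_{h,s,a}-R_{h,s,a}}\leq \sqrt{\frac{\log(2/\delta)}{2n}} + \frac{1}{n+1}$ with probability at least $1-\delta$. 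A minor technical point is matching the stated bound $\sqrt{\frac{\log(2/\delta)}{n+1}}$, which is slightly looser than $\sqrt{\frac{\log(2/\delta)}{2n}}$; since $\frac{1}{2n}\leq\frac{1}{n+1}$ whenever $n\geq 1$, the Hoeffding term is absorbed and the stated form follows directly.

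This lemma is routine and I do not anticipate a genuine obstacle; the only thing requiring care is bookkeeping the $n$ versus $n+1$ discrepancy so that the shrinkage bias is correctly charged to the $\frac{1}{n+1}$ term rather than inflating the stochastic term. One should also be mindful that the $r^l$ are i.i.d.\ only after conditioning on the event that $(s,a)$ is visited at time $h$ in the relevant episodes; rigorously this is handled by noting that the reward draw $r^l_{h,s,a}$ is independent of the visitation (the reward distribution depends only on $(h,s,a)$, not on history), so standard Hoeffding applies to the fixed collection of $n$ reward samples. With these points settled, the bound follows immediately.
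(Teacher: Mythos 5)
Your proposal is correct and follows essentially the same route as the paper: both decompose the $\frac{1}{n+1}$-normalized estimator into the ordinary empirical mean plus a shrinkage bias bounded by $\frac{1}{n+1}$ (your grouping charges the bias to $\frac{R_{h,s,a}}{n+1}$, the paper's to $\frac{\bar{r}}{n+1}$, which are algebraically equivalent rearrangements), then apply standard Hoeffding to the $n$-sample mean and absorb the constant via $n+1\leq 2n$ for $n\geq 1$. No gap; the bookkeeping of $n$ versus $n+1$ is handled exactly as in the paper's proof.
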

\begin{proof}
	Let $\hat{R}^k_{h, s, a}=\frac{1}{n+1}\sum_{i=1}^{n}r_{\hsa, i}$, where $r_{\hsa, i}\sim\mathscr{R}_{h, s, a}$ are i.i.d. samples. By definition of the MDP, we have $\E\Mp{r_{\hsa, i}}=R_{h, s, a}$. Then, notice that
	$$\hat{R}^k_{h, s, a}=\frac{1}{n+1}\sum_{i=1}^{n}r_{\hsa, i}=\frac{1}{n}\sum_{i=1}^{n}r_{\hsa, i}-\frac{1}{n\Sp{n+1}}\sum_{i=1}^{n}r_{\hsa, i}.$$
	Since the reward is assumed to be bounded in $\Mp{0, 1}$, we have $\frac{1}{n\Sp{n+1}}\sum_{i=1}^{n}r_{\hsa, i}\leq\frac{1}{n+1}$. Then, for fixed $\delta>0$, we have
	\begin{align*}
		&\P\Sp{|\hat{R}^k_{h, s, a}-R_{h, s, a}|\geq\sqrt{\frac{\log\Sp{2/\delta}}{n+1}}+\frac{1}{n+1}}\\
		=&\P\Sp{\abs{\frac{1}{n}\sum_{i=1}^{n}r_{\hsa, i}-R_{h, s, a}-\frac{1}{n\Sp{n+1}}\sum_{i=1}^{n}r_{\hsa, i}}\geq\sqrt{\frac{\log\Sp{2/\delta}}{n+1}}+\frac{1}{n+1}}\\
		\leq&\P\Sp{\abs{\frac{1}{n}\sum_{i=1}^{n}r_{\hsa, i}-R_{h, s, a}}+\frac{1}{n+1}\geq\sqrt{\frac{\log\Sp{2/\delta}}{n+1}}+\frac{1}{n+1}}\tag{By triangle inequality}\\
		\leq&\P\Sp{\abs{\frac{1}{n}\sum_{i=1}^{n}r_{\hsa, i}-R_{h, s, a}}\geq\sqrt{\frac{\log\Sp{2/\delta}}{2n}}}\tag{Since $n+1\leq 2n$ for $n\geq 1$}\\
		\leq&\delta.\tag{By standard Hoeffding's inequality}
	\end{align*}
\end{proof}
\begin{lemma}
	\label{lemma:v_concen}
	For fixed $\Sp{k, h, s, a}$, let $n=n_k\Sp{h, s, a}$ and $V\in\R^S$ be some non-negative value function such that $\Norm{V}_{\infty}\leq H$. Then, if $n\geq 1$, for any fixed $\delta>0$, we have
	\begin{equation}
		\label{equ:v_concen_ho}
		\P\Sp{\abs{\inner{\hat{P}^k_{h, s, a}-P_{h, s, a}, V}}\geq H\sqrt{\frac{\log\Sp{2/\delta}}{n+1}}+\frac{H}{n+1}} \leq\delta,
	\end{equation}
	\begin{equation}
		\label{equ:v_concen_be}
		\P\Sp{\abs{\inner{\hat{P}^k_{h, s, a}-P_{h, s, a}, V}}\geq \sqrt{\frac{6\bbV\Sp{\tilde{P}^k_{h, s, a}, V}\log\Sp{2/\delta}}{n+1}}+\frac{8H\log\Sp{2/\delta}}{n+1}}\leq\delta.
	\end{equation}
\end{lemma}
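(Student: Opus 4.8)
The plan is to reduce both inequalities to standard scalar concentration applied to the i.i.d.\ sequence of sampled values. Fix $(k,h,s,a)$ and let $n=n_k(h,s,a)\geq 1$. Since $V$ is a fixed, data-independent value function, the next states observed in the episodes $l<k$ for which $(s_h^l,a_h^l)=(s,a)$ are i.i.d.\ draws from $P_{h,s,a}$; writing $X_1,\dots,X_n$ for these states we have $\inner{\hat{P}^k_{h,s,a},V}=\frac{1}{n+1}\sum_{i=1}^n V(X_i)$ and $\inner{P_{h,s,a},V}=\E[V(X_1)]$. Exactly as in Lemma~\ref{lemma:r_concen}, I would first peel off the spurious $+1$ in the denominator via $\frac{1}{n+1}\sum_{i=1}^n V(X_i)=\frac{1}{n}\sum_{i=1}^n V(X_i)-\frac{1}{n(n+1)}\sum_{i=1}^n V(X_i)$ and bound the second term by $\frac{H}{n+1}$ using $0\leq V(X_i)\leq H$. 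This reduces both claims to a two-sided deviation bound on the ordinary empirical mean $\hat{\mu}:=\frac{1}{n}\sum_{i=1}^n V(X_i)$ around $\mu:=\E[V(X_1)]$, plus an additive $\frac{H}{n+1}$.

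For the Hoeffding bound \eqref{equ:v_concen_ho}, I would apply the standard Hoeffding inequality to the bounded variables $V(X_i)\in[0,H]$, giving $\P\Sp{\abs{\hat{\mu}-\mu}\geq H\sqrt{\log(2/\delta)/(2n)}}\leq\delta$, and then use $n+1\leq 2n$ (valid for $n\geq 1$) to replace $2n$ by $n+1$ inside the square root. Combined with the $\frac{H}{n+1}$ correction this yields the claim; the argument is a verbatim transcription of Lemma~\ref{lemma:r_concen} with $r_{\cdot,i}$ replaced by $V(X_i)/H$.

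For the Bernstein bound \eqref{equ:v_concen_be}, I would first invoke Bernstein's inequality for the bounded i.i.d.\ variables $V(X_i)$ to obtain a deviation of order $\sqrt{\bbV(P_{h,s,a},V)\log(2/\delta)/n}+H\log(2/\delta)/n$ in terms of the \emph{true} variance. The substantive step is then to replace the true variance by the empirical variance $\bbV(\tilde{P}^k_{h,s,a},V)$ appearing in the statement; note the definition of $\tilde{P}^k_{h,s,a}$ (denominator $\max\{n,1\}=n$) makes $\bbV(\tilde{P}^k_{h,s,a},V)=\frac{1}{n}\sum_{i=1}^n V(X_i)^2-\hat{\mu}^2$ the honest empirical variance, avoiding any $+1$ nuisance here. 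I would write $\bbV(P_{h,s,a},V)-\bbV(\tilde{P}^k_{h,s,a},V)=\Sp{\E[V^2]-\tfrac{1}{n}\sum_{i=1}^n V(X_i)^2}-\Sp{\mu-\hat{\mu}}\Sp{\mu+\hat{\mu}}$, control the first bracket by Hoeffding applied to $V(X_i)^2\in[0,H^2]$, and control $(\mu-\hat{\mu})(\mu+\hat{\mu})$ using the already-established deviation of $\hat{\mu}$ together with $\abs{\mu+\hat{\mu}}\leq 2H$. This gives $\bbV(P_{h,s,a},V)\leq\bbV(\tilde{P}^k_{h,s,a},V)+O\Sp{H^2\sqrt{\log(2/\delta)/n}}$; substituting into the Bernstein estimate, using $\sqrt{a+b}\leq\sqrt{a}+\sqrt{b}$, and folding the resulting $n^{-3/4}$- and $n^{-1}$-scale remainders into the $\frac{H\log(2/\delta)}{n+1}$ term produces the stated form with its specific constants $6$ and $8$.

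The main obstacle is this variance-conversion step: getting from the true variance (which Bernstein naturally produces) to the empirical variance claimed in \eqref{equ:v_concen_be} requires a careful accounting of the cross term $(\mu-\hat{\mu})(\mu+\hat{\mu})$ and a verification that the leftover $n^{-3/4}$-scale fluctuations are dominated by the linear-in-$1/n$ remainder, so that the coefficient inside the square root can be taken to be $6$ and the remainder constant $8$ after a union bound over the two auxiliary Hoeffding events. Everything else is a direct adaptation of the reward concentration already established in Lemma~\ref{lemma:r_concen}.
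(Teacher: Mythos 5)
Your reduction to scalar concentration and your treatment of the Hoeffding bound \eqref{equ:v_concen_ho} are correct and coincide with the paper's proof (peel off the $\frac{1}{n(n+1)}\sum_i V(X_i)\leq\frac{H}{n+1}$ correction, apply Hoeffding to $V(X_i)\in[0,H]$, use $n+1\leq 2n$). The gap is in the Bernstein bound \eqref{equ:v_concen_be}, specifically in your variance-conversion step. Converting the true variance to the empirical one via Hoeffding applied to $V(X_i)^2\in[0,H^2]$ only gives $\bbV\Sp{P_{h,s,a},V}\leq\bbV\Sp{\tilde{P}^k_{h,s,a},V}+O\Sp{H^2\sqrt{\log(2/\delta)/n}}$, and after substituting into Bernstein and using $\sqrt{a+b}\leq\sqrt{a}+\sqrt{b}$ the leftover term is of order $H\Sp{\log(2/\delta)}^{3/4}n^{-3/4}$. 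This term is \emph{not} dominated by $\frac{H\log(2/\delta)}{n+1}$: it exceeds it whenever $n\gg\log(2/\delta)$, and it cannot be absorbed into the empirical-variance term either, since $\bbV\Sp{\tilde{P}^k_{h,s,a},V}$ can be zero (all observed $V(X_i)$ equal) while the $n^{-3/4}$ remainder does not vanish. What is actually needed is concentration of the \emph{standard deviation} at rate $H\sqrt{\log(2/\delta)/n}$, i.e.\ $\sqrt{\bbV\Sp{P,V}}\leq\sqrt{\bbV\Sp{\tilde{P}^k,V}}+O\Sp{H\sqrt{\log(2/\delta)/n}}$, which does not follow from your variance bound (the difference of square roots blows up precisely when the variances are small).

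The paper sidesteps this issue entirely by invoking the empirical Bernstein inequality of Maurer and Pontil, stated as Lemma~\ref{lemma:empirical_bernstein}, which directly bounds the deviation by $\sqrt{\frac{2\hat{V}_n\log(2/\delta)}{n-1}}+\frac{7H\log(2/\delta)}{3(n-1)}$ with $\hat{V}_n$ the empirical variance; since $\tilde{P}^k_{h,s,a}$ has denominator $\max\{n,1\}=n$, this $\hat{V}_n$ is exactly $\bbV\Sp{\tilde{P}^k_{h,s,a},V}$, and the stated constants $6$ and $8$ then follow from $3(n-1)\geq n+1$ for $n\geq 2$, with the case $n=1$ handled by the trivial bound $\abs{\inner{\hat{P}^k_{h,s,a}-P_{h,s,a},V}}\leq H\leq\frac{8H\log(2/\delta)}{n+1}$. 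To repair your proof, replace the Hoeffding-on-$V^2$ conversion by a direct appeal to Lemma~\ref{lemma:empirical_bernstein} (or prove standard-deviation concentration via a self-bounding argument), and add the separate $n=1$ case, since the empirical Bernstein inequality requires $n\geq 2$.
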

\begin{proof}
	For fixed $\hsa$, we generate $n$ i.i.d. samples of $s_{\hsa, i}\sim P_{h, s, a}$ and consider $V\Sp{s_{\hsa, i}}$. Then, by taking $n_k\Sp{h, s, a}=n$, we have
	$$\inner{\hat{P}^k_{h, s, a}, V}=\frac{1}{n}\sum_{i=1}^{n}V\Sp{s_{\hsa, i}}-\frac{1}{n\Sp{n+1}}\sum_{i=1}^{n}V\Sp{s_{\hsa, i}}.$$
	The first result in equation (\ref{equ:v_concen_ho}) can be proved very similarly as Lemma \ref{lemma:r_concen} using Hoeffding's inequality by simply replacing the upper bound of 1 in reward by $H$.
	
	Then, for second result, we first consider $n\geq 2$. For some $\delta>0$, define
	$$b_{\hsa, n}=\sqrt{\frac{2\mathbb{V}\Sp{\tilde{P}^k_{h, s, a}, V}\log\Sp{2/\delta}}{n-1}}+\frac{7H\log\Sp{2/\delta}}{3\Sp{n-1}}+\frac{H}{n+1}.$$
	By noticing that $F\Sp{s}\leq H$ and applying similar technique in proof of Lemma \ref{lemma:r_concen}, we have
	\begin{align*}
		&\P\Sp{\abs{\inner{\hat{P}^k_{h, s, a}-P_{h, s, a}, V}}\geq b_{\hsa, n}}\\
		\leq&\P\Sp{\abs{\inner{\tilde{P}^k_{h, s, a}-P_{h, s, a}, V}}\geq\sqrt{\frac{2\mathbb{V}\Sp{\tilde{P}^k_{h, s, a}, V}\log\Sp{2/\delta}}{n-1}}+\frac{7H\log\Sp{2/\delta}}{3\Sp{n-1}}}\\
		\leq&\delta.\tag{By Lemma \ref{lemma:empirical_bernstein}, the empirical Bernstein's inequality}
	\end{align*}
	Then, since $3\Sp{n-1}\geq n+1$ when $n\geq 2$, we can easily check that
	$$b_{\hsa, n}\leq\sqrt{\frac{6\bbV\Sp{\tilde{P}^k_{h, s, a}, V}\log\Sp{2/\delta}}{n+1}}+\frac{8H\log\Sp{2/\delta}}{n+1}.$$
	Finally, since $\Norm{V}_\infty\leq H$, when $n=1$, we trivially have
	$$\abs{\inner{\hat{P}^k_{h, s, a}-P_{h, s, a}, V}}\leq H\leq \sqrt{\frac{6\bbV\Sp{\tilde{P}^k_{h, s, a}, V}\log\Sp{2/\delta}}{n+1}}+\frac{8H\log\Sp{2/\delta}}{n+1}.$$
	Therefore, we can conclude that
	$$\P\Sp{\abs{\inner{\hat{P}^k_{h, s, a}-P_{h, s, a}, V}}\geq \sqrt{\frac{6\bbV\Sp{\tilde{P}^k_{h, s, a}, V}\log\Sp{2/\delta}}{n+1}}+\frac{8H\log\Sp{2/\delta}}{n+1}}\leq\delta.$$
\end{proof}
\begin{lemma}
	\label{lemma:confi_set1}
	$\sum_{k=1}^{\infty}\P\Sp{\Sp{\mathcal{E}_k^{1}}^c}\leq\frac{\pi^2}{6}.$
\end{lemma}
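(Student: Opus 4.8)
The plan is to reduce the statement to a per-episode bound of the form $\P\Sp{\Sp{\mathcal{E}^1_k}^c}\leq 1/k^2$, after which summing over $k$ and invoking $\sum_{k=1}^{\infty}1/k^2=\pi^2/6$ closes the argument immediately. For a fixed episode $k$, the complement $\Sp{\mathcal{E}^1_k}^c$ is exactly the event that $\abs{\hat{R}^k_{h, s, a}-R_{h, s, a}}$ exceeds its prescribed width for \emph{at least one} triple $\hsa$, so I would first apply a union bound over all $HSA$ triples. This leaves me to control, for each fixed $\hsa$, the probability that $\abs{\hat{R}^k_{h, s, a}-R_{h, s, a}}\geq\sqrt{\log\Sp{2HSAk^2}/\Sp{n_k\hsa+1}}+1/\Sp{n_k\hsa+1}$.

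For the case $n_k\hsa\geq 1$ I would invoke Lemma~\ref{lemma:r_concen} with the specific choice $\delta=1/\Sp{HSAk^2}$, which makes $\log\Sp{2/\delta}=\log\Sp{2HSAk^2}$ coincide exactly with the width appearing in the definition of $\mathcal{E}^1_k$; this bounds the per-triple failure probability by $1/\Sp{HSAk^2}$. The case $n_k\hsa=0$ must be handled separately, since Lemma~\ref{lemma:r_concen} assumes $n\geq 1$: here $\hat{R}^k_{h, s, a}=0$ by the convention in \eqref{eq:premR}, so $\abs{\hat{R}^k_{h, s, a}-R_{h, s, a}}=R_{h, s, a}\leq 1$, while the width equals $\sqrt{\log\Sp{2HSAk^2}}+1\geq 1$; thus the event holds deterministically and contributes nothing to the failure probability. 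Combining, the union bound over the $HSA$ triples gives $\P\Sp{\Sp{\mathcal{E}^1_k}^c}\leq HSA\cdot\frac{1}{HSAk^2}=\frac{1}{k^2}$, and summing over $k$ yields the claim.

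The main point requiring care is that $n_k\hsa$ is itself random: it depends on the trajectories of episodes $1,\dots,k-1$ (and satisfies only $n_k\hsa\leq k-1$), whereas Lemma~\ref{lemma:r_concen} is a fixed-sample-size Hoeffding bound. Lemma~\ref{lemma:r_concen} is stated for a fixed $\hsa$ with $n=n_k\hsa$ treated as the realized count, so invoking it here implicitly conditions on the collected counts; a fully rigorous treatment either conditions on the count and uses that the reward samples at a given triple form an i.i.d. sequence, or replaces plain Hoeffding by a time-uniform (Freedman-type) bound on the underlying martingale so that the random index is absorbed without an extra union over the possible values of $n_k\hsa$. I would emphasize that the exact matching of constants---$\delta=1/\Sp{HSAk^2}$ against $HSA$ triples producing precisely $1/k^2$, whose sum is exactly $\pi^2/6$---leaves \emph{no} slack for an additional union bound over the possible values of $n_k\hsa$. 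Consequently this conditioning step, rather than the routine union bound and the elementary $n_k\hsa=0$ check, is the one place where the argument must be executed cleanly; it is the main obstacle.
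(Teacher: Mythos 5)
Your proposal has a genuine gap, and it is exactly the step you defer to the end: you invoke Lemma~\ref{lemma:r_concen} at the \emph{realized} count $n=n_k\Sp{h,s,a}$ with confidence level $\delta=1/\Sp{HSAk^2}$, but Lemma~\ref{lemma:r_concen} is a fixed-sample-size statement, while $n_k\Sp{h,s,a}$ is random and is determined by the algorithm's adaptive behavior, which in turn depends on the very reward samples being averaged. Neither of your suggested repairs closes this. Conditioning on $\Bp{n_k\Sp{h,s,a}=n}$ does \emph{not} leave the first $n$ reward draws at $\Sp{h,s,a}$ distributed as an i.i.d.\ sample with mean $R_{h,s,a}$: whether the algorithm revisits a triple depends on the rewards already observed there, so the conditional law of $\hat{R}^k_{h,s,a}$ is biased and Hoeffding no longer applies. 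And any bound that is uniform over the possible values of $n_k\Sp{h,s,a}$ must be paid for --- a union over $n\in\Bp{1,\dots,k-1}$, or a Freedman/peeling variant, inflates the per-triple failure probability above $1/\Sp{HSAk^2}$ --- while, as you observe yourself, your accounting has zero slack: $HSA$ triples times $1/\Sp{HSAk^2}$ summed over $k$ must give exactly $\pi^2/6$. So the argument as proposed cannot be completed; at best, a careful union over counts yields a finite bound strictly larger than $\pi^2/6$.

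The paper sidesteps the random-index problem with a structurally different union bound: over \emph{count values}, not over episodes. For each fixed $\Sp{h,s,a}$ and each fixed $n\geq 1$ it applies Lemma~\ref{lemma:r_concen} with $\delta_n=1/\Sp{HSAn^2}$, so concentration is only ever invoked at deterministic sample sizes, with the count-dependent width $\sqrt{\log\Sp{2HSAn^2}/\Sp{n+1}}+1/\Sp{n+1}$; summing $\sum_{n}\sum_{h,s,a}\delta_n$ gives $\sum_{n=1}^{\infty}1/n^2=\pi^2/6$, i.e.\ the constant comes from summing over counts rather than over episodes. The passage back to $\mathcal{E}^1_k$ then uses that the MDP is time-inhomogeneous, so each triple is visited at most once per episode and $n_k\Sp{h,s,a}\leq k$; hence the width appearing in the definition of $\mathcal{E}^1_k$, which carries $\log\Sp{2HSAk^2}$, dominates the count-indexed width, and $\Sp{\mathcal{E}^1_k}^c$ (after discarding the trivial $n_k\Sp{h,s,a}=0$ triples, exactly as you do) is contained in the union of the count-indexed failure events. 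This count-indexed union is the idea missing from your write-up. (Strictly speaking, even the paper's last step --- converting the count-indexed bound into the episode-indexed sum $\sum_{k}\P\Sp{\Sp{\mathcal{E}^1_k}^c}$ rather than $\P\Sp{\cup_k\Sp{\mathcal{E}^1_k}^c}$ --- glosses over the fact that a single count value can persist across several episodes; but the count-indexed union is what makes any version of the argument sound, and it is precisely the ingredient your episode-indexed plan lacks.)
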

\begin{proof}
	Let $n=n_k\hsa$. Then, for some fixed $\Sp{h, s, a}$, $n\geq 1$ and $\delta_n>0$, by Lemma \ref{lemma:r_concen}, we have

	$$\P\Sp{|\hat{R}^k_{h, s, a}-R_{h, s, a}|\geq\sqrt{\frac{\log\Sp{2/\delta_n}}{n+1}}+\frac{1}{n+1}}\leq\delta_n.$$

	Therefore, by taking $\delta_n=\frac{1}{HSAn^2}$, a union bound will give us
	$$\sum_{n=1}^{\infty}\sum_{h,s,a}\P\Sp{\Bp{|\hat{R}^k_{h, s, a}-R_{h, s, a}|\geq\sqrt{\frac{\log\Sp{2HSAn^2}}{n+1}}+\frac{1}{n+1}}}\leq\sum_{n=1}^{\infty}\frac{1}{n^2}=\frac{\pi^2}{6}.$$
	Therefore, we have
	\fontsize{8}{8}
	\begin{align*}
		&\sum_{k=1}^{\infty}\P\Sp{\exists\Sp{h, s, a}:n_k\Sp{h, s, a}>0, |\hat{R}^k_{h, s, a}-R_{h, s, a}|\geq\sqrt{\frac{\log\Sp{2HSAn_k\Sp{h, s, a}^2}}{n_k\Sp{h, s, a}+1}}+\frac{1}{n_k\Sp{h, s, a}+1}}\\
		&\leq\frac{\pi^2}{6}.
	\end{align*}
	\normalsize
	Since the MDP is time-inhomogeneous, each $\hsa$ can only be visited at most once during one episode, which implies $n_k\Sp{h, s, a}\leq k$. Therefore, we have 
	$$\sqrt{\frac{\log\Sp{2HSAn_k\Sp{h, s, a}}}{n_k\Sp{h, s, a}+1}}+\frac{1}{n_k\Sp{h, s, a}+1}\leq\sqrt{\frac{\log\Sp{2HSAk^2}}{n_k\Sp{h, s, a}+1}}+\frac{1}{n_k\Sp{h, s, a}+1}$$
	and thus the proof is complete.
\end{proof}

\begin{lemma}
	\label{lemma:confi_set2}
	$\sum_{k=1}^{\infty}\P\Sp{\Sp{\mathcal{E}_k^{2}}^c}\leq\frac{\pi^2}{6}.$
\end{lemma}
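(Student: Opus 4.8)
The plan is to mirror the argument just used for Lemma~\ref{lemma:confi_set1}, replacing the reward concentration (Lemma~\ref{lemma:r_concen}) with the Bernstein-type transition concentration \eqref{equ:v_concen_be} of Lemma~\ref{lemma:v_concen}. The key structural observation is that $V^*_{h+1}$ is a fixed, non-random object (it depends only on the true MDP $M$) satisfying $0\leq V^*_{h+1}\leq H$ coordinate-wise, so \eqref{equ:v_concen_be} applies directly with $V=V^*_{h+1}$. Moreover, the empirical variance $\bbV\Sp{\tilde{P}^k_{h, s, a}, V^*_{h+1}}$ that appears in the confidence width defining $\mathcal{E}^2_k$ (cf.~\eqref{def:event_e2}) is exactly the quantity produced on the right-hand side of \eqref{equ:v_concen_be}, so no extra work is needed to control the randomness inside the bound itself.

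First I would fix a tuple $\hsa$ and a sample-count value $n\geq 1$, and invoke \eqref{equ:v_concen_be} with failure probability $\delta_n=\frac{1}{HSAn^2}$. This yields that, at the $n$-th visit to $\hsa$, the probability that $\abs{\inner{\hP^k_{h, s, a}-P_{h, s, a}, V^*_{h+1}}}$ exceeds $\sqrt{\frac{6\bbV\Sp{\tilde{P}^k_{h, s, a}, V^*_{h+1}}\log\Sp{2HSAn^2}}{n+1}}+\frac{8H\log\Sp{2HSAn^2}}{n+1}$ is at most $\delta_n$. Because the samples drawn at $\hsa$ are i.i.d.\ and the concentration for the $n$-th sample does not depend on which episode realizes that count, I can take a union bound over all $HSA$ tuples and over all counts $n\geq 1$, obtaining $\sum_{n=1}^{\infty}HSA\cdot\delta_n=\sum_{n=1}^{\infty}\frac{1}{n^2}=\frac{\pi^2}{6}$.

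It then remains to convert this anytime bound into the per-episode statement. Since the MDP is time-inhomogeneous, each tuple $\hsa$ is visited at most once per episode, so writing $n=n_k\hsa$ we have $n\leq k$ and hence $\log\Sp{2HSAn^2}\leq\log\Sp{2HSAk^2}$. Consequently the confidence width used in the definition of $\mathcal{E}^2_k$ (which uses $\log\Sp{2HSAk^2}$) is at least as large as the per-count width, so the union-bound failure event contains $\Sp{\mathcal{E}^2_k}^c$. Summing over $k$ then gives $\sum_{k=1}^{\infty}\P\Sp{\Sp{\mathcal{E}^2_k}^c}\leq\frac{\pi^2}{6}$.

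The only point requiring care, and the main (though standard) obstacle, is the handling of the random visitation count $n_k\hsa$: one cannot apply a union bound at a fixed $n$ directly, since $n_k\hsa$ is itself random. The resolution is the usual device of indexing the concentration by the sample count rather than by the episode, which is legitimate precisely because $V^*_{h+1}$ is fixed and the per-tuple samples are i.i.d.---the same mechanism that drives the proof of Lemma~\ref{lemma:confi_set1}.
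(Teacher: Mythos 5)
Your proposal is correct and takes essentially the same route as the paper's own proof: you invoke the Bernstein-type bound \eqref{equ:v_concen_be} of Lemma~\ref{lemma:v_concen} with the fixed value function $V=V^*_{h+1}$ (valid since $\Norm{V^*_{h+1}}_\infty\leq H$ and $V^*_{h+1}$ is non-random), choose $\delta_n=\frac{1}{HSAn^2}$, take the union bound over all $\hsa$ and counts $n\geq 1$ to get $\sum_n \frac{1}{n^2}=\frac{\pi^2}{6}$, and then upgrade the $\log\Sp{2HSAn^2}$ widths to the $\log\Sp{2HSAk^2}$ widths of $\mathcal{E}^2_k$ via $n_k\hsa\leq k$ from time-inhomogeneity. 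The paper's proof is simply a terser version of the same argument, deferring both the choice of $\delta_n$ and the count-indexed union bound to the proof of Lemma~\ref{lemma:confi_set1}, exactly as you do.
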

\begin{proof}
	This proof will be very similar to proof of Lemma \ref{lemma:confi_set1}. In specific, for fixed $\hsa$, let $n=n_k\Sp{h, s, a}\geq 1$. Then, for any $\delta_n>0$, since $\Norm{V^*_{h+1}}_\infty\leq H$, by Lemma \ref{lemma:v_concen}, we have
	$$\P\Sp{\abs{\inner{\hat{P}^k_{h, s, a}-P_{h, s, a}, V^*_{h+1}}}\geq \sqrt{\frac{6\bbV\Sp{\tilde{P}^k_{h, s, a}, V^*_{h+1}}\log\Sp{2/\delta_n}}{n+1}}+\frac{8H\log\Sp{2/\delta_n}}{n+1}}\leq\delta_n.$$
	Therefore, by taking $\delta_n=\frac{1}{HSAn^2}$ and applying a similar union bound argument used in the proof of Lemma \ref{lemma:confi_set1}, we can conclude $\sum_{k=1}^{\infty}\P\Sp{\Sp{\mathcal{E}^2_k}^c}\leq\frac{\pi^2}{6}$.
\end{proof}

We further define the event $\mathcal{C}^k_\ty=\Bp{\hat{M}^k\in\M^k_\ty}$. With what we have proved above, it will be straightforward to show the following results about $\mathcal{C}^k_\ty$.
\begin{lemma}
    \label{lemma:C_be}
	$\sum_{k=1}^{\infty}\P\Sp{\Sp{\mathcal{C}^k_\Be}^c}=\sum_{k=1}^{\infty}\P\Sp{\hat{M}^k\notin\mathcal{M}^k_\Be}\leq\frac{\pi^2}{3}$
\end{lemma}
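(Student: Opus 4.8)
The plan is to decompose the event $\Sp{\mathcal{C}^k_\Be}^c = \Bp{\hat{M}^k\notin\mathcal{M}^k_\Be}$ into the two simpler events $\mathcal{E}^1_k$ and $\mathcal{E}^2_k$ whose failure probabilities have already been controlled in Lemma~\ref{lemma:confi_set1} and Lemma~\ref{lemma:confi_set2}. The key observation is that the confidence-set constraint defining $\mathcal{M}^k_\Be$ bounds $\abs{\Sp{\hat{R}^k_{h,s,a}-R_{h,s,a}}+\inner{\hat{P}^k_{h,s,a}-P_{h,s,a}, V^*_{h+1}}}$ by $\sqrt{e^k_\Be\Sp{h,s,a}}$, and by the triangle inequality this quantity is at most $\abs{\hat{R}^k_{h,s,a}-R_{h,s,a}}+\abs{\inner{\hat{P}^k_{h,s,a}-P_{h,s,a}, V^*_{h+1}}}$. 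Therefore, on the intersection $\mathcal{E}^1_k\cap\mathcal{E}^2_k$, I would add the bounds from the two events and check that their sum is dominated by $\sqrt{e^k_\Be\Sp{h,s,a}}$.

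First I would write out, under $\mathcal{E}^1_k$, the reward bound $\abs{\hat{R}^k_{h,s,a}-R_{h,s,a}}\leq\sqrt{\frac{\log\Sp{2HSAk^2}}{n_k\Sp{h,s,a}+1}}+\frac{1}{n_k\hsa+1}$, and under $\mathcal{E}^2_k$, the transition bound $\abs{\inner{\hat{P}^k_{h,s,a}-P_{h,s,a}, V^*_{h+1}}}\leq\sqrt{\frac{6\bbV\Sp{\tilde{P}^k_{h,s,a}, V^*_{h+1}}\log\Sp{2HSAk^2}}{n_k\Sp{h,s,a}+1}}+\frac{8H\log\Sp{2HSAk^2}}{n_k\Sp{h,s,a}+1}$. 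Summing these, the two square-root variance terms combine into the $\sqrt{\frac{6\bbV\cdots}{n_k+1}}$ term of $\sqrt{e^k_\Be}$, the $\frac{1}{n_k+1}$ reward correction matches (and is absorbed by) the $\frac{9H\log\Sp{2HSAk^2}}{n_k+1}$ term since $\frac{8H\log(\cdot)+1}{n_k+1}\leq\frac{9H\log(\cdot)}{n_k+1}$ for $k\geq 1$, and the extra additive $\sqrt{\frac{\log\Sp{2HSAk^2}}{n_k+1}}$ in the definition of $\sqrt{e^k_\Be}$ provides the slack needed to dominate the reward square-root term. Thus $\mathcal{E}^1_k\cap\mathcal{E}^2_k\subseteq\mathcal{C}^k_\Be$, equivalently $\Sp{\mathcal{C}^k_\Be}^c\subseteq\Sp{\mathcal{E}^1_k}^c\cup\Sp{\mathcal{E}^2_k}^c$.

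Finally I would apply a union bound over the two complementary events and invoke the two preceding lemmas to obtain
\[
\sum_{k=1}^{\infty}\P\Sp{\Sp{\mathcal{C}^k_\Be}^c}\leq\sum_{k=1}^{\infty}\P\Sp{\Sp{\mathcal{E}^1_k}^c}+\sum_{k=1}^{\infty}\P\Sp{\Sp{\mathcal{E}^2_k}^c}\leq\frac{\pi^2}{6}+\frac{\pi^2}{6}=\frac{\pi^2}{3}.
\]
I do not anticipate any serious obstacle here, since this is essentially a bookkeeping argument; the only step requiring care is verifying the algebraic inequality that the sum of the reward and transition confidence widths is dominated by $\sqrt{e^k_\Be}$, which hinges on correctly matching the constants ($6$ and $8H$ appearing identically, the $9H$ versus $8H$ term leaving room for the reward correction, and the extra $\sqrt{\log(\cdot)/(n_k+1)}$ term covering the reward square-root term). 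The time-inhomogeneity bound $n_k\hsa\leq k$ used in the preceding lemmas justifies replacing $n_k$ by $k$ inside the logarithm, which is already handled in Lemmas~\ref{lemma:confi_set1} and~\ref{lemma:confi_set2}.
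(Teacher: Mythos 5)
Your proposal is correct and takes essentially the same route as the paper: the paper's proof simply observes that $\mathcal{E}^1_k\cap\mathcal{E}^2_k\implies\hat{M}^k\in\M^k_\Be$, so $\Sp{\mathcal{C}^k_\Be}^c\subseteq\Sp{\mathcal{E}^1_k}^c\cup\Sp{\mathcal{E}^2_k}^c$, and then sums the bounds from Lemmas~\ref{lemma:confi_set1} and~\ref{lemma:confi_set2} to get $\frac{\pi^2}{6}+\frac{\pi^2}{6}=\frac{\pi^2}{3}$. The only difference is that you explicitly carry out the triangle-inequality and constant-matching algebra (absorbing $8H\log\Sp{\cdot}+1$ into $9H\log\Sp{\cdot}$) that the paper dismisses as ``easily noticed.''
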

\begin{proof}
	We can easily notice $\mathcal{E}_k^1\cap\mathcal{E}_k^2\implies\hat{M}^k\in\mathcal{M}^k_\Be$, which implies $\hat{M}^k\notin\mathcal{M}^k_\Be\implies\Sp{\mathcal{E}_k^1}^c\cup\Sp{\mathcal{E}^2_k}^c$. The first result then follows straightforwardly by applying Lemma \ref{lemma:confi_set1} and Lemma \ref{lemma:confi_set2}.
\end{proof}
\begin{lemma}
    \label{lemma:C_ho}
    $\sum_{k=1}^{\infty}\P\Sp{\Sp{\mathcal{C}^k_\Ho}^c}=\sum_{k=1}^{\infty}\P\Sp{\hat{M}^k\notin\mathcal{M}_{\mathrm{Ho}}^k}\leq\frac{\pi^2}{3}$.
\end{lemma}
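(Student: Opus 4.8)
The plan is to mirror the proof of Lemma~\ref{lemma:C_be}, reducing the event $\Bp{\hat{M}^k\in\M^k_\Ho}$ to a single concentration statement and then summing the failure probabilities via a union bound over $\hsa$ together with the series $\sum_n 1/n^2=\pi^2/6$. Unfolding the definition \eqref{def:confi_set}, $\hat{M}^k\in\M^k_\Ho$ holds exactly when
\[\abs{\Sp{\hR^k_{h,s,a}-R_{h,s,a}}+\inner{\hP^k_{h,s,a}-P_{h,s,a}, V^*_{h+1}}}\leq\sqrt{e^k_\Ho\Sp{h,s,a}}\]
for every $\hsa$, so it suffices to control the probability that this combined Bellman-error bound is violated at some triple.

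First I would flag a subtlety that makes the naive route fail. Unlike the Bernstein width, which carries an extra $\sqrt{\log(2HSAk^2)/(n_k+1)}$ summand and a $9H$ (rather than $8H$) leading constant that together absorb the reward fluctuation, the Hoeffding width $\sqrt{e^k_\Ho}=H\sqrt{\log(2HSAk^2)/(n_k+1)}+H/(n_k+1)$ is tight at range $H$. Splitting the error into a reward part (via $\mathcal{E}^1_k$, range $1$) and a transition part (via \eqref{equ:v_concen_ho}, range $H$) and applying the triangle inequality would give a bound of order $(H+1)\sqrt{\log/(n_k+1)}+(H+1)/(n_k+1)$, which exceeds $\sqrt{e^k_\Ho}$. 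The fix is to concentrate the combined quantity directly, exploiting that $\Norm{V^*_{h+1}}_\infty\leq H-1$ (only $H-h$ steps remain after level $h$). Then the per-sample summand $g:=r+V^*_{h+1}(s')$ satisfies $g\in\Mp{0,H}$, its true mean is exactly $R_{h,s,a}+\inner{P_{h,s,a},V^*_{h+1}}$, and its empirical mean under the $n_k+1$ denominator equals $\hR^k_{h,s,a}+\inner{\hP^k_{h,s,a},V^*_{h+1}}$.

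With this observation the calculation parallels Lemma~\ref{lemma:r_concen} verbatim. Writing $\hat g=\frac{1}{n+1}\sum_{i=1}^n g_i=\frac1n\sum_{i=1}^n g_i-\frac{1}{n(n+1)}\sum_{i=1}^n g_i$ with $n=n_k\hsa\geq 1$, the second term is at most $H/(n+1)$ since $g_i\leq H$, and Hoeffding's inequality applied to the range-$H$ average $\frac1n\sum_{i=1}^n g_i$ (together with $n+1\leq 2n$) yields, for any fixed $\delta>0$,
\[\P\Sp{\abs{\Sp{\hR^k_{h,s,a}-R_{h,s,a}}+\inner{\hP^k_{h,s,a}-P_{h,s,a}, V^*_{h+1}}}\geq H\sqrt{\tfrac{\log(2/\delta)}{n+1}}+\tfrac{H}{n+1}}\leq\delta,\]
which is precisely the Hoeffding width. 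I would then take $\delta_n=1/(HSAn^2)$, union bound over the $HSA$ triples, and sum over $n\geq 1$ to get a total failure probability at most $\sum_{n\geq 1}1/n^2=\pi^2/6$; the case $n_k\hsa=0$ is handled deterministically, since there $\hR^k=0$ and $\hP^k=\ve{0}$, so the combined error equals $R_{h,s,a}+\inner{P_{h,s,a},V^*_{h+1}}\leq H\leq\sqrt{e^k_\Ho}$. Finally, time-inhomogeneity forces each triple to be visited at most once per episode, so $n_k\hsa\leq k$ and $\log(2HSAn_k^2)$ is dominated by $\log(2HSAk^2)$, giving $\sum_{k}\P\Sp{\Sp{\mathcal{C}^k_\Ho}^c}\leq\pi^2/6\leq\pi^2/3$. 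The main obstacle is exactly the tightness of the Hoeffding width: one cannot afford the slack of a reward/transition split and must instead concentrate $r+V^*_{h+1}(s')$ as a single $\Mp{0,H}$-bounded variable.
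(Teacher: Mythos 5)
Your proposal is correct and follows essentially the same route as the paper: the paper likewise defines the combined variable $Y_{\hsa,i}=r_{\hsa,i}+V^*_{h+1}(s_{\hsa,i})\in\Mp{0,H}$, applies the Lemma~\ref{lemma:r_concen}-style argument (splitting off the $H/(n+1)$ correction from the $n+1$ denominator, then Hoeffding with range $H$ and $n+1\leq 2n$), and concludes with the same $\delta_n=\frac{1}{HSAn^2}$ union bound giving $\frac{\pi^2}{6}\leq\frac{\pi^2}{3}$. Your added remarks (why a reward/transition split via the triangle inequality would overshoot the width, and the trivial $n_k\hsa=0$ case) are sound refinements of the same argument rather than a different approach.
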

\begin{proof}
    Similarly, for fixed $\hsa$, we generate $n$ i.i.d. samples $s_{\hsa, i}\sim P_{h, s, a}$ and $r_{\hsa, i}\sim\mathscr{R}_{h, s, a}$ for $i=1, \dots, n$ respectively. Define $Y_{\hsa, i}=r_{\hsa, i}+V^*_{h+1}(s_{\hsa, i})$ and we have $\E\Mp{Y_{\hsa, i}}=R_{h, s, a}+\inner{P_{h, s, a}, V^*_{h+1}}$.
    
    By definition of MDP, we know that $Y_{\hsa, i}\leq H$. Thus, we can use an argument similar to the proof of Lemma \ref{lemma:r_concen}. In specific, let $n=n_k(h, s, a)$ and for $\delta_n>0$, we have
    \begin{align*}
    	&\P\Sp{\abs{\frac{1}{n+1}\sum_{i=1}^{n}Y_{\hsa, i}-\E\Mp{Y_{\hsa, i}}}\geq H\sqrt{\frac{\log\Sp{2/\delta_n}}{n+1}}+\frac{H}{n+1}}\\
    	=&\P\Sp{\abs{\Sp{\hat{R}^k_{h, s, a}-R_{h, s, a}}+\inner{\hat{P}^k_{h, s, a}-P_{h, s, a}, V^*_{h+1}}}\geq H\sqrt{\frac{\log\Sp{2/\delta_n}}{n+1}}+\frac{H}{n+1}}\\
    	\leq&\delta_n.
    \end{align*}
	Then, we can take $\delta_n=\frac{1}{HSAn^2}$ and apply a similar union bound argument in used in the proof of Lemma \ref{lemma:confi_set1}. As a result, we can obtain
	$$\sum_{k=1}^{\infty}\P\Sp{\hat{M}^k\notin\mathcal{M}^k_\Ho}\leq\frac{\pi^2}{6}\leq\frac{\pi^2}{3}.$$
\end{proof}

We can also have well-behaved bounds on magnitude of noise and estimated value functions.
\begin{definition}
\label{def:E^w_k}
We define $w^k_\ty(h,s,a) = \sigma^k_\ty(h,s,a)\hat{z}_k$ and $\gamma^k_\ty(h,s,a) = \sigma^k_\ty(h,s,a)\sqrt{\log(40k^4)}$. We define the event $\mathcal{E}_k^w$ as 
\begin{align*}
    \mathcal{E}_k^w = \Bp{\forall (h,s,a), |w^k_\ty(h,s,a)|\leq \gamma^k_\ty(h,s,a)}.
\end{align*}
\end{definition}
\begin{lemma}
    \label{lemma:z_hat_bound}
	$\sum_{k=1}^{K}\P\Sp{\Sp{\mathcal{E}^w_{k}}^c}\leq\frac{\pi^2}{3}$ regardless the type of noise we choose.
\end{lemma}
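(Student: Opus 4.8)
The plan is to exploit the defining feature of the algorithm—that a \emph{single} Gaussian seed $\hat{z}_k$ drives the perturbation at every triple $(h,s,a)$ within episode $k$—in order to collapse the event $\Sp{\mathcal{E}^w_k}^c$, which a priori is a union over all $SAH$ triples, into a single scalar tail event for $\hat{z}_k$. This is precisely the point at which the single-seed design pays off: no union bound over $(h,s,a)$ is needed, in contrast to what a per-step seed would require.

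First I would observe that, by the definitions $w^k_\ty(h,s,a)=\sigma^k_\ty(h,s,a)\hat{z}_k$ and $\gamma^k_\ty(h,s,a)=\sigma^k_\ty(h,s,a)\sqrt{\log(40k^4)}$, for every $(h,s,a)$ the inequality $|w^k_\ty(h,s,a)|\leq\gamma^k_\ty(h,s,a)$ reads $\sigma^k_\ty(h,s,a)|\hat{z}_k|\leq\sigma^k_\ty(h,s,a)\sqrt{\log(40k^4)}$. Since $\sigma^k_\ty(h,s,a)\geq 0$, this is implied — uniformly over all $(h,s,a)$ at once — by the single scalar condition $|\hat{z}_k|\leq\sqrt{\log(40k^4)}$. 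Hence $\Bp{|\hat{z}_k|\leq\sqrt{\log(40k^4)}}\subseteq\mathcal{E}^w_k$, so that
\begin{align*}
\P\Sp{\Sp{\mathcal{E}^w_k}^c}\leq\P\Sp{|\hat{z}_k|>\sqrt{\log(40k^4)}}.
\end{align*}
I would emphasize that this reduction is identical for both noise types, which is exactly why the claim holds ``regardless of the type of noise we choose'': the common factor $\sigma^k_\ty(h,s,a)$ cancels, and only the scale $\sqrt{\log(40k^4)}$ survives.

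Then I would finish with an elementary Gaussian tail estimate applied to $\hat{z}_k\sim\mathcal{N}(0,1)$, namely $\P(|\hat{z}_k|>t)\leq 2e^{-t^2/2}$, evaluated at $t=\sqrt{\log(40k^4)}$. This yields $\P\Sp{\Sp{\mathcal{E}^w_k}^c}\leq 2e^{-\log(40k^4)/2}=2/(\sqrt{40}\,k^2)\leq 1/k^2$, and summing over $k$ via $\sum_{k=1}^{\infty}k^{-2}=\pi^2/6$ gives $\sum_{k=1}^{K}\P\Sp{\Sp{\mathcal{E}^w_k}^c}\leq\pi^2/6\leq\pi^2/3$, as required.

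I expect no serious obstacle: the entire content is the first step, where the shared seed turns a union over $(h,s,a)$ into one scalar event, after which only the standard Gaussian tail bound and the convergence of $\sum k^{-2}$ remain. The sole minor point to check is that the constant $40$ inside the logarithm is chosen large enough that the resulting per-episode bound is at most $1/k^2$, which holds comfortably for any of the usual Gaussian tail inequalities.
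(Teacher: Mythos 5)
Your proof is correct and follows essentially the same route as the paper's: both reduce $\Sp{\mathcal{E}^w_k}^c$ to the single scalar tail event $\Bp{|\hat{z}_k|\geq\sqrt{\log(40k^4)}}$ (the paper writes this reduction as an identity, you spell it out via $\sigma^k_\ty\geq 0$), apply the standard Gaussian tail bound, and sum $\sum_k k^{-2}$. Your constant bookkeeping is in fact slightly tighter (you get $\pi^2/6$ where the paper loosens $2/(\sqrt{40}\,k^2)$ to $2/k^2$ and obtains $\pi^2/3$), but the argument is the same.
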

\begin{proof}
    For any $k\in[K]$, by the tail bound of Gaussian distribution,
    \begin{align*}
        \P\Sp{|\hat{z}_k|\geq \sqrt{\log\Sp{40k^4}}} \leq 2\exp \Sp{-\frac{\log\Sp{40k^4}}{2}} \leq \frac{2}{k^2}.
    \end{align*}
    Summing over $k\in[K]$,
    \begin{align*}
    \sum_{k=1}^{K}\P\Sp{\Sp{\mathcal{E}_w^{k}}^c}= \sum_{k=1}^{K}\P\Sp{|\hat{z}_k|\geq \sqrt{\log\Sp{40k^4}}} \leq \sum_{k=1}^{\infty}\frac{2}{k^2}\leq\frac{\pi^2}{3}.
    \end{align*}
    Note that this result does not depend on the type of noise we choose.
\end{proof}

Now, we define the following good events that hold with high probability and will be used throughout the whole proof.

\begin{definition}[Good events $\mathcal{G}_k$]
	Let $\mathcal{G}_{k,\ty} = \Bp{   {\mathcal{C}_\ty^{k} \cap \mathcal{E}_{k}^w }}$. 
\end{definition}
The subscript ``ty'' will be ignored later since it is clear from the context.



\begin{definition}
With $\alpha_k=200H^2\log(2HSAk^2)\log(40k^4)$, we define events $\mathcal{E}^{th}_{h, k}$ and $\mathcal{E}^{cum}_{h, k}$ as
\begin{equation}
    \label{def:event_no_clip}
    \mathcal{E}^{th}_{h, k}=\Bp{n_k(h, s_h^k, a_h^k)\geq\alpha_k},\quad\mathcal{E}^{cum}_{h, k}=\bigcap_{i=1}^{h}\mathcal{E}^{th}_{i, k}.
\end{equation}
\end{definition}

We will show that under events $\mathcal{E}_k^w$, $\mathcal{E}^{th}_{h, k}$ and $\hat{M}^k\in\mathcal{M}^k_\ty$, no clipping happens on $s_h^k$.


\begin{lemma}
    \label{lemma:q_bar_bound}
    Assume that $\mathcal{E}_k^w$, $\mathcal{E}^{th}_{h, k}$ and $\hat{M}^k\in\mathcal{M}^k_\ty$ hold. Then, regardless the type of noise we choose, it holds that
    $$|\overline{Q}_{h, k}(s_h^k, a_h^k)|\leq 2(H-h+1),$$
    which immediately tells us that no clipping is triggered for any $(s_h^k, a_h^k)$.
\end{lemma}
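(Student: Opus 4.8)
The plan is to bound $\overline{Q}_{h,k}(s_h^k,a_h^k)$ term-by-term via the triangle inequality and then read off that the clip is inactive. Writing $x=(h,s_h^k,a_h^k)$ and unfolding the update in Algorithm~\ref{algo:cub-rlsvi}, we have $\overline{Q}_{h,k}(s_h^k,a_h^k)=\hat{R}^k_x+\inner{\hat{P}^k_x,\overline{V}_{h+1,k}}+w^k_\ty(x)$. The conceptual point is the ``additive budget of $2$'' noted after \eqref{eq:clipping}: the threshold grows by exactly $2$ from level $h+1$ to level $h$, and I will spend at most $2(H-h)$ on the transition part, at most $1$ on the reward, and at most $1$ on the noise, so that the three pieces sum to at most $2(H-h)+1+1=2(H-h+1)$.

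First I would dispatch the two deterministic terms, which need no events at all. Since each reward lies in $[0,1]$ and the denominator exceeds the count in the numerator, $0\le\hat{R}^k_x\le \tfrac{n_k(x)}{n_k(x)+1}\le 1$. For the transition term, the clipping in \eqref{eq:clipping} gives $\Norm{\overline{V}_{h+1,k}}_\infty\le 2\bigl(H-(h+1)+1\bigr)=2(H-h)$ \emph{by construction} (no induction on $h$ is required), and $\sum_{s'}\hat{P}^k_x(s')=\tfrac{n_k(x)}{n_k(x)+1}\le 1$, whence $\abs{\inner{\hat{P}^k_x,\overline{V}_{h+1,k}}}\le 2(H-h)$.

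The crux is the noise term. Under $\mathcal{E}_k^w$ we have $\abs{w^k_\ty(x)}\le\gamma^k_\ty(x)=\sigma^k_\ty(x)\sqrt{\log(40k^4)}$, and I would show $\gamma^k_\ty(x)\le 1$ using the visit-count lower bound $n_k(x)\ge\alpha_k=200H^2\log(2HSAk^2)\log(40k^4)$ furnished by $\mathcal{E}^{th}_{h,k}$. For Hoeffding noise this is a direct substitution: the leading term becomes $H\sqrt{\log(2HSAk^2)\log(40k^4)/\alpha_k}=1/\sqrt{200}$ and the $1/(n_k(x)+1)$ term is even smaller, so $\gamma^k_\Ho(x)\ll 1$. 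For Bernstein noise I would first bound the empirical variance by $\bbV(\tilde{P}^k_x,\overline{V}_{h+1,k})\le\Norm{\overline{V}_{h+1,k}}_\infty^2\le 4(H-h)^2\le 4H^2$, so that $\sqrt{16\,\bbV(\tilde{P}^k_x,\overline{V}_{h+1,k})\log(2HSAk^2)/(n_k(x)+1)}\le 8H\sqrt{\log(2HSAk^2)/(n_k(x)+1)}$; plugging in $n_k(x)\ge\alpha_k$ and multiplying by $\sqrt{\log(40k^4)}$ gives $8/\sqrt{200}<0.57$, while the remaining two terms of $\gamma^k_\Be(x)$ are again lower-order in $H$ and $k$ and contribute less than the missing margin to $1$. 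The constants $200$ inside $\alpha_k$ and $16,65$ inside $\sigma^k_\Be$ are precisely what make each $\gamma^k_\ty(x)\le 1$, so this constant-chasing—especially the Bernstein case, which is tightest—is the only delicate part of the argument.

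Combining the three bounds yields $\abs{\overline{Q}_{h,k}(s_h^k,a_h^k)}\le 1+2(H-h)+1=2(H-h+1)$. Finally, since $a_h^k=\argmax_{a\in\A}\overline{Q}_{h,k}(s_h^k,a)$, we have $\max_{a}\overline{Q}_{h,k}(s_h^k,a)=\overline{Q}_{h,k}(s_h^k,a_h^k)$, whose magnitude is at most the clip radius $2(H-h+1)$; hence $\overline{V}_{h,k}(s_h^k)=\clip_{2(H-h+1)}\bigl(\max_a\overline{Q}_{h,k}(s_h^k,a)\bigr)$ leaves the value unchanged, i.e.\ no clip is triggered at $s_h^k$. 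I note that the hypothesis $\hat{M}^k\in\mathcal{M}^k_\ty$ does not appear to enter this particular bound; the argument invokes only $\mathcal{E}_k^w$ and $\mathcal{E}^{th}_{h,k}$, and the confidence-set membership is presumably carried along for uniformity with the surrounding lemmas.
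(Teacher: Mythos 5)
Your proposal is correct and follows essentially the same argument as the paper's proof: decompose $\overline{Q}_{h,k}$ into reward, transition, and noise terms, bound the first two by $1$ and $2(H-h)$ via the clipping of $\overline{V}_{h+1,k}$, and show $\gamma^k_\ty\leq 1$ using $n_k\geq\alpha_k$ together with the variance bound $\mathbb{V}\bigl(\tilde{P}^k,\overline{V}_{h+1,k}\bigr)\leq 4H^2$ for the Bernstein case. Your observation that the hypothesis $\hat{M}^k\in\mathcal{M}^k_\ty$ is never invoked also holds for the paper's own proof, so there is no discrepancy.
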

\begin{proof}
    We have that
    \begin{align*}
        \overline{Q}_{h,k}(s_h^k,a_h^k)=&\hat{R}^k_\hsahk+\inner{\hat{P}^k_\hsahk,\ov_{h+1,k}}+\sigma_\ty^k(\hsahk)\hat{z}_k.
    \end{align*}
    As we have $\abs{\ov_{h+1,k}}\leq2(H-h)$ by clipping and $\hat{R}^k_\hsahk\in[0,1]$, we only need to show that $\sigma_\ty^k(\hsahk)\hat{z}_k\leq 1$. Under event $\mathcal{E}_w^k$, we have $\abs{\sigma_\ty^k(\hsahk)\hat{z}_k}$ is bounded by $\gamma^k_\ty(\hsahk)=\sigma^k_\ty(\hsahk)\sqrt{\log(40k^4)}$. Note that we have $\ov_{h+1, k}(s)\in [-2H, 2H]$ by clipping for any $s\in\S$. Thus, by Lemma \ref{lemma:standard_var_bound}, we have $\bbV\Sp{\tilde{P}^k_{h, s, a}, \overline{V}_{h+1, k}}\leq 4H^2$ for any $(h, s, a)$.
    
    By taking $\alpha_k=200H^2\log(2HSAk^2)\log(40k^4)$ and referring to the definitions of $\sigma^k_\Be(h, s, a)$ in Equation \eqref{equ:def_be_noise}, we can check that
    \begin{align*}
        &\gamma^k_\Be(\hsahk)\\
        =&\sigma^k_\Be(\hsahk)\sqrt{\log(40k^4)}\\
        =& \Sp{\sqrt{\frac{16\mathbb{V}\Sp{\tilde{P}^k_{\hsahk}, \overline{V}_{k, h+1}}\log\Sp{2HSAk^2}}{n_k\Sp{\hsahk}+1}}+\frac{65H\log\Sp{2HSAk^2}}{n_k\Sp{\hsahk}+1}}\sqrt{\log(40k^4)}\\
        &\qquad + \sqrt{\frac{\log\Sp{2HSAk^2}}{n_k\Sp{\hsahk}+1}}\cdot\sqrt{\log(40k^4)}\\
        \leq&\Sp{\sqrt{\frac{64H^2\log\Sp{2HSAk^2}}{\alpha_k}}+\frac{65H\log\Sp{2HSAk^2}}{\alpha_k}+\sqrt{\frac{\log\Sp{2HSAk^2}}{\alpha_k}}}\sqrt{\log(40k^4)}\tag{Event $\mathcal{E}_k^{th}$ implies $n_k(\hsahk)\geq\alpha_k$}\\
        \leq& \sqrt{\frac{64}{200}}+\frac{65}{200H}+\sqrt{\frac{1}{200H^2}}\leq 1.
    \end{align*}
    Thus, we have $\gamma^k_\Be\hsa\leq 1$ and we can similarly check that $\gamma^k_\Ho\hsa\leq 1$. As a result, we have
    \begin{align*}
        \abs{\overline{Q}_{h,k}(s_h^k,a_h^k)}\leq2(H-h+1),
    \end{align*}
    which completes the proof.
\end{proof}
\section{Optimism}
\label{sec:optimism}

Let $\mathcal{H}^k_h$ denote the history trajectory, which is defined as
\begin{equation}
	\label{equ:def_his}
	\mathcal{H}^k_h=\Bp{(s_l^j, a_l^j, r_l^j):j\leq k\text{ and }l\leq H\text{ if }j<k, \text{ else }l\leq h}.
\end{equation}
We will prove that for both types of noise, $\overline{V}_{h, k}$ is optimistic with constant probability under certain conditions.
\subsection{Hoeffding-type Noise}

\begin{lemma}
    \label{lemma:ho_optimism}
    Condition on history $\mathcal{H}^{k-1}_H$, if $\G_{k, \Ho}$ holds and Hoeffding-based noise is applied, then $\ov_{h, k}$ is optimisitic with constant probability for any $h\in[H]$. Specifically, we have
	$$\P\Sp{\overline{V}_{h, k}(s)\geq V^*_h(s),\forall h\in[H], s\in\S\mid\mathcal{H}^{k-1}_H, \G_{k, \Ho}}\geq\Phi(1.9)-\Phi(1):=C_\Ho.$$
\end{lemma}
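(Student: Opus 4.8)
The plan is to decouple the statement into a deterministic implication and a Gaussian probability computation. First I would show that, conditioned on the confidence event $\mathcal{C}^k_\Ho = \Bp{\hat{M}^k\in\M^k_\Ho}$ (which is a component of $\G_{k,\Ho}$), the single event $\Bp{\hz_k\geq 1}$ already forces optimism at \emph{every} level simultaneously. This is exactly where the single-seed design pays off: one value of $\hz_k$ lifts all the $\overline{Q}$-values at once, rather than perturbations at different $(h,s,a)$ partially cancelling. Concretely, I would prove by backward induction on $h$ that if $\hz_k\geq 1$ and $\hat{M}^k\in\M^k_\Ho$, then $\overline{Q}_{h,k}(s,a)\geq Q^*_h(s,a)$ for all $(s,a)$, whence $\ov_{h,k}(s)\geq V^*_h(s)$ for all $s$.

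The base case is immediate since $\ov_{H+1,k}=V^*_{H+1}=\ve{0}$. For the inductive step, starting from
\[
\overline{Q}_{h,k}(s,a)=\hR^k_{h,s,a}+\inner{\hP^k_{h,s,a},\ov_{h+1,k}}+\sigma^k_\Ho(h,s,a)\hz_k,
\]
I would use the induction hypothesis $\ov_{h+1,k}\geq V^*_{h+1}$ together with the componentwise nonnegativity of $\hP^k_{h,s,a}$ to lower-bound the transition term by $\inner{\hP^k_{h,s,a},V^*_{h+1}}$, keep the full noise term via $\hz_k\geq 1$, and then invoke the defining bound of $\mathcal{C}^k_\Ho$, namely $\abs{(\hR^k_{h,s,a}-R_{h,s,a})+\inner{\hP^k_{h,s,a}-P_{h,s,a},V^*_{h+1}}}\leq\sqrt{e^k_\Ho(h,s,a)}$. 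Because $\sqrt{e^k_\Ho}=\sigma^k_\Ho$ \emph{exactly}, the worst-case negative estimation error is precisely cancelled by the noise, yielding $\overline{Q}_{h,k}(s,a)\geq Q^*_h(s,a)$. Passing from $\overline{Q}$-optimism to $\ov$-optimism then uses monotonicity of $\clip$ and the fact that $V^*_h(s)\in[0,H-h+1]$ lies inside the clipping window $[-2(H-h+1),2(H-h+1)]$, i.e. the optimism-preserving property of the clipping operation stated in Section~\ref{sec:po_concentration}.

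For the probability, I would exploit that $\hz_k$ is independent of the history $\mathcal{H}^{k-1}_H$ and that $\mathcal{C}^k_\Ho$ is $\mathcal{H}^{k-1}_H$-measurable (it depends only on $\hat{M}^k$, built from data before episode $k$). Hence after conditioning on $\mathcal{H}^{k-1}_H$ and $\mathcal{C}^k_\Ho$ the sole remaining randomness is $\hz_k$, and conditioning additionally on $\mathcal{E}^w_k$ restricts it to $\abs{\hz_k}\leq\sqrt{\log(40k^4)}$, a truncated standard Gaussian. Writing $A$ for the event that optimism holds at all $(h,s)$, the deterministic claim gives $\P\Sp{A\mid\mathcal{H}^{k-1}_H,\G_{k,\Ho}}\geq\P\Sp{\hz_k\geq 1\mid\mathcal{E}^w_k}$.

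The one delicate point — and the source of the constant $1.9$ — is that conditioning on the truncation $\mathcal{E}^w_k$ reshapes the Gaussian, so I cannot just write $\P(\hz_k\geq1)$. I would lower-bound $\P\Sp{\hz_k\geq 1\mid\mathcal{E}^w_k}\geq\P\Sp{1\leq\hz_k\leq 1.9\mid\mathcal{E}^w_k}$ and observe that $\sqrt{\log(40k^4)}\geq\sqrt{\log 40}>1.9$ for every $k\geq 1$, so $[1,1.9]$ sits entirely inside the truncation window; therefore $\P\Sp{1\leq\hz_k\leq 1.9\mid\mathcal{E}^w_k}=\Sp{\Phi(1.9)-\Phi(1)}/\P\Sp{\mathcal{E}^w_k}\geq\Phi(1.9)-\Phi(1)=C_\Ho$, using $\P\Sp{\mathcal{E}^w_k}\leq 1$. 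The main obstacle is thus not the induction, which is routine once the confidence width is matched to the noise magnitude, but rather making this truncation step uniform in $k$; picking the upper endpoint $1.9$ just below $\sqrt{\log 40}$ is exactly what guarantees the bound holds for all episodes.
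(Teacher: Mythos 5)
Your proposal is correct and follows essentially the same route as the paper's proof: a backward induction showing that $\{\hat{z}_k \geq 1\}$ together with $\hat{M}^k \in \mathcal{M}^k_{\mathrm{Ho}}$ forces optimism at every level simultaneously (the noise magnitude exactly matching the confidence width, and the clipping window being wide enough to preserve optimism), followed by a truncated-Gaussian computation yielding $\Phi(1.9)-\Phi(1)$. The only cosmetic differences are that you run the induction at the $Q$-level for all $(s,a)$ (which is how the paper argues in its Bernstein-case lemma) whereas the paper's Hoeffding proof works directly with $\overline{V}_{h,k}$ along the optimal action via a chain of $\min$ inequalities, and your handling of the truncation normalization $\mathbb{P}(\mathcal{E}^w_k)\leq 1$ is slightly more explicit than the paper's.
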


\begin{proof}
    We will show that if $\hat{z}_k\geq 1$, then for all $h\in[H]$ and $s\in\S$, we have $\ov_{h,k}(s)\geq V^*_h(s)$. The proof will use induction and the argument is true for $h=H+1$ as $\ov_{H+1,k}(s)=V_{H+1}^*(s)=0$. Suppose the argument is true for timestep $h+1$ and for timestep $h$ we have
    
    \begin{align*}
        \ov_{h,k}(s)=&\clip_{2(H-h+1)}\Sp{\max_{a\in\A}\overline{Q}_{h,k}(s,a)}\\
        \geq & \min \Bp{2(H-h+1),\max_{a\in\A}\overline{Q}_{h,k}(s,a)}\\
        \geq & \min \Bp{(H-h+1),\overline{Q}_{h,k}(s,\pi_h^*(s))}\\
        \geq & \min \Bp{(H-h+1),\hat{R}_{h,s,\pi_h^*(s)}^k+\inner{\hat{P}_{h,s,\pi_h^*(s)}^k,\ov_{h+1,k}}+\sigma_\ty^k(h,s,\pi_h^*(s))\hat{z}_k}\\
        \geq & \min \Bp{(H-h+1),\hat{R}_{h,s,\pi_h^*(s)}^k+\inner{\hat{P}_{h,s,\pi_h^*(s)}^k,V^*_{h+1,k}}+\sigma_\ty^k(h,s,\pi_h^*(s))\hat{z}_k} \tag{Inductive hypothesis}\\
        \geq & \min \Bp{(H-h+1),R_{h,s,\pi_h^*(s)}^k+\inner{P_{h,s,\pi_h^*(s)}^k,V^*_{h+1,k}}}   \tag{Since $\hat{M}^k\in\mathcal{M}^k_\Ho$ inferred by $\G_{k, \Ho}$ and $\hat{z}_k\geq1$}\\
        \geq & \min \Bp{(H-h+1),Q^*_h(s,\pi_h^*(s))}\\
        \geq & V_h^*(s).
    \end{align*}
    Then by induction we have that the optimism is achieved for all $h\in[H]$ and $s\in\S$ simultaneously. Meanwhile, as stated in Definition \ref{def:E^w_k}, we have $\hat{z}_k\leq\sqrt{\log\Sp{40k^4}}$ under event $\mathcal{E}^w_k$ and numerically, $\sqrt{\log\Sp{40k^4}}\geq 1.9$. Therefore, the probability that $\hat{z}_k\geq 1$ under $\mathcal{E}^w_k$, inferred by $\G_{k, \Ho}$, is at least
	$$\P\Sp{\hat{z}_k\geq 1\mid \mathcal{H}^{k-1}_H, \G_{k, \Ho}}=\frac{\Phi(1.9)-\Phi(1)}{\Phi(1.9)-\Phi(-1.9)}\geq\Phi(1.9)-\Phi(1):=C_\Ho.$$
	Thus, we can conclude that
	$$\P\Sp{\overline{V}_{h, k}(s)\geq V^*_h(s),\forall h\in[H], s\in\S\mid\mathcal{H}^{k-1}_H, \G_{k, \Ho}}\geq C_\Ho.$$
\end{proof}

\subsection{Bernstein-type Noise}
The following proof of optimism applies some techniques used in \cite{zhang2020reinforcement}. We first present a technical lemma.
\begin{lemma}
	\label{lemma:mono}
	Let $f_z:\Delta^S\times\R_+^S\times\R\times\R\mapsto\R$ with $f_z\Sp{p, v, n, L}=\frac{n}{n+1}\inner{p, v}+\max\Bp{4\sqrt{\frac{\mathbb{V}\Sp{p, v}L}{n+1}}, \frac{64HL}{n+1}}\cdot z$ for some constant $H>0$ and $z\in\R$. Then, $f_z$ satisfies
	\begin{enumerate}[(i)]
		\item $f_z\Sp{p, v, n, L}$ is non-decreasing in $v\Sp{s}$ for all $p\in\Delta^S$, $\Norm{v}_{\infty}\leq 2H$, $L>0$, $n\geq 3$ and $z\in\Mp{-1.5, 1.5}$
		
		\item $f_z\Sp{p, v, n, L}\geq\frac{n}{n+1}\inner{p, v}+\Sp{3\sqrt{\frac{\mathbb{V}\Sp{p, v}L}{n+1}}+\frac{8HL}{n+1}}\cdot z$ for $z\in\Mp{1, 1.5}$.
		
		\item $f_z\Sp{p, v, n, L}\leq\frac{n}{n+1}\inner{p, v}+\Sp{3\sqrt{\frac{\mathbb{V}\Sp{p, v}L}{n+1}}+\frac{8HL}{n+1}}\cdot z$ for $z\in\Mp{-1.5, -1}$.
	\end{enumerate}
\end{lemma}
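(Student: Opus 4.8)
The plan is to handle the three parts separately: parts (ii) and (iii) reduce to a single elementary inequality, while part (i) requires a derivative computation whose constants turn out to be tight.

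For parts (ii) and (iii), I would first observe that both reduce to one coefficient comparison. Write $A=\sqrt{\bbV\Sp{p, v}L/(n+1)}$ and $B=HL/(n+1)$, so the perturbation coefficient in $f_z$ is $\max\Bp{4A, 64B}$ while the target coefficient is $3A+8B$. The key claim is $\max\Bp{4A, 64B}\geq 3A+8B$ for all $A, B\geq 0$: if $A\geq 16B$ then $4A\geq 3A+8B$ because $A\geq 8B$; otherwise $64B>4A$ and $64B\geq 3A+8B$ because $A<16B$ gives $3A<48B\leq 56B$. Given this, the identical linear term $\frac{n}{n+1}\inner{p, v}$ cancels from both sides, and part (ii) follows since $z\in[1,1.5]$ is positive so multiplying preserves the inequality's direction, while part (iii) follows since $z\in[-1.5,-1]$ is negative so multiplying flips it into the desired upper bound.

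For part (i), I would fix a coordinate $s$ and show $\partial f_z/\partial v(s)\geq 0$ wherever $f_z$ is differentiable, then conclude monotonicity by continuity. The linear term contributes $\frac{n}{n+1}p(s)\geq 0$. For the perturbation term I split on which argument attains the maximum. When the constant $\frac{64HL}{n+1}$ dominates, the max is locally independent of $v$, so its derivative vanishes and the total derivative is $\frac{n}{n+1}p(s)\geq 0$. When the variance term dominates, using $\partial\bbV\Sp{p, v}/\partial v(s)=2p(s)\Sp{v(s)-\inner{p, v}}$ I obtain the perturbation contribution $z\cdot\frac{4\sqrt{L}\,p(s)\Sp{v(s)-\inner{p, v}}}{\sqrt{(n+1)\bbV\Sp{p, v}}}$, which may be negative and must be controlled by the linear term.

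The crux, which I expect to be the main obstacle, is exactly this constant-matching. Here the domain restriction $v\in\R_+^S$ together with $\Norm{v}_\infty\leq 2H$ is essential: it forces $v(s), \inner{p, v}\in[0, 2H]$, hence $\abs{v(s)-\inner{p, v}}\leq 2H$ — the non-negativity halves the naive bound of $4H$ and is precisely what makes the constants work. In the variance-dominant regime the defining inequality $4\sqrt{\bbV\Sp{p, v}L/(n+1)}\geq\frac{64HL}{n+1}$ yields $\bbV\Sp{p, v}\geq\frac{256H^2L}{n+1}$, so $\sqrt{(n+1)\bbV\Sp{p, v}}\geq 16H\sqrt{L}$. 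Combining these, the variance contribution has magnitude at most $\abs{z}\cdot\frac{4\sqrt{L}\,p(s)\cdot 2H}{16H\sqrt{L}}=\frac{\abs{z}}{2}p(s)$, so the total derivative is at least $\Sp{\frac{n}{n+1}-\frac{\abs{z}}{2}}p(s)$, which is nonnegative because $n\geq 3$ gives $\frac{n}{n+1}\geq\frac34$ and $\abs{z}\leq 1.5$ gives $\frac{\abs{z}}{2}\leq\frac34$. I would flag that this bound is tight at $n=3$, $\abs{z}=1.5$, which explains the precise hypotheses. Finally I would dispatch the non-smooth points: at $\bbV\Sp{p, v}=0$ all coordinates coincide so $v(s)-\inner{p, v}=0$ and the variance term is inactive (we are in the constant regime), and at the kink where the two arguments agree $f_z$ is continuous; since $f_z$ is continuous with nonnegative derivative on each smooth piece, it is non-decreasing in $v(s)$.
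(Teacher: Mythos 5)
Your proof is correct and follows essentially the same route as the paper's: the identical derivative computation with the bound $\abs{v(s)-\inner{p,v}}\leq 2H$ and the indicator condition giving $\sqrt{(n+1)\bbV\Sp{p,v}}\geq 16H\sqrt{L}$ for part (i), and the same two-case comparison $\max\Bp{4A,64B}\geq 3A+8B$ for parts (ii) and (iii). If anything, your write-up is slightly tighter than the paper's, since you track $\abs{z}$ explicitly (the paper writes $z/2$, which is only literally valid for $z\geq 0$) and you explicitly dispatch the non-differentiable points.
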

\begin{proof}
	It is obvious that $f_z\Sp{p, v, n, L}$ is continuous in $v\Sp{s}$ and not differentiable at only one point where $4\sqrt{\frac{\mathbb{V}\Sp{p, v}L}{n+1}}=\frac{64HL}{n+1}$. Therefore, to prove statement (i), we only need to show that $\frac{\partial f_z\Sp{p, v, n, L}}{\partial v\Sp{s}}\geq 0$. Specifically, we have
	\begin{align*}
		\frac{\partial f_z\Sp{p, v, n, L}}{\partial v\Sp{s}}&=\frac{n}{n+1}\cdot p\Sp{s}+\mathds{1}\Bp{4\sqrt{\frac{\mathbb{V}\Sp{p, v}L}{n+1}}\geq\frac{64HL}{n+1}}\frac{4p\Sp{s}\Sp{v\Sp{s}-\inner{p, v}}L}{\sqrt{\Sp{n+1}\mathbb{V}\Sp{p, v}L}}\cdot z\\
		&\overset{\text{(a)}}{\geq} \frac{n}{n+1}\cdot p\Sp{s}+\mathds{1}\Bp{4\sqrt{\frac{\mathbb{V}\Sp{p, v}L}{n+1}}\geq\frac{64HL}{n+1}}\frac{-8HL}{\sqrt{\Sp{n+1}\mathbb{V}\Sp{p, v}L}}\cdot z\\
		&\overset{\text{(b)}}{\geq} p(s)\Sp{\frac{n}{n+1}-\frac{z}{2}}\\
		&\geq 0.
	\end{align*}
	Here, The inequality (a) holds because $\Norm{v}_{\infty}\leq 2H$ and $v$ is non-negative, which means to have $v\Sp{s}-\inner{p, v}\geq -2H$. The inequality (b) above holds because when the condition inside indicator $\mathds{1}\Bp{\cdot}$ holds, we will have $\sqrt{\Sp{n+1}\mathbb{V}\Sp{p, v}L}\geq 16HL$. The last inequality holds because we have $n\geq 3$ and $z\leq 1.5$. Therefore, $f_z\Sp{p, v, n, L}$ is non-decreasing in $v\Sp{s}$.
	
	For the statement (ii), we consider two cases. First, when $4\sqrt{\frac{\mathbb{V}\Sp{p, v}L}{n+1}}\geq \frac{64HL}{n+1}$ holds, we have $\frac{8HL}{n+1}\leq \frac{1}{2}\sqrt{\frac{\mathbb{V}\Sp{p, v}L}{n+1}}$, which means to have
	$$\frac{n}{n+1}\inner{p, v}+\Sp{3\sqrt{\frac{\mathbb{V}\Sp{p, v}L}{n+1}}+\frac{8HL}{n+1}}\cdot z\leq\frac{n}{n+1}\inner{p, v}+\frac{7z}{2}\sqrt{\frac{\mathbb{V}\Sp{p, v}L}{n+1}}\leq f_z\Sp{p, v, n, L}.$$
	When $4\sqrt{\frac{\mathbb{V}\Sp{p, v}L}{n+1}}\leq \frac{64HL}{n+1}$ holds, we have $3\sqrt{\frac{\mathbb{V}\Sp{p, v}L}{n+1}}\leq\frac{48HL}{n+1}$, which similarly leads to
	$$\frac{n}{n+1}\inner{p, v}+\Sp{3\sqrt{\frac{\mathbb{V}\Sp{p, v}L}{n+1}}+\frac{8HL}{n+1}}\cdot z\leq f_z\Sp{p, v, n, L}.$$
	
	The state (iii) can be shown similarly and thus the proof is complete.
\end{proof}

\begin{lemma}
    \label{lemma:be_optimism}
	Condition on history $\mathcal{H}^{k-1}_H$, if $\G_{k, \Ho}$ holds and Bernstein-based noise is applied, then $\ov_{h, k}$ is optimisitic with constant probability for any $h\in[H]$. Specifically, we have
	$$\P\Sp{\overline{V}_{h, k}\Sp{s}\geq V^*_h\Sp{s}, \forall h\in[H], s\in\S\mid\mathcal{H}^{k-1}_H, \G_{k, \Be}}\geq\Phi\Sp{1.5}-\Phi\Sp{1}:=C_\Be$$
\end{lemma}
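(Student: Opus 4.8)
The plan is to mirror the Hoeffding argument in Lemma~\ref{lemma:ho_optimism}: I will show that whenever the single Gaussian seed satisfies $\hat{z}_k\in[1,1.5]$, optimism $\ov_{h,k}(s)\geq V^*_h(s)$ holds \emph{simultaneously} for all $h\in[H]$ and $s\in\S$, and then lower bound the conditional probability of this event. The containment is established by downward induction on $h$, with the trivial base case $\ov_{H+1,k}=V^*_{H+1}=\ve{0}$. For the inductive step I reuse the clipping chain: since $\max_{a}\overline{Q}_{h,k}(s,a)\geq\overline{Q}_{h,k}(s,\pi^*_h(s))$ and $2(H-h+1)\geq H-h+1\geq V^*_h(s)$, it suffices to prove $\overline{Q}_{h,k}(s,\pi^*_h(s))\geq Q^*_h(s,\pi^*_h(s))=V^*_h(s)$ for the optimal action, from which $\ov_{h,k}(s)=\clip_{2(H-h+1)}(\max_a\overline{Q}_{h,k}(s,a))\geq V^*_h(s)$ follows.

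The core difficulty, and the reason the Bernstein case needs the auxiliary Lemma~\ref{lemma:mono}, is that the perturbation magnitude $\sigma^k_\Be$ is calibrated with the \emph{empirical} variance $\bbV(\tilde{P}^k_{h,s,a},\ov_{h+1,k})$ of the estimated value, whereas the confidence set $\mathcal{M}^k_\Be$ controls the error only through $\bbV(\tilde{P}^k_{h,s,a},V^*_{h+1})$. Since the variance functional is not monotone in its value argument, one cannot directly compare these two quantities after invoking the inductive hypothesis $\ov_{h+1,k}\geq V^*_{h+1}$. This is exactly what $f_z$ is designed to circumvent: the combination of the mean term $\frac{n}{n+1}\inner{p,v}$ with the variance-scaled bonus is monotone in each coordinate $v(s)$ (Lemma~\ref{lemma:mono}(i)), because any loss in the variance bonus is dominated by the gain in the mean.

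Concretely, writing $L=\log(2HSAk^2)$, $p=\tilde{P}^k_{h,s,\pi^*_h(s)}$ and $n=n_k(h,s,\pi^*_h(s))$, I would split on the size of $n$. When $n\geq 3$, I first observe that for $\hat{z}_k>0$ the two leading noise terms dominate the bonus inside $f_{\hat{z}_k}$ with room to spare, so that $\inner{\hP^k_{h,s,\pi^*_h(s)},\ov_{h+1,k}}+\sigma^k_\Be(h,s,\pi^*_h(s))\hat{z}_k\geq f_{\hat{z}_k}(p,\ov_{h+1,k},n,L)+\bigl(\tfrac{HL}{n+1}+\sqrt{\tfrac{L}{n+1}}\bigr)\hat{z}_k$, where the surplus $\tfrac{HL}{n+1}\hat{z}_k$ comes from the gap between the coefficients $65$ and $64$. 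Applying monotonicity (Lemma~\ref{lemma:mono}(i)) with the inductive hypothesis replaces $\ov_{h+1,k}$ by $V^*_{h+1}$ inside $f_{\hat{z}_k}$ — this is legitimate since both $V^*_{h+1}$ and $\ov_{h+1,k}$ are nonnegative and bounded by $2H$, meeting the hypotheses of the lemma — and then Lemma~\ref{lemma:mono}(ii) (valid because $\hat{z}_k\in[1,1.5]$) extracts the explicit bonus $3\sqrt{\bbV(p,V^*_{h+1})L/(n+1)}+\tfrac{8HL}{n+1}$. Collecting terms and using $\hat{z}_k\geq 1$ together with $3\geq\sqrt{6}$, the accumulated bonus is at least $\sqrt{e^k_\Be(h,s,\pi^*_h(s))}$, whence $\overline{Q}_{h,k}(s,\pi^*_h(s))\geq \hR^k+\inner{\hP^k,V^*_{h+1}}+\sqrt{e^k_\Be}\geq R+\inner{P,V^*_{h+1}}=Q^*_h(s,\pi^*_h(s))$, the last step being exactly the event $\mathcal{C}^k_\Be$ inferred from $\mathcal{G}_{k,\Be}$. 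When $n\leq 2$ the monotone lemma is unavailable, but the crude bound suffices: dropping the nonnegative variance term in $\sigma^k_\Be$ and using $\inner{\hP^k,\ov_{h+1,k}}\geq\inner{\hP^k,V^*_{h+1}}$ (immediate from $\hP^k\geq\ve{0}$ and the inductive hypothesis), the surviving terms $\tfrac{65HL}{n+1}\hat{z}_k+\sqrt{\tfrac{L}{n+1}}\hat{z}_k$ dominate $\sqrt{e^k_\Be}$ because $\bbV(p,V^*_{h+1})\leq H^2$ and $n+1\leq 3$, again giving $\overline{Q}_{h,k}(s,\pi^*_h(s))\geq Q^*_h(s,\pi^*_h(s))$ via $\mathcal{C}^k_\Be$.

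Finally, to turn the deterministic statement into the probabilistic one, I note that under $\mathcal{E}^w_k$ (implied by $\mathcal{G}_{k,\Be}$) the seed lies in $[-\sqrt{\log(40k^4)},\sqrt{\log(40k^4)}]$ with $\sqrt{\log(40k^4)}\geq 1.5$, so the conditional law of $\hat{z}_k$ is a standard Gaussian truncated to a symmetric interval containing $[1,1.5]$; hence $\P(\hat{z}_k\in[1,1.5]\mid\mathcal{H}^{k-1}_H,\mathcal{G}_{k,\Be})\geq \Phi(1.5)-\Phi(1)=:C_\Be$, which yields the claim. The main obstacle throughout is the variance mismatch described above; the restriction $z\in[-1.5,1.5]$ needed for the monotonicity of $f_z$ is precisely what pins the constant to $\Phi(1.5)-\Phi(1)$ (rather than the larger interval available in the Hoeffding case), and the companion restriction $n\geq 3$ in Lemma~\ref{lemma:mono} is what forces the separate small-count argument.
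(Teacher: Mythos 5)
Your proposal is correct and follows essentially the same route as the paper's proof: downward induction on $h$ with the trivial base case at $H+1$, Lemma~\ref{lemma:mono}(i) to replace $\ov_{h+1,k}$ by $V^*_{h+1}$ inside $f_{\hat{z}_k}$ under the inductive hypothesis, Lemma~\ref{lemma:mono}(ii) to extract the explicit bonus for $\hat{z}_k\in[1,1.5]$, a separate crude argument when $n_k(h,s,a)<3$, the same clipping chain, and the same truncated-Gaussian bound yielding $\Phi(1.5)-\Phi(1)$. The only substantive difference is minor and arguably in your favor: you absorb the reward and transition errors jointly through the confidence set $\mathcal{C}^k_\Be$, which is exactly what the conditioning event $\G_{k,\Be}$ supplies, whereas the paper invokes the finer events $\mathcal{E}^1_k$ and $\mathcal{E}^2_k$, which imply $\mathcal{C}^k_\Be$ but are not implied by it.
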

\begin{proof}
	Similar to what we have discussed in the proof of Lemma \ref{lemma:ho_optimism}, under event $\mathcal{E}^w_k$, we have $\hat{z}_k\in\Mp{1, 1.5}$ with probability at least $\Phi\Sp{1.5}-\Phi\Sp{1}=C_\Be$. Then, we will show that $\overline{Q}_{h, k}\Sp{s, a}\geq Q^*_h\Sp{s, a}$ for any $h$ with arbitrary $s, a$ and $\hat{z}_k\in\Mp{1, 1.5}$. The proof will use induction. For simplicity, let $L=\log\Sp{2HSAk^2}$.
	
	For $h=H+1$, the inequality holds trivially because both sides are 0. Then, by assuming $\overline{Q}_{h+1, k}\Sp{s, a}\geq Q^*_{h}\Sp{s, a}$ for any $\Sp{s, a}$ such that $n_k(h, s, a)\geq 3$, we have
	\fontsize{9}{9}
	\begin{align*}
		\overline{Q}_{h, k}\Sp{s, a}&=\hat{R}^k_{h, s, a}+\inner{\hat{P}^k_{h, s, a}, \overline{V}_{h+1, k}}+\sigma^k_\Be\Sp{h, s, a}\hat{z}_k\\
		&\geq R_{h, s, a}+\inner{\hat{P}^k_{h, s, a}, \overline{V}_{h+1, k}}+\Sp{4\sqrt{\frac{\mathbb{V}\Sp{\tilde{P}^k_{h, s, a}, \overline{V}_{h+1, k}}L}{n_k\Sp{h, s, a}+1}}+\frac{64HL}{n_k\Sp{h, s, a}+1}}\cdot\hat{z}_k\tag{Replace $\hat{R}_{h, s, a}$ by $R_{h, s, a}$ through applying event $\mathcal{E}^1_k$ defined in (\ref{def:event_e1})}\\
		&\geq R_{h, s, a}+\inner{\hat{P}^k_{h, s, a}, \overline{V}_{h+1, k}}+\max\Bp{4\sqrt{\frac{\mathbb{V}\Sp{\tilde{P}^k_{h, s, a}, \overline{V}_{h+1, k}}L}{n_k\Sp{h, s, a}+1}}, \frac{64HL}{n_k\Sp{h, s, a}+1}}\cdot\hat{z}_k\\
		&\overset{\text{(a)}}{\geq} R_{h, s, a}+\inner{\hat{P}^k_{h, s, a}, V^*_{h+1}}+\max\Bp{4\sqrt{\frac{\mathbb{V}\Sp{\tilde{P}^k_{h, s, a}, V^*_{h+1}}L}{n_k\Sp{h, s, a}+1}}, \frac{64HL}{n_k\Sp{h, s, a}+1}}\cdot\hat{z}_k\\
		&\geq R_{h, s, a}+\inner{\hat{P}^k_{h, s, a}, V^*_{h+1}}+\Sp{3\sqrt{\frac{\mathbb{V}\Sp{\tilde{P}^k_{h, s, a}, V^*_{h+1}}L}{n_k\Sp{h, s, a}+1}}+\frac{8HL}{n_k\Sp{h, s, a}+1}}\cdot\hat{z}_k\tag{By applying statement (ii) of Lemma \ref{lemma:mono}}\\
		&\geq R_{h, s, a}+\inner{\hat{P}^k_{h, s, a}, V^*_{h+1}}+\sqrt{\frac{6\mathbb{V}\Sp{\tilde{P}^k_{h, s, a}, V^*_{h+1}}}{n_k\Sp{h, s, a}+1}}+\frac{8HL}{n_k\hsa+1}\tag{Since $\hat{z}_k\geq 1$}\\
		&\geq R_{h, s, a}+\inner{P_{h, s, a}, V^*_{h+1}}\tag{By applying event $\mathcal{E}_k^2$ defined in (\ref{def:event_e2})}\\
		&=Q^*_h\Sp{s, a}.
	\end{align*}
	\normalsize
	Here, the above inequality (a) holds by applying inductive hypothesis and statement (i) in Lemma \ref{lemma:mono}. It is applicable because when $\mathcal{E}^w_k$ holds, and by the clipping function, $\Norm{\overline{V}_{h+1, k}}_{\infty}\leq 2H$. When $n_k(h, s, a)<3$, $\overline{Q}_{h, k}(s, a)\geq Q^*_h(s, a)$ holds trivially because $Q^*_h(s, a)\leq H$ by definition. Therefore, the induction is complete.
	
	Now, for arbitrary $(k, h, s)$, set $a=\argmax_{a\in\A}\overline{Q}_{h,k}(s,a)$ and we have
	\begin{align*}
	    \ov_{h,k}(s)=&\clip_{2(H-h+1)}\Sp{\max_{a\in\A}\overline{Q}_{h,k}(s,a)}\\
        \geq & \min \Bp{2(H-h+1),\max_{a\in\A}\overline{Q}_{h,k}(s,a)}\\
        \geq & \min \Bp{(H-h+1),\overline{Q}_{h,k}(s,\pi_h^*(s))}\\
        \geq & \min \Bp{(H-h+1),Q^*_{h,k}(s,\pi_h^*(s))}\\
        \geq & V^*_{h,k}(s)
	\end{align*}
\end{proof}
\section{Pessimism}
\label{sec:pessimism}

Similar to what we have proved in Section \ref{sec:optimism}, in this section we will prove that for both types of noise, $\overline{V}_{h, k}$ is pessimistic with constant probability under certain conditions.
\subsection{Hoeffding-type Noise}

\begin{lemma}
    \label{lemma:ho_pessimism}
    Condition on history $\mathcal{H}^{k-1}_H$, if $\G_{k, \Ho}$ holds and Hoeffding-based noise is applied, then $\ov_{h, k}$ is optimisitic with constant probability for any $h\in[H]$. Specifically, we have
	$$\P\Sp{\overline{V}_{h, k}(s)\leq V^*_h(s),\forall h\in[H], s\in\S\mid\mathcal{H}^{k-1}_H, \G_{k, \Ho}}\geq\Phi(1.9)-\Phi(1):=C_\Ho.$$
\end{lemma}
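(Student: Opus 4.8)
The plan is to mirror the optimism argument of Lemma~\ref{lemma:ho_optimism} with every inequality reversed, exploiting the symmetry of the Gaussian perturbation. Concretely, I would first establish the deterministic implication: conditioned on $\G_{k, \Ho}$ (which in particular gives $\hat{M}^k\in\M^k_\Ho$ and $\abs{\hat{z}_k}\leq\sqrt{\log\Sp{40k^4}}$), if the single seed satisfies $\hat{z}_k\leq -1$, then $\ov_{h, k}(s)\leq V^*_h(s)$ for all $h\in[H]$ and $s\in\S$ simultaneously. The probability bound then follows by lower bounding $\P\Sp{\hat{z}_k\leq -1\mid \mathcal{H}^{k-1}_H, \G_{k,\Ho}}$ using symmetry of $\mathcal{N}(0,1)$.

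For the deterministic implication I would run a backward induction on $h$, with base case $\ov_{H+1, k}=V^*_{H+1}=\ve{0}$. For the inductive step, assuming $\ov_{h+1, k}(s')\leq V^*_{h+1}(s')$ for every $s'$, I would bound each action's $Q$-value from above: since $\hat{P}^k_{h,s,a}\geq 0$, monotonicity gives $\inner{\hat{P}^k_{h,s,a}, \ov_{h+1,k}}\leq\inner{\hat{P}^k_{h,s,a}, V^*_{h+1}}$; then the confidence-set bound $\hat{M}^k\in\M^k_\Ho$ (with $\sqrt{e^k_\Ho}=\sigma^k_\Ho$) yields $\hat{R}^k_{h,s,a}+\inner{\hat{P}^k_{h,s,a}, V^*_{h+1}}\leq R_{h,s,a}+\inner{P_{h,s,a}, V^*_{h+1}}+\sigma^k_\Ho(h,s,a)$; and finally $\hat{z}_k\leq -1$ cancels this extra $\sigma^k_\Ho(h,s,a)$ term, leaving $\overline{Q}_{h,k}(s,a)\leq R_{h,s,a}+\inner{P_{h,s,a}, V^*_{h+1}}=Q^*_h(s,a)\leq V^*_h(s)$. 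Taking the maximum over $a$ gives $\max_{a\in\A}\overline{Q}_{h,k}(s,a)\leq V^*_h(s)$.

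The step I expect to require the most care is the clipping, since it behaves asymmetrically and the reasoning differs from the optimism proof. The key observation is that $V^*_h(s)\in\Mp{0, H-h+1}$, so that $-2(H-h+1)\leq 0\leq V^*_h(s)\leq H-h+1\leq 2(H-h+1)$. Hence whenever $\max_{a}\overline{Q}_{h,k}(s,a)\leq V^*_h(s)$, applying $\clip_{2(H-h+1)}(\cdot)$ cannot violate pessimism: clipping from above at $2(H-h+1)\geq V^*_h(s)$ never raises the value past $V^*_h(s)$, and clipping from below at $-2(H-h+1)$ can only raise a value up to $-2(H-h+1)\leq 0\leq V^*_h(s)$. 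Either way $\ov_{h,k}(s)=\clip_{2(H-h+1)}\Sp{\max_a\overline{Q}_{h,k}(s,a)}\leq V^*_h(s)$, closing the induction. This is in fact cleaner than the optimism side, where one must additionally intersect with the cap $H-h+1$.

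Finally, for the probability I would note that $\mathcal{C}^k_\Ho$ is $\mathcal{H}^{k-1}_H$-measurable, so conditioning on $\G_{k,\Ho}$ restricts the fresh seed to the symmetric interval $\abs{\hat{z}_k}\leq\sqrt{\log\Sp{40k^4}}$, with $\sqrt{\log\Sp{40k^4}}\geq 1.9$. By symmetry of the standard normal, $\P\Sp{\hat{z}_k\leq -1\mid \mathcal{H}^{k-1}_H, \G_{k,\Ho}}\geq\P\Sp{-1.9\leq\hat{z}_k\leq -1}=\Phi(-1)-\Phi(-1.9)=\Phi(1.9)-\Phi(1)=C_\Ho$, which matches exactly the constant obtained on the optimism side. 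Combining this with the deterministic implication of the first three paragraphs yields the claim.
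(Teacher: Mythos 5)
Your proposal is correct and follows essentially the same route as the paper's proof: a backward induction showing that $\hat{z}_k\leq -1$ forces $\overline{Q}_{h,k}(s,a)\leq Q^*_h(s,a)$ via the confidence set $\hat{M}^k\in\M^k_\Ho$ (with $\sqrt{e^k_\Ho}=\sigma^k_\Ho$ canceling the noise term), the observation that the lower clipping threshold $-2(H-h+1)\leq 0\leq V^*_h(s)$ makes the clip harmless for pessimism, and the Gaussian bound $\P\Sp{\hat{z}_k\leq -1\mid\mathcal{H}^{k-1}_H,\G_{k,\Ho}}\geq\Phi(1.9)-\Phi(1)$. The only difference is cosmetic: you bound each $\overline{Q}_{h,k}(s,a)$ and then apply the max and clip, whereas the paper chains the $\max\Bp{-(H-h+1),\cdot}$ inequalities directly; both arguments are the same in substance.
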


\begin{proof}
    We will show that if $\hat{z}_k\leq -1$, then for all $h\in[H]$ and $s\in\S$, we have $\ov_{h,k}(s)\leq V^*_h(s)$. The proof will use induction and the argument is true for $h=H+1$ as $\ov_{H+1,k}(s)=V_{H+1}^*(s)=0$. Suppose the argument is true for timestep $h+1$ and we consider timestep $h$. Set $a=\argmax_{a\in\A}\overline{Q}_{h,k}(s,a)$.
    
    \begin{align*}
        \ov_{h,k}(s)=&\clip_{2(H-h+1)}\Sp{\overline{Q}_{h,k}(s,a)}\\
        \leq & \max \Bp{-2(H-h+1),\overline{Q}_{h,k}(s,a)}\\
        \leq & \max \Bp{-(H-h+1),\overline{Q}_{h,k}(s,a)}\\
        \leq & \max \Bp{-(H-h+1),\hat{R}_{h,s,a}^k+\inner{\hat{P}_{h,s,a}^k,\ov_{h+1,k}}+\sigma_\ty^k(h,s,a)\hat{z}_k}\\
        \leq & \max \Bp{-(H-h+1),\hat{R}_{h,s,a}^k+\inner{\hat{P}_{h,s,a}^k,V^*_{h+1,k}}+\sigma_\ty^k(h,s,a)\hat{z}_k} \tag{Induction Hypothesis}\\
        \leq & \max \Bp{-(H-h+1),R_{h,s,a}^k+\inner{P_{h,s,a}^k,V^*_{h+1,k}}}\tag{Since $\hat{M}^k\in\mathcal{M}^k_\Ho$ and $\hat{z}_k\leq-1$}\\
        \leq & \max \Bp{-(H-h+1),Q^*_h(s,a)}\\
        \leq & \max \Bp{-(H-h+1),\max_{a\in\A}Q^*_h(s,a)}\\
        \leq & V_h^*(s).
    \end{align*}
    Then by induction we have that the optimism is achieved for all $h\in[H]$ and $s\in\S$ simultaneously. By using argument similar to the proof of Lemma \ref{lemma:ho_optimism}, we can see that when $\hat{z}_k\leq -1$, we have $\overline{V}_{h, k}(s)\leq V^*_{h}(s)$ an this hold simultaneously for any $h\in[H]$, $s\in\S$. Furtherm as stated in Definition \ref{def:E^w_k}, we have $|\hat{z}_k|\leq\sqrt{\log\Sp{40k^4}}$ under event $\mathcal{E}^w_k$ and numerically, $\sqrt{\log\Sp{40k^4}}\geq 1.9$. Therefore, the probability that $\hat{z}_k\leq -1$ under $\mathcal{E}^w_k$ is at least
	$$\P\Sp{\hat{z}_k\leq -1\mid\mathcal{H}^{k-1}_H, \G_{k, \Ho}}=\frac{\Phi(1.9)-\Phi(1)}{\Phi(1.9)-\Phi(-1.9)}\geq\Phi(1.9)-\Phi(1)=C_\Ho.$$
	Thus, we can conclude that
	$$\P\Sp{\overline{V}_{h, k}(s)\leq V^*_h(s),\forall h\in[H], s\in\S\mid\mathcal{H}^{k-1}_H, \G_{k, \Ho}}\geq C_\Ho.$$
\end{proof}

\subsection{Bernstein-type Noise}

\begin{lemma}
    \label{lemma:be_pessimism}
	Condition on history $\mathcal{H}^{k-1}_H$, if $\G_{k, \Ho}$ holds and Bernstein-based noise is applied, then $\ov_{h, k}$ is pessimistic with constant probability for any $h\in[H]$. Specifically, we have
	$$\P\Sp{\overline{V}_{h, k}\Sp{s}\leq V^*_h\Sp{s}, \forall h\in[H], s\in\S\mid\mathcal{H}^{k-1}_H, \G_{k, \Be}}\geq C_\Be$$
\end{lemma}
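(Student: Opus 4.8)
The plan is to mirror the optimism argument of Lemma~\ref{lemma:be_optimism}, flipping every inequality so that a \emph{negative} value of the single Gaussian seed drives the estimated $Q$-function below the optimal one. First I would observe that, conditioned on $\mathcal{E}^w_k$ (which is implied by $\G_{k, \Be}$), the seed lands in the pessimistic window with the required constant probability: by symmetry of the Gaussian,
$$\P\Sp{\hat{z}_k\in\Mp{-1.5, -1}\mid\mathcal{H}^{k-1}_H, \G_{k, \Be}}=\frac{\Phi\Sp{-1}-\Phi\Sp{-1.5}}{\Phi\Sp{\sqrt{\log\Sp{40k^4}}}-\Phi\Sp{-\sqrt{\log\Sp{40k^4}}}}\geq\Phi\Sp{1.5}-\Phi\Sp{1}=C_\Be.$$
It therefore suffices to show that whenever $\hat{z}_k\in\Mp{-1.5, -1}$, we have $\overline{Q}_{h, k}\Sp{s, a}\leq Q^*_h\Sp{s, a}$ for all $(h, s, a)$, since then the clipping step yields $\ov_{h, k}\leq V^*_h$ by exactly the same chain of inequalities as in the last display of Lemma~\ref{lemma:ho_pessimism} (set $a=\argmax_a\overline{Q}_{h, k}(s, a)$ and use $\clip_{2(H-h+1)}(x)\leq\max\Bp{-(H-h+1), x}$ together with $V^*_h\geq 0$).

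The core is a backward induction on $h$ establishing $\overline{Q}_{h, k}\Sp{s, a}\leq Q^*_h\Sp{s, a}$, entirely parallel to Lemma~\ref{lemma:be_optimism} but with each bound reversed. The base case $h=H+1$ is trivial. For the inductive step with $n_k\Sp{h, s, a}\geq 3$, I would start from $\overline{Q}_{h, k}\Sp{s, a}=\hat{R}^k_{h, s, a}+\inner{\hat{P}^k_{h, s, a}, \overline{V}_{h+1, k}}+\sigma^k_\Be\Sp{h, s, a}\hat{z}_k$ and, using $\mathcal{E}^1_k$ together with $\hat{z}_k\leq -1$, absorb the reward deviation $\sqrt{L/(n_k+1)}+1/(n_k+1)$ into the $\sqrt{L/(n_k+1)}$ term and one $HL/(n_k+1)$ unit split off from the $65HL/(n_k+1)$ term of $\sigma^k_\Be$; this upper-bounds $\overline{Q}_{h, k}$ by $R_{h, s, a}+\inner{\hat{P}^k_{h, s, a}, \overline{V}_{h+1, k}}+\Sp{4\sqrt{\mathbb{V}\Sp{\tilde{P}^k_{h, s, a}, \overline{V}_{h+1, k}}L/(n_k+1)}+64HL/(n_k+1)}\hat{z}_k$. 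Because $\hat{z}_k<0$, the sum inside the last term may be replaced by its maximum (the inequality flips relative to the optimism proof), which, after using $\inner{\hat{P}^k, v}=\tfrac{n_k}{n_k+1}\inner{\tilde{P}^k, v}$, exposes the function $f_{\hat{z}_k}\Sp{\tilde{P}^k_{h, s, a}, \overline{V}_{h+1, k}, n_k, L}$ from Lemma~\ref{lemma:mono}.

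Next I would invoke the monotonicity statement~(i) of Lemma~\ref{lemma:mono}, which is valid on the whole range $z\in\Mp{-1.5, 1.5}$ and hence covers $\hat{z}_k\in\Mp{-1.5,-1}$ (justified as in step (a) of Lemma~\ref{lemma:be_optimism} via $\Norm{\overline{V}_{h+1, k}}_\infty\leq 2H$): since the inductive hypothesis gives $\overline{V}_{h+1, k}\leq V^*_{h+1}$, monotonicity yields $f_{\hat{z}_k}\Sp{\tilde{P}^k, \overline{V}_{h+1, k}, n_k, L}\leq f_{\hat{z}_k}\Sp{\tilde{P}^k, V^*_{h+1}, n_k, L}$. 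Applying statement~(iii) of Lemma~\ref{lemma:mono} (the pessimistic counterpart of~(ii)), then $\hat{z}_k\leq -1$ together with $3\sqrt{x}\geq\sqrt{6x}$, bounds this by $R_{h, s, a}+\inner{\hat{P}^k_{h, s, a}, V^*_{h+1}}-\sqrt{6\mathbb{V}\Sp{\tilde{P}^k_{h, s, a}, V^*_{h+1}}L/(n_k+1)}-8HL/(n_k+1)$, and finally the event $\mathcal{E}^2_k$ replaces $\inner{\hat{P}^k, V^*_{h+1}}$ by $\inner{P, V^*_{h+1}}$, closing the step at $Q^*_h\Sp{s, a}=R_{h, s, a}+\inner{P_{h, s, a}, V^*_{h+1}}$. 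The remaining case $n_k\Sp{h, s, a}<3$ is handled directly: then $\sigma^k_\Be\geq 65HL/3$, so $\sigma^k_\Be\hat{z}_k\leq -65HL/3$, while $\hat{R}^k\leq 1$ and $\inner{\hat{P}^k_{h, s, a}, \overline{V}_{h+1, k}}\leq 2H$ under the clipping, giving $\overline{Q}_{h, k}\Sp{s, a}<0\leq Q^*_h\Sp{s, a}$.

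I expect the main obstacle to be bookkeeping the sign flips consistently: every place where the optimism proof uses $\hat{z}_k\geq 1$ to lower-bound a positively-weighted noise term must instead use $\hat{z}_k\leq -1$ to upper-bound it, and the sum-to-max substitution and the passage from statement~(ii) to statement~(iii) of Lemma~\ref{lemma:mono} must be applied in the reversed direction. Verifying that the deviations from $\mathcal{E}^1_k$ and $\mathcal{E}^2_k$ are absorbed with the correct sign, and that the small-count case genuinely forces $\overline{Q}_{h, k}$ strictly negative rather than merely bounded, are the only substantively new checks; everything else transfers verbatim from Lemmas~\ref{lemma:be_optimism} and~\ref{lemma:ho_pessimism}.
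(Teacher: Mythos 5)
Your proposal is correct and follows essentially the same route as the paper's proof: the same truncated-Gaussian probability bound for $\hat{z}_k\in[-1.5,-1]$, the same backward induction on $\overline{Q}_{h,k}\leq Q^*_h$ using $\mathcal{E}^1_k$, the sum-to-max substitution, statements (i) and (iii) of Lemma~\ref{lemma:mono}, and $\mathcal{E}^2_k$, followed by the identical clipping chain. Your explicit treatment of the $n_k(h,s,a)<3$ case (showing $\overline{Q}_{h,k}<0\leq Q^*_h$ via the $65HL/(n_k+1)$ term) actually spells out a detail the paper dismisses as trivial, so nothing is missing.
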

\begin{proof}
	Similar to what we have discussed in the proof of Lemma \ref{lemma:ho_pessimism}, under event $\mathcal{E}^w_k$, we have $\hat{z}_k\in\Mp{-1.5, -1}$ with probability at least $\Phi\Sp{1.5}-\Phi\Sp{1}=C_\Be$. Then, we will show that $\overline{Q}_{h, k}\Sp{s, a}\leq Q^*_h\Sp{s, a}$ for any $h$ with arbitrary $s, a$ and $\hat{z}_k\in\Mp{-1.5, -1}$. The proof will go by induction. For simplicity, let $L=\log\Sp{2HSAk^2}$.
	
	For $h=H+1$, the inequality holds trivially because both sides are 0. Then, by assuming $\overline{Q}_{h+1, k}\Sp{s, a}\leq Q^*_{h}\Sp{s, a}$ for any $\Sp{s, a}$ such that $n_k(h, s, a)\geq 3$, we have
	\begin{align*}
		\overline{Q}_{h, k}\Sp{s, a}&=\hat{R}^k_{h, s, a}+\inner{\hat{P}^k_{h, s, a}, \overline{V}_{h+1, k}}+\sigma^k_\Be\Sp{h, s, a}\hat{z}_k\\
		&\leq R_{h, s, a}+\inner{\hat{P}^k_{h, s, a}, \overline{V}_{h+1, k}}-\Sp{4\sqrt{\frac{\mathbb{V}\Sp{\tilde{P}^k_{h, s, a}, \overline{V}_{h+1, k}}L}{n_k\Sp{h, s, a}+1}}+\frac{64HL}{n_k\Sp{h, s, a}+1}}\tag{Replace $\hat{R}_{h, s, a}$ by $R_{h, s, a}$ through applying event $\mathcal{E}^1_k$ defined in (\ref{def:event_e1})}\\
		&\overset{\text{(a)}}{\leq} R_{h, s, a}+\inner{\hat{P}^k_{h, s, a}, V^*_{h+1}}-\max\Bp{4\sqrt{\frac{\mathbb{V}\Sp{\tilde{P}^k_{h, s, a}, V^*_{h+1}}L}{n_k\Sp{h, s, a}+1}}, \frac{64HL}{n_k\Sp{h, s, a}+1}}\\
		&\leq R_{h, s, a}+\inner{\hat{P}^k_{h, s, a}, V^*_{h+1}}-\Sp{3\sqrt{\frac{\mathbb{V}\Sp{\tilde{P}^k_{h, s, a}, V^*_{h+1}}L}{n_k\Sp{h, s, a}+1}}+\frac{8HL}{n_k\Sp{h, s, a}+1}}\tag{By applying statement (iii) of Lemma \ref{lemma:mono}}\\
		&\leq R_{h, s, a}+\inner{\hat{P}^k_{h, s, a}, V^*_{h+1}}-\sqrt{\frac{6\mathbb{V}\Sp{\tilde{P}^k_{h, s, a}, V^*_{h+1}}}{n_k\Sp{h, s, a}+1}}-\frac{8HL}{n_k\hsa+1}\\
		&\leq R_{h, s, a}+\inner{P_{h, s, a}, V^*_{h+1}}\tag{By applying event $\mathcal{E}_k^2$ defined in (\ref{def:event_e2})}\\
		&=Q^*_h\Sp{s, a}.
	\end{align*}
	Here, the above inequality (a) holds by applying inductive hypothesis and statement (i) in Lemma \ref{lemma:mono}. It is applicable because when $\mathcal{E}^w_k$ holds, by the clipping function, $\Norm{\overline{V}_{h+1, k}}_{\infty}\leq 2H$. When $n_k(h, s, a)<3$, $\overline{Q}_{h, k}(s, a)\leq Q^*_h(s, a)$ holds trivially because $0\leq Q^*_h(s, a)\leq H$ by definition. Therefore, the induction is complete.
	
	Now, for arbitrary $(k, h, s)$, set $a=\argmax_{a\in\A}\overline{Q}_{h,k}(s,a)$ and we have
	\begin{align*}
	    \ov_{h,k}(s)=&\clip_{2(H-h+1)}\Sp{\overline{Q}_{h,k}(s,a)}\\
        \leq & \max \Bp{-2(H-h+1),\overline{Q}_{h,k}(s,a)}\\
        \leq & \max \Bp{-(H-h+1),\overline{Q}_{h,k}(s,a)}\\
        \leq & \max \Bp{-(H-h+1),Q^*_{h,k}(s,a)}\\
        \leq & \max \Bp{-(H-h+1),Q^*_{h,k}(s,\pi^*_h(s))}\\
        \leq & V^*_{h,k}(s).
	\end{align*}
\end{proof}
\section{Regret Decomposition}
\label{sec:regret_decomp}

In this section, we prove the multiple lemmas necessary for bounding the regret. The regret is mainly composed of two terms, the pessimism term and the estimation error term. The pessimism term, $V^*_{1,k}(s_1^k) - \overline{V}_{1,k}(s_1^k)$, measures how much regret is due to the value the algorithm uses, $\overline{V}_{1,k}$, is smaller than the true value, $V^*_{1,k}(s_1^k)$.  The estimation error term, $\bV_{1,k}(s_1^k) - V_{1,k}^{\pi^k}(s_1^k)$ measure how much regret is due to the value, $\overline{V}_{1,k}$, does not estimate $V_{1,k}^{\pi^k}(s_1^k)$, the true value of the policy $\pi^k$ accurately. 

We first introduce a few definitions key to this section. In this section, we omit $k$ if it is clear from the context. Let $a_h^k = \pi^k_h(s_h^k)$ unless specified otherwise.
 
\begin{definition}
\label{def:curly_PR}
Let $\Pp^k_{h, s, a} = \inner{\hP^k_{h, s, a} - P_{h, s, a}, V^*_{h+1}}$ and $\Rr^k_{h, s, a} = \hR^k_{h, s, a} - R_{h, s, a} $.
\end{definition}


\begin{definition}[$\um^k_\ty$ and $\uv_{h,k}$]
\label{def:V_under}
Given history $\mathcal{H}_H^{k-1}$ (defined in equation \eqref{equ:def_his}), $\hat{P}^k$ and $\hR^k$, we define $\uw^k_\ty\hsa = -\gamma^k_\ty\hsa$ and $\uv_{h,k}$ be the value function obtained by running policy $\pi^k$ on the MDP $\um^k_\ty = (H, \S, \A,\hP^k, \hR^k + \uw^k_\ty, s_1^k)$ plus a magnitude clipping with threshold $2(H-h+1)$.

\end{definition}

\begin{definition}[$\upm^k_\ty$ and $\upv_{h,k}$]
\label{def:V_up}
Given history $\mathcal{H}_H^{k-1}$ (defined in equation \eqref{equ:def_his}), $\hat{P}^k$ and $\hR^k$, we define $\upw^k_\ty\hsa = \gamma^k_\ty\hsa$ and $\upv_{h,k}$ be the value function obtained by running policy $\pi^k$ on the MDP $\upm^k_\ty = (H, \S, \A,\hP^k, \hR^k + \upw^k_\ty, s_1^k)$ plus a magnitude clipping with threshold $2(H-h+1)$.

Similar to Lemma \ref{lemma:q_bar_bound}, we can also show that under good event $\G_k$ and $\mathcal{E}^{th}_{h, k}$, no clipping happens on $s_h^k$ for $\uv_{h, k}(s_h^k)$ and $\upv_{h, k}(s_h^k)$.

\end{definition}

\begin{lemma}
\label{lemma:V_bar_upper_lower_bound}
Under the good event $\mathcal{G}_k$, we have $\uv_{h, k}(s)\leq \ov_{h, k}(s)\leq \upv_{h, k}(s)$ for all $h\in[H]$, $s\in\S$.
\end{lemma}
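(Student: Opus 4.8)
The plan is to prove the sandwich $\uv_{h,k}(s)\leq\ov_{h,k}(s)\leq\upv_{h,k}(s)$ by backward induction on $h$, running from the terminal level $h=H+1$ down to $h=1$. The base case is immediate since all three value functions are identically $\ve{0}$ at level $H+1$. The whole argument rests on three monotonicity facts that I would record first: (i) the map $\clip_{2(H-h+1)}(\cdot)$ is non-decreasing, so it preserves any pointwise inequality; (ii) for a fixed nonnegative measure $\hat{P}^k_{h,s,a}$, the functional $V\mapsto\inner{\hat{P}^k_{h,s,a},V}$ is monotone in $V$; and (iii) under the good event $\mathcal{G}_k$---specifically its component $\mathcal{E}^w_k$ from Definition \ref{def:E^w_k}---the algorithmic noise is sandwiched as $\uw^k_\ty(h,s,a)=-\gamma^k_\ty(h,s,a)\leq w^k_\ty(h,s,a)\leq\gamma^k_\ty(h,s,a)=\upw^k_\ty(h,s,a)$ for every $(h,s,a)$.

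The crucial structural observation is that all three value functions are evaluated under the \emph{same} policy $\pi^k$, where $\pi^k_h(s)=\argmax_{a}\overline{Q}_{h,k}(s,a)$, and with the \emph{same} clip threshold $2(H-h+1)$ and identical $\hat{R}^k$, $\hat{P}^k$ across $\overline{M}^k_\ty$, $\um^k_\ty$, $\upm^k_\ty$. In particular $\ov_{h,k}(s)=\clip_{2(H-h+1)}\Sp{\overline{Q}_{h,k}(s,\pi^k_h(s))}$ because the greedy action attains the maximum, so I never have to compare across different actions. For the upper inequality at level $h$, I would expand $\overline{Q}_{h,k}(s,\pi^k_h(s))=\hat{R}^k_{h,s,\pi^k_h(s)}+\inner{\hat{P}^k_{h,s,\pi^k_h(s)},\ov_{h+1,k}}+w^k_\ty(h,s,\pi^k_h(s))$, then replace $\ov_{h+1,k}$ by the larger $\upv_{h+1,k}$ (inductive hypothesis together with fact (ii)) and replace $w^k_\ty$ by the larger $\upw^k_\ty$ (fact (iii)). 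The resulting quantity is exactly the pre-clip $Q$-value that defines $\upv_{h,k}(s)$ when $\pi^k$ is run on $\upm^k_\ty$; applying the non-decreasing $\clip_{2(H-h+1)}$ from fact (i) then yields $\ov_{h,k}(s)\leq\upv_{h,k}(s)$. The lower inequality $\uv_{h,k}(s)\leq\ov_{h,k}(s)$ is entirely symmetric, using $\uv_{h+1,k}\leq\ov_{h+1,k}$ and $\uw^k_\ty\leq w^k_\ty$.

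I do not expect a serious obstacle here; this is a clean monotone backward induction, and it is precisely the role of the clip's monotonicity that makes the recursion harmless (a clip that were only applied to some of the three functions, or a mismatched threshold, could otherwise break the term-by-term comparison). The only points demanding care are bookkeeping rather than depth: verifying that the three value functions genuinely share the policy $\pi^k$ and the common threshold at each level so that the comparison reduces to a single action $\pi^k_h(s)$, and confirming that the rewards and transitions are identical across the three MDPs so that the only differences live in the additive noise and in the next-level value estimate. Since nothing in the induction uses optimism or pessimism relative to $V^*$, the conclusion holds deterministically on $\mathcal{G}_k$.
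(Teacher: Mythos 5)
Your proposal is correct and is essentially the paper's argument: the paper's proof consists of exactly the observation that under $\mathcal{G}_k$ one has $\uw^k_\ty(h,s,a)\leq w^k_\ty(h,s,a)\leq \upw^k_\ty(h,s,a)$, and then declares the sandwich $\uv_{h,k}\leq\ov_{h,k}\leq\upv_{h,k}$ immediate. Your write-up simply makes explicit the monotone backward induction (shared policy $\pi^k$, shared $\hat{R}^k,\hat{P}^k$, shared clip threshold, monotonicity of $\clip$ and of $\inner{\hat{P}^k_{h,s,a},\cdot}$) that the paper leaves implicit.
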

\begin{proof}
This is an immediate result by noticing that under good event $\G_k$, we have $\uw^k_\ty(h, s, a)\leq w^k_\ty(h, s, a)\leq \upw^k_\ty(h, s, a)$ for all $h\in[H]$ and $s\in\S$.
\end{proof}


\begin{definition}
Define $\ud^\pi_h(s_h)$, $\od^\pi_h(s_h)$, $\upd^\pi_h(s_h)$, $\delta^\pi_h(s_h)$ ,$\ud_h(s_h)$, $\od_h(s_h)$ and $\upd_h(s_h)$ as
\begin{align*}
    \ud^\pi_h(s_h) & = \uv_h(s_h) -V_h^\pi(s_h),\\
    \od_h^\pi(s_h)  &= \bv_h(s_h) - V_h^\pi(s_h),\\
    \upd^\pi_h(s_h) &= \upv(s_h) - V_h^\pi(s_h),\\
    \delta^\pi_h(s_h) &=V^*_h(s_h) - V^\pi_h(s_h),\\
    \ud_h(s_h)&= \uv_h(s_h)-V_h^*(s_h),\\
    \od_h(s_h)&= \ov_h(s_h)-V_h^*(s_h),\\
    \upd_h(s_h)&= \upv_h(s_h)-V_h^*(s_h).
\end{align*}
\end{definition}

\begin{definition}
We denote the history trajectory $\oh_h^k =\mathcal{H}_h^k\cup\Bp{ \hat{z}_k}$. With filtration sets $\Bp{\oh_h^k}_{h,k}$, we define the following sequences:
\begin{align*}
    \M_{\delta_h(s_h)} & = \mathds{1}\{\G_k\cap\mathcal{E}^{cum}_{h, k}\} \Mp{\inner{P_{h, s_h, a_h}, \delta_{h+1}}-{\delta_{h+1}(s_{h+1})}},
\end{align*}
where $\delta\in\{\ud^\pi,\od^\pi,\upd^\pi,\delta^\pi,\ud,\od,\upd\}$. We will show the sequences are martingales in Lemma~\ref{lem:martingale_bd}.
\end{definition}


Finally, the regret can be decomposed as
\begin{align*}
	&\mathrm{Regret}\Sp{M, K, \mathsf{SSR}_\ty}\\=&\sum_{k=1}^{K}\Sp{V^*_1(s_1^k)-V^{\pik}_{1, k}(s_1^k)}\\
	=&\sum_{k=1}^{K}\mathds{1}\Sp{\mathcal{C}^k_\ty}\Sp{V^*_1(s_1^k)-V^{\pik}_{1, k}(s_1^k)}+\sum_{k=1}^{K}\mathds{1}\Sp{\Sp{\mathcal{C}^k_\ty}^c}\Sp{V^*_1(s_1^k)-V^{\pik}_{1, k}(s_1^k)}\\
	=&\sum_{k=1}^{K}\mathds{1}\Bp{\mathcal{C}^k_\ty}\Biggl(\underbrace{V^*_{1, k}(s_1^k)-\ov_{1,k}(s_1^k)}_{\text{pessimism term}=-\od_{1, k}(s_1^k)}+\underbrace{\ov_{1,k}(s_1^k)-V^\pik_{1, k}(s_1^k)}_{\text{estimation error term}=\od^{\pik}_{1, k}(s_1^k)}\Biggr)\\
	&\qquad+\underbrace{\sum_{k=1}^{K}\mathds{1}\Sp{\Sp{\mathcal{C}^k_\ty}^c}\Sp{V^*_1(s_1^k)-V^{\pik}_{1, k}(s_1^k)}}_{\text{(a)}}.
\end{align*}
By Lemma \ref{lemma:C_be} and \ref{lemma:C_ho}, we know that
$$\E\Mp{\sum_{k=1}^{K}\mathds{1}\Sp{\Sp{\mathcal{C}^k_\ty}^c}}=\sum_{k=1}^{K}\P\Sp{\Sp{\mathcal{C}^k_\ty}^c}\leq \sum_{k=1}^{\infty}\P\Sp{\Sp{\mathcal{C}^k_\ty}^c}\leq\frac{\pi^2}{3}.$$
Therefore, by standard Hoeffding's inequality, it holds with probability at least $1-\delta$ that
$$\sum_{k=1}^{K}\mathds{1}\Sp{\Sp{\mathcal{C}^k_\ty}^c}\leq\frac{\pi^2}{3}+\sqrt{\frac{\log(1/\delta)}{2K}}.$$
Since the value functions of true MDP is bounded in $\Mp{0, H}$, with probability at least $1-\delta$, we have
$$\text{(a)}\leq H\sum_{k=1}^{K}\mathds{1}\Sp{\Sp{\mathcal{C}^k_\ty}^c}\leq\frac{\pi^2H}{3}+H\sqrt{\frac{\log(1/\delta)}{2K}}=\widetilde{O}\Sp{H}.$$

Further, notice that the good event $\mathcal{G}_k=\mathcal{C}^k_\ty\cap\mathcal{E}^w_k$ and by Lemma \ref{lemma:z_hat_bound}, we have $\sum_{k=1}^{\infty}\P\Sp{\Sp{\mathcal{E}^w_k}^c}\leq\frac{\pi^2}{3}$. Therefore, we can similarly address the regret incurred by $\Sp{\mathcal{E}^w_k}^c$ as the bound for term (a). 
As a result, it will be sufficient to only consider $\ogk(V^*_{1, k}(s_1^k)-V^\pik_{1, k}(s_1^k))$ when bounding pessimism and estimation error terms. That is, with probability at least $1-\delta$, it holds that
\begin{align}
	\mathrm{Regret}\Sp{M, K, \mathsf{SSR}_\ty}&\leq\sum_{k=1}^{K}\ogk\Sp{\abs{\od_{1,k}(s_1^k)}+\abs{\od_{1,k}^\pik(s_1^k)}}+\widetilde{O}\Sp{H}.\label{equ:regret_decomp1}
\end{align}


Then, we decompose the estimation error term in Section \ref{subsec:estimation}. We decompose the pessimism term in Section \ref{subsec:pessimism}. We combine the decomposition of the pessimism term and the estimation error term in Section \ref{subsec:decomp}.

\subsection{Pessimism Term}
\label{subsec:pessimism}
\begin{lemma}
	\label{lem:pessimism}
	Let $C_1 = \max\Bp{\frac{1}{\Phi\Sp{1.9}-\Phi(1)}, \frac{1}{\Phi\Sp{1.5}-\Phi\Sp{1}}}=\frac{1}{\Phi\Sp{1.5}-\Phi\Sp{1}}\approx 10.9$. Then, for any $h,k,s_h^k$ and the type of noise we used, under the good event $\mathcal{G}_k$, the following bound holds,
	\begin{align}
		\ogk\abs{\od_{h,k}(s_h^k)} & \leq \ogk C_1\Sp{\abs{\upd^\pik_\hk(s_h^k)}+\abs{\ud^\pik_\hk(s_h^k)}}.
	\end{align}
\end{lemma}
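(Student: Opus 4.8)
The plan is to split the signed error $\od_\hk(s_h^k)=\ov_\hk(s_h^k)-V^*_h(s_h^k)$ into its overestimation and underestimation parts, bound each, and recombine into the absolute value. The overestimation part is handled entirely pointwise on $\G_k$: Lemma~\ref{lemma:V_bar_upper_lower_bound} gives $\ov_\hk\le\upv_\hk$, and since the optimal value dominates any policy's value on the true MDP, $V^*_h\ge V^{\pik}_\hk$. Hence $\od_\hk(s_h^k)=\ov_\hk(s_h^k)-V^*_h(s_h^k)\le\upv_\hk(s_h^k)-V^{\pik}_\hk(s_h^k)=\upd^\pik_\hk(s_h^k)\le\abs{\upd^\pik_\hk(s_h^k)}$, so the positive part of $\od_\hk$ needs no constant at all.

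The underestimation part is where the difficulty lies. The naive pointwise move, using $\ov_\hk\ge\uv_\hk$ from Lemma~\ref{lemma:V_bar_upper_lower_bound}, gives $V^*_h(s_h^k)-\ov_\hk(s_h^k)\le V^*_h(s_h^k)-\uv_\hk(s_h^k)=\delta^\pik_h(s_h^k)-\ud^\pik_\hk(s_h^k)$, leaving the policy sub-optimality $\delta^\pik_h=V^*_h-V^{\pik}_\hk$. This term cannot be absorbed into $\abs{\upd^\pik_\hk}+\abs{\ud^\pik_\hk}$ seed-by-seed: in the pessimistic regime the greedy policy $\pik$ can be badly sub-optimal (large $\delta^\pik$) because its action is flipped by a large perturbation at an unexplored action that $\pik$ itself never takes, so the envelope width $\upv-\uv$ (which only accumulates perturbations along $\pik$'s own trajectory) stays small. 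Taming $\delta^\pik_h$ is the main obstacle, and it is exactly the step that forces both optimism and pessimism into the argument.

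To kill $\delta^\pik_h$ I would pass to the conditional law of the single seed $\hz_k$ given $\mathcal{H}^{k-1}_H$ and $\G_k$, exploiting that one $\hz_k$ drives every $(h,s,a)$. Single-seed monotonicity makes $\hz_k\mapsto\ov_\hk$ nondecreasing and, by Lemma~\ref{lemma:V_bar_upper_lower_bound}, deterministically sandwiched as $\uv_\hk\le\ov_\hk\le\upv_\hk$; thus $\uv,\upv$ form a \emph{fixed} envelope of the \emph{random} $\ov$. The optimism Lemmas~\ref{lemma:ho_optimism}/\ref{lemma:be_optimism} give $\P(\ov_\hk\ge V^*_h\ \forall h,s\mid\mathcal{H}^{k-1}_H,\G_k)\ge C_\ty$ and the pessimism Lemmas~\ref{lemma:ho_pessimism}/\ref{lemma:be_pessimism} give the matching statement with the same constant. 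Because the optimistic realizations of $\ov$ sit below $\upv$ and the pessimistic ones above $\uv$, inverting this constant probability converts the envelope into a bound on the underestimation part in terms of $\upd^\pik$ and $\ud^\pik$, producing the factor $C_1=1/\min\{C_\Ho,C_\Be\}$. This two-sided constant-probability bracketing of the \emph{same} $\ov$ by $\uv$ and $\upv$ is precisely what replaces the independent copy of the perturbed MDP used in \citep{russo2019worst,agrawal2020improved}.

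Finally I would recombine: $\od_\hk\le\abs{\upd^\pik_\hk}$ from the first step and $-\od_\hk\le C_1(\abs{\upd^\pik_\hk}+\abs{\ud^\pik_\hk})$ from the second, so $\ogk\abs{\od_\hk(s_h^k)}\le\ogk C_1(\abs{\upd^\pik_\hk(s_h^k)}+\abs{\ud^\pik_\hk(s_h^k)})$ using $C_1\ge1$. The delicate point I would spend the most care on is reconciling the seed-by-seed form of the claim with the fact that optimism/pessimism are statements about the randomness of $\hz_k$: making the bracketing rigorous means either working with the conditional expectation over $\hz_k$ and folding the fluctuation of $\ov_\hk$ about its conditional mean into the martingale terms $\M_{h,k}$ that already appear in \eqref{equ:simple_esti}, or verifying that the monotone envelope makes the comparison valid on the event of interest. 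A routine but necessary ingredient throughout is checking $\gamma^k_\ty\ge\sqrt{e^k_\ty}$ on $\mathcal{C}^k_\ty$, which is what guarantees that the extreme perturbations $\pm\gamma^k_\ty$ dominate the estimation error and hence that $\uv,\upv$ genuinely enclose $\ov$ and its optimistic/pessimistic shifts.
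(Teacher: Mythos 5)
Your ingredients are the right ones---the envelope $\uv_\hk\le\ov_\hk\le\upv_\hk$ of Lemma~\ref{lemma:V_bar_upper_lower_bound}, the constant-probability optimism/pessimism lemmas, conditioning on the law of the single seed $\hz_k$ given $\mathcal{H}^{k-1}_H$, and the closing triangle inequality $\upv_\hk-\uv_\hk\le\abs{\upd^\pik_\hk}+\abs{\ud^\pik_\hk}$---and your handling of the overestimation direction is a correct simplification of the paper's: $\od_\hk(s_h^k)\le\upv_\hk(s_h^k)-V^\pik_\hk(s_h^k)=\upd^\pik_\hk(s_h^k)$ holds pointwise under $\G_k$, so no pessimism event is needed on that side, whereas the paper routes this direction through Lemmas~\ref{lemma:ho_pessimism}/\ref{lemma:be_pessimism} and a conditional-expectation decomposition (equation~\eqref{equ:delta_lowerbound}). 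Your diagnosis that the naive pointwise bound on the underestimation side leaves the uncontrollable sub-optimality $\delta^\pik_h$ is also exactly the right observation.

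The genuine gap is that the underestimation step---the entire content of the lemma---is never executed. You assert that ``inverting this constant probability'' yields a bound with factor $C_1$, but no inequality chain is given, and you close by listing two candidate mechanisms without committing to or completing either. The paper's proof is precisely the missing chain: with $\Oo_k$ the optimism event, $V^*_h\le\E\Mp{\ov_\hk\mid\Oo_k,\mathcal{H}^{k-1}_H,\G_k}$ by definition; splitting $\E\Mp{\ov_\hk\mid\mathcal{H}^{k-1}_H,\G_k}-\uv_\hk$ over $\Oo_k$ and $\Sp{\Oo_k}^c$ and dropping the $\Sp{\Oo_k}^c$ contribution (nonnegative, since $\ov_\hk\ge\uv_\hk$) gives $\E\Mp{\ov_\hk\mid\Oo_k,\cdot}-\uv_\hk\le C_1\Sp{\E\Mp{\ov_\hk\mid\cdot}-\uv_\hk}$; chaining $V^*_h(s_h^k)-\ov_\hk(s_h^k)\le\E\Mp{\ov_\hk\mid\Oo_k,\cdot}(s_h^k)-\uv_\hk(s_h^k)$ with the symmetric pessimism-based lower bound and $\uv_\hk\le\E\Mp{\ov_\hk\mid\cdot}\le\upv_\hk$ then yields $\abs{V^*_h(s_h^k)-\ov_\hk(s_h^k)}\le C_1\Sp{\upv_\hk-\uv_\hk}(s_h^k)$. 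Of your two proposed repairs, the first---folding the fluctuation of $\ov_\hk$ about its seed-conditional mean into the $\M_{h,k}$ terms---does not fit: the lemma is consumed pointwise, per realization of $\hz_k$ and at the seed-dependent states $s_h^k$, inside the recursions of Lemmas~\ref{lem:upd^pi}--\ref{lem:sum of delta}, and the $\M_{h,k}$ there are martingale differences in the transition noise with $\hz_k$ already inside the filtration $\oh_h^k$, so a seed-average correction has nowhere to go without restructuring that entire argument. The second repair is the viable one, and it is stronger than you claim: granting (as the paper's own manipulations implicitly do, even though $\pi^k$ and $\sigma^k_\Be$ depend on $\hz_k$) that $\upv_\hk$ and $\uv_\hk$ are determined by $\mathcal{H}^{k-1}_H$, positive conditional probability of $\Oo_k$ together with $\ov_\hk\le\upv_\hk$ forces the deterministic conclusion $V^*_h\le\upv_\hk$, whence $V^*_h-\ov_\hk\le\upv_\hk-\uv_\hk$ pointwise with constant $1$ and no $C_1$ at all; that you expect this route to ``produce the factor $C_1$'' suggests the mechanism was not actually worked through.
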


\begin{proof}
	Let $\Oo_{k}$ be the event that $\ov_{h, k}(s) \geq V^*_{h}(s)$ simultaneously for all $s\in\S$ and $h\in [H]$. By Lemma \ref{lemma:ho_optimism} and \ref{lemma:be_optimism}, we know that $\P\Sp{\Oo_k\mid\mathcal{H}^{k-1}_H, \mathcal{G}_k}\geq \min\Bp{\Phi(1.9)-\Phi(1), \Phi(1.5)-\Phi(1)}=\Phi(1.5)-\Phi(1)$, which means $\frac{1}{\P\Sp{\Oo_k}}\leq C_1$ regardless the type of noise used. 
	
	The definition of $\Oo_k$ implies $V^*_h\leq \E\Mp{\ov_{h, k}\mid \Oo_k, \mathcal{H}^{k-1}_H, \mathcal{G}_k}$. Meanwhile, notice that
	\begin{align*}
		&\ogk\Sp{\E\Mp{\ov_{h, k}\mid\mathcal{H}^{k-1}_H, \mathcal{G}_k}-\uv_{h, k}}\\
		=&\ogk\P\Sp{\Oo_k\mid \mathcal{H}_H^{k-1}, \G_k}\Sp{\E\Mp{\ov_{h, k}\mid \Oo_k, \mathcal{H}^{k-1}_H, \mathcal{G}_k}-\uv_{h, k}}\\
		&\qquad+\ogk\P\Sp{\Sp{\Oo_k}^c\mid \mathcal{H}^{k-1}_H, \mathcal{G}_k}\underbrace{\Sp{\E\Mp{\ov_{h, k}\mid \Sp{\Oo_k}^c, \mathcal{H}^{k-1}_H, \mathcal{G}_k}-\uv_{h, k}}}_{\text{(a)}\geq \bm{0}}\\
		\geq& \ogk\P\Sp{\Oo_k\mid \mathcal{H}_H^{k-1}, \G_k}\Sp{\E\Mp{\ov_{h, k}\mid \Oo_k, \mathcal{H}^{k-1}_H, \mathcal{G}_k}-\uv_{h, k}}
	\end{align*}
	$$\implies \ogk\Sp{\E\Mp{\ov_{h, k}\mid \Oo_k, \mathcal{H}^{k-1}_H, \mathcal{G}_k}-\uv_{h, k}}\leq \ogk C_1\Sp{\E\Mp{\ov_{h, k}\mid\mathcal{H}^{k-1}_H, \mathcal{G}_k}-\uv_{h, k}}.$$
	Here, we have term $\text{(a)}\geq \bm{0}$ since $\uv_{h,k}\leq\ov_{h, k}$ under event $\mathcal{G}_k$, by Lemma \ref{lemma:V_bar_upper_lower_bound}.
	
	Therefore, we have
	\begin{align}
		\mathds{1}\{\G_k\}\Sp{V^*_{h}(s_h^k) - \ov_{h, k}(s_h^k)} & \leq \ogk\Sp{\E\Mp{\ov_{h, k}\mid \Oo_k, \mathcal{H}^{k-1}_H, \G_k}(s_h^k)-\ov_{h, k}(s_h^k)}\nonumber\\
		&\leq \ogk\Sp{\E\Mp{\ov_{h, k}\mid \Oo_k, \mathcal{H}^{k-1}_H, \G_k}(s_h^k)-\uv_{h, k}(s_h^k)}\nonumber\\
		&\leq\ogk C_1\Sp{\E\Mp{\ov_{h, k}\mid\mathcal{H}^{k-1}_H, \mathcal{G}_k}(s_h^k)-\uv_{h, k}(s_h^k)}.\label{equ:delta_upperbound}
	\end{align}
	We can similarly use constant probability pessimism shown in Lemma \ref{lemma:ho_pessimism} and \ref{lemma:be_pessimism}. In particular, let $\mathcal{N}_k$ be the event that $\ov_{h, k}(s)\leq V^*_h(s)$ for all $s\in\S$ and $h\in[H]$. Then, we have
	\begin{align*}
		&\ogk\Sp{\E\Mp{\ov_{h, k}\mid\mathcal{H}^{k-1}_H, \mathcal{G}_k}-\upv_{h, k}}\\
		=&\ogk\P\Sp{\mathcal{N}_k\mid \mathcal{H}_H^{k-1}, \G_k}\Sp{\E\Mp{\ov_{h, k}\mid \mathcal{N}_k, \mathcal{H}^{k-1}_H, \mathcal{G}_k}-\upv_{h, k}}\\
		&\qquad+\ogk\P\Sp{\Sp{\mathcal{N}_k}^c\mid \mathcal{H}^{k-1}_H, \mathcal{G}_k}\underbrace{\Sp{\E\Mp{\ov_{h, k}\mid \Sp{\mathcal{N}_k}^c, \mathcal{H}^{k-1}_H, \mathcal{G}_k}-\upv_{h, k}}}_{\text{(b)}\leq \bm{0}}\\
		\leq& \ogk\P\Sp{\mathcal{N}_k\mid \mathcal{H}_H^{k-1}, \G_k}\Sp{\E\Mp{\ov_{h, k}\mid \mathcal{N}_k, \mathcal{H}^{k-1}_H, \mathcal{G}_k}-\upv_{h, k}}
	\end{align*}
	$$\implies \ogk\Sp{\E\Mp{\ov_{h, k}\mid \mathcal{N}_k, \mathcal{H}^{k-1}_H, \mathcal{G}_k}-\upv_{h, k}}\geq \ogk C_1\Sp{\E\Mp{\ov_{h, k}\mid\mathcal{H}^{k-1}_H, \mathcal{G}_k}-\upv_{h, k}}.$$
	Thus, we have
	\begin{align}
		\mathds{1}\{\G_k\}\Sp{V^*_{h}(s_h^k) - \ov_{h, k}(s_h^k)} & \geq \ogk\Sp{\E\Mp{\ov_{h, k}\mid \mathcal{N}_k, \mathcal{H}^{k-1}_H, \G_k}(s_h^k)-\upv_{h, k}(s_h^k)}\nonumber\\
		&\geq\ogk C_1\Sp{\E\Mp{\ov_{h, k}\mid\mathcal{H}^{k-1}_H, \mathcal{G}_k}(s_h^k)-\upv_{h, k}(s_h^k)}.\label{equ:delta_lowerbound}
	\end{align}
	Since good event $\G_k$ implies $\uv_{h, k}\leq\ov_{h, k}\leq\upv_{h, k}$ by Lemma \ref{lemma:V_bar_upper_lower_bound}, the RHS of \eqref{equ:delta_upperbound} is non-negative and the RHS of \eqref{equ:delta_lowerbound} is non-positive. Therefore, we can then conclude
	\begin{align*}
		&\ogk\abs{V^*_{h}(s_h^k) - \ov_{h, k}(s_h^k)}\\
		\leq& \ogk C_1\Sp{\Sp{\E\Mp{\ov_{h, k}\mid\mathcal{H}^{k-1}_H, \mathcal{G}_k}(s_h^k)-\uv_{h, k}(s_h^k)}-\Sp{\E\Mp{\ov_{h, k}\mid\mathcal{H}^{k-1}_H, \mathcal{G}_k}(s_h^k)-\upv_{h, k}(s_h^k)}}\\
		=&\ogk C_1\Sp{\upv_{h, k}(s_h^k)-\uv_{h, k}(s_h^k)}\\
		\leq &\ogk C_1\Sp{\abs{\upd^\pik_\hk(s_h^k)}+\abs{\ud^\pik_\hk(s_h^k)}}.
	\end{align*}
\end{proof}

\subsection{Estimation Error Term}
\label{subsec:estimation}

We first bound the estimation error of $\upv$, which can be regarded as the optimistic estimate used in UCB-type algorithms. For convenience, we will ignore notation $\ogk$ in this section since all statements are proved under the good event $\mathcal{G}_k$.

\begin{lemma}
\label{lem:upd^pi}
With probability at least $1-\delta$, for all $(k, h, s_i^k)$, under the good event $\mathcal{G}_k$ it holds that
\fontsize{9}{9}
\begin{align*}
    &\mathds{1}\Bp{\mathcal{E}^{cum}_{h, k}}\abs{\upd_{h,k}^\pik(s_h^k)}\\
    &\leq \mathds{1}\Bp{\mathcal{E}^{cum}_{h, k}}\Sp{\abs{\mathcal{P}^k_{h, s_h^k, a_h^k}+\mathcal{R}^k_{h, s_h^k, a_h^k}+\upw_\ty^k(\hsahk)} + \M_{\abs{\upd^{\pik}_{h, k}(s_{h}^k)}}+\M_{\abs{\upd_{h,k}(s_{h}^k)}}+\frac{2SH^2L}{n_k(h,s_h^k,a_h^k)}}\\
	&+\mathds{1}\Bp{\mathcal{E}^{cum}_{h+1, k}}\Sp{\frac{C_1}{H}\abs{\ud_{h+1,k}^{\pi^k}(s_{h+1}^k)}+\frac{H+1+C_1}{H}\abs{\upd_{h+1,k}^{\pi^k}(s_{h+1}^k)}+\frac{1}{H}\abs{\od_{h+1,k}^{\pi^k}(s_{h+1}^k)}}\\
	&+\mathds{1}\Bp{\mathcal{E}^{cum}_{h, k}\cap\Sp{\mathcal{E}^{th}_{h+1, k}}^c}\Sp{\frac{C_1}{H}\abs{\ud_{h+1,k}^{\pi^k}(s_{h+1}^k)}+\frac{H+1+C_1}{H}\abs{\upd_{h+1,k}^{\pi^k}(s_{h+1}^k)}+\frac{1}{H}\abs{\od_{h+1,k}^{\pi^k}(s_{h+1}^k)}},
\end{align*}
\normalsize
where $L=\log(2HS^2AK/\delta)$.
\end{lemma}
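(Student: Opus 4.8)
The plan is to unroll the Bellman recursion for $\upv_{h,k}$ and for $V^\pik_{h,k}$ one step, convert the resulting expected next-state quantities into values at the realized next state $s_{h+1}^k$ via martingale differences, and finally re-express the auxiliary error $\upd_{h+1,k}$ in terms of the three policy-estimation errors through the relations among the $\delta$-quantities together with the pessimism Lemma~\ref{lem:pessimism}.

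First I would work on $\mathcal{E}^{cum}_{h,k}\cap\mathcal{G}_k$, on which no clipping is triggered at $s_1^k,\dots,s_h^k$ (Lemma~\ref{lemma:q_bar_bound} and the remark after Definition~\ref{def:V_up}). Since $a_h^k=\pik_h(s_h^k)$, both $\upv_{h,k}(s_h^k)$ and $V^\pik_{h,k}(s_h^k)$ obey their unclipped Bellman equations, so writing $\upv_{h+1,k}=V^*_{h+1}+\upd_{h+1,k}$ to split off $\mathcal{P}^k$ and using $\upv_{h+1,k}-V^\pik_{h+1,k}=\upd^\pik_{h+1,k}$ for the remaining transition,
\[
\upd^\pik_{h,k}(s_h^k)=\mathcal{R}^k_\hsahk+\upw^k_\ty(\hsahk)+\mathcal{P}^k_\hsahk+\inner{\hP^k_\hsahk-P_\hsahk,\upd_{h+1,k}}+\inner{P_\hsahk,\upd^\pik_{h+1,k}}.
\]
Taking absolute values collects $\mathcal{R}^k+\upw^k_\ty+\mathcal{P}^k$ into the first stated term.

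Next I would treat the two transition terms. The recursion term is bounded by $\inner{P_\hsahk,\abs{\upd^\pik_{h+1,k}}}$, which equals $\abs{\upd^\pik_{h+1,k}(s_{h+1}^k)}+\M_{\abs{\upd^\pik_{h,k}(s_h^k)}}$ by the definition of the (absolute-value) martingale difference, producing the coefficient-$1$ copy of $\abs{\upd^\pik_{h+1,k}(s_{h+1}^k)}$ and the term $\M_{\abs{\upd^\pik_{h,k}(s_h^k)}}$. For the correction term $\inner{\hP^k_\hsahk-P_\hsahk,\upd_{h+1,k}}$ I would apply a Bernstein-type, uniform-over-value-functions bound on $\hP^k-P$ together with an AM--GM split to get $\frac1H\inner{P_\hsahk,\abs{\upd_{h+1,k}}}+\frac{2SH^2L}{n_k(\hsahk)}$ with $L=\log(2HS^2AK/\delta)$; the $\inner{P,\abs{\cdot}}$ piece again converts to $\frac1H\abs{\upd_{h+1,k}(s_{h+1}^k)}$ plus the martingale $\M_{\abs{\upd_{h,k}(s_h^k)}}$. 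It remains to re-express $\abs{\upd_{h+1,k}(s_{h+1}^k)}$: the identities $\upd_{h+1}=\upd^\pik_{h+1}-\delta^\pik_{h+1}$ and $\delta^\pik_{h+1}=\od^\pik_{h+1}-\od_{h+1}$ give $\abs{\upd_{h+1}}\le\abs{\upd^\pik_{h+1}}+\abs{\od^\pik_{h+1}}+\abs{\od_{h+1}}$, and Lemma~\ref{lem:pessimism} at level $h+1$ bounds $\abs{\od_{h+1}(s_{h+1}^k)}\le C_1(\abs{\upd^\pik_{h+1}(s_{h+1}^k)}+\abs{\ud^\pik_{h+1}(s_{h+1}^k)})$. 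Folding the $\frac1H$ weight into these terms and adding the coefficient-$1$ contribution from the recursion term yields exactly the coefficients $\frac{H+1+C_1}{H}$ on $\abs{\upd^\pik_{h+1}}$, $\frac1H$ on $\abs{\od^\pik_{h+1}}$, and $\frac{C_1}{H}$ on $\abs{\ud^\pik_{h+1}}$. The last step is purely bookkeeping: write $\mathds{1}\{\mathcal{E}^{cum}_{h,k}\}=\mathds{1}\{\mathcal{E}^{cum}_{h+1,k}\}+\mathds{1}\{\mathcal{E}^{cum}_{h,k}\cap(\mathcal{E}^{th}_{h+1,k})^c\}$ to split the level-$(h+1)$ terms by whether clipping occurs at step $h+1$, both pieces carrying identical coefficients.

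The hard part will be the concentration bound for $\inner{\hP^k_\hsahk-P_\hsahk,\upd_{h+1,k}}$: because $\upd_{h+1,k}$ depends on the data through the perturbed empirical MDP and the clipping, a fixed-function inequality does not apply, so I would need a bound on $\hP^k-P$ that holds uniformly over value functions, obtained from a coordinate-wise Bernstein inequality plus a union bound over next-states, episodes, and triples $(h,s,a)$ (this is the source of the extra $\log S$ in $L$). Equally important is using the AM--GM split to route the dominant part of this term into the \emph{contractive} factor $\tfrac1H\inner{P,\abs{\cdot}}$, so that the resulting one-step recursion, once summed over $h$, does not blow up; the remainder is absorbed into the $O(SH^2L/n_k)$ lower-order term.
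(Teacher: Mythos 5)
Your proposal is correct and follows essentially the same route as the paper's proof: the same one-step Bellman decomposition isolating $\mathcal{R}^k+\upw^k_\ty+\mathcal{P}^k$, the same martingale-difference conversion for $\inner{P,\abs{\upd^\pik_{h+1,k}}}$, the same coordinate-wise Bernstein/Bennet bound on $\hat{P}^k-P$ with a union bound over $(k,h,s,a,s')$ (the paper's Lemma~\ref{lemma:p_bound}, which is exactly where $L=\log(2HS^2AK/\delta)$ arises) combined with a split that routes the dominant part into the contraction $\tfrac1H\inner{P,\abs{\upd_{h+1,k}}}$, and the same final steps of decomposing $\abs{\upd_{h+1,k}}$ by the triangle inequality, invoking Lemma~\ref{lem:pessimism}, and splitting $\mathds{1}\{\mathcal{E}^{cum}_{h,k}\}$ via $\mathcal{E}^{cum}_{h+1,k}=\mathcal{E}^{cum}_{h,k}\cap\mathcal{E}^{th}_{h+1,k}$. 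The only cosmetic difference is that the paper performs the contraction split by a case analysis on whether $P_{\hsahk}(s_{h+1})n_k\geq 4LH^2$ rather than by AM--GM, which yields the identical bound.
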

\begin{proof}
	Since both $\upv$ and $V^\pik$ are obtained by choosing actions based on policy $\pi^k$ under event $\mathcal{G}_k$, we have
	\begin{align}
	    &\mathds{1}\Bp{\mathcal{E}^{cum}_{h, k}}\abs{\upd_{h,k}^\pik(s_h^k)}\nonumber\\
	    =&\mathds{1}\Bp{\mathcal{E}^{cum}_{h, k}}\abs{\upv_{h, k}(s_h^k) - V_{h, k}^{\pik}(s_h^k)}\nonumber\\
		=&\mathds{1}\Bp{\mathcal{E}^{cum}_{h, k}}\abs{\overline{\overline{Q}}_{h, k}(s_h^k, a_h^k)-Q^\pik_{h, k}(s_h^k, a_h^k)}\tag{Since no clipping under $\mathcal{E}^{cum}_{h, k}$ for $\upv_{h, k}(s_h^k)$}\\
		=&\mathds{1}\Bp{\mathcal{E}^{cum}_{h, k}}\abs{\hat{R}^k_{\hsahk}-R_{\hsahk}+\upw_\ty^k(\hsahk)+\inner{\hat{P}^k_{\hsahk}, \upv_{h+1, k}}-\inner{P^k_{\hsahk}, V^\pik_{h+1, k}}}\nonumber\\
		=&\mathds{1}\Bp{\mathcal{E}^{cum}_{h, k}}\left|\hat{R}^k_{\hsahk}-R_{\hsahk}+\upw_\ty^k(\hsahk)+\inner{\hat{P}^k_{\hsahk}, \upv_{h+1, k}}-\inner{P_{\hsahk}, V^\pik_{h+1, k}}\right.\nonumber\\
		&\qquad+\left.\inner{\hat{P}^k_{\hsahk}-P_{\hsahk}, V^*_{h+1}}-\inner{\hat{P}^k_{\hsahk}-P_\hsahk, V^*_{h+1}}\right|\nonumber\\
		\leq& \mathds{1}\Bp{\mathcal{E}^{cum}_{h, k}}\Sp{\abs{\mathcal{P}^k_\hsahk+\mathcal{R}^k_\hsahk+\upw_\ty^k(\hsahk)}+\inner{P_{\hsahk}, \abs{\upv_{h+1, k}-V^\pik_{h+1, k}}}}\label{equ:bifur}\nonumber\\
		&\qquad+\mathds{1}\Bp{\mathcal{E}^{cum}_{h, k}}\abs{\inner{\hat{P}^k_\hsahk-P_{\hsahk}, \upv_{h+1, k}-V^*_{h+1}}}\nonumber\\
		=& \mathds{1}\Bp{\mathcal{E}^{cum}_{h, k}}\Sp{\abs{\Pp^k_{\hsahk} + \Rr^k_{\hsahk} + \upw_\ty^k(\hsahk)}+\abs{\upd^\pik_{h+1, k}(s_{h+1}^k)} + \M_{\abs{\upd^{\pik}_{h, k}(s_{h}^k)}}}\nonumber\\
		&\qquad+\mathds{1}\Bp{\mathcal{E}^{cum}_{h, k}}\abs{\inner{\hat{P}^k_\hsahk-P_{\hsahk}, \upv_{h+1, k}-V^*_{h+1}}}.\nonumber
	\end{align}
	\normalsize
	For the last term, we use Lemma \ref{lemma:p_bound} and then for $L=\log(2HS^2AK/\delta)$, with probability at least $1-\delta$, we have
	\fontsize{9.5}{9.5}
	\begin{align*}
	    &\abs{\inner{\hat{P}^k_\hsahk-P_{\hsahk}, \upv_{h+1, k}-V^*_{h+1}}}\\
	    \leq& \sum_{s_{h+1}\in\S}\abs{\hat{P}^k_\hsahk(s_{h+1})-P_\hsahk(s_{h+1})}\abs{\upv_{h+1,k}(s_h+1)-V_{h+1}^*(s_{h+1})}\\
	    \leq& \sum_{s_{h+1}\in\S}\Sp{2\sqrt{\frac{P_{\hsahk}(s_{h+1})L}{n_k(h,s_h^k,a_h^k)}}+\frac{4L}{3n_k(h,s_h^k,a_h^k)}}\abs{\upd_{h+1,k}(s_{h+1})}\\
	    =& \sum_{s_{h+1}:P_{\hsahk}(s_{h+1})n_k(h,s_h^k,a_h^k)\geq 4LH^2}2P_{\hsahk}(s_{h+1})\sqrt{\frac{L}{P_{\hsahk}(s_{h+1})n_k(h,s_h^k,a_h^k)}}\abs{\upd_{h+1,k}(s_{h+1})}\\
	    &+\sum_{s_{h+1}:P_{\hsahk}(s_{h+1})n_k(h,s_h^k,a_h^k)< 4LH^2}2\sqrt{\frac{LP_{\hsahk}(s_{h+1})n_k(h,s_h^k,a_h^k)}{n_k(h,s_h^k,a_h^k)^2}}\abs{\upd_{h+1,k}(s_{h+1})}\\
	    &+\frac{4SHL}{3n_k(h,s_h^k,a_h^k)}\\
	    \leq& \sum_{s_{h+1}\in\S}P_{\hsahk}(s_{h+1})\frac{1}{H}\abs{\upd_{h+1,k}(s_{h+1})}+\frac{4SHL+2SH^2\sqrt{L}}{3n_k(h,s_h^k,a_h^k)}\\
	    \leq& \frac{1}{H}\abs{\upd_{h+1,k}(s_{h+1}^k)}+\M_{\abs{\upd_{h,k}(s_{h}^k)}}+\frac{2SH^2L}{n_k(h,s_h^k,a_h^k)}\\
	    \leq& \frac{1}{H}\abs{\upd_{h+1,k}^{\pi^k}(s_{h+1}^k)}+\frac{1}{H}\abs{\od_{h+1,k}^{\pi^k}(s_{h+1}^k)}+\frac{1}{H}\abs{\od_{h+1,k}(s_{h+1}^k)}+\M_{\abs{\upd_{h,k}(s_{h}^k)}}+\frac{2SH^2L}{n_k(h,s_h^k,a_h^k)}\tag{By triangle inequality}\\
	    \leq& \frac{1+C_1}{H}\abs{\upd_{h+1,k}^{\pi^k}(s_{h+1}^k)}+\frac{1}{H}\abs{\od_{h+1,k}^{\pi^k}(s_{h+1}^k)}+\frac{C_1}{H}\abs{\ud_{h+1,k}^{\pi^k}(s_{h+1}^k)}+\M_{\abs{\upd_{h,k}(s_{h}^k)}}+\frac{2SH^2L}{n_k(h,s_h^k,a_h^k)}.\tag{By using Lemma \ref{lem:pessimism}}
	\end{align*}
	\normalsize
	Combining the above two arguments, we can prove the argument:
	\fontsize{9}{9}
	\begin{align*}
	    &\mathds{1}\Bp{\mathcal{E}^{cum}_{h, k}}\abs{\upd_{h,k}^\pik(s_h^k)}\\
	    \leq&\mathds{1}\Bp{\mathcal{E}^{cum}_{h, k}}\Sp{ \abs{\Pp^k_{\hsahk} + \Rr^k_{\hsahk} + \upw_\ty^k(\hsahk)} + \M_{\abs{\upd^{\pik}_{h, k}(s_{h}^k)}}+\M_{\abs{\upd_{h,k}(s_{h}^k)}}+\frac{2SH^2L}{n_k(h,s_h^k,a_h^k)}}\nonumber\\
		&\quad+\mathds{1}\Bp{\mathcal{E}^{cum}_{h, k}}\Sp{\frac{H+1+C_1}{H}\abs{\upd_{h+1,k}^{\pi^k}(s_{h+1}^k)}+\frac{1}{H}\abs{\od_{h+1,k}^{\pi^k}(s_{h+1}^k)}+\frac{C_1}{H}\abs{\ud_{h+1,k}^{\pi^k}(s_{h+1}^k)}}.
	\end{align*}
	\normalsize
	Then, the proof is complete by noticing that $\mathcal{E}^{cum}_{h+1, k}=\mathcal{E}^{cum}_{h, k}\cap\mathcal{E}^{th}_{h+1, k}$.
\end{proof}

\begin{lemma}
\label{lem:ud^pi}
    With probability at least $1-\delta$, for all $(k, h, s_h^k)$, under good event $\mathcal{G}_k$ it holds that
    \fontsize{9}{9}
    \begin{align*}
	&\mathds{1}\Bp{\mathcal{E}^{cum}_{h, k}}\abs{\ud_{h,k}^\pik(s_h^k)}\\
	\leq& \mathds{1}\Bp{\mathcal{E}^{cum}_{h, k}}\Sp{\abs{\Pp^k_{\hsahk} + \Rr^k_{\hsahk} + \uw_\ty^k(\hsahk)}+\M_{\abs{\ud^{\pik}_{h, k}(s_{h}^k)}}+\M_{\abs{\ud_{h,k}(s_{h}^k)}}+\frac{2SH^2L}{n_k(h,s_h^k,a_h^k)}}\\
    &+\mathds{1}\Bp{\mathcal{E}^{cum}_{h+1, k}}\Sp{\frac{C_1}{H}\abs{\upd_{h+1,k}^{\pi^k}(s_{h+1}^k)}+\frac{H+1+C_1}{H}\abs{\ud_{h+1,k}^{\pi^k}(s_{h+1}^k)}+\frac{1}{H}\abs{\od_{h+1,k}^{\pi^k}(s_{h+1}^k)}}\\
    &+\mathds{1}\Bp{\mathcal{E}^{cum}_{h, k}\cap\Sp{\mathcal{E}^{th}_{h+1, k}}^c}\Sp{\frac{C_1}{H}\abs{\upd_{h+1,k}^{\pi^k}(s_{h+1}^k)}+\frac{H+1+C_1}{H}\abs{\ud_{h+1,k}^{\pi^k}(s_{h+1}^k)}+\frac{1}{H}\abs{\od_{h+1,k}^{\pi^k}(s_{h+1}^k)}}.
	\end{align*}
	\normalsize
\end{lemma}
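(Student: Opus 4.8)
The plan is to mirror the proof of Lemma~\ref{lem:upd^pi} almost verbatim, exploiting the exact symmetry between the positively perturbed MDP $\upm^k_\ty$ and the negatively perturbed MDP $\um^k_\ty$: the present statement is obtained from Lemma~\ref{lem:upd^pi} by swapping the roles of $\upd$ and $\ud$ and replacing $\upw^k_\ty$ by $\uw^k_\ty$. First I would note that under $\mathcal{E}^{cum}_{h,k}$ no clipping is triggered for $\uv_{h,k}(s_h^k)$ (the analogue of Lemma~\ref{lemma:q_bar_bound} recorded with Definition~\ref{def:V_under}), so that $\uv_{h,k}(s_h^k)=\underline{Q}_{h,k}(s_h^k,a_h^k)$, the one-step Bellman backup on $\um^k_\ty$. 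Since both $\uv$ and $V^{\pik}$ follow $\pik$ under $\mathcal{G}_k$, I can write $\mathds{1}\{\mathcal{E}^{cum}_{h,k}\}\abs{\ud^{\pik}_{h,k}(s_h^k)}=\mathds{1}\{\mathcal{E}^{cum}_{h,k}\}\abs{\underline{Q}_{h,k}(s_h^k,a_h^k)-Q^{\pik}_{h,k}(s_h^k,a_h^k)}$, expand both $Q$'s via their Bellman equations, and add and subtract $\inner{\hP^k_{\hsahk}-P_{\hsahk},V^*_{h+1}}$ to expose the confidence-set quantities $\Pp^k$ and $\Rr^k$.

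A triangle inequality then splits the result into three pieces. The first is the single-step error $\abs{\Pp^k_{\hsahk}+\Rr^k_{\hsahk}+\uw^k_\ty(\hsahk)}$. The second is the on-policy transition term $\inner{P_{\hsahk},\abs{\uv_{h+1,k}-V^{\pik}_{h+1,k}}}=\inner{P_{\hsahk},\abs{\ud^{\pik}_{h+1,k}}}$, which I rewrite as $\abs{\ud^{\pik}_{h+1,k}(s_{h+1}^k)}+\M_{\abs{\ud^{\pik}_{h,k}(s_h^k)}}$, using that conditioned on $\oh^k_h$ the next state $s_{h+1}^k$ is drawn from $P_{\hsahk}$, so the gap between the conditional expectation and the realized value is precisely the martingale difference $\M_{\abs{\ud^{\pik}_{h,k}(s_h^k)}}$. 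The third is the estimation-error transition term $\abs{\inner{\hP^k_{\hsahk}-P_{\hsahk},\uv_{h+1,k}-V^*_{h+1}}}=\abs{\inner{\hP^k_{\hsahk}-P_{\hsahk},\ud_{h+1,k}}}$, where the actual work lies.

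For this last term I would invoke Lemma~\ref{lemma:p_bound} coordinatewise, bounding $\abs{\hP^k_{\hsahk}(s')-P_{\hsahk}(s')}\le 2\sqrt{P_{\hsahk}(s')L/n_k(\hsahk)}+\tfrac{4L}{3n_k(\hsahk)}$, and partition the next states by whether $P_{\hsahk}(s')n_k(\hsahk)\ge 4LH^2$. The heavy states contribute at most $\tfrac1H\inner{P_{\hsahk},\abs{\ud_{h+1,k}}}=\tfrac1H\abs{\ud_{h+1,k}(s_{h+1}^k)}+\M_{\abs{\ud_{h,k}(s_h^k)}}$ (a second martingale term), while the light states together with the additive $\tfrac{4L}{3n_k}$ residuals collapse into the $\tfrac{2SH^2L}{n_k(\hsahk)}$ term. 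To return $\abs{\ud_{h+1,k}}$ to on-policy quantities I use the identity $\ud_{h+1,k}=\ud^{\pik}_{h+1,k}-\od^{\pik}_{h+1,k}+\od_{h+1,k}$ with the triangle inequality, and then apply the pessimism bound Lemma~\ref{lem:pessimism} to control $\abs{\od_{h+1,k}}\le C_1\Sp{\abs{\upd^{\pik}_{h+1,k}}+\abs{\ud^{\pik}_{h+1,k}}}$. Collecting coefficients of $\abs{\upd^{\pik}_{h+1,k}}$, $\abs{\ud^{\pik}_{h+1,k}}$ and $\abs{\od^{\pik}_{h+1,k}}$ yields $\tfrac{C_1}{H}$, $\tfrac{H+1+C_1}{H}$ and $\tfrac1H$ respectively, and finally I split $\mathds{1}\{\mathcal{E}^{cum}_{h,k}\}=\mathds{1}\{\mathcal{E}^{cum}_{h+1,k}\}+\mathds{1}\{\mathcal{E}^{cum}_{h,k}\cap(\mathcal{E}^{th}_{h+1,k})^c\}$ (since $\mathcal{E}^{cum}_{h+1,k}=\mathcal{E}^{cum}_{h,k}\cap\mathcal{E}^{th}_{h+1,k}$) to obtain the stated two-line recursion.

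The main obstacle, exactly as in Lemma~\ref{lem:upd^pi}, is purely the bookkeeping forced by working with absolute values instead of one-sided errors: I must verify that pushing the absolute value through the transition operator ($\abs{\inner{P,x}}\le\inner{P,\abs{x}}$) is legitimate, that the absolute-value processes $\M_{\abs{\ud^{\pik}_{h,k}}}$ and $\M_{\abs{\ud_{h,k}}}$ are genuine martingale differences with respect to the filtration $\{\oh^k_h\}$, and that the cross terms produced by the $\ud=\ud^{\pik}-\od^{\pik}+\od$ decomposition are \emph{exactly} absorbed by Lemma~\ref{lem:pessimism}, so that no residual $\delta^{\pik}$ (i.e.\ $V^{\pik}-V^*$) term survives. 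No inequality beyond those already used for $\upv$ is required.
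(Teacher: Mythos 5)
Your proposal is correct and takes essentially the same route as the paper: the paper's own proof of this lemma is literally a one-line remark that it ``exactly follows the proof of Lemma~\ref{lem:upd^pi}'', and your write-up is a faithful, detailed instantiation of that symmetric argument (no clipping under $\mathcal{E}^{cum}_{h,k}$, Bellman expansion with the added-and-subtracted $\inner{\hP^k-P,V^*_{h+1}}$, Lemma~\ref{lemma:p_bound} with the heavy/light state split, the decomposition $\ud_{h+1,k}=\ud^{\pik}_{h+1,k}-\od^{\pik}_{h+1,k}+\od_{h+1,k}$ followed by Lemma~\ref{lem:pessimism}, and the indicator split via $\mathcal{E}^{cum}_{h+1,k}=\mathcal{E}^{cum}_{h,k}\cap\mathcal{E}^{th}_{h+1,k}$). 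The collected coefficients $\tfrac{C_1}{H}$, $\tfrac{H+1+C_1}{H}$, $\tfrac{1}{H}$ match the statement exactly, so there is nothing to correct.
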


\begin{proof}
    The proof exactly follows the proof of Lemma \ref{lem:upd^pi}.
\end{proof}

\begin{lemma}
\label{lem:od^pi}
    With probability at least $1-\delta$, for all $(k, h, s_h^k)$, under good event $\mathcal{G}_k$ it holds that
    \fontsize{9}{9}
    \begin{align*}
	&\mathds{1}\Bp{\mathcal{E}^{cum}_{h, k}}\abs{\od_{h,k}^\pik(s_h^k)}\\
	\leq& \mathds{1}\Bp{\mathcal{E}^{cum}_{h, k}}\Sp{\abs{\Pp^k_{\hsahk} + \Rr^k_{\hsahk} + w_\ty^k(\hsahk)}+\M_{\abs{\od^{\pik}_{h, k}(s_{h}^k)}}+\M_{\abs{\od_{h,k}(s_{h}^k)}}+\frac{2SH^2L}{n_k(h,s_h^k,a_h^k)}}\\
    &+\mathds{1}\Bp{\mathcal{E}^{cum}_{h+1, k}}\Sp{\frac{C_1}{H}\abs{\upd_{h+1,k}^{\pi^k}(s_{h+1}^k)}+\frac{C_1}{H}\abs{\ud_{h+1,k}^{\pi^k}(s_{h+1}^k)}+\frac{H+1}{H}\abs{\od_{h+1,k}^{\pi^k}(s_{h+1}^k)}}\\
    &+\mathds{1}\Bp{\mathcal{E}^{cum}_{h, k}\cap\Sp{\mathcal{E}^{th}_{h+1, k}}^c}\Sp{\frac{C_1}{H}\abs{\upd_{h+1,k}^{\pi^k}(s_{h+1}^k)}+\frac{C_1}{H}\abs{\ud_{h+1,k}^{\pi^k}(s_{h+1}^k)}+\frac{H+1}{H}\abs{\od_{h+1,k}^{\pi^k}(s_{h+1}^k)}}.
	\end{align*}
	\normalsize
\end{lemma}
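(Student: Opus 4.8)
The plan is to prove this exactly as in the proof of Lemma~\ref{lem:upd^pi} (and its twin Lemma~\ref{lem:ud^pi}), simply replacing the positively perturbed value $\upv$ and its noise $\upw^k_\ty$ by the algorithm's own value $\ov$ and the symmetric noise $w^k_\ty=\sigma^k_\ty\hat z_k$, and replacing the deviation-from-$V^*$ quantity $\upd_{h+1,k}$ by $\od_{h+1,k}$. First, under $\mathcal{E}^{cum}_{h,k}$, Lemma~\ref{lemma:q_bar_bound} guarantees that no clipping occurs on $s_h^k$, so $\ov_{h,k}(s_h^k)=\overline{Q}_{h,k}(s_h^k,a_h^k)$; since $a_h^k=\pi^k_h(s_h^k)$, both $\ov$ and $V^\pik$ follow $\pi^k$ and I can write $\od^\pik_{h,k}(s_h^k)=\overline{Q}_{h,k}(s_h^k,a_h^k)-Q^\pik_{h,k}(s_h^k,a_h^k)$. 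Expanding the two $Q$-values and adding and subtracting $\inner{\hP^k_{\hsahk}-P_{\hsahk},V^*_{h+1}}$ isolates $\Pp^k_{\hsahk}+\Rr^k_{\hsahk}+w^k_\ty(\hsahk)$ and leaves a propagation term $\inner{P_{\hsahk},\abs{\od^\pik_{h+1,k}}}$ together with a cross term $\abs{\inner{\hP^k_{\hsahk}-P_{\hsahk},\od_{h+1,k}}}$, exactly mirroring the displayed chain in Lemma~\ref{lem:upd^pi}.

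Next I would bound the two remaining terms. The propagation term equals $\abs{\od^\pik_{h+1,k}(s_{h+1}^k)}+\M_{\abs{\od^\pik_{h,k}(s_h^k)}}$ by the definition of the martingale difference sequences, contributing a self-term of coefficient $1$. For the cross term I would invoke Lemma~\ref{lemma:p_bound} and split the state sum according to whether $P_{\hsahk}(s')\,n_k(\hsahk)\gtrless 4LH^2$, turning it into $\tfrac{1}{H}\abs{\od_{h+1,k}(s_{h+1}^k)}+\M_{\abs{\od_{h,k}(s_h^k)}}+\tfrac{2SH^2L}{n_k(\hsahk)}$. The crucial step is that $\od_{h+1,k}=\ov_{h+1,k}-V^*_{h+1}$ is a deviation from the \emph{optimal} value, which I control using the pessimism bound Lemma~\ref{lem:pessimism}: $\abs{\od_{h+1,k}}\le C_1\Sp{\abs{\upd^\pik_{h+1,k}}+\abs{\ud^\pik_{h+1,k}}}$, while retaining a nonnegative $\abs{\od^\pik_{h+1,k}}$ self-term so that the recursion matches the form of the other two lemmas. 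Combined with the propagation coefficient, this yields the claimed weights $\tfrac{H+1}{H}$ on $\abs{\od^\pik_{h+1,k}}$ and $\tfrac{C_1}{H}$ on each of $\abs{\upd^\pik_{h+1,k}}$ and $\abs{\ud^\pik_{h+1,k}}$; note the asymmetry with the $\upd$/$\ud$ recursions, where the $C_1$ weight inflates the \emph{self}-coefficient to $\tfrac{H+1+C_1}{H}$, whereas here $\od$ is precisely the quantity Lemma~\ref{lem:pessimism} bounds directly, so the $C_1$ attaches to the cross terms instead.

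Finally, everything above is derived under $\mathds{1}\{\mathcal{E}^{cum}_{h,k}\}$, so I would split this indicator using $\mathcal{E}^{cum}_{h+1,k}=\mathcal{E}^{cum}_{h,k}\cap\mathcal{E}^{th}_{h+1,k}$ into $\mathds{1}\{\mathcal{E}^{cum}_{h+1,k}\}$ and $\mathds{1}\{\mathcal{E}^{cum}_{h,k}\cap(\mathcal{E}^{th}_{h+1,k})^c\}$, so that the $h{+}1$ value terms appear in both of the last two groups while the $\Pp^k+\Rr^k+w^k_\ty$, martingale, and $\tfrac{2SH^2L}{n_k}$ terms stay attached to $\mathds{1}\{\mathcal{E}^{cum}_{h,k}\}$; this reproduces the three-group form in the statement. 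A union bound over all $(k,h,s_h^k)$ then delivers the high-probability claim with $L=\log(2HS^2AK/\delta)$.

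The main obstacle is conceptual rather than computational: the cross-term quantity $\od_{h+1,k}$ is not sign-definite, since $\ov$ is neither a high-probability upper nor a lower bound on $V^*$, so the one-sided UCB recursion of \cite{azar2017minimax} cannot be applied and a two-sided argument is essential. The step that makes everything go through is Lemma~\ref{lem:pessimism}, whose validity rests on the constant-probability optimism \emph{and} pessimism established in Appendices~\ref{sec:optimism}--\ref{sec:pessimism}; it is precisely what couples this $\od$ recursion to the auxiliary $\upd$ and $\ud$ recursions and keeps the absolute estimation errors from growing uncontrollably.
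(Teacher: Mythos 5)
Your proposal is correct and is essentially the paper's own argument: the paper's proof of Lemma~\ref{lem:od^pi} simply states that it follows the proof of Lemma~\ref{lem:upd^pi} verbatim, which is precisely the adaptation you carry out (no clipping under $\mathcal{E}^{cum}_{h,k}$, the add-and-subtract of $\inner{\hat{P}^k-P,V^*_{h+1}}$, the cross-term bound via Lemma~\ref{lemma:p_bound}, and the indicator split via $\mathcal{E}^{cum}_{h+1,k}=\mathcal{E}^{cum}_{h,k}\cap\mathcal{E}^{th}_{h+1,k}$). You also correctly identify the one genuine difference from the $\upd$/$\ud$ cases — that $\od_{h+1,k}$ is exactly what Lemma~\ref{lem:pessimism} bounds, so the $C_1$ weight lands on the $\upd^\pik,\ud^\pik$ cross terms while the self-coefficient stays at $1\leq\frac{H+1}{H}$ — which is the only point requiring care in the translation.
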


\begin{proof}
    The proof exactly follows the proof of Lemma \ref{lem:upd^pi}.
\end{proof}

\begin{lemma}
\label{lem:sum of delta}
With probability at least $1-\delta$, for all $(k, i, s_i^k)$, under good event $\mathcal{G}_k$ it holds that
    \begin{align*}
        &\mathds{1}\Bp{\mathcal{E}^{cum}_{i, k}}\Sp{\abs{\od_{i,k}^\pik(s_i^k)}+\abs{\upd_{i,k}^\pik(s_i^k)}+\abs{\ud_{i,k}^\pik(s_i^k)}}\\
        \leq& 3e^{3C_1}\Sp{\sum_{h=i}^H \sqrt{e_\ty^k(\hsahk)}+\sum_{h=i}^H \gamma_\ty^k(\hsahk)+\sum_{h=i}^H\frac{SH^2L}{n_k(h,s_h^k,a_h^k)}}\\
        &\quad+e^{3C_1}\sum_{h=i+1}^{H}\mathds{1}\Bp{\Sp{\mathcal{E}^{th}_{i+1, k}}^c}\Sp{\abs{\od_{i+1,k}^\pik(s_{i+1}^k)}+\abs{\upd_{i+1,k}^\pik(s_{i+1}^k)}+\abs{\ud_{i+1,k}^\pik(s_{i+1}^k)}}\\
        &\quad+\sum_{h=i}^H (1+\frac{3C_1}{H})^{h-1}\mathds{1}\Bp{\mathcal{E}^{cum}_{h, k}}\M_{h,k},
    \end{align*}
    \begin{align*}
        \text{where }\M_{h,k}=&\M_{\abs{\upd^{\pik}_{h, k}(s_{h}^k)}}+\M_{\abs{\upd_{h,k}(s_{h}^k)}}+\M_{\abs{\ud^{\pik}_{h, k}(s_{h}^k)}}\\
        &\qquad+\M_{\abs{\ud_{h,k}(s_{h}^k)}}+\M_{\abs{\od^{\pik}_{h, k}(s_{h}^k)}}+\M_{\abs{\od_{h,k}(s_{h}^k)}}.
    \end{align*}
\end{lemma}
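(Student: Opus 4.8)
The plan is to sum the three one-step recursions in Lemmas~\ref{lem:upd^pi}, \ref{lem:ud^pi} and \ref{lem:od^pi}, then unroll the resulting scalar recursion from level $i$ down to the terminal level. Write $\Delta_{h,k}=\abs{\od^{\pik}_{h,k}(s_h^k)}+\abs{\upd^{\pik}_{h,k}(s_h^k)}+\abs{\ud^{\pik}_{h,k}(s_h^k)}$. Adding the three lemmas, the six martingale terms on the right collect exactly into $\M_{h,k}$, while the three ``local'' contributions $\abs{\Pp^k+\Rr^k+w^k_\ty}$, $\abs{\Pp^k+\Rr^k+\upw^k_\ty}$, $\abs{\Pp^k+\Rr^k+\uw^k_\ty}$ together with the three $\frac{2SH^2L}{n_k}$ terms form the per-level local term. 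Under $\mathcal{G}_k=\mathcal{C}^k_\ty\cap\mathcal{E}^w_k$ one has $\abs{\Pp^k_{\hsahk}+\Rr^k_{\hsahk}}\le\sqrt{e^k_\ty(\hsahk)}$ from $\hat{M}^k\in\M^k_\ty$, and $\abs{w^k_\ty}=\abs{\upw^k_\ty}=\abs{\uw^k_\ty}\le\gamma^k_\ty$ (the latter two by definition, the first from $\mathcal{E}^w_k$), so the local contribution is at most $3\sqrt{e^k_\ty(\hsahk)}+3\gamma^k_\ty(\hsahk)+\frac{6SH^2L}{n_k(\hsahk)}$.

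The key observation is that, after summation, each coefficient multiplying the three level-$(h{+}1)$ deltas has the form $1+\frac{O(C_1)}{H}$: the column sums of the coefficient block are $\frac{H+1+3C_1}{H}$, $\frac{H+1+3C_1}{H}$ and $\frac{H+3}{H}$, all dominated by $\lambda:=1+\frac{3C_1}{H}$ up to a lower-order $\tfrac1H$ correction. Hence the combined recursion reads $\mathds{1}\{\mathcal{E}^{cum}_{h,k}\}\Delta_{h,k}\le\mathds{1}\{\mathcal{E}^{cum}_{h,k}\}(\text{local}_h)+\lambda\,\mathds{1}\{\mathcal{E}^{cum}_{h+1,k}\}\Delta_{h+1,k}+\lambda\,\mathds{1}\{\mathcal{E}^{cum}_{h,k}\cap(\mathcal{E}^{th}_{h+1,k})^c\}\Delta_{h+1,k}+\mathds{1}\{\mathcal{E}^{cum}_{h,k}\}\M_{h,k}$, where the split $\mathcal{E}^{cum}_{h,k}=\mathcal{E}^{cum}_{h+1,k}\sqcup(\mathcal{E}^{cum}_{h,k}\cap(\mathcal{E}^{th}_{h+1,k})^c)$ separates the part that continues the recursion from the part that terminates it. Unrolling from $i$ to $H$ and using the terminal condition $\Delta_{H+1,k}=0$ (all value functions vanish at $H{+}1$), the accumulated factor on the level-$h$ local and martingale terms is $\lambda^{h-i}\le\lambda^{h-1}\le(1+\tfrac{3C_1}{H})^{h-1}$; since $\lambda^{H}\le e^{3C_1}$ by $(1+x/H)^H\le e^x$, the local and terminating-boundary terms are absorbed into the $3e^{3C_1}$ prefactor, while the martingale terms retain the level-dependent factor $(1+\tfrac{3C_1}{H})^{h-1}$, matching the claimed bound.

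The main obstacle is the careful bookkeeping of this geometric factor: one must verify that every coefficient in the summed recursion is genuinely bounded by $\lambda=1+\frac{3C_1}{H}$ (absorbing the stray $\tfrac1H$ arising from the $\frac{H+1}{H}$ and $\frac{H+1+C_1}{H}$ entries into the constant), and that the terminating boundary contributions at each level, when carried through the unrolling, collect into the single boundary sum $e^{3C_1}\sum_{h}\mathds{1}\{(\mathcal{E}^{th}_{\cdot})^c\}(\cdots)$ without double counting. The high-probability qualifier is inherited directly: each of the three recursion lemmas holds with probability $1-\delta$, so a union bound (with $\delta$ rescaled by a constant) makes all three hold simultaneously for every $(k,h,s_h^k)$. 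The martingale terms are deliberately left intact here, since they are controlled separately via concentration in the subsequent combining step.
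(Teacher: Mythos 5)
Your proposal is correct and follows essentially the same route as the paper's proof: sum the three one-step recursions of Lemmas~\ref{lem:upd^pi}, \ref{lem:ud^pi} and \ref{lem:od^pi}, bound the local terms under $\mathcal{G}_k$ by $3\sqrt{e^k_\ty}+3\gamma^k_\ty+\frac{6SH^2L}{n_k}$, split $\mathcal{E}^{cum}_{h,k}$ into the continuing part $\mathcal{E}^{cum}_{h+1,k}$ and the terminating part $\mathcal{E}^{cum}_{h,k}\cap(\mathcal{E}^{th}_{h+1,k})^c$, and unroll the recursion from $i$ to $H$ using $(1+\tfrac{3C_1}{H})^H\leq e^{3C_1}$. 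If anything, you are slightly more careful than the paper, which silently rounds the true column sums $\tfrac{H+1+3C_1}{H}$ down to $1+\tfrac{3C_1}{H}$, whereas you flag the stray $\tfrac1H$ and absorb it into the constant.
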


\begin{proof}
    By summing results in Lemma \ref{lem:upd^pi}, Lemma \ref{lem:ud^pi} and Lemma \ref{lem:od^pi}, we have
    \begin{align*}
        &\mathds{1}\Bp{\mathcal{E}^{cum}_{h, k}}\Sp{\abs{\od_{h,k}^\pik(s_h^k)}+\abs{\upd_{h,k}^\pik(s_h^k)}+\abs{\ud_{h,k}^\pik(s_h^k)}}\\
        \leq &\mathds{1}\Bp{\mathcal{E}^{cum}_{h+1, k}}\Sp{1+\frac{3C_1}{H}}\Sp{\abs{\od_{h+1,k}^\pik(s_{h+1}^k)}+\abs{\upd_{h+1,k}^\pik(s_{h+1}^k)}+\abs{\ud_{h+1,k}^\pik(s_{h+1}^k)}} \\
        &+\abs{w_\ty^k(\hsahk)}+\abs{\upw_\ty^k(\hsahk)}+\abs{\uw_\ty^k(\hsahk)}+\frac{6SH^2L}{n_k(h,s_h^k,a_h^k)}+\mathds{1}\Bp{\mathcal{E}^{cum}_{h, k}}\M_{h,k}\\
        &+\mathds{1}\Bp{\mathcal{E}^{cum}_{h, k}\cap\Sp{\mathcal{E}^{th}_{h+1, k}}^c}\Sp{1+\frac{3C_1}{H}}\Sp{\abs{\od_{h+1,k}^\pik(s_{h+1}^k)}+\abs{\upd_{h+1,k}^\pik(s_{h+1}^k)}+\abs{\ud_{h+1,k}^\pik(s_{h+1}^k)}}\\
        &+ 3\abs{\Pp^k_{\hsahk}+\Rr^k_{\hsahk}}\\
        \overset{(\text{i})}{\leq} & \mathds{1}\Bp{\mathcal{E}^{cum}_{h+1, k}}\Sp{1+\frac{3C_1}{H}}\Sp{\abs{\od_{h+1,k}^\pik(s_{h+1}^k)}+\abs{\upd_{h+1,k}^\pik(s_{h+1}^k)}+\abs{\ud_{h+1,k}^\pik(s_{h+1}^k)}}\\
        &+ 3\sqrt{e^k_\ty(\hsahk)}+3\gamma_\ty^k(\hsahk)+\frac{6SH^2L}{n_k(h,s_h^k,a_h^k)}+\mathds{1}\Bp{\mathcal{E}^{cum}_{h, k}}\M_{h,k}\\
        &+ \mathds{1}\Bp{\Sp{\mathcal{E}^{th}_{h+1, k}}^c}\Sp{1+\frac{3C_1}{H}}\Sp{\abs{\od_{h+1,k}^\pik(s_{h+1}^k)}+\abs{\upd_{h+1,k}^\pik(s_{h+1}^k)}+\abs{\ud_{h+1,k}^\pik(s_{h+1}^k)}}
    \end{align*}
    Here, the inequality (i) above holds because of two reasons. Firstly, under event $\mathcal{G}_k$, we have $|w^k_\ty(\hsahk)|\leq|\uw^k_\ty(\hsahk)|$ and
	\begin{align*}
		\abs{\mathcal{P}^k_{h, s_h^k, a_h^k}+\mathcal{R}^k_{h, s_h^k, a_h^k}}&=\abs{\inner{\hP^k_\hsahk - P_\hsahk, V^*_{h+1}}+\Sp{\hat{R}^k_\hsahk-R_\hsahk}}\tag{By Definition \ref{def:curly_PR}}\\
		&\leq\sqrt{e^k_\ty(\hsahk)}.\tag{Under event $\mathcal{G}_k$, $\hat{M}\in\M^k_\ty$}
	\end{align*}
    Then, the proof is complete by using this recursion from $h=i$ to $h=H$ and utilizing the fact that $(1+\frac{3C_1}{H})^H\leq e^{3C_1}$. 
\end{proof}

\subsection{Combining Estimation and Pessimism Terms}
\begin{lemma}
\label{lem:reg decomp}
With probability at least $1-\delta$, it holds that
\begin{align*}
  \mathrm{Regret}\Sp{M, K, \mathsf{SSR}_\ty}\leq& \ogk 3C_1e^{3C_1}\sum_{k=1}^{K}\sum_{h=i}^H \Sp{\sqrt{e_\ty^k(\hsahk)}+ \gamma_\ty^k(\hsahk)}\\
  &\quad+\widetilde{O}\Sp{H^4S^2A+H\sqrt{T}}.  
\end{align*}
\end{lemma}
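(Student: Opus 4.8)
The plan is to assemble three facts already in hand — the regret decomposition \eqref{equ:regret_decomp1}, the pessimism bound (Lemma \ref{lem:pessimism}), and the unrolled recursion (Lemma \ref{lem:sum of delta}) — and then dispatch the clipping, bias, and martingale remainders as lower-order terms. I would start from \eqref{equ:regret_decomp1}, which already absorbs the bad-estimation regret into $\widetilde{O}\Sp{H}$ and leaves $\sum_{k}\ogk\Sp{\abs{\od_{1,k}(s_1^k)}+\abs{\od^\pik_{1,k}(s_1^k)}}$ to be controlled. Applying Lemma \ref{lem:pessimism} at $h=1$ replaces the pessimism term $\abs{\od_{1,k}(s_1^k)}$ by $C_1\Sp{\abs{\upd^\pik_{1,k}(s_1^k)}+\abs{\ud^\pik_{1,k}(s_1^k)}}$, so that, using $C_1\geq 1$, the whole summand is bounded by $C_1 D^k_1$ where $D^k_h := \Sp{\abs{\od^\pik_{h,k}}+\abs{\upd^\pik_{h,k}}+\abs{\ud^\pik_{h,k}}}(s_h^k)$ is exactly the quantity the recursion is designed to handle. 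Thus $\mathrm{Regret}\Sp{M, K, \mathsf{SSR}_\ty}\leq C_1\sum_{k}\ogk D^k_1+\widetilde{O}\Sp{H}$.

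Next I would split $\ogk = \ogk\mathds{1}\Bp{\mathcal{E}^{cum}_{1,k}} + \ogk\mathds{1}\Bp{\Sp{\mathcal{E}^{th}_{1,k}}^c}$ (recall $\mathcal{E}^{cum}_{1,k}=\mathcal{E}^{th}_{1,k}$) and apply Lemma \ref{lem:sum of delta} with $i=1$ to the first piece. This produces four groups: (a) the leading noise sum $3e^{3C_1}\sum_{h=1}^H\Sp{\sqrt{e^k_\ty(\hsahk)}+\gamma^k_\ty(\hsahk)}$, which after the outer $C_1$ yields the claimed coefficient $3C_1 e^{3C_1}$; (b) the bias sum $3e^{3C_1}\sum_{h=1}^H \frac{SH^2L}{n_k(\hsahk)}$; (c) the leftover clipping terms $e^{3C_1}\sum_{h=2}^H\mathds{1}\Bp{\Sp{\mathcal{E}^{th}_{h,k}}^c}D^k_h$; and (d) the weighted martingale sum $\sum_{h=1}^H\Sp{1+\tfrac{3C_1}{H}}^{h-1}\mathds{1}\Bp{\mathcal{E}^{cum}_{h,k}}\M_{h,k}$. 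The split-off $h=1$ piece $\ogk\mathds{1}\Bp{\Sp{\mathcal{E}^{th}_{1,k}}^c}D^k_1$ joins group (c), so that together they cover every clipping event $\mathds{1}\Bp{\Sp{\mathcal{E}^{th}_{h,k}}^c}$ for $h=1,\dots,H$ without double counting.

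I would then dispatch the remainders. For (b), the second inequality of \eqref{equ:inequ_2} gives $\sum_{k,h}\frac{1}{n_k(\hsahk)+1}\leq\widetilde{O}\Sp{HSA}$, so (b) is $\widetilde{O}\Sp{H^3S^2A}$. For the pooled clipping terms: every value function is clipped into $[-2H,2H]$ and $V^\pik\in[0,H]$, hence each $D^k_h\leq O(H)$; moreover $\Sp{\mathcal{E}^{th}_{h,k}}^c=\Bp{n_k(h,s_h^k,a_h^k)<\alpha_k}$ with $\alpha_k=\widetilde{O}\Sp{H^2}$, so for each fixed $(h,s,a)$ at most $\alpha_k$ episodes trigger it, giving $\sum_{k,h}\mathds{1}\Bp{\Sp{\mathcal{E}^{th}_{h,k}}^c}\leq\widetilde{O}\Sp{H^3SA}$ and a total clipping contribution of $\widetilde{O}\Sp{H^4S^2A}$. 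For (d), each $\M_{h,k}$ is a finite sum of martingale differences with respect to $\Bp{\oh_h^k}$ (Lemma \ref{lem:martingale_bd}), each bounded by $O(H)$ since the value-function differences involved are $O(H)$; the geometric weight is at most $e^{3C_1}=O(1)$, so Azuma–Hoeffding over the $T=HK$ terms yields $\widetilde{O}\Sp{H\sqrt{T}}$ with probability at least $1-\delta$.

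Finally I would collect everything: the leading term $3C_1e^{3C_1}\sum_{k,h}\ogk\Sp{\sqrt{e^k_\ty(\hsahk)}+\gamma^k_\ty(\hsahk)}$ plus $\widetilde{O}\Sp{H}+\widetilde{O}\Sp{H^3S^2A}+\widetilde{O}\Sp{H^4S^2A}+\widetilde{O}\Sp{H\sqrt{T}}=\widetilde{O}\Sp{H^4S^2A+H\sqrt{T}}$, taking a union bound over the $O(1)$ many high-probability events invoked. The main obstacle is the martingale step (d): one must check that all six difference sequences comprising $\M_{h,k}$ are adapted to the correct filtration and uniformly bounded, so that Azuma delivers the $\widetilde{O}\Sp{H\sqrt{T}}$ rate — this is precisely where the absolute-value formulation, rather than the signed errors of UCB analyses, makes the bookkeeping delicate, since all six pieces must be controlled simultaneously. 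By contrast, the clipping count is routine once one uses that a clip forces $n_k<\alpha_k$.
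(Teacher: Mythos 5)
Your proposal is correct and follows essentially the same route as the paper's own proof: start from \eqref{equ:regret_decomp1}, apply Lemma \ref{lem:pessimism} at $h=1$, split on the clipping event $\mathcal{E}^{th}_{1,k}$, unroll via Lemma \ref{lem:sum of delta}, and dispatch the bias, clipping, and martingale remainders with Lemma \ref{lemma:sum1/n}, the $\alpha_k$-counting argument (this is exactly Lemma \ref{lemma:warm_up}), and Azuma--Hoeffding (Lemma \ref{lem:martingale_bd}), respectively. The only differences are cosmetic — you re-derive the clipping count and the martingale bound rather than citing the corresponding lemmas directly.
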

\begin{proof}
Recall in equation \eqref{equ:regret_decomp1}, with probability at least $1-\delta$, we have
\begin{align*}
    &\mathrm{Regret}\Sp{M, K, \mathsf{SSR}_\ty}\\
    \leq&\sum_{k=1}^{K}\ogk\Sp{\abs{\od_{1,k}(s_1^k)}+\abs{\od_{1,k}^\pik(s_1^k)}}+\widetilde{O}\Sp{H}\\
    \leq&\sum_{k=1}^{K}\ogk C_1\Sp{\abs{\od^\pik_{1, k}(s_1^k)}+\abs{\upd^\pik_{1,k}(s_1^k)}+\abs{\ud^\pik_{1,k}(s_1^k)}}+\widetilde{O}\Sp{H}\tag{By using Lemma \ref{lem:pessimism}}\\
    =&\sum_{k=1}^{K}\mathds{1}\Bp{\G_k\cap\mathcal{E}^{cum}_{1, k}} C_1\Sp{\abs{\od^\pik_{1,k}(s_1^k)}+\abs{\upd^\pik_{1,k}(s_1^k)}+\abs{\ud^\pik_{1,k}(s_1^k)}}+\widetilde{O}\Sp{H}\\
    &\qquad +\sum_{k=1}^{K}\mathds{1}\Bp{\Sp{\mathcal{E}^{th}_{1, k}}^c}\Sp{\abs{\od^\pik_{1,k}(s_1^k)}+\abs{\upd^\pik_{1,k}(s_1^k)}+\abs{\ud^\pik_{1,k}(s_1^k)}}\\
    \leq& \ogk 3C_1e^{3C_1}\Sp{\sum_{k=1}^{K}\sum_{h=i}^H \sqrt{e_\ty^k(\hsahk)}+\sum_{k=1}^{K}\sum_{h=i}^H \gamma_\ty^k(\hsahk)+\sum_{k=1}^{K}\sum_{h=i}^H\frac{SH^2L}{n_k(h,s_h^k,a_h^k)}}\\
    &\qquad+\sum_{k=1}^{K}\sum_{h=i}^H (1+\frac{3C_1}{H})^{h-1}\mathds{1}\Bp{\G_k\cap\mathcal{E}^{cum}_{h, k}}\M_{h,k}+\widetilde{O}\Sp{H}\\
    &\qquad+ \sum_{k=1}^K\sum_{h=1}^H\mathds{1}\Bp{\Sp{\mathcal{E}^{th}_{h, k}}^c}\Sp{\abs{\od^\pik_{h, k}(s_1^k)}+\abs{\upd^\pik_\hk(s_h^k)}+\abs{\ud^\pik_\hk(s_h^k)}}\tag{By using Lemma \ref{lem:sum of delta}}\\
    \overset{\text{(i)}}{\leq}& \ogk 3C_1e^{3C_1}\sum_{k=1}^{K}\sum_{h=i}^H \Sp{\sqrt{e_\ty^k(\hsahk)}+ \gamma_\ty^k(\hsahk)}+\widetilde{O}\Sp{H^3S^2A+H\sqrt{T}}\\
    &\qquad +\widetilde{O}\Sp{H}\sum_{k=1}^K\sum_{h=1}^H\mathds{1}\Bp{\Sp{\mathcal{E}^{th}_{h, k}}^c}\\
    \leq &\ogk 3C_1e^{3C_1}\sum_{k=1}^{K}\sum_{h=i}^H \Sp{\sqrt{e_\ty^k(\hsahk)}+ \gamma_\ty^k(\hsahk)}+\widetilde{O}\Sp{H^4S^2A+H\sqrt{T}}.\tag{By using Lemma \ref{lemma:warm_up}}
\end{align*}
The inequality (i) above holds for two reasons. First, it uses Lemma \ref{lem:martingale_bd} and \ref{lemma:sum1/n}. Second, by our clipping threshold, we know that $\Sp{\abs{\od^\pik_{h, k}(s_1^k)}+\abs{\upd^\pik_\hk(s_h^k)}+\abs{\ud^\pik_\hk(s_h^k)}}\leq \widehat{O}\Sp{H}$.
\end{proof}

\begin{lemma}[Lemma 20 in \cite{agrawal2020improved}]
	\label{lemma:warm_up}
	$$\sum_{k=1}^{K}\sum_{h=1}^H\mathds{1}\Bp{\Sp{\mathcal{E}^{th}_{h, k}}^c}\leq\widetilde{O}\Sp{H^3SA}.$$
\end{lemma}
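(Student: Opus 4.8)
The plan is to prove this by an elementary counting (pigeonhole) argument, exactly of the type used to bound the number of "under-visited" steps. First I would unpack the definitions: by \eqref{def:event_no_clip} we have $\Sp{\mathcal{E}^{th}_{h, k}}^c=\Bp{n_k(h, s_h^k, a_h^k)<\alpha_k}$, and $\alpha_k=200H^2\log(2HSAk^2)\log(40k^4)$ is non-decreasing in $k$, so that $\alpha_k\leq\alpha_K=\widetilde{O}\Sp{H^2}$ for every $k\leq K$. The second ingredient I would record up front is that, since the MDP is time-inhomogeneous, each triple $(h,s,a)$ is visited at most once in any single episode, so a count of visits behaves like a simple increasing integer sequence.

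Next I would regroup the double sum according to which state-action pair is actually visited at level $h$ in episode $k$:
\begin{align*}
\sum_{k=1}^{K}\sum_{h=1}^{H}\mathds{1}\Bp{\Sp{\mathcal{E}^{th}_{h, k}}^c}=\sum_{h=1}^{H}\sum_{(s,a)\in\S\times\A}\sum_{k=1}^{K}\mathds{1}\Bp{(s_h^k, a_h^k)=(s,a)}\,\mathds{1}\Bp{n_k(h,s,a)<\alpha_k}.
\end{align*}
The core step is then to bound the inner sum over $k$ for a fixed triple $(h,s,a)$. Let $k_1<k_2<\cdots$ enumerate the episodes in which $(s,a)$ is visited at level $h$. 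Because the triple is visited at most once per episode, the count satisfies $n_{k_j}(h,s,a)=j-1$, and of course $k_j\leq K$. Hence the indicator $\mathds{1}\Bp{n_{k_j}(h,s,a)<\alpha_{k_j}}$ can be nonzero only when $j-1<\alpha_{k_j}\leq\alpha_K$, i.e.\ for at most $\alpha_K+1$ values of $j$. Therefore the inner sum over $k$ is bounded by $\alpha_K+1=\widetilde{O}\Sp{H^2}$ for each $(h,s,a)$.

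Finally, summing this uniform bound over all $HSA$ triples yields
\begin{align*}
\sum_{k=1}^{K}\sum_{h=1}^{H}\mathds{1}\Bp{\Sp{\mathcal{E}^{th}_{h, k}}^c}\leq HSA\Sp{\alpha_K+1}=\widetilde{O}\Sp{H^3SA},
\end{align*}
which is the claimed bound. I do not expect any real obstacle here: the argument is purely combinatorial and the only mild subtlety is the dependence of the threshold $\alpha_k$ on the episode index $k$, which is handled immediately by monotonicity together with the estimate $\alpha_K=\widetilde{O}\Sp{H^2}$. (This is precisely why the statement is attributed to Lemma~20 of \cite{agrawal2020improved}, whose proof follows the same counting scheme.)
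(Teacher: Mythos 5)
Your proof is correct and follows essentially the same pigeonhole argument as the paper: regroup the double sum by triple $(h,s,a)$, note that each triple can be visited while under-counted for at most $\alpha_K+1=\widetilde{O}(H^2)$ episodes (using monotonicity of $\alpha_k$ and the fact that a time-inhomogeneous MDP visits each triple at most once per episode), and sum over the $HSA$ triples. If anything, your write-up is more careful than the paper's terse version, which leaves the regrouping and the role of $\alpha_K$ implicit.
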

\begin{proof}
	It holds that
	\begin{align*}
		\sum_{k=1}^{K}\sum_{h=1}^H\mathds{1}\Bp{\Sp{\mathcal{E}^{th}_{h, k}}^c}&= \sum_{k=1}^{K}\sum_{h=1}^{H}\mathds{1}\Bp{n_k(\hsahk)\leq\alpha_k}\\
		&\leq \sum_{s\in\S}\sum_{a\in\A}\sum_{h=1}^{H}\alpha_k\\
		&\leq 200H^3SA\log\Sp{2HSAK^2}\log\Sp{40K^4}\tag{By our choice of $\alpha_k$}\\
		&=\widetilde{O}\Sp{H^3SA}.
	\end{align*}
\end{proof}



\label{subsec:decomp}

\section{Bounds on Individual Terms}
\label{sec:sumnoise}

\subsection{Bounds on Martingale Difference}
\label{subsec:martigalediff}
\begin{lemma}
\label{lem:martingale_bd}
For $i\in[H]$, the sequences starting from $0$ and with difference between two consecutive terms given by $\mathds{1}\{\G_k\}\M_{h,k}$ for $h = i,...,H$, $k = 1,...K$ are martingales with respect to filtration $\Bp{\oh_h^k}_{\substack{h=i,...,H, \\ k = 1,...,K}}$. Moreover, for any $\delta'>0$, with probability at least $1-\delta'$, for any $i\in[H]$, the following hold,
\begin{align*}
    \abs{\sum_{k=1}^K \sum_{h=i}^H \Sp{1+\frac{3C_1}{H}}^h\mathds{1}\{\G_k\cap\mathcal{E}^{cum}_{h,k}\}\M_{h,k}} & = \Tilde{O}\Sp{H\sqrt{T}}.
\end{align*}
\end{lemma}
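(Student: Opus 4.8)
The plan is to prove the two assertions in turn: first that the partial sums of $\mathds{1}\{\G_k\}\M_{h,k}$ form a martingale, and then the high-probability magnitude bound via Azuma--Hoeffding combined with a union bound over the starting index $i$. Throughout I order the double index $(k,h)$ lexicographically (episodes $k=1,\dots,K$ in order, and within episode $k$ the steps $h=i,\dots,H$), and work with the increasing filtration $\Bp{\oh_h^k}$, where $\oh_h^k=\mathcal{H}_h^k\cup\Bp{\hat{z}_k}$.

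For the martingale property, the key point is that each constituent term $\M_{\delta_h(s_h)}=\mathds{1}\Bp{\G_k\cap\mathcal{E}^{cum}_{h,k}}\Mp{\inner{P_{h,s_h^k,a_h^k},\delta_{h+1}}-\delta_{h+1}(s_{h+1}^k)}$ is a martingale difference, for each $\delta\in\Bp{\ud^\pi,\od^\pi,\upd^\pi,\delta^\pi,\ud,\od,\upd}$. I would check three measurability facts. First, the indicator $\mathds{1}\Bp{\G_k\cap\mathcal{E}^{cum}_{h,k}}$ is $\oh_h^k$-measurable, since $\G_k=\mathcal{C}^k_\ty\cap\mathcal{E}^w_k$ depends only on data prior to episode $k$ and on $\hat{z}_k$, while $\mathcal{E}^{cum}_{h,k}$ depends only on $(s_i^k,a_i^k)_{i\leq h}$ and the counts $n_k$. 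Second, every difference $\delta_{h+1}$ is $\oh_h^k$-measurable, because $\ov_{h+1,k},\uv_{h+1,k},\upv_{h+1,k}$ and $V^{\pi^k}_{h+1}$ are all computed at the start of episode $k$ from $\hP^k,\hR^k,\hat{z}_k$, and $V^*_{h+1}$ is deterministic. Third, $\inner{P_{h,s_h^k,a_h^k},\delta_{h+1}}$ is therefore $\oh_h^k$-measurable. Since $s_{h+1}^k\sim P_{h,s_h^k,a_h^k}$ conditionally on $\oh_h^k$, I get $\E\Mp{\delta_{h+1}(s_{h+1}^k)\mid\oh_h^k}=\inner{P_{h,s_h^k,a_h^k},\delta_{h+1}}$, whence $\E\Mp{\M_{\delta_h(s_h)}\mid\oh_h^k}=0$; moreover $\M_{\delta_h(s_h)}$ is $\oh_{h+1}^k$-measurable. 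Summing the six terms that define $\M_{h,k}$, the partial sums of $\mathds{1}\{\G_k\}\M_{h,k}$ are a martingale adapted to $\Bp{\oh_h^k}$.

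For the concentration bound, the first step is an almost-sure magnitude bound. By the clipping to $\Mp{-2(H-h+1),2(H-h+1)}$ we have $\Norm{\ov_{h+1,k}}_\infty,\Norm{\uv_{h+1,k}}_\infty,\Norm{\upv_{h+1,k}}_\infty\leq 2H$, while $V^*_{h+1},V^{\pi^k}_{h+1}\in\Mp{0,H}$; hence each $\delta_{h+1}\in\Mp{-3H,3H}$ and each term obeys $\abs{\inner{P,\delta_{h+1}}-\delta_{h+1}(s_{h+1}^k)}\leq 6H$. As $\M_{h,k}$ sums six such terms and the weight satisfies $\Sp{1+\frac{3C_1}{H}}^h\leq\Sp{1+\frac{3C_1}{H}}^H\leq e^{3C_1}=O(1)$, each weighted difference is bounded a.s. by some $c=O(H)$. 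For fixed $i$, applying Azuma--Hoeffding to the martingale with at most $T=HK$ nonzero differences (the $h=H$ term vanishes since $\delta_{H+1}=0$) gives, with probability at least $1-\delta'/H$,
\[
\abs{\sum_{k=1}^K\sum_{h=i}^H\Sp{1+\frac{3C_1}{H}}^h\mathds{1}\Bp{\G_k\cap\mathcal{E}^{cum}_{h,k}}\M_{h,k}}\leq c\sqrt{2T\log\Sp{2H/\delta'}}=\widetilde{O}\Sp{H\sqrt{T}}.
\]
A union bound over the $H$ choices of $i\in[H]$ then yields the claim simultaneously for all $i$ with probability at least $1-\delta'$, the extra $\log H$ factor being absorbed into $\widetilde{O}(\cdot)$.

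I expect the \textbf{main obstacle} to lie in the bookkeeping of the martingale setup rather than in the concentration step: one must carefully confirm that the indicator and all three perturbed value functions are $\oh_h^k$-measurable, so that the next state $s_{h+1}^k$ is the only ``fresh'' randomness entering $\M_{h,k}$, and that the lexicographic ordering over $(k,h)$ genuinely produces an increasing filtration with the stated zero-conditional-mean property. Once the almost-sure bound $O(H)$ on the weighted differences is secured through the clipping threshold, the Azuma--Hoeffding application and the union bound over $i$ are routine.
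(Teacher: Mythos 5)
Your proposal is correct, and its overall skeleton matches the paper's proof: establish that each constituent term $\M_{\abs{\delta_h(s_h)}}$ is a martingale difference with respect to $\Bp{\oh_h^k}$ (the next state $s_{h+1}^k$ being the only fresh randomness), bound the increments almost surely by $O(H)$, and finish with Azuma--Hoeffding. The one place where you genuinely diverge is the increment bound. The paper gets it by recursively expanding $\abs{\od^{\pi^k}_{h+1,k}(x)}$ through the Bellman-type recursion of $\ov$ and invoking the noise-magnitude bound $\abs{w^k_\ty(h,s,a)}\leq\gamma^k_\ty(h,s,a)\leq 1$ from Lemma~\ref{lemma:q_bar_bound}, arriving at $\abs{\M_{\abs{\od^{\pi^k}_{h,k}(s_h^k)}}}\leq 2H$; this expansion is somewhat delicate (it implicitly ignores the clipping in the recursion and leans on the choice of $\alpha_k$). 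You instead bound every $\delta_{h+1}$ directly by $3H$ in sup-norm using only the clipping thresholds ($\Norm{\ov_{h+1,k}}_\infty,\Norm{\uv_{h+1,k}}_\infty,\Norm{\upv_{h+1,k}}_\infty\leq 2H$ and $V^*_{h+1},V^{\pi^k}_{h+1}\in\Mp{0,H}$), so each of the six terms in $\M_{h,k}$ is at most $6H$ in magnitude and the weight $\Sp{1+\nicefrac{3C_1}{H}}^h\leq e^{3C_1}$ is a constant. Your route is more elementary and more robust: it needs no recursion, no properties of the noise, and no reference to $\alpha_k$, yet yields the same $\widetilde{O}(H\sqrt{T})$ conclusion. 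You are also more careful than the paper on two bookkeeping points: you make the measurability of the indicators and of the perturbed value functions explicit, and you perform the union bound over the starting index $i\in[H]$ (paying only a $\log H$ factor), which the paper's proof leaves implicit.
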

\begin{proof}
We first show the sequence starting from $0$ and with difference between two consecutive terms given by $\mathds{1}\{\G_k\cap\mathcal{E}^{cum}_{h,k}\}\Sp{1+\frac{3C_1}{H}}^h\M_{|\overline{\delta}^\pik_{h, k}(s_h^k)|}$ is a martingale sequence.
	For any $h\in\{i,...,H\}$ and $k\in[K]$,
	\begin{align*}
	    &\E\Mp{\mathds{1}\{\G_k\cap\mathcal{E}^{cum}_{h,k}\} \M_{|\overline{\delta}^\pik_{h, k}(s_h^k)| } \mid {\oh_{h}^{k}}}\\
	    =&\E\Mp{ \mathds{1}\{\G_k\cap\mathcal{E}^{cum}_{h,k}\}\Sp{\inner{P_{\hsahk}, \abs{\od^\pik_{h+1, k}(s^k_{h+1})} }-\abs{\od^\pik_{h+1, k}(s_{h+1}^k)}} \mid {\oh_{h}^{k}}} =0.
	\end{align*}
	Similarly, we have $\mathds{1}\{\G_k\cap\mathcal{E}^{cum}_{h,k}\}\M_{|\upd^\pik_{h, k}(s_h^k)|}$, $\mathds{1}\{\G_k\cap\mathcal{E}^{cum}_{h,k}\}\M_{|\ud^\pik_{h, k}(s_h^k)|}$, $\mathds{1}\{\G_k\cap\mathcal{E}^{cum}_{h,k}\}\M_{|\od_{h, k}(s_h^k)|}$, $\mathds{1}\{\G_k\cap\mathcal{E}^{cum}_{h,k}\}\M_{|\upd_{h, k}(s_h^k)|}$, $\mathds{1}\{\G_k\cap\mathcal{E}^{cum}_{h,k}\}\M_{|\ud_{h, k}(s_h^k)|}$ are martingale difference sequences. As $\mathds{1}\{\G_k\cap\mathcal{E}^{cum}_{h,k}\}\M_{h, k}$ is the sum of several martingale difference sequences, it is a martingale difference sequence.
	
	Next, we bound $\abs{\mathds{1}\{\G_k\cap\mathcal{E}^{cum}_{h,k}\}\M_{|\overline{\delta}^\pik_{h, k}(s_h^k)|}}$. When $h = H$, $\abs{\mathds{1}\{\G_k\cap\mathcal{E}^{cum}_{h,k}\}\M_{|\overline{\delta}^\pik_{h, k}(s_h^k)|}}$ = 0. When ${\G_k}$ holds, for $h<H$ and any state $x$,
	\begin{align*}
	    \abs{\od^\pik_{h+1, k}(x)}= \abs{\ov_{h+1}(x) -V_{h+1}^\pi(x)}  & = \abs{\inner{P_{h+2,x,\pi(x)}, \ov_{h+2} -V_{h+2}^\pi} + w^k_\ty(h+1, x ,\pi(x))} \\
	    &\leq \inner{P_{h+2,x,\pi(x)}, \abs{\ov_{h+2} -V_{h+2}^\pi}} + \abs{w^k_\ty(h+1, x ,\pi(x))}
	\end{align*}
	By our choice of $\alpha_k$, when $\G_k$ holds, $\abs{w^k_\ty(h,s,a)} \leq \gamma^k_\ty(h,s,a)\leq 1$ for all $k, h,s,a$ as shown in Lemma \ref{lemma:q_bar_bound}. Then, by expanding $\abs{\od^\pik_{h+1, k}(x)}= \abs{\ov_{h+1}(x) -V_{h+1}^\pi(x)}$ recursively from $h+1$ to $H$, we have
	\begin{align*}
	    \abs{\mathds{1}\{\G_k\cap\mathcal{E}^{cum}_{h,k}\}\M_{|\overline{\delta}^\pik_{h, k}(s_h^k)|}} \leq 2H\gamma^k_\ty(h,s,a) \leq 2H.
	\end{align*}
	Similarly, we have the bound on $\mathds{1}\{\G_k\cap\mathcal{E}^{cum}_{h,k}\}\M_{|\upd^\pik_{h, k}(s_h^k)|}$, $\mathds{1}\{\G_k\cap\mathcal{E}^{cum}_{h,k}\}\M_{|\ud^\pik_{h, k}(s_h^k)|}$, $\mathds{1}\{\G_k\cap\mathcal{E}^{cum}_{h,k}\}\M_{|\od_{h, k}(s_h^k)|}$, $\mathds{1}\{\G_k\cap\mathcal{E}^{cum}_{h,k}\}\M_{|\upd_{h, k}(s_h^k)|}$ and $\mathds{1}\{\G_k\cap\mathcal{E}^{cum}_{h,k}\}\M_{|\ud_{h, k}(s_h^k)|}$.
	
	As a result, $\abs{\mathds{1}\{\G_k\cap\mathcal{E}^{cum}_{h,k}\}\Sp{1+\frac{3C_1}{H}}^h\M_{h, k}}$ is bounded by $12e^{3C_1}H$.
	By Azuma-Hoeffding inequality, with probability at least $1-\delta'$, we have
    \begin{align*}
    \abs{\sum_{k=1}^K \sum_{h=i}^H \Sp{1+\frac{3C_1}{H}}^h\mathds{1}\{\G_k\cap\mathcal{E}^{cum}_{h,k}\}\M_{h,k}} \leq\widetilde{O}\Sp{\sqrt{\sum_{k=1}^K \sum_{h=i}^H H^2}}=\widetilde{O}\Sp{H\sqrt{T}}. 
\end{align*}
\end{proof}

\subsection{Bounds on Lower-order Terms}
The following two lemmas are standard results in literature and we present their proofs here for completeness.
\begin{lemma}
	\label{lemma:sumsqrt1/n}
	$$\sum_{k=1}^{K}\sum_{h=1}^{H}\sqrt{\frac{\log\Sp{2HSAk^2}}{n_k\Sp{\hsahk}+1}}\leq\widetilde{O}\Sp{\sqrt{HSAT}}.$$
\end{lemma}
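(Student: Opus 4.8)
The plan is to strip off the poly-logarithmic factor and then reduce the double sum to a telescoping sum indexed by state--action--time triples, closed out by a single Cauchy--Schwarz step.

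First, since $k\leq K$ throughout, I would bound $\log\Sp{2HSAk^2}\leq\log\Sp{2HSAK^2}$, a poly-logarithmic quantity that is absorbed into $\widetilde{O}\Sp{\cdot}$; it therefore suffices to control $\sum_{k=1}^{K}\sum_{h=1}^{H}\Sp{n_k\Sp{\hsahk}+1}^{-1/2}$. The key observation is a rearrangement: fix a level $h$ and a pair $\Sp{s,a}$, and let $N_h(s,a):=\sum_{k=1}^{K}\mathds{1}\Bp{(s_h^k,a_h^k)=(s,a)}$ denote the total number of visits to that triple across all episodes. Because the MDP is time-inhomogeneous, each triple $(h,s,a)$ is visited at most once per episode, so as $k$ ranges over the episodes in which $(s_h^k,a_h^k)=(s,a)$, the counter $n_k(h,s,a)$ runs through the successive values $0,1,\dots,N_h(s,a)-1$ exactly once each. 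Hence, using the elementary integral bound $\sum_{j=1}^{N}j^{-1/2}\leq 2\sqrt{N}$,
$$\sum_{k=1}^{K}\sqrt{\frac{1}{n_k\Sp{\hsahk}+1}}=\sum_{s\in\S}\sum_{a\in\A}\sum_{j=0}^{N_h(s,a)-1}\frac{1}{\sqrt{j+1}}\leq\sum_{s\in\S}\sum_{a\in\A}2\sqrt{N_h(s,a)}.$$

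Finally, I would sum over $h$ and apply Cauchy--Schwarz across the $HSA$ triples. Since every episode contributes exactly one visit at each level, $\sum_{h,s,a}N_h(s,a)=HK=T$, and therefore
$$\sum_{h=1}^{H}\sum_{s\in\S}\sum_{a\in\A}2\sqrt{N_h(s,a)}\leq 2\sqrt{HSA\sum_{h,s,a}N_h(s,a)}=2\sqrt{HSAT}.$$
Reinstating the $\sqrt{\log\Sp{2HSAK^2}}$ factor yields the claimed $\widetilde{O}\Sp{\sqrt{HSAT}}$ bound. The only genuine subtlety is the bookkeeping in the rearrangement step, namely verifying that the counter values are precisely a permutation of $\Bp{0,\dots,N_h(s,a)-1}$; this is where time-inhomogeneity is essential, since it guarantees that a fixed triple is incremented at most once per episode. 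Everything else is the standard $\sum j^{-1/2}\leq 2\sqrt{N}$ estimate together with one invocation of Cauchy--Schwarz, so I do not anticipate any real obstacle beyond stating the reindexing cleanly.
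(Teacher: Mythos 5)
Your proposal is correct and follows essentially the same argument as the paper: strip the logarithm via $\log(2HSAk^2)\leq\log(2HSAK^2)$, reindex the double sum over episodes as a sum over visit counts for each fixed triple $(h,s,a)$, bound each inner sum by $2\sqrt{N_h(s,a)}$, and finish with Cauchy--Schwarz using $\sum_{h,s,a}N_h(s,a)=T$. The paper uses an integral comparison where you use the elementary sum bound $\sum_{j=1}^{N}j^{-1/2}\leq 2\sqrt{N}$, but these are interchangeable, and your explicit justification of the reindexing (that time-inhomogeneity makes the counter values a permutation of $\{0,\dots,N_h(s,a)-1\}$) is the same bookkeeping the paper performs implicitly.
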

\begin{proof}
	Let $L=\log\Sp{2HSAK^2}$. Then, it can be bounded as
	\begin{align*}
		\sum_{k=1}^{K}\sum_{h=1}^{H}\sqrt{\frac{\log\Sp{2HSAk^2}}{n_k\Sp{\hsahk}+1}}&\leq\sqrt{L}\sum_{k=1}^{K}\sum_{h=1}^{H}\sqrt{\frac{1}{n_k(\hsahk)+1}}\\
		&=\sqrt{L}\sum_{h, s, a}\sum_{n=1}^{n_K(h, s, a)}\sqrt{\frac{1}{n+1}}\\
		&\leq\sqrt{L}\sum_{h, s, a}\int_{0}^{n_K\hsa}\sqrt{\frac{1}{x}}\mathrm{d}x\\
		&\leq 2\sqrt{L}\sum_{h, s, a}\sqrt{n_K\hsa}\\
		&\leq 2\sqrt{L}\cdot\sqrt{HSA\sum_{h, s, a}n_K\hsa}\tag{By Cauchy-Schwartz inequality}\\
		&=\widetilde{O}\Sp{\sqrt{HSAT}}.\tag{Since $\sum_{h, s, a}n_K\hsa=T$}
	\end{align*}
\end{proof}
\begin{lemma}
	\label{lemma:sum1/n}
	$$\sum_{k=1}^{K}\sum_{h=1}^{H}\frac{\log\Sp{2HSAk^2}}{n_k\Sp{\hsahk}+1}\leq\widetilde{O}\Sp{HSA}.$$
\end{lemma}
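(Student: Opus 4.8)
The plan is to follow exactly the same regrouping strategy used in the proof of Lemma~\ref{lemma:sumsqrt1/n}, replacing the square-root weights by their linear counterparts. First I would set $L=\log\Sp{2HSAK^2}$ and, using the fact that each $\hsa$ can be visited at most once per episode so that $n_k\hsa\leq k\leq K$, bound the numerator by $\log\Sp{2HSAk^2}\leq L$ uniformly, pulling this poly-logarithmic factor out of the sum. It then remains to control $\sum_{k=1}^{K}\sum_{h=1}^{H}\frac{1}{n_k\Sp{\hsahk}+1}$.

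The key step is to reorganize the double sum over $(k,h)$ as a sum over triples $\hsa\in[H]\times\S\times\A$. For a fixed triple $\hsa$, consider the episodes in which $\hsa$ is visited at time $h$: on the first such visit we have $n_k\hsa=0$, on the second $n_k\hsa=1$, and so on, so that the values of $n_k\hsa$ along these visits run through $0,1,\dots,n_K\hsa-1$. Consequently the contribution of the triple $\hsa$ is exactly $\sum_{n=0}^{n_K\hsa-1}\frac{1}{n+1}=\sum_{m=1}^{n_K\hsa}\frac{1}{m}$, a harmonic sum. I would bound this by $1+\log\Sp{n_K\hsa}\leq 1+\log K=\widetilde{O}\Sp{1}$.

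Putting the pieces together,
\begin{align*}
\sum_{k=1}^{K}\sum_{h=1}^{H}\frac{\log\Sp{2HSAk^2}}{n_k\Sp{\hsahk}+1}
&\leq L\sum_{h, s, a}\sum_{m=1}^{n_K\hsa}\frac{1}{m}
\leq L\sum_{h, s, a}\Sp{1+\log K}
\leq L\cdot HSA\Sp{1+\log K}
=\widetilde{O}\Sp{HSA},
\end{align*}
since there are exactly $HSA$ triples. I do not expect any genuine obstacle here: this is a routine counting lemma, and the only point requiring a moment of care is justifying the regrouping, namely that summing $1/(n_k\hsa+1)$ over the visits to a fixed triple yields a harmonic series in the visit count $n_K\hsa$, which follows immediately from the definition of $n_k\hsa$ as the number of prior visits.
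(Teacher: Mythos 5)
Your proposal is correct and follows essentially the same route as the paper's proof: pull the poly-logarithmic numerator out as $L=\log\Sp{2HSAK^2}$, regroup the double sum over $(k,h)$ into a sum over triples $\hsa$, and bound each resulting harmonic sum $\sum_{m=1}^{n_K\hsa}\frac{1}{m}$ by $\widetilde{O}(1)$. Your indexing of the visit counts ($n_k\hsa$ running through $0,1,\dots,n_K\hsa-1$) is in fact slightly more careful than the paper's, which starts the inner sum at $n=1$, but this makes no difference to the order of the bound.
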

\begin{proof}
	Let $L=\log\Sp{2HSAK^2}$. Then, it can be bounded as
	\begin{align*}
		\sum_{k=1}^{K}\sum_{h=1}^{H}\frac{\log\Sp{2HSAk^2}}{n_k\Sp{\hsahk}+1}&\leq L\sum_{k=1}^{K}\sum_{h=1}^{H}\frac{1}{n_k(\hsahk)+1}\\
		&= L\sum_{h, s, a}\sum_{n=1}^{n_K(h, s, a)}\frac{1}{n+1}\\
		&\leq L\sum_{h, s, a}\log\Sp{n_K(h, s, a)}\tag{Since $\sum_{n=1}^{N}\frac{1}{n}\leq\log(N)+1$}\\
		&\leq LHSA\cdot\max_{h, s, a}\log(n_K(h, s, a))\\
		&\leq LHSA\log(T)\\
		&=\widetilde{O}\Sp{HSA}.
	\end{align*}
\end{proof}

\section{Bounds on Sum of Variance}
\label{sec:sum_variance}
When we use the Bernstein-type noise, the regret analysis needs to bound the sum of variance. This proof applies some techniques developed in \cite{azar2017minimax}. However, since our optimism only holds with constant probability instead of deterministically, the details are quite different. For simplicity, we first define
\begin{align*}
	\hat{\mathbb{V}}^*_{h+1, k}&=\mathbb{V}\Sp{\tilde{P}^k_{h, s_h^k, a_h^k}, V^*_{h+1}},\quad \mathbb{V}^*_{h+1, k}=\mathbb{V}\Sp{P_{h, s_h^k, a_h^k}, V^*_{h+1}},\\
	\hat{\overline{\mathbb{V}}}_{h+1, k}&=\mathbb{V}\Sp{\tilde{P}^k_{h, s_h^k, a_h^k}, \overline{V}_{h+1, k}},\quad \overline{\mathbb{V}}_{h+1, k}=\mathbb{V}\Sp{P_{h, s_h^k, a_h^k}, \overline{V}_{h+1, k}},\\
	\mathbb{V}^{\pi^k}_{h+1, k}&=\mathbb{V}\Sp{P_{h, s_h^k, a_h^k}, V^{\pi^k}_{h+1, k}},\\
	U_{h, k, 1}&=\sqrt{\frac{\hat{\bbV}^*_{h+1, k}\log\Sp{2HSAK^2}}{n_k(\hsahk)+1}},\quad U_{h, k, 2}=\sqrt{\frac{\hat{\overline{\bbV}}_{h+1, k}\log\Sp{2HSAK^2}}{n_k(\hsahk)+1}},
\end{align*}
We will first give a full proof of the bound on sum of variance and then present all the auxiliary lemmas in Section \ref{sec:aux_lemma}.
\begin{lemma}
	\label{theo:variance_sum}
	Let $U_{h, k}=U_{h, k, 1}+U_{h, k, 2}$. For any $\delta > 0$, with probability at least $1-\delta$, when $T\geq\Omega\Sp{H^5S^2A}$, it holds that
	$$\sum_{k=1}^{K}\sum_{h=1}^{H-1}\ogk U_{h, k}\leq\widetilde{O}\Sp{H\sqrt{SAT}}.$$
\end{lemma}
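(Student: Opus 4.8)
The plan is to establish a self-bounding inequality for the quantity $U:=\sum_{k=1}^{K}\sum_{h=1}^{H-1}\ogk U_{h,k}$ and then solve it. First I would extract the $1/\sqrt{n_k+1}$ factor by Cauchy--Schwarz: writing $L=\log\Sp{2HSAK^2}$,
\begin{align*}
U=\sum_{k,h}\ogk\sqrt{\frac{L}{n_k(\hsahk)+1}}\Sp{\sqrt{\hat{\bbV}^*_{h+1,k}}+\sqrt{\hat{\overline{\bbV}}_{h+1,k}}}\leq\sqrt{\sum_{k,h}\ogk\frac{L}{n_k(\hsahk)+1}}\cdot\sqrt{2\sum_{k,h}\ogk\Sp{\hat{\bbV}^*_{h+1,k}+\hat{\overline{\bbV}}_{h+1,k}}}.
\end{align*}
By Lemma~\ref{lemma:sum1/n} (equivalently \eqref{equ:inequ_2}) the first factor is $\sqrt{\widetilde{O}\Sp{HSA}}$, so it remains to control the total variance $W:=\sum_{k,h}\ogk\Sp{\hat{\bbV}^*_{h+1,k}+\hat{\overline{\bbV}}_{h+1,k}}$ and substitute back.

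The core of the argument is to show $W\leq\widetilde{O}\Sp{HT+H^2U}$ when $T\geq\widetilde{\Omega}\Sp{H^5S^2A}$, which I would do in three moves. \emph{(i)} Replace the empirical transition $\tilde{P}^k$ inside each variance by the true $P$: for $\Norm{V}_\infty\leq 2H$ the gap $\abs{\bbV\Sp{\tilde{P}^k_{\hsahk},V}-\bbV\Sp{P_{\hsahk},V}}$ is controlled by standard concentration and, once summed against $\sum_{k,h}1/(n_k+1)=\widetilde{O}\Sp{HSA}$, contributes only a lower-order term. \emph{(ii)} Compare $\bbV\Sp{P_{\hsahk},V^*_{h+1}}$ and $\bbV\Sp{P_{\hsahk},\ov_{h+1,k}}$ to the on-policy variance $\bbV\Sp{P_{\hsahk},V^{\pik}_{h+1,k}}$ via $\abs{\bbV(P,X)-\bbV(P,Y)}\leq O(H)\inner{P,\abs{X-Y}}$ (every value is $O(H)$ under the clipping); since $\abs{V^*_{h+1}-V^{\pik}_{h+1}}$ and $\abs{\ov_{h+1,k}-V^{\pik}_{h+1}}$ are bounded by the on-policy absolute errors $\abs{\od^{\pik}},\abs{\upd^{\pik}},\abs{\ud^{\pik}}$ of Section~\ref{sec:regret_decomp}, the total correction is $O(H)\sum_{k,h}\ogk\Sp{\abs{\od^{\pik}_{h,k}}+\abs{\upd^{\pik}_{h,k}}+\abs{\ud^{\pik}_{h,k}}}$. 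Applying the recursion of Lemma~\ref{lem:sum of delta} at every starting timestep $i$ and then summing over $i$ costs an extra factor of $H$, so this sum is $\lesssim H\sum_{k,h}\ogk\sigma^k_\Be(\hsahk)\approx HU$ plus lower-order terms, making the whole correction $O(H^2U)$. \emph{(iii)} For the on-policy variances themselves, apply the law of total variance as in \cite{azar2017minimax}: along each episode's trajectory $\sum_{h=1}^{H-1}\bbV\Sp{P_{\hsahk},V^{\pik}_{h+1,k}}$ has expectation equal to the variance of the total return, which is at most $H^2$; an Azuma--Hoeffding bound converts this into a high-probability sum, giving $\sum_{k,h}\ogk\bbV^{\pik}_{h+1,k}\leq\widetilde{O}\Sp{H^2K}=\widetilde{O}\Sp{HT}$.

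Combining these moves yields $W\leq\widetilde{O}\Sp{HT+H^2U}$, and plugging into the Cauchy--Schwarz bound gives
\begin{align*}
U\leq\sqrt{\widetilde{O}\Sp{HSA}\cdot\widetilde{O}\Sp{HT+H^2U}}\leq\widetilde{O}\Sp{H\sqrt{SAT}}+\widetilde{O}\Sp{H^{1.5}\sqrt{SA}}\cdot\sqrt{U}.
\end{align*}
This has the form $U\leq a+b\sqrt{U}$ with $a=\widetilde{O}\Sp{H\sqrt{SAT}}$ and $b=\widetilde{O}\Sp{H^{1.5}\sqrt{SA}}$, whose solution is $U\leq\widetilde{O}\Sp{a+b^2}=\widetilde{O}\Sp{H\sqrt{SAT}+H^3SA}$; the second term is dominated once $T\geq\widetilde{\Omega}\Sp{H^5S^2A}$, giving $U\leq\widetilde{O}\Sp{H\sqrt{SAT}}$ as claimed.

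The main obstacle is move \emph{(ii)}. In \cite{azar2017minimax} deterministic optimism gives $V^*\leq\ov$ pointwise, so $\bbV\Sp{P,\ov}-\bbV\Sp{P,V^*}$ can be bounded by a single one-sided telescoping sum; here $\ov_{h,k}$ is only optimistic, and separately only pessimistic, with constant probability, so I must route everything through the absolute deviations $\abs{\ov-V^{\pik}}$, $\abs{\upv-V^{\pik}}$, $\abs{\uv-V^{\pik}}$ and feed them into the recursion of Lemma~\ref{lem:sum of delta}. The delicate points are tracking the clipping events $\mathcal{E}^{cum}_{h,k}$ and the constant-probability factor $C_1$ through the variance comparisons, and verifying that the resulting bound is genuinely linear in $U$ (contributing only an $H^2U$ term) so that the self-bounding step in the final paragraph actually closes.
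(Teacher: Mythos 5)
Your proposal follows essentially the same route as the paper's proof: Cauchy--Schwarz against $\sum_{k,h}1/(n_k+1)=\widetilde{O}(HSA)$, then bounding the total empirical variance by the on-policy variance via the law of total variance (the paper's Lemma~\ref{lemma:sum_vpi}) plus correction terms routed through the absolute deviations and the recursion of Lemma~\ref{lem:sum of delta} (the paper's Lemmas~\ref{lemma:vhat_vstar}, \ref{lemma:sumvstar_vpi}, \ref{lemma:sumvbar_vpi} and Eqn.~\eqref{equ:sum_odeltas}), and finally the self-bounding inequality $U\leq a+b\sqrt{U}$. Your bookkeeping is correct --- indeed you retain the $\sqrt{SA}$ factor in $b$ that the paper's final display drops, and the residual $b^2=\widetilde{O}(H^3SA)$ is absorbed under the stated condition on $T$.
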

\begin{proof}
	First, we have
	\fontsize{9}{9}
	\begin{align}
		\sum_{k=1}^{K}\sum_{h=1}^{H-1}\ogk U_{h, k}&\leq\sum_{k=1}^{K}\sum_{h=1}^{H-1}\ogk\sqrt{\frac{\log\Sp{2HSAK^2}}{n_k(\hsahk)+1}}\Sp{\sqrt{\hat{\bbV}^*_{h+1, k}}+\sqrt{\hat{\overline{\bbV}}_{h+1, k}}}\nonumber\\
		&\leq\sum_{k=1}^{K}\sum_{h=1}^{H-1}\ogk\sqrt{\frac{\log\Sp{2HSAK^2}}{n_k(\hsahk)+1}}\cdot \sqrt{2}\sqrt{\hat{\bbV}^*_{h+1, k}+\hat{\overline{\bbV}}_{h+1, k}}\tag{Since $\sqrt{a}+\sqrt{b}\leq\sqrt{2(a+b)}$ for $a, b\geq 0$}\\
		&\leq\sqrt{2}\sqrt{\Sp{\sum_{k=1}^{K}\sum_{h=1}^{H-1}\frac{\log\Sp{2HSAK^2}}{n_k(\hsahk)+1}}\Sp{\sum_{k=1}^{K}\sum_{h=1}^{H-1}\ogk\Sp{\hat{\bbV}^*_{h+1, k}+\hat{\overline{\bbV}}_{h+1, k}}}}\tag{By Cauchy-Schwartz inequality}\\
		&\leq\sqrt{\widetilde{O}\Sp{HSA}\Sp{\sum_{k=1}^{K}\sum_{h=1}^{H-1}\ogk\Sp{\hat{\bbV}^*_{h+1, k}+\hat{\overline{\bbV}}_{h+1, k}}}},\label{equ:sum_U}
	\end{align}
	\normalsize
	where the last inequality above applies Lemma \ref{lemma:sum1/n}.
	
	We will then bound the two sums of variance separately. Specifically, by applying Lemma \ref{lemma:sum_vpi} and Lemma \ref{lemma:sumvstar_vpi}, we have with probability at least $1-\delta/3$,
	\begin{align}
		&\sum_{k=1}^{K}\sum_{h=1}^{H-1}\ogk\hat{\bbV}^*_{h+1, k}\\=&\frac{3}{2}\sum_{k=1}^{K}\sum_{h=1}^{H-1}\ogk\bbV^{\pik}_{h+1, k}+\sum_{k=1}^{K}\sum_{h=1}^{H-1}\ogk\Sp{\hat{\bbV}^*_{h+1, k}-\frac{3}{2}\bbV^{\pik}_{h+1, k}}\nonumber\\
		\leq&\widetilde{O}\Sp{HT+H^2\sqrt{T}+H^3+H^3S^2A+H\sum_{k=1}^{K}\sum_{h=1}^{H-1}\ogk\delta^{\pi^k}_{h+1, k}(s_{h+1}^k)}.\label{equ:sum_vhatstar}
	\end{align}
	By similarly applying Lemma \ref{lemma:sum_vpi} and Lemma \ref{lemma:sumvbar_vpi}, we have with probability at least $1-\delta/3$,
	\begin{align}
		&\sumogk\hat{\overline{\bbV}}_{h+1, k}\\=&\frac{3}{2}\sumogk\bbV^\pik_{h+1, k}+\sumogk\Sp{\hat{\overline{\bbV}}_{h+1, k}-\frac{3}{2}\bbV^\pik_{h+1, k}}\nonumber\\
		\leq&\widetilde{O}\Sp{HT+H^2\sqrt{T}+H^3+H^3S^2A+H\sum_{k=1}^{K}\sum_{h=1}^{H-1}\ogk\abs{\overline{\delta}^{\pi^k}_{h+1, k}(s_{h+1}^k)}},\label{equ:sum_vhatbar}
	\end{align}
	By combining equations \eqref{equ:sum_vhatstar} and \eqref{equ:sum_vhatbar}, we have
	\begin{align}
		&\sum_{k=1}^{K}\sum_{h=1}^{H-1}\ogk\Sp{\hat{\bbV}^*_{h+1, k}+\hat{\overline{\bbV}}_{h+1, k}}\nonumber\\
		\leq&\widetilde{O}\Sp{HT+H^2\sqrt{T}+H^3S^2A+H\sumogk\Sp{\delta^\pik_{h+1, k}(s_{h+1}^k)+\abs{\od^\pik_{h+1, k}(s_{h+1}^k)}}}.\label{equ:sum_deltas}
	\end{align}

	Then, by referring to definitions of $\sqrt{e^k_\Be(\hsahk)}$ and $\gamma^k_\Be(\hsahk)$, with probability at least $1-\delta/3$, we have
	\begin{align}
	    &\sumogk\Sp{\delta^\pik_{h+1, k}(s_{h+1}^k)+\abs{\od^\pik_{h+1, k}(s_{h+1}^k)}}\nonumber\\
	    \leq&\sum_{h=1}^{H-1}\Sp{\sum_{k=1}^{K}\ogk \Sp{\abs{\delta^\pik_{h+1, k}(s_{h+1}^k)}+\abs{\od^\pik_{h+1, k}(s_{h+1}^k)}}}\nonumber\\
	    \leq & \ogk 3C_1e^{3C_1}H\sum_{k=1}^{K}\sum_{h=1}^H \Sp{\sqrt{e_\ty^k(\hsahk)}+ \gamma_\ty^k(\hsahk)}+\widetilde{O}\Sp{H^5S^2A+H^2\sqrt{T}}\tag{By referring to the proof of Lemma \ref{lem:reg decomp}}\\
	    \leq &\widetilde{O}\Sp{H^5S^2A+H^2\sqrt{T}+\sqrt{H^3SAT}+H\sumogk U_{h, k}}\tag{By Lemma \ref{lemma:sumsqrt1/n} and \ref{lemma:sum1/n}}\\
	    \leq &\widetilde{O}\Sp{H^5S^2A+H^2\sqrt{SAT}+H\sumogk U_{h, k}}.\label{equ:sum_odeltas}
	\end{align}
	By plugging equation \eqref{equ:sum_odeltas} into equation \eqref{equ:sum_deltas}, we can have
	\begin{align}
		&\sum_{k=1}^{K}\sum_{h=1}^{H-1}\ogk\Sp{\hat{\bbV}^*_{h+1, k}+\hat{\overline{\bbV}}_{h+1, k}}\nonumber\\
		\leq&\widetilde{O}\Sp{HT+H^2\sqrt{T}+H^3S^2A+H^6S^2A+H^3\sqrt{SAT}+H^2\sumogk U_{h, k}}\nonumber\\
		\leq&\widetilde{O}\Sp{HT+H^3\sqrt{SAT}+H^6S^2A+H^2\sumogk U_{h, k}}\nonumber\\
		\leq&\widetilde{O}\Sp{HT+H^2\sumogk U_{h, k}}\tag{When $T\geq\Omega\Sp{H^5S^2A}$}
	\end{align}
	
	Now, by plugging the above result into equation \eqref{equ:sum_U}, when $T\geq\Omega\Sp{H^5S^2A}$, it holds that
	\begin{align*}
	    \sum_{k=1}^{K}\sum_{h=1}^{H-1}\ogk U_{h, k}&\leq\sqrt{\widetilde{O}\Sp{HSA\Sp{HT+H^2\sumogk U_{h, k}}}}\\
	    &\leq\widetilde{O}\Sp{H\sqrt{SAT}+H^{1.5}\sqrt{\sumogk U_{h, k}}}.
	\end{align*}
	It is easy to check that the above inequality implies $\sumogk U_{h, k}\leq\widetilde{O}\Sp{H\sqrt{SAT}}$ and thus the proof is complete.
\end{proof}

\subsection{Auxiliary Lemmas}
\label{sec:aux_lemma}

The lemmas used for proving Lemma \ref{theo:variance_sum} are presented as the following.

\begin{lemma}[Lemma 8 in \cite{azar2017minimax}]
	\label{lemma:sum_vpi}
	For any $\delta > 0$, with probability at least $1-\delta$, it holds that
	$$\sum_{k=1}^{K}\sum_{h=1}^{H-1}\mathds{1}\Sp{\mathcal{G}_k}\mathbb{V}^{\pi^k}_{h+1, k}\leq\Tilde{O}\Sp{HT+H^2\sqrt{T}+H^3}.$$
\end{lemma}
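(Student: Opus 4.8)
The plan is to combine the \emph{law of total variance} with a Bernstein/Freedman-type martingale concentration across episodes. Since $\bbV^{\pik}_{h+1,k}\geq 0$ and $\mathds{1}\Bp{\mathcal{G}_k}\leq 1$, it suffices to bound the unconditional sum $\sum_{k=1}^K W_k$, where I set $W_k:=\sum_{h=1}^{H-1}\bbV\Sp{P_{h,s_h^k,a_h^k}, V^{\pik}_{h+1,k}}$. The crucial observation is that, once we condition on $\mathcal{H}_H^{k-1}$ and $\hat{z}_k$, the policy $\pik$ is a \emph{fixed} deterministic policy and the trajectory of episode $k$ is an honest run of $\pik$ on the true MDP $M$; hence the per-episode quantity $W_k$ is exactly the object controlled by the law of total variance.

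First I would establish the total-variance identity. Fix $k$ and condition on $\mathcal{H}_H^{k-1}$ and $\hat{z}_k$, so $\pik$ is fixed. Let $G^k:=\sum_{h=1}^{H} R_{h,s_h^k,a_h^k}$ be the mean return along the episode and let $\mathcal{F}_h=\sigma\Sp{s_1^k,\dots,s_h^k}$. Using the Bellman identity $V^{\pik}_h(s_h^k)=R_{h,s_h^k,a_h^k}+\inner{P_{h,s_h^k,a_h^k}, V^{\pik}_{h+1}}$, the conditional expectations $\E\Mp{G^k\mid\mathcal{F}_h}$ form a martingale whose increments are exactly $M_h=V^{\pik}_{h+1}(s_{h+1}^k)-\inner{P_{h,s_h^k,a_h^k}, V^{\pik}_{h+1}}$, and $\E\Mp{M_h^2\mid\mathcal{F}_h}=\bbV\Sp{P_{h,s_h^k,a_h^k}, V^{\pik}_{h+1}}$. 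Orthogonality of martingale increments together with $G^k\in[0,H]$ then yields
\[
\E\Mp{W_k \mid \mathcal{H}_H^{k-1},\hat{z}_k} = \var\Sp{G^k \mid \mathcal{H}_H^{k-1},\hat{z}_k}\leq H^2 .
\]
Summing over episodes gives $\sum_{k=1}^K\E\Mp{W_k\mid\mathcal{H}_H^{k-1},\hat{z}_k}\leq KH^2=HT$.

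Then I would upgrade this expectation bound to a high-probability bound via Freedman's inequality applied to the martingale difference sequence $D_k=W_k-\E\Mp{W_k\mid\mathcal{H}_H^{k-1},\hat{z}_k}$ over $k=1,\dots,K$ (each $D_k$ is measurable at the end of episode $k$ and has zero conditional mean given the information preceding the episode-$k$ trajectory, which includes $\hat{z}_k$). Since $\Norm{V^{\pik}_{h+1}}_\infty\leq H$, each one-step variance is at most $H^2$, so $0\leq W_k\leq H^3$ and $\abs{D_k}\leq H^3$. For the predictable quadratic variation I use $\E\Mp{D_k^2\mid\cdot}\leq\E\Mp{W_k^2\mid\cdot}\leq H^3\,\E\Mp{W_k\mid\cdot}$, hence $\sum_k\E\Mp{D_k^2\mid\cdot}\leq H^3\cdot HT=H^4T$. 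Freedman's inequality then gives, with probability at least $1-\delta$, that $\sum_k D_k\leq\widetilde{O}\Sp{\sqrt{H^4T}+H^3}=\widetilde{O}\Sp{H^2\sqrt{T}+H^3}$. Adding back $\sum_k\E\Mp{W_k\mid\cdot}\leq HT$ and using $\mathds{1}\Bp{\mathcal{G}_k}\bbV^{\pik}_{h+1,k}\leq W_k$ term by term yields $\sum_{k=1}^K\sum_{h=1}^{H-1}\mathds{1}\Bp{\mathcal{G}_k}\bbV^{\pik}_{h+1,k}\leq\widetilde{O}\Sp{HT+H^2\sqrt{T}+H^3}$.

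The main obstacle is getting the exponent of $H$ right in the fluctuation term. A crude Azuma--Hoeffding bound using only the range $\abs{D_k}\leq H^3$ would contribute $\widetilde{O}\Sp{H^3\sqrt{K}}=\widetilde{O}\Sp{H^{2.5}\sqrt{T}}$, which is larger than the claimed $\widetilde{O}\Sp{H^2\sqrt{T}}$; it is precisely the variance bound $\sum_k\E\Mp{D_k^2\mid\cdot}\leq H^4T$ fed into a Bernstein/Freedman inequality that sharpens this term, and that variance bound itself depends on the total-variance identity $\E\Mp{W_k\mid\cdot}\leq H^2$. A secondary technical care point is the choice of conditioning: the $\sigma$-algebra must include $\hat{z}_k$ so that $\pik$ is frozen (making the law of total variance applicable within the episode), while the sequence indexed by $k$ must still be a martingale difference across episodes.
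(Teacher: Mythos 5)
Your proof is correct, and it is essentially the argument behind the cited result: the paper itself gives no proof of this lemma, deferring entirely to Lemma 8 of \cite{azar2017minimax}, whose proof is exactly your combination of the law of total variance (giving $\E\Mp{W_k\mid\mathcal{H}_H^{k-1},\hat z_k}\leq H^2$ per episode, hence $HT$ in total) with a Freedman-type martingale inequality to obtain the $\widetilde{O}\Sp{H^2\sqrt{T}+H^3}$ fluctuation term. Your two care points — conditioning on $\hat z_k$ so that $\pi^k$ is frozen within the episode, and using the variance-based (rather than range-based) concentration to avoid the lossy $H^{2.5}\sqrt{T}$ term — are precisely the right ones.
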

\begin{lemma}
	\label{lemma:vhat_vstar}
	For any $\delta > 0$, with probability at least $1-\delta$, for any $k\in\Mp{K}$, $h\in\Mp{H}$, it holds that
	$$\hat{\mathbb{V}}^*_{h+1, k}\leq\frac{3}{2}\mathbb{V}^*_{h+1, k}+\frac{2H^2S\log\Sp{2HS^2AK/\delta}}{n_k\Sp{h, s_h^k, a_h^k}},$$
	$$\hat{\overline{\bbV}}_{h+1, k}\leq\frac{3}{2}\overline{\bbV}_{h+1, k}+\frac{2H^2S\log\Sp{2HS^2AK/\delta}}{n_k\Sp{h, s_h^k, a_h^k}}.$$
\end{lemma}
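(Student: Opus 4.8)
The plan is to reduce both inequalities to a single deterministic estimate that holds as soon as a per-coordinate concentration bound on the transition estimate is in force. Write $V$ for either $V^*_{h+1}$ or $\ov_{h+1,k}$, and abbreviate $n = n_k\Sp{\hsahk}$, $P = P_{h,s_h^k,a_h^k}$, $\tilde P = \tilde P^k_{h,s_h^k,a_h^k}$, $\mu = \inner{P, V}$ (the case $n=0$ is trivial since the additive term is infinite). The starting point is the algebraic identity
$$\bbV\Sp{\tilde P, V} = \inner{\tilde P, \Sp{V - \mu}^2} - \Sp{\inner{\tilde P, V} - \mu}^2 \leq \inner{\tilde P, \Sp{V-\mu}^2},$$
which one checks by expanding around $\inner{\tilde P, V}$. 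Setting $g = \Sp{V - \mu}^2 \geq 0$ and splitting off the true distribution gives $\bbV\Sp{\tilde P, V} \leq \bbV\Sp{P, V} + \inner{\tilde P - P, g}$, since $\inner{P, g} = \bbV\Sp{P, V}$. Thus it suffices to prove the cross-term bound $\inner{\tilde P - P, g} \leq \tfrac12\bbV\Sp{P, V} + \tfrac{2H^2SL}{n}$, which upgrades the trivial factor $1$ in front of $\bbV\Sp{P,V}$ to the claimed $\tfrac32$.

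For the cross term I would invoke the per-coordinate transition concentration (Lemma \ref{lemma:p_bound}), giving uniformly over all $(k,h,s,a,s')$ with probability $1-\delta$ a bound of the form $\abs{\tilde P(s') - P(s')} \lesssim \sqrt{P(s')L/n} + L/n$ with $L = \log\Sp{2HS^2AK/\delta}$ — this per-next-state union bound is exactly what produces the $S^2$ inside the logarithm. Because $g \geq 0$,
$$\inner{\tilde P - P, g} \leq \sum_{s'}\Sp{\sqrt{\frac{2P(s')L}{n}} + \frac{L}{n}}g(s').$$
The second piece is handled by $\sum_{s'}g(s') \leq SH^2$, contributing $\tfrac{SH^2L}{n}$. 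For the leading piece I would use Cauchy--Schwarz as $\sum_{s'}\sqrt{P(s')}\,g(s') = \sum_{s'}\sqrt{P(s')g(s')}\sqrt{g(s')} \leq \sqrt{\inner{P, g}}\,\sqrt{\sum_{s'}g(s')} \leq \sqrt{\bbV\Sp{P, V}}\cdot H\sqrt{S}$, and then $ab\leq\tfrac12 a^2 + \tfrac12 b^2$ with $a = \sqrt{\bbV\Sp{P,V}}$, $b = H\sqrt{2SL/n}$ absorbs the $1/\sqrt n$ term into $\tfrac12\bbV\Sp{P,V}$ at the cost of an additive $H^2SL/n$. Collecting the two additive contributions yields $\inner{\tilde P - P, g}\leq \tfrac12\bbV\Sp{P,V}+\tfrac{2H^2SL}{n}$, hence the claim.

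The key point, and the reason what looks like the main obstacle dissolves, is that the only probabilistic input is the bound on $\abs{\tilde P(s') - P(s')}$, which is a statement purely about the transition estimate and \emph{does not involve $V$}; everything downstream is deterministic and valid for an arbitrary fixed value function. Consequently the identical chain applies verbatim to the data- and noise-dependent $\ov_{h+1,k}$, even though one cannot concentrate $\bbV\Sp{\tilde P, \ov_{h+1,k}}$ directly (since $\ov_{h+1,k}$ is correlated with the very samples defining $\tilde P$). The only remaining care is with constants: one uses $\Norm{V^*_{h+1}}_\infty\leq H$ in the first inequality and $\Norm{\ov_{h+1,k}}_\infty\leq 2H$ (from the clipping at threshold $2(H-h+1)$) in the second, absorbing the resulting factors into the stated $2H^2$, and one allocates $\delta/\Sp{HS^2AK}$ per coordinate in the union bound so that the per-coordinate event holds simultaneously across all $(k,h)$.
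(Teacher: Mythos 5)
Your proof is correct and follows essentially the same route as the paper's: a per-next-state Bennet bound on $\tilde{P}^k-P$ (union-bounded over all $(k,h,s,a,s')$, which is exactly what puts $S^2$ inside the logarithm), the variance-minimizer step $\bbV\Sp{\tilde{P}^k,V}\leq\inner{\tilde{P}^k,(V-\inner{P,V})^2}$, and the key observation that no concentration of $V$ itself is needed, so the bound applies verbatim to the data-dependent $\ov_{h+1,k}$ --- the paper simply folds your Cauchy--Schwarz/AM--GM step into the pointwise multiplicative form $\tilde{P}^k(s')\leq\frac{3}{2}P(s')+\frac{2L}{n}$, yielding identical constants. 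The one blemish, asserting that the factor coming from $\Norm{\ov_{h+1,k}}_\infty\leq 2H$ can be absorbed into the stated $2H^2$ (it actually inflates the additive term by a constant factor), is shared by the paper's own proof, which handles the second inequality with an unexamined ``similar argument,'' and it affects only the lower-order term.
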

\begin{proof}
	The proof apply some techniques in \cite{zhang2020reinforcement}. Fix some $\delta>0$ and let $L=\log\Sp{2HS^2AK/\delta}$ for simplicity. First, by Lemma \ref{lemma:bennet}, for some tuple $\Sp{k, h, s, a, s'}$, we have
	\begin{align*}
		&\P\Sp{\tilde{P}^k_{h, s, a}(s')\geq\frac{3}{2}P_{h, s, a}(s')+\frac{2L}{n_k\Sp{h, s, a}}}\\
		\leq&\P\Sp{\tilde{P}^k_{h, s, a}(s')-P_{h, s, a}(s')\geq\sqrt{\frac{2P_{h, s, a}(s')L}{n_k\Sp{h, s, a}}}+\frac{L}{n_k\Sp{h, s, a}}}\tag{Since $a+b\geq 2\sqrt{ab}$ for $a, b\geq 0$}\\
		\leq&\frac{\delta}{HS^2AK}.
	\end{align*}
	Then, a union bound says that its complement holds for any $\Sp{k, h, s, a, s'}$ with probability at least $1-\delta$. Thus, we have
	\begin{align*}
		\hat{\bbV}^*_{h+1, k}&=\sum_{s'\in\S}\tilde{P}^k_{h, s_h^k, a_h^k}(s')\Sp{V^*_{h+1}(s')-\inner{\tilde{P}^k_{h, s_h^k, a_h^k}, V^*_{h+1}}}^2\\
		&\leq \sum_{s'\in\S}\tilde{P}^k_{h, s_h^k, a_h^k}(s')\Sp{V^*_{h+1}(s')-\inner{P_{h, s_h^k, a_h^k}, V^*_{h+1}}}^2\tag{Since $\E\Mp{X}$ is the minimizer of $\min_x\E\Mp{\Sp{X-x}^2}$}\\
		&\leq \sum_{s'\in\S}\Sp{\frac{3}{2}P_{h, s, a}(s')+\frac{2L}{n_k(h, s, a)}}\Sp{V^*_{h+1}(s')-\inner{P_{h, s_h^k, a_h^k}, V^*_{h+1}}}^2\\
		&\leq\frac{3}{2}\bbV^*_{h+1, k}+\frac{2H^2S\log\Sp{2HS^2AK/\delta}}{n_k(h, s_h^k, a_h^k)}.
	\end{align*}
	For $\hat{\overline{\bbV}}_{h+1, k}$, we just need to follow a similar argument and thus the proof is complete.
\end{proof}
\begin{lemma}
	\label{lemma:sumvstar_vpi}
	For any $\delta > 0$, with probability at least $1-\delta$, it holds that 
	\fontsize{9}{9}
	$$\sum_{k=1}^{K}\sum_{h=1}^{H-1}\mathds{1}\Bp{\mathcal{G}_k}\Sp{\hat{\bbV}^*_{h+1, k}-\frac{3}{2}\bbV^{\pi^k}_{h+1, k}}\leq\widetilde{O}\Sp{H\sum_{k=1}^{K}\sum_{h=1}^{H-1}\mathds{1}\Bp{\mathcal{G}_k}\delta^{\pi^k}_{h+1, k}(s_{h+1}^k)+H^2\sqrt{T}+H^3S^2A}.$$
	\normalsize
\end{lemma}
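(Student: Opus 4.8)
The plan is to reduce the empirical variance $\hat{\bbV}^*_{h+1,k}$ to the \emph{true} variance of the optimal value function, then compare the latter against $\bbV^{\pi^k}_{h+1,k}$ using the pointwise domination $V^*_{h+1}\geq V^{\pi^k}_{h+1}$, and finally pass from a population expectation under $P$ to the realized next state via a martingale concentration argument. The three resulting pieces will line up exactly with the $H\sum\delta$, $H^2\sqrt{T}$, and $H^3S^2A$ terms in the statement.

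First I would invoke Lemma \ref{lemma:vhat_vstar}, which with probability at least $1-\delta/2$ gives $\hat{\bbV}^*_{h+1,k}\leq\frac{3}{2}\bbV^*_{h+1,k}+\frac{2H^2SL'}{n_k(\hsahk)}$ for all $(k,h)$, where $L'=\log\Sp{2HS^2AK/\delta}$ and $\bbV^*_{h+1,k}=\bbV\Sp{P_{h,s_h^k,a_h^k},V^*_{h+1}}$. Hence $\hat{\bbV}^*_{h+1,k}-\frac{3}{2}\bbV^{\pi^k}_{h+1,k}\leq\frac{3}{2}\Sp{\bbV^*_{h+1,k}-\bbV^{\pi^k}_{h+1,k}}+\frac{2H^2SL'}{n_k(\hsahk)}$. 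When $n_k(\hsahk)=0$ we have $\tilde{P}^k_{\hsahk}=\ve{0}$, so $\hat{\bbV}^*_{h+1,k}=0$ and the left-hand summand is nonpositive; I would therefore drop such terms when upper bounding and restrict to $n_k(\hsahk)\geq 1$. Using $n_k\geq (n_k+1)/2$ and Lemma \ref{lemma:sum1/n} then yields $\sum_{k,h}\frac{2H^2SL'}{n_k(\hsahk)}\leq\widetilde{O}\Sp{H^3S^2A}$.

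The key algebraic step is to bound $\bbV^*_{h+1,k}-\bbV^{\pi^k}_{h+1,k}$. Writing $\delta=\delta^\pik_{h+1}=V^*_{h+1}-V^\pik_{h+1}\geq 0$ and $P=P_{h,s_h^k,a_h^k}$ and expanding each variance as $\inner{P,(\cdot)^2}-\inner{P,\cdot}^2$, one obtains
\[
\bbV^*_{h+1,k}-\bbV^{\pi^k}_{h+1,k}=\inner{P,\delta\Sp{V^*_{h+1}+V^\pik_{h+1}}}-\inner{P,\delta}\Sp{\inner{P,V^*_{h+1}}+\inner{P,V^\pik_{h+1}}}.
\]
Both terms on the right are nonnegative, and since $0\leq V^*_{h+1}+V^\pik_{h+1}\leq 2H$ with $\delta\geq 0$, dropping the subtracted term gives $\bbV^*_{h+1,k}-\bbV^{\pi^k}_{h+1,k}\leq 2H\inner{P,\delta^\pik_{h+1}}$. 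This uses only the deterministic optimal-value domination, so no probabilistic event is consumed here.

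Finally I would convert the population expectation into the realized suboptimality. Conditioned on the filtration $\oh_h^k$, the increment $\inner{P_{h,s_h^k,a_h^k},\delta^\pik_{h+1}}-\delta^\pik_{h+1}(s_{h+1}^k)$ has mean zero, because $s_{h+1}^k\sim P_{h,s_h^k,a_h^k}$ and $\ogk$ is $\oh_h^k$-measurable; thus $\sum_{k,h}3H\ogk\Sp{\inner{P_{h,s_h^k,a_h^k},\delta^\pik_{h+1}}-\delta^\pik_{h+1}(s_{h+1}^k)}$ is a martingale with increments bounded by $3H^2$, and Azuma--Hoeffding bounds it by $\widetilde{O}\Sp{H^2\sqrt{T}}$ with probability at least $1-\delta/2$. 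Collecting the residual sum, the variance-difference bound $3H\sum_{k,h}\ogk\delta^\pik_{h+1}(s_{h+1}^k)$, and this martingale term, and taking a union bound, yields the claimed inequality. The main obstacle is the variance-difference identity of the third paragraph; everything else is routine summation and a standard martingale bound. A minor care point is that the lemma sums only against $\ogk$ (without the clipping event $\mathcal{E}^{cum}_{h,k}$), so the $1/n_k$ residual must be controlled directly by Lemma \ref{lemma:sum1/n} rather than through the threshold $\alpha_k$.
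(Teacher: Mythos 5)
Your proposal is correct and follows essentially the same route as the paper's proof: reduce $\hat{\bbV}^*_{h+1,k}$ to $\frac{3}{2}\bbV^*_{h+1,k}$ plus an $\widetilde{O}\Sp{H^3S^2A}$ residual via Lemma \ref{lemma:vhat_vstar} and Lemma \ref{lemma:sum1/n}, bound $\bbV^*_{h+1,k}-\bbV^{\pi^k}_{h+1,k}\leq 2H\inner{P,\delta^{\pik}_{h+1}}$ using $V^{\pik}_{h+1}\leq V^*_{h+1}$ (your explicit variance-difference identity is the same algebra as the paper's second-moment comparison), and convert $\inner{P,\delta^{\pik}_{h+1}}$ to $\delta^{\pik}_{h+1,k}(s_{h+1}^k)$ by an Azuma--Hoeffding martingale bound, which is exactly what the paper's Lemma \ref{lem:martingale_bd} encapsulates. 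Your extra care with the $n_k(\hsahk)=0$ case and the measurability of $\ogk$ are sound refinements of details the paper leaves implicit, not a different argument.
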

\begin{proof}
	We begin by applying Lemma \ref{lemma:vhat_vstar}. Thus, with probability at least $1-\frac{\delta}{2}$, we have
	\begin{align*}
		&\sum_{k=1}^{K}\sum_{h=1}^{H-1}\mathds{1}\Bp{\mathcal{G}_k}\Sp{\hat{\bbV}^*_{h+1, k}-\frac{3}{2}\bbV^{\pi^k}_{h+1, k}}\\
		\leq&\sum_{k=1}^{K}\sum_{h=1}^{H-1}\ogk\Sp{\frac{3}{2}\bbV^*_{h+1, k}-\frac{3}{2}\bbV^{\pi^k}_{h+1, k}}+\frac{4H^2S\log\Sp{4HS^2AK/\delta}}{3n_k(h, s_h^k, a_h^k)}\tag{By Lemma \ref{lemma:vhat_vstar}}\\
		\leq&\sum_{k=1}^{K}\sum_{h=1}^{H-1}\ogk\Sp{\frac{3}{2}\bbV^*_{h+1, k}-\frac{3}{2}\bbV^{\pi^k}_{h+1, k}}+\widetilde{O}\Sp{H^3S^2A}\tag{By Lemma \ref{lemma:sum1/n}}\\
		\leq&\sum_{k=1}^{K}\sum_{h=1}^{H-1}\ogk\frac{3}{2}\E_{s'\sim P_{h, s_h^k, a_h^k}}\Mp{\Sp{V^*_{h+1}(s')}^2-\Sp{V^{\pi^k}_{h+1, k}(s')}^2}+\widetilde{O}\Sp{H^3S^2A}\tag{Since $V^{\pi^k}_{h+1, k}\leq V^*_{h+1}$}\\
		\leq&3H\sum_{k=1}^{K}\sum_{h=1}^{H-1}\ogk\E_{s'\sim P_{h, s_h^k, a_h^k}}\Mp{\delta^{\pi^k}_{h+1, k}(s')}+\widetilde{O}\Sp{H^3S^2A}\tag{Since $V^{\pi^k}_{h+1, k}\leq V^*_{h+1}\leq H$ and $a^2-b^2=(a+b)(a-b)$}\\
		\leq&\widetilde{O}\Sp{H\sum_{k=1}^{K}\sum_{h=1}^{H-1}\mathds{1}\Bp{\mathcal{G}_k}\delta^{\pi^k}_{h+1, k}(s_{h+1}^k)+H^2\sqrt{T}+H^3S^2A}. \tag{By Lemma \ref{lem:martingale_bd}}
	\end{align*}
	The last line above holds because by Lemma \ref{lem:martingale_bd}, with probability at least $1-\frac{\delta}{2}$, we have
	$$\abs{\sum_{k=1}^{K}\sum_{h=1}^{H-1}\ogk\Sp{\E_{s'\sim P_{h, s_h^k, a_h^k}}\Mp{\delta^{\pi^k}_{h+1, k}(s')}-\delta^{\pi^k}_{h+1, k}(s_{h+1}^k)}}\leq\widetilde{O}\Sp{H\sqrt{T}}.$$
\end{proof}
\begin{lemma}
	\label{lemma:sumvbar_vpi}
	For any $\delta > 0$, w ith probability at least $1-\delta$, it hold that
	\fontsize{9}{9}
	\begin{align*}
		\sum_{k=1}^{K}\sum_{h=1}^{H-1}\ogk\Sp{\hat{\overline{\bbV}}_{h+1, k}-\frac{3}{2}\bbV^{\pi^k}_{h+1, k}}&\leq \widetilde{O}\Sp{H\sum_{k=1}^{K}\sum_{h=1}^{H-1}\ogk\abs{\overline{\delta}^{\pi^k}_{h+1, k}(s_{h+1}^k)}+H^2\sqrt{T}+H^3S^2A}.
	\end{align*}
	\normalsize
\end{lemma}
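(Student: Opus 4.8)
The plan is to follow the same three-stage template as the proof of Lemma~\ref{lemma:sumvstar_vpi}, replacing $V^*$ by $\overline{V}$ throughout, while carefully accounting for the one place where the argument genuinely changes: $\overline{V}_{h+1,k}$ is \emph{not} comparable to $V^{\pik}_{h+1,k}$ in any fixed direction, so the favorable cancellation that produced the signed $\delta^{\pik}$ in Lemma~\ref{lemma:sumvstar_vpi} is unavailable and we are forced to keep the absolute value $\abs{\od^\pik_{h+1,k}}$.

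First I would pass from the empirical variance under $\tilde{P}^k$ to the variance under the true kernel $P$. Applying the second inequality of Lemma~\ref{lemma:vhat_vstar} gives, under $\mathcal{G}_k$,
$$\hat{\overline{\bbV}}_{h+1, k}\leq\frac{3}{2}\overline{\bbV}_{h+1, k}+\frac{2H^2S\log\Sp{2HS^2AK/\delta}}{n_k\Sp{h, s_h^k, a_h^k}},$$
and summing the residual over $(k,h)$ via Lemma~\ref{lemma:sum1/n} contributes only $\widetilde{O}\Sp{H^3S^2A}$. It then remains to bound $\sum_{k,h}\ogk\frac{3}{2}\Sp{\overline{\bbV}_{h+1,k}-\bbV^{\pik}_{h+1,k}}$.

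The main step is to bound the per-term difference $\overline{\bbV}_{h+1,k}-\bbV^{\pik}_{h+1,k}=\bbV\Sp{P_{h,s_h^k,a_h^k},\overline{V}_{h+1,k}}-\bbV\Sp{P_{h,s_h^k,a_h^k},V^{\pik}_{h+1,k}}$ using the identity $\bbV(P,V)=\E_{s'\sim P}\Mp{V(s')^2}-\Sp{\E_{s'\sim P}\Mp{V(s')}}^2$. The second-moment part factors as $\E_{s'\sim P}\Mp{\overline{V}_{h+1,k}(s')^2-V^{\pik}_{h+1,k}(s')^2}$, and since $\Norm{\overline{V}_{h+1,k}}_\infty\leq 2H$ by clipping and $0\leq V^{\pik}_{h+1,k}\leq H$, the factor $\abs{\overline{V}_{h+1,k}+V^{\pik}_{h+1,k}}\leq 3H$, so this part is at most $3H\,\E_{s'\sim P}\Mp{\abs{\od^\pik_{h+1,k}(s')}}$. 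The \emph{crucial} difference from the $V^*$ analysis is the squared-mean part $\Sp{\E_{s'\sim P}\Mp{V^{\pik}_{h+1,k}}}^2-\Sp{\E_{s'\sim P}\Mp{\overline{V}_{h+1,k}}}^2$: in Lemma~\ref{lemma:sumvstar_vpi} the inequality $V^{\pik}\leq V^*$ made the analogous term nonpositive and it could simply be dropped, but here no such ordering holds, so I would instead bound its absolute value by $3H\abs{\E_{s'\sim P}\Mp{V^{\pik}_{h+1,k}-\overline{V}_{h+1,k}}}\leq 3H\,\E_{s'\sim P}\Mp{\abs{\od^\pik_{h+1,k}(s')}}$. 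Combining the two parts, $\overline{\bbV}_{h+1,k}-\bbV^{\pik}_{h+1,k}\leq 6H\,\E_{s'\sim P_{h,s_h^k,a_h^k}}\Mp{\abs{\od^\pik_{h+1,k}(s')}}$.

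Finally I would replace the one-step expectation over $s'$ by the realized next state $s_{h+1}^k$. The increments $\E_{s'\sim P_{h,s_h^k,a_h^k}}\Mp{\abs{\od^\pik_{h+1,k}(s')}}-\abs{\od^\pik_{h+1,k}(s_{h+1}^k)}$ are exactly the martingale differences $\M_{\abs{\od^\pik_{h+1,k}(s_h^k)}}$ shown to be bounded (by $O(H)$, since $\abs{\od^\pik}\leq 3H$ under $\mathcal{G}_k$) martingale differences in Lemma~\ref{lem:martingale_bd}, so Azuma--Hoeffding yields $\abs{\sum_{k,h}\ogk\Sp{\E_{s'\sim P}\Mp{\abs{\od^\pik_{h+1,k}(s')}}-\abs{\od^\pik_{h+1,k}(s_{h+1}^k)}}}\leq\widetilde{O}\Sp{H\sqrt{T}}$. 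Multiplying this by the leading factor turns it into the $\widetilde{O}\Sp{H^2\sqrt{T}}$ term, and adding back the $\widetilde{O}\Sp{H^3S^2A}$ residual from the first step gives the claimed bound. The only real obstacle is the squared-mean term in the main step: recognizing that, absent the $V^*$-ordering, one must take absolute values there — which is precisely what forces $\abs{\od^\pik_{h+1,k}}$ (rather than $\delta^\pik_{h+1,k}$) into the final estimate and, downstream, necessitates the separate recursion on absolute errors in Lemma~\ref{lem:sum of delta}.
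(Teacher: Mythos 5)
Your proposal is correct and follows essentially the same route as the paper's proof: pass from $\hat{\overline{\bbV}}_{h+1,k}$ to $\overline{\bbV}_{h+1,k}$ via Lemma~\ref{lemma:vhat_vstar} and Lemma~\ref{lemma:sum1/n}, split the variance difference into the second-moment part and the squared-mean part via $a^2-b^2=(a+b)(a-b)$ with the factor bounded by $3H$ (using $\Norm{\overline{V}_{h+1,k}}_\infty\leq 2H$ from clipping), bound both parts by $H\,\E_{s'\sim P}\Mp{\abs{\od^\pik_{h+1,k}(s')}}$ since no ordering between $\overline{V}$ and $V^\pik$ is available, and finish with the martingale argument of Lemma~\ref{lem:martingale_bd} to replace expectations by realized next states at cost $\widetilde{O}\Sp{H^2\sqrt{T}}$. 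The only difference—combining the two parts into a single $6H$ per-term bound before invoking Azuma--Hoeffding, rather than handling the two sums separately as the paper does—is a cosmetic reordering, not a different argument.
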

\begin{proof}
	Similarly, we begin by applying Lemma \ref{lemma:vhat_vstar} and with probability at least $1-\frac{\delta}{3}$, we have
	\begin{align*}
		&\sum_{k=1}^{K}\sum_{h=1}^{H-1}\mathds{1}\Bp{\mathcal{G}_k}\Sp{\hat{\overline{\bbV}}_{h+1, k}-\frac{3}{2}\bbV^{\pi^k}_{h+1, k}}\\
		\leq&\sum_{k=1}^{K}\sum_{h=1}^{H-1}\ogk\Sp{\frac{3}{2}\overline{\bbV}_{h+1, k}-\frac{3}{2}\bbV^{\pi^k}_{h+1, k}}+\frac{4H^2S\log\Sp{6HS^2AK/\delta}}{3n_k(h, s_h^k, a_h^k)}\\
		\leq&\sum_{k=1}^{K}\sum_{h=1}^{H-1}\ogk\Sp{\frac{3}{2}\overline{\bbV}_{h+1, k}-\frac{3}{2}\bbV^{\pi^k}_{h+1, k}}+\widetilde{O}\Sp{H^3S^2A}\tag{By Lemma \ref{lemma:sum1/n}}\\
		=&\frac{3}{2}\underbrace{\sum_{k=1}^{K}\sum_{h=1}^{H-1}\ogk\Sp{\inner{P_{h, s_h^k, a_h^k}, \Sp{\overline{V}_{h+1, k}}^2}-\inner{P_{h, s_h^k, a_h^k}, \Sp{V^{\pi^k}_{h+1, k}}^2}}}_{\text{(a)}}\\
		&\qquad+\frac{3}{2}\underbrace{\sum_{k=1}^{K}\sum_{h=1}^{H-1}\ogk\Sp{\inner{P_{h, s_h^k, a_h^k}, V^{\pi^k}_{h+1, k}}^2-\inner{P_{h, s_h^k, a_h^k}, \overline{V}_{h+1, k}}^2}}_{\text{(b)}}+\widetilde{O}\Sp{H^3S^2A}.\tag{By definition of variance}
	\end{align*}
	We will bound (a) and (b) separately. For term (a), with probability at least $1-\frac{\delta}{3}$, we have
	\begin{align*}
		\text{(a)}&=\sum_{k=1}^{K}\sum_{h=1}^{H-1}\ogk\inner{P_\hsahk, \Sp{\overline{V}_{h+1, k}}^2-\Sp{V^{\pi^k}_{h+1, k}}^2}\\
		&\leq \sumogk\inner{P_\hsahk, \abs{\ov_{h+1, k}-V^\pik_{h+1, k}}\abs{\ov_{h+1, k}+V^\pik_{h+1, k}}}\tag{Since $a^2-b^2=(a+b)(a-b)$}\\
		&\leq 3H\sumogk\inner{P_\hsahk, \abs{\ov_{h+1, k}-V^\pik_{h+1, k}}}\tag{Since $\Norm{\ov_{h+1, k}}_\infty\leq 2H$ under $\mathcal{G}_k$}\\
		&\leq 3H\sum_{k=1}^{K}\sum_{h=1}^{H-1}\ogk\abs{\overline{\delta}^{\pi^k}_{h+1, k}(s_{h+1}^k)}+\widetilde{O}\Sp{H^2\sqrt{T}}.\tag{By Lemma \ref{lem:martingale_bd}}
	\end{align*}
	For term (b), with probability at least $1-\frac{\delta}{3}$, we have
	\begin{align*}
		\text{(b)}&=\sum_{k=1}^{K}\sum_{h=1}^{H-1}\ogk\inner{P_{h, s_h^k, a_h^k}, V^{\pi^k}_{h+1, k}+\overline{V}_{h+1, k}}\inner{P_{h, s_h^k, a_h^k}, V^{\pi^k}_{h+1, k}-\overline{V}_{h+1, k}}\\
		&\leq 3H\sum_{k=1}^{K}\sum_{h=1}^{H-1}\ogk\inner{P_{h, s_h^k, a_h^k}, \abs{V^{\pi^k}_{h+1, k}-\overline{V}_{h+1, k}}}\\
		&\leq 3H\sum_{k=1}^{K}\sum_{h=1}^{H-1}\ogk\abs{\overline{\delta}^{\pi^k}_{h+1, k}(s_{h+1}^k)}+\widetilde{O}\Sp{H^2\sqrt{T}}.\tag{By Lemma \ref{lem:martingale_bd}}
	\end{align*}
	Therefore, in summary, we have with probability at least $1-\delta/2$,
	\fontsize{9}{9}
	\begin{align*}
		\sum_{k=1}^{K}\sum_{h=1}^{H-1}\ogk\Sp{\hat{\overline{\bbV}}_{h+1, k}-\frac{3}{2}\bbV^{\pi^k}_{h+1, k}}&\leq \widetilde{O}\Sp{H\sum_{k=1}^{K}\sum_{h=1}^{H-1}\ogk\abs{\overline{\delta}^{\pi^k}_{h+1, k}(s_{h+1}^k)}+H^2\sqrt{T}+H^3S^2A}.
	\end{align*}
	\normalsize
\end{proof}

\section{Proof of the Main Theorems}
\label{sec:proofmain}

In this section, we state and prove our two main theorems.
\begin{theorem}
	\label{theo:appen_theo1}
	If the Hoeffding-type noise is used, then for any MDP $M=\Sp{H, \S, \A, P, R, s_1}$, for any $\delta > 0$, with probability at least $1-\delta$, Algorithm \ref{algo:cub-rlsvi} satisfies
	$$\mathrm{Reg}(M, K, \mathsf{SSR}_\Ho)\leq\widetilde{O}\Sp{H^{1.5}\sqrt{SAT}+H^4S^2A}.$$
	In particular, when $T\geq\widetilde{\Omega}\Sp{H^5S^3A}$, it holds that $\mathrm{Reg}(M, K, \mathsf{SSR}_\Ho)\leq\widetilde{O}\Sp{H^{1.5}\sqrt{SAT}}$.
\end{theorem}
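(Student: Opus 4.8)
The plan is to treat Lemma~\ref{lem:reg decomp} as a master bound and specialize it to the Hoeffding noise, since all of the analytically demanding work---the constant-probability optimism and pessimism, the absolute-value recursion, the clipping bookkeeping, and the martingale control---has already been absorbed into that lemma. First I would record that the leading constant $3C_1 e^{3C_1}$ is an absolute constant (recall $C_1\approx 10.9$), so it may be folded into the $\widetilde{O}(\cdot)$ notation. It then remains only to evaluate the data-dependent sum $\sum_{k=1}^{K}\sum_{h=1}^{H}\ogk\Sp{\sqrt{e_\Ho^k(\hsahk)}+\gamma_\Ho^k(\hsahk)}$ and add the additive lower-order contribution $\widetilde{O}\Sp{H^4S^2A+H\sqrt{T}}$ supplied by Lemma~\ref{lem:reg decomp}.

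Next I would substitute the Hoeffding confidence width from \eqref{def:confi_width_ho} and the noise quantile $\gamma_\Ho^k(\hsahk)=\sigma_\Ho^k(\hsahk)\sqrt{\log(40k^4)}$ from Definition~\ref{def:E^w_k}. Both quantities are of the form $\widetilde{O}\Sp{H\sqrt{1/(n_k(\hsahk)+1)}+H/(n_k(\hsahk)+1)}$, where the extra $\sqrt{\log(40k^4)}$ appearing in $\gamma_\Ho^k$ is swallowed by the tilde notation. Splitting the sum into its square-root part and its reciprocal part, I would apply Lemma~\ref{lemma:sumsqrt1/n} to the former, giving $\sum_{k,h}H\sqrt{1/(n_k(\hsahk)+1)}\leq\widetilde{O}\Sp{H\cdot\sqrt{HSAT}}=\widetilde{O}\Sp{H^{1.5}\sqrt{SAT}}$, and Lemma~\ref{lemma:sum1/n} to the latter, giving $\sum_{k,h}H/(n_k(\hsahk)+1)\leq\widetilde{O}\Sp{H\cdot HSA}=\widetilde{O}\Sp{H^2SA}$. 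Combining with the additive term yields $\mathrm{Reg}(M,K,\mathsf{SSR}_\Ho)\leq\widetilde{O}\Sp{H^{1.5}\sqrt{SAT}+H^2SA+H^4S^2A+H\sqrt{T}}$. Since $H\sqrt{T}\leq H^{1.5}\sqrt{SAT}$ (because $H^{0.5}\sqrt{SA}\geq 1$) and $H^2SA\leq H^4S^2A$, the two middle terms are dominated and the first claim of the theorem follows.

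For the ``in particular'' clause I would compare the two surviving terms: $H^4S^2A\leq H^{1.5}\sqrt{SAT}$ is, after squaring, equivalent to $T\geq H^5S^3A$, so under $T\geq\widetilde{\Omega}\Sp{H^5S^3A}$ the lower-order term is dominated and $\mathrm{Reg}(M,K,\mathsf{SSR}_\Ho)\leq\widetilde{O}\Sp{H^{1.5}\sqrt{SAT}}$. Because the genuine difficulty is already packaged into Lemma~\ref{lem:reg decomp} and the two elementary summation lemmas, this argument introduces no new inequality; the only points requiring care are purely administrative---verifying that the poly-logarithmic factors ($\log(2HSAk^2)$ inside $\sqrt{e_\Ho^k}$ and the additional $\log(40k^4)$ inside $\gamma_\Ho^k$) are genuinely absorbed by $\widetilde{O}$, and correctly identifying which of the four resulting terms dominate in each regime.
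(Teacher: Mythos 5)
Your proposal is correct and follows essentially the same route as the paper's own proof: both specialize Lemma~\ref{lem:reg decomp} to the Hoeffding case, substitute the explicit forms of $\sqrt{e_\Ho^k}$ and $\gamma_\Ho^k$, and invoke Lemma~\ref{lemma:sumsqrt1/n} and Lemma~\ref{lemma:sum1/n} to obtain $\widetilde{O}\Sp{H^{1.5}\sqrt{SAT}+H^4S^2A}$. Your additional bookkeeping (checking that $H\sqrt{T}$ and $H^2SA$ are dominated, and that $H^4S^2A\leq H^{1.5}\sqrt{SAT}$ squares to $T\geq H^5S^3A$) matches what the paper leaves implicit.
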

\begin{proof}
	By using the result of Lemma \ref{lem:reg decomp}, under Hoeffding-type noise, with probability at least $1-\delta$, we have
	\begin{align*}
	    &\mathrm{Reg}\Sp{M, K, \mathsf{SSR}_\Ho}\\
	    \leq& \ogk 3C_1e^{3C_1}\sum_{k=1}^{K}\sum_{h=i}^H \Sp{\sqrt{e_\Ho^k(\hsahk)}+ \gamma_\Ho^k(\hsahk)}+\widetilde{O}\Sp{H^4S^2A+H\sqrt{T}}\\
	    \leq& 6C_1e^{3C_1}\sum_{k=1}^{K}\sum_{h=1}^{H-1}\Sp{H\sqrt{\frac{\log(2HSAk^2)}{n_k(h,s,a)+1}}+\frac{H}{n_k(h,s,a)+1}}+\widetilde{O}\Sp{H^4S^2A+H\sqrt{T}}\\
	    =&\widetilde{O}(H^{1.5}\sqrt{SAT}+H^4S^2A).
	\end{align*}
	Here, the second inequality is from the definitions of $\sqrt{e_\Ho^k(\hsahk)}$ and $\gamma_\Ho^k(\hsahk)$, and the last step is from Lemma \ref{lemma:sumsqrt1/n} and \ref{lemma:sum1/n}.

\end{proof}

\begin{theorem}
	For Bernstein-type noise and $T\geq\widetilde{\Omega}\Sp{H^5S^2A}$, then for any MDP $M=\Sp{H, \S, \A, P, R, s_1}$, for any $\delta > 0$, with probability at least $1-\delta$, Algorithm \ref{algo:cub-rlsvi} satisfies
	$$\mathrm{Reg}(M, K, \mathsf{SSR}_\Be)\leq\widetilde{O}\Sp{H\sqrt{SAT}+H^4S^2A}.$$
	In particular, if we further have $T\geq\widetilde{\Omega}\Sp{H^6S^3A}$, it then holds that $\mathrm{Reg}(M, K, \mathsf{SSR}_\Be)\leq\widetilde{O}\Sp{H\sqrt{SAT}}$.
\end{theorem}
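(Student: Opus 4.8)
The plan is to start from the regret decomposition already established in Lemma~\ref{lem:reg decomp}, which for the Bernstein-type noise gives, with probability at least $1-\delta$,
\[
\mathrm{Reg}\Sp{M, K, \mathsf{\algoname}_\Be}\leq 3C_1e^{3C_1}\sum_{k=1}^{K}\sum_{h=1}^{H}\ogk\Sp{\sqrt{e_\Be^k(\hsahk)}+\gamma_\Be^k(\hsahk)}+\widetilde{O}\Sp{H^4S^2A+H\sqrt{T}},
\]
and to control the leading double sum by splitting each of $\sqrt{e_\Be^k}$ and $\gamma_\Be^k$ into a variance-dependent piece and the remaining lower-order pieces. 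Since $C_1$ is an absolute constant, the prefactor $3C_1e^{3C_1}$ is harmless and will be absorbed throughout.

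First I would isolate the variance terms, which is the crux of the argument. Recall that $\sqrt{e_\Be^k(\hsahk)}$ contains $\sqrt{6\,\bbV(\tilde P^k_{\hsahk},V^*_{h+1})\log(2HSAk^2)/(n_k(\hsahk)+1)}$, which up to the constant $\sqrt{6}$ is exactly $U_{h,k,1}$, while $\gamma_\Be^k(\hsahk)=\sigma^k_\Be(\hsahk)\sqrt{\log(40k^4)}$ contains $\sqrt{16\,\bbV(\tilde P^k_{\hsahk},\overline V_{h+1,k})\log(2HSAk^2)/(n_k(\hsahk)+1)}\cdot\sqrt{\log(40k^4)}$, which up to a constant and a poly-logarithmic factor is $U_{h,k,2}$. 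Hence the sum of the variance-dependent pieces is at most $\widetilde{O}(1)\cdot\sum_{k,h}\ogk U_{h,k}$, which by Lemma~\ref{theo:variance_sum} is $\widetilde{O}(H\sqrt{SAT})$ precisely under the hypothesis $T\geq\widetilde{\Omega}(H^5S^2A)$. I expect this to be the main obstacle in spirit: everything hinges on the self-bounding variance estimate of Lemma~\ref{theo:variance_sum}, and it is exactly here that the Bernstein structure pays off, since a crude bound $\bbV\leq H^2$ would only recover the weaker $H^{1.5}$ rate of the Hoeffding case (Theorem~\ref{theo:appen_theo1}).

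Next I would dispatch the remaining, variance-free pieces. Both $\sqrt{e_\Be^k}$ and $\sigma^k_\Be$ contribute a term of the form $9H\log(2HSAk^2)/(n_k(\hsahk)+1)$ (resp.\ $65H\log(\cdots)/(n_k(\hsahk)+1)$) together with a term $\sqrt{\log(2HSAk^2)/(n_k(\hsahk)+1)}$; the $\gamma$ versions merely multiply these by the poly-logarithmic factor $\sqrt{\log(40k^4)}$. Summing the first type over $(k,h)$ via Lemma~\ref{lemma:sum1/n} gives $\widetilde{O}(H^2SA)$, and summing the second type via Lemma~\ref{lemma:sumsqrt1/n} gives $\widetilde{O}(\sqrt{HSAT})$. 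Both are of strictly lower order, since $\sqrt{HSAT}\leq H\sqrt{SAT}$ and $H^2SA\leq H^4S^2A$.

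Finally I would collect the pieces: combining the variance bound $\widetilde{O}(H\sqrt{SAT})$, the lower-order bounds $\widetilde{O}(\sqrt{HSAT}+H^2SA)$, and the residual $\widetilde{O}(H^4S^2A+H\sqrt{T})$ from Lemma~\ref{lem:reg decomp}, and noting $H\sqrt{T}\leq H\sqrt{SAT}$, yields $\mathrm{Reg}(M,K,\mathsf{\algoname}_\Be)\leq\widetilde{O}(H\sqrt{SAT}+H^4S^2A)$, valid once $T\geq\widetilde{\Omega}(H^5S^2A)$. For the ``in particular'' claim I would simply observe that $H^4S^2A\leq H\sqrt{SAT}$ is equivalent to $T\geq H^6S^3A$, so under the stronger assumption $T\geq\widetilde{\Omega}(H^6S^3A)$ the additive term is absorbed into the leading one, leaving $\widetilde{O}(H\sqrt{SAT})$.
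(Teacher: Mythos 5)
Your proposal is correct and follows essentially the same route as the paper's own proof: both start from Lemma~\ref{lem:reg decomp}, split $\sqrt{e_\Be^k}+\gamma_\Be^k$ into the variance pieces matching $U_{h,k}$ plus lower-order terms, invoke Lemma~\ref{theo:variance_sum} (which is where the $T\geq\widetilde{\Omega}(H^5S^2A)$ hypothesis enters) for the $\widetilde{O}(H\sqrt{SAT})$ variance bound, and clean up the remaining terms with Lemmas~\ref{lemma:sumsqrt1/n} and~\ref{lemma:sum1/n}. Your accounting of the constants, the absorption of $H\sqrt{T}$ and $\sqrt{HSAT}$ into $H\sqrt{SAT}$, and the observation that $H^4S^2A\leq H\sqrt{SAT}$ exactly when $T\geq H^6S^3A$ all match the paper's argument.
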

\begin{proof}
	Similar to the proof of Theorem \ref{theo:appen_theo1}, under Bernstein-type noise, it holds with probability at least $1-\frac{\delta}{2}$ that
	\begin{align*}
	    &\mathrm{Reg}\Sp{M, K, \mathsf{SSR}_\Be}\\
	   \leq & \ogk 3C_1e^{3C_1}\sum_{k=1}^{K}\sum_{h=i}^H \Sp{\sqrt{e_\Be^k(\hsahk)}+ \gamma_\Be^k(\hsahk)}+\widetilde{O}\Sp{H^4S^2A+H\sqrt{T}}\\
	    \leq& \widetilde{O}\Sp{\sum_{k=1}^{K}\sum_{h=1}^{H-1}\Sp{\ogk U_{h,k}+\sqrt{\frac{\log(2HSAk^2)}{n_k(h,s,a)+1}}+\frac{H}{n_k(h,s,a)+1}}}+\widetilde{O}(H^4S^2A+H\sqrt{T})\\
	    =&\widetilde{O}(H\sqrt{SAT}+H^4S^2A),
	\end{align*}
	where the last step is from Lemma \ref{theo:variance_sum}.

\end{proof}

\section{Technical Lemmas}

\begin{lemma}[Bennet's Inequality]
	\label{lemma:bennet}
	Let $Z_1, \dots, Z_n$ be i.i.d. random variables bounded in $\Mp{0, 1}$. Then, for any $\delta >0$, we have
	$$\P\Sp{\abs{\frac{1}{n}\sum_{i=1}^{n}Z_i-\E\Mp{Z}}\geq\sqrt{\frac{2\mathrm{Var}\Sp{Z}\log\Sp{2/\delta}}{n}}+\frac{\log\Sp{2/\delta}}{n}}\leq\delta.$$
\end{lemma}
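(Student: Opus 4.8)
The plan is to reduce the claim to the classical one-sided Bernstein inequality for independent bounded summands and then invert the resulting tail bound. First I would center the variables: set $X_i = Z_i - \E\Mp{Z}$, so the $X_i$ are i.i.d., mean zero, satisfy $\abs{X_i}\leq 1$ (since $Z_i,\E\Mp{Z}\in\Mp{0,1}$), and have common variance $\sigma^2 := \mathrm{Var}\Sp{Z}$. Writing $S_n=\sum_{i=1}^n X_i$, the statement becomes a two-sided deviation bound on $S_n/n$.

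Second, I would invoke Bernstein's inequality (the standard relaxation of Bennett's sharper bound, hence the lemma's name): for mean-zero variables bounded by $1$ in magnitude with aggregate variance $n\sigma^2$, for every $t>0$,
$$\P\Sp{\abs{S_n}\geq t}\leq 2\exp\Sp{-\frac{t^2/2}{n\sigma^2 + t/3}}.$$
If a self-contained derivation were wanted, this follows from a Chernoff argument: bound the moment generating function $\E\Mp{e^{\lambda X_i}}\leq\exp\Sp{\sigma^2(e^\lambda-1-\lambda)}$, multiply over $i$ by independence, apply Markov's inequality, and optimize $\lambda\in(0,3)$. As a textbook fact I would simply cite it.

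Third comes the inversion, which is the only place requiring genuine care. Set the right-hand side equal to $\delta$ and write $L=\log\Sp{2/\delta}$; the critical threshold $t$ then satisfies $t^2/2 = L\Sp{n\sigma^2 + t/3}$, i.e. $t^2 - \tfrac{2L}{3}t - 2Ln\sigma^2 = 0$, whose positive root is $t = \tfrac{L}{3} + \sqrt{\tfrac{L^2}{9} + 2Ln\sigma^2}$. Using subadditivity $\sqrt{a+b}\leq\sqrt a + \sqrt b$, this root is at most $t^\star := \tfrac{2L}{3} + \sqrt{2Ln\sigma^2}$. Since the tail is monotone in $t$ and $t^\star$ exceeds the exact root, $\P\Sp{\abs{S_n}\geq t^\star}\leq\delta$; dividing by $n$ and using $\tfrac{2}{3}\leq 1$ gives $t^\star/n \leq \sqrt{\tfrac{2\sigma^2 L}{n}} + \tfrac{L}{n}$, so the lemma's threshold dominates $t^\star/n$ and therefore $\P\Sp{\abs{\tfrac{1}{n}\sum_{i=1}^n Z_i - \E\Mp{Z}}\geq \sqrt{\tfrac{2\sigma^2 L}{n}}+\tfrac{L}{n}}\leq\delta$, which is exactly the claimed bound.

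The main obstacle is not conceptual but arithmetic: one must equate the \emph{two-sided} tail to $\delta$ (the leading factor $2$ is precisely what converts $\log(1/\delta)$ into $\log(2/\delta)$), solve the quadratic cleanly, and apply $\sqrt{a+b}\leq\sqrt a+\sqrt b$ together with $\tfrac{2}{3}\leq 1$ so that the constants line up with the stated form. Keeping the direction of every inequality straight in the chain ``deviation $\geq$ threshold $\Rightarrow$ probability $\leq\delta$'' is the one step worth double-checking; beyond that the argument is routine.
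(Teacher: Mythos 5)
Your proof is correct. Note that the paper does not prove this lemma at all: it is stated in the ``Technical Lemmas'' appendix as a standard fact (and, despite the name ``Bennett,'' the stated form is the usual Bernstein-type tail with threshold $\sqrt{2\sigma^2 L/n}+L/n$, $L=\log(2/\delta)$), so your derivation fills in a citation rather than diverging from an existing argument. Your route --- center the variables, apply the two-sided Bernstein inequality $\P\Sp{\abs{S_n}\geq t}\leq 2\exp\Sp{-\frac{t^2/2}{n\sigma^2+t/3}}$ (itself obtained from the Bennett MGF bound $\E\Mp{e^{\lambda X_i}}\leq\exp\Sp{\sigma^2(e^\lambda-1-\lambda)}$, valid since $X_i\leq 1$, followed by Chernoff and the relaxation $e^\lambda-1-\lambda\leq\frac{\lambda^2/2}{1-\lambda/3}$), then invert by solving the quadratic $t^2-\frac{2L}{3}t-2Ln\sigma^2=0$ and loosening the root via $\sqrt{a+b}\leq\sqrt{a}+\sqrt{b}$ --- is the standard and correct way to establish exactly this statement. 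All the delicate points check out: the factor $2$ in the two-sided bound is what produces $\log(2/\delta)$, the tail bound is monotone in $t$ so exceeding the exact root preserves the bound, and your threshold $\frac{2L}{3n}+\sqrt{2\sigma^2L/n}$ is dominated by the lemma's $\frac{L}{n}+\sqrt{2\sigma^2L/n}$; in fact your argument proves the slightly sharper constant $\frac{2}{3}$ in the additive term.
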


\begin{lemma}[from \cite{maurer2009empirical}]
	\label{lemma:empirical_bernstein}
	Let $Z_1, \dots, Z_n$ with $n\geq 2$ be i.i.d. random variables bounded in $\Mp{0, H}$. Define $\bar{Z}=\frac{1}{n}\sum_{i=1}^{n}Z_i$ and $\hat{V}_n=\frac{1}{n}\sum_{i=1}^{n}\Sp{Z_i-\bar{Z}}^2$. Then, for any $\delta>0$, we have
	$$\P\Sp{\abs{\E\Mp{\bar{Z}}-\sum_{i=1}^{n}Z_i}\geq\sqrt{\frac{2\hat{V}_n\log\Sp{2/\delta}}{n-1}}+\frac{7\log\Sp{2/\delta}}{3\Sp{n-1}}}\leq\delta.$$
\end{lemma}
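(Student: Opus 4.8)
The plan is to recognize the statement as the empirical Bernstein inequality of Maurer and Pontil, whose proof splits into two ingredients: a true-variance concentration bound for the sample mean, and a concentration bound for the empirical variance around the true variance. First I would reduce to the unit-interval case by setting $Z_i' = Z_i/H \in \Mp{0,1}$: proving the inequality for $\{Z_i'\}$ and rescaling reproduces the claimed bound, since the leading term is homogeneous of degree one in the $Z_i$ through $\sqrt{\hat V_n}$, while the residual term is restored to order $H/(n-1)$ (matching the form actually used in the application in Lemma \ref{lemma:v_concen}). So it suffices to treat $Z_i \in \Mp{0,1}$.

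For the first ingredient, Bennett's inequality (Lemma \ref{lemma:bennet}) already gives, with probability at least $1-\delta/2$, that $\abs{\bar Z - \E\Mp{\bar Z}} \le \sqrt{2\var(Z)\log(4/\delta)/n} + \log(4/\delta)/n$, i.e. the same shape of bound but with the \emph{true} variance $\var(Z)$ in place of $\hat V_n$. The entire difficulty is therefore to pass from the unobservable $\var(Z)$ to the observable $\hat V_n$.

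For the second ingredient, I would establish the standard-deviation concentration $\sqrt{\var(Z)} \le \sqrt{\tfrac{n}{n-1}\hat V_n} + \sqrt{2\log(2/\delta)/(n-1)}$ with probability at least $1-\delta/2$. The key observation is that the map $(z_1,\dots,z_n)\mapsto \sqrt{V_n}$, where $V_n = \tfrac{1}{n-1}\sum_i (z_i-\bar z)^2$ is the unbiased sample variance, satisfies a bounded-difference property: on $\Mp{0,1}$, changing a single coordinate alters $\sqrt{V_n}$ by at most $\sqrt{1/(n-1)}$. Combined with Jensen's inequality $\E\Mp{\sqrt{V_n}} \le \sqrt{\E\Mp{V_n}} = \sqrt{\var(Z)}$ and a one-sided concentration of $\sqrt{V_n}$ below its mean, this yields the displayed lower bound on $\sqrt{V_n}$ in terms of $\sqrt{\var(Z)}$. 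I would then combine the two events by a union bound, substitute the standard-deviation bound into the Bennett estimate, and use $\sqrt{a+b}\le\sqrt a+\sqrt b$ together with $2\sqrt{ab}\le a+b$ to collapse the several $O(\log(1/\delta)/n)$ contributions into the single coefficient; noting $\hat V_n/(n-1) = s^2/n$ turns the leading term into exactly $\sqrt{2\hat V_n\log(2/\delta)/(n-1)}$.

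The main obstacle is the second ingredient and the constant bookkeeping it entails: the bare McDiarmid bound on $\sqrt{V_n}$ is slightly too loose to reach the sharp $7/3$ in the linear term, so I would instead invoke the self-bounding (Efron--Stein type) concentration exploited by Maurer and Pontil, and the careful merging of the linear remainders is where essentially all the technical care lives. Since the statement is cited directly from \cite{maurer2009empirical}, an equally acceptable route is simply to invoke their Theorem~4 after the rescaling step above.
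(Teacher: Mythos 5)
The paper offers no proof of this lemma at all --- it is imported verbatim from \cite{maurer2009empirical} --- so your fallback of simply invoking their Theorem~4 after rescaling to $\Mp{0,1}$ is exactly the paper's treatment, and your fuller sketch (Bennett's inequality with the true variance, plus the lower bound on the sample standard deviation via the entropy-method concentration for self-bounding functions, then a union bound and merging of the $O(\log(1/\delta)/n)$ remainders into the $7/3$ constant) is a faithful reconstruction of Maurer--Pontil's actual argument, including your correct observation that rescaling restores an $H$ factor in the linear term that the printed statement drops but that the application in Lemma~\ref{lemma:v_concen} uses. One quantitative quibble: with bounded-difference constant $1/\sqrt{n-1}$ per coordinate, McDiarmid gives a deviation for $\sqrt{V_n}$ of order $\sqrt{n}\cdot(n-1)^{-1/2}=O(1)$ rather than the needed $O(1/\sqrt{n-1})$, so it is too loose by a factor of $\sqrt{n}$, not ``slightly'' --- which is precisely why the self-bounding route you then invoke is essential rather than a constant-sharpening refinement.
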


\begin{lemma}
    \label{lemma:standard_var_bound}
    Let $X$ be arbitrary random variable bounded in $[a, b]$ for some $a, b\in\R$. Then, we have $\mathrm{Var}(X)\leq\frac{(b-a)^2}{4}$.
\end{lemma}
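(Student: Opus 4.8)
The plan is to exploit the elementary fact that the variance is the minimal mean-squared deviation, so that re-centering at any convenient point only inflates the quantity we bound against. Concretely, for every constant $c \in \R$ one has the identity $\E[(X-c)^2] = \mathrm{Var}(X) + (\E[X]-c)^2$, which yields the one-sided bound $\mathrm{Var}(X) \le \E[(X-c)^2]$ for all $c$. First I would choose $c$ to be the midpoint $c = (a+b)/2$ of the interval. With this choice the boundedness $X \in [a,b]$ forces $|X-c| \le (b-a)/2$ pointwise (surely), hence $(X-c)^2 \le (b-a)^2/4$ almost surely; taking expectations gives $\E[(X-c)^2] \le (b-a)^2/4$, and combining with the displayed inequality delivers $\mathrm{Var}(X) \le (b-a)^2/4$, exactly the claim.

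An alternative and slightly sharper route starts from the pointwise inequality $(X-a)(b-X) \ge 0$, which holds because $a \le X \le b$. Expanding gives $X^2 \le (a+b)X - ab$, and taking expectations yields $\E[X^2] \le (a+b)\E[X] - ab$. Writing $\mu := \E[X]$, one then obtains $\mathrm{Var}(X) = \E[X^2] - \mu^2 \le (\mu-a)(b-\mu)$, which is the tighter Bhatia--Davis-type estimate. A final application of the arithmetic--geometric mean inequality, $(\mu-a)(b-\mu) \le \bigl((b-a)/2\bigr)^2$ (valid since $\mu-a \ge 0$ and $b-\mu \ge 0$), recovers the stated bound.

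There is essentially no obstacle here: the result is a one-line consequence of the variance/mean-squared-deviation identity together with the trivial surely-valid bound at the midpoint. The only decision that matters is the centering constant $c$, and the single insight is that the midpoint equalizes the two endpoint distances, thereby minimizing the worst-case value $\max_{x\in[a,b]}(x-c)^2$; any other center would merely weaken the constant.
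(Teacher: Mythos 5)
Your proposal is correct. Note that the paper itself states this lemma without proof, treating it as a standard fact (it is Popoviciu's inequality on variances), so there is no paper proof to compare against; your write-up simply fills in an omitted routine argument. Both of your routes are valid: the first, centering at the midpoint $c=(a+b)/2$ and using $\mathrm{Var}(X)\leq\E\Mp{(X-c)^2}\leq\Sp{\frac{b-a}{2}}^2$, is the shortest complete argument; the second, via $(X-a)(b-X)\geq 0$ and then AM--GM, actually establishes the strictly stronger Bhatia--Davis bound $\mathrm{Var}(X)\leq(\mu-a)(b-\mu)$ before relaxing it, which is more than the paper needs (the lemma is only invoked, in Lemma~\ref{lemma:q_bar_bound}, to get $\bbV\Sp{\tilde{P}^k_{h,s,a},\overline{V}_{h+1,k}}\leq 4H^2$ from $\Norm{\overline{V}_{h+1,k}}_\infty\leq 2H$). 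Either argument suffices; no gaps.
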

    

\begin{lemma}
    \label{lemma:p_bound}
    For any $\delta>0$, with probability at least $1-\delta$, it holds for all $k, h, s, a, s'$ that
    $$\abs{\hat{P}^k_{h, s, a}(s')-P_{h, s, a}(s')}\leq\sqrt{\frac{4P_{h, s, a}(s')(1-P_{h, s, a}(s'))\log(2HS^2AK/\delta)}{n_k(h, s, a)+1}}+\frac{3\log(2HS^2AK/\delta)}{n_k(h, s, a)+1}.$$
\end{lemma}
\begin{proof}
    Let $\delta'=\frac{\delta}{HS^2AK}$ and fix $(k, h, s, a, s')$ such that $n_k(h, s, a)\geq 1$. Then, we have
    \begin{align*}
        &\P\Sp{\abs{\hat{P}^k_{h, s, a}(s')-P_{h, s, a}(s')}\geq\sqrt{\frac{4P_{h, s, a}(s')(1-P_{h, s, a}(s'))\log(2/\delta')}{n_k(h, s, a)+1}}+\frac{3\log(2/\delta')}{n_k(h, s, a)+1}}\\
        \leq& \P\left(\abs{\tilde{P}^k_{h, s, a}(s')-P_{h, s, a}(s')}\geq\sqrt{\frac{4P_{h, s, a}(s')(1-P_{h, s, a}(s'))\log(2/\delta')}{n_k(h, s, a)+1}}+\frac{3\log(2/\delta')-1}{n_k(h, s, a)+1}\right)\\
        \leq& \P\Sp{\abs{\tilde{P}^k_{h, s, a}(s')-P_{h, s, a}(s')}\geq \sqrt{\frac{4P_{h, s, a}(s')(1-P_{h, s, a}(s'))\log(2/\delta')}{n_k(h, s, a)+1}}+\frac{2\log(2/\delta')}{n_k(h, s, a)+1}}\\
        \leq& \P\Sp{\abs{\tilde{P}^k_{h, s, a}(s')-P_{h, s, a}(s')}\geq \sqrt{\frac{2P_{h, s, a}(s')(1-P_{h, s, a}(s'))\log(2/\delta')}{n_k(h, s, a)}}+\frac{\log(2/\delta')}{n_k(h, s, a)}}\tag{Since $n+1\leq 2n$ for $n\geq 1$}\\
        \leq& \delta'=\frac{\delta}{HS^2AK}.\tag{By Lemma \ref{lemma:bennet}, the Bennet's inequality}
    \end{align*}
    Then, the proof is complete by taking a union bound over all possible $(k, h, s, a, s')$.
\end{proof}
\section{Numeric Simulations}
\label{sec:simulation}
In this section, we empirically compare RLSVI \cite{russo2019worst}, UCBVI \cite{azar2017minimax} and our algorithm \algoname\ on the famous deep sea environment, which is a tabular environment frequently used to test an algorithm's ability to do efficient exploration \cite{osband2018randomized, osband2017deep, tan2020parameterized}.

Deep sea, as shown in Figure \ref{fig:deep_sea}, is a grid-like deterministic environment with $N\times N$ cell states, action space $\Bp{0, 1}$ and action mask $M_{ij}\sim\mathsf{Bernoulli}\Sp{0.5}$, $\Sp{i, j}\in\mathcal{S}$, whose values are sampled when initializing the environment. At each cell $\Sp{i, j}$. Action $M_{ij}$ represents going ``right'', which leads the agent to the lower right cell, and $1-M_{ij}$ represents going ``left'', which leads the agent to the lower left cell. An episode of this environment will end after $N$ steps. When going ``left'' or going ``right'' at the off-diagonal, the agent will receive 0 reward; when going ``right'' along the diagonal before reaching the lower right corner, the agent will receive negative reward $-\frac{0.01}{N}$. Finally, when reaching the lower right corner, depending on the environment initialization, the agent will either receive reward $+1$ or $-1$. In our experiment, we set this to $+1$, which results in an obvious optimal policy ``always going right'' with total reward 0.99 per episode.

\begin{wrapfigure}{r}{0.5\linewidth}
    \centering
	\includegraphics[scale=0.25]{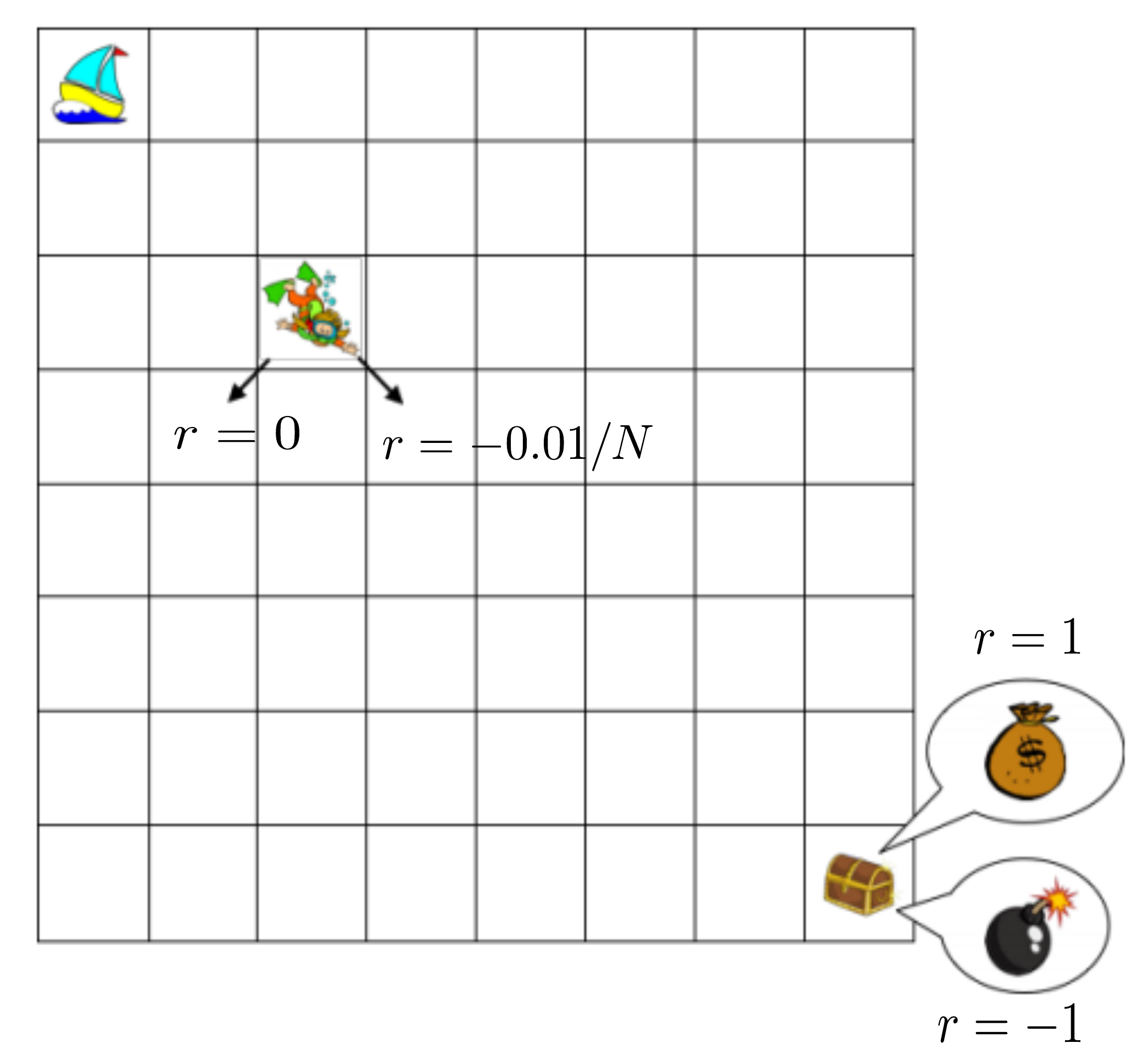}
	\caption{An example deep sea environment with $N=8$ \cite{osband2017deep}.}
	\vspace{-20pt}
	\label{fig:deep_sea}
\end{wrapfigure}


The experiment results are shown in Figure \ref{fig:exp_results}.\footnote{Bonuses for all three algorithms are scaled down from the theoretical values by a factor of $7\times 10^4$ since without scaling, none of them can learn anything even in the deep sea with $N=5$.} From the plots, we can see that in both settings, \algoname\ performs significantly better than RLSVI as predicted by our theory. Specifically, because of the instability incurred by the independent random seeds and large perturbation magnitude, RLSVI almost never reaches the lower right corner in both settings and thus incurs linear regret. On the other hand, \algoname\ obtains a much lower sub-linear regret because it can explore consistently with the single random seed.



Meanwhile, in both settings, \algoname\ performs comparably with the UCBVI, which is expected since both algorithms achieve the minimax lower bound and our analysis does not indicate that one is better than the other. 

\begin{figure}[ht]
    \centering
    \includegraphics[width=\linewidth]{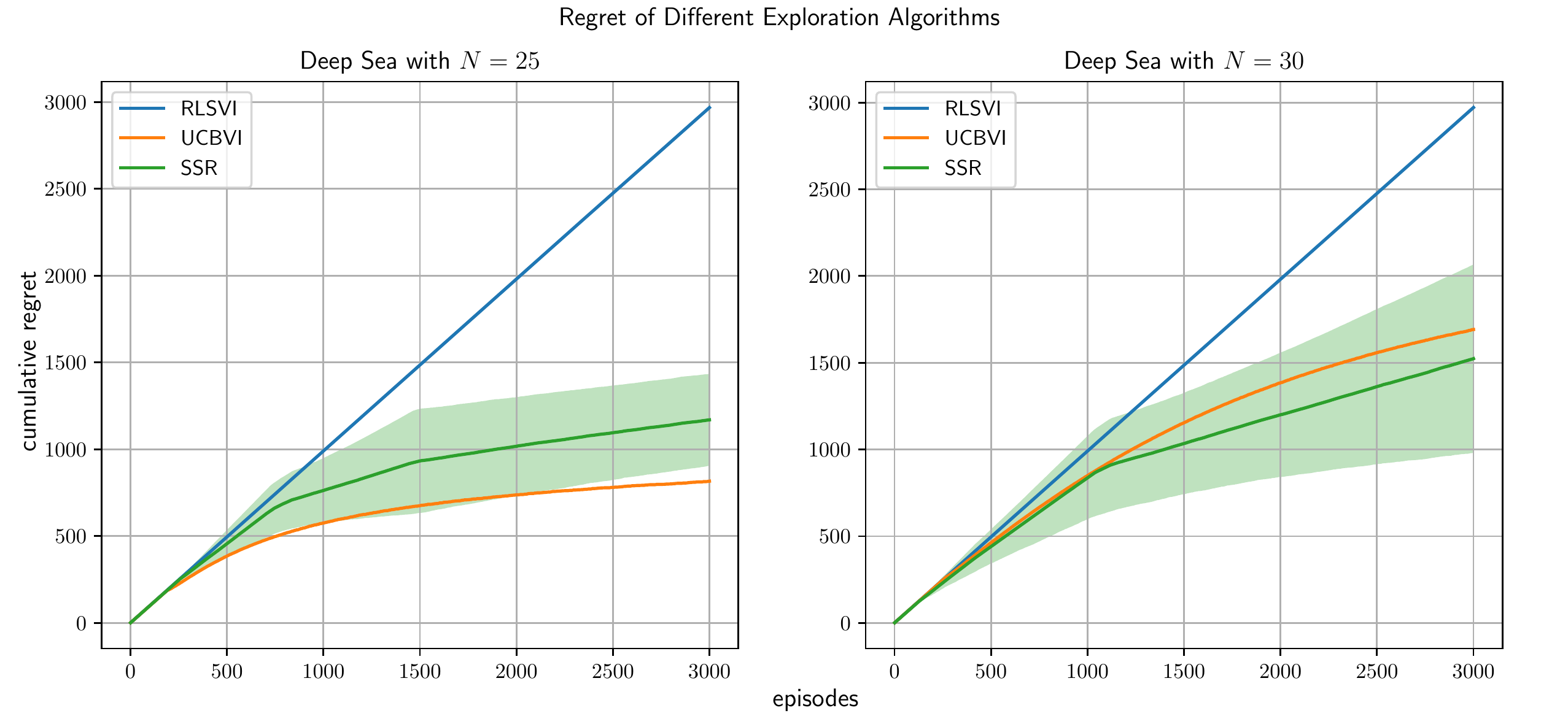}
    \caption{Empirical evaluation of RLSVI, UCBVI and \algoname\ in deep sea environments with $N=25$ and $N=30$. The results are averaged over 10 repeated trials and the shaded area represents the standard deviation. For simplicity, we use Hoeffding-type bonus for both UCBVI and \algoname.}
    \label{fig:exp_results}
\end{figure}


Finally, we also do an ablation study to show that the better performance of \algoname\ over the RLSVI indeed comes from the single seed randomization instead of smaller noise magnitude. In particular, we run both algorithms in a deep sea environment with $N=25$ and apply the same noise magnitude, whose results are shown in Figure \ref{fig:ablation}. We can see that although using the same noise magnitude, \algoname\  still significantly outperforms RLSVI.

\begin{figure}[ht]
    \centering
    \includegraphics[width=85mm]{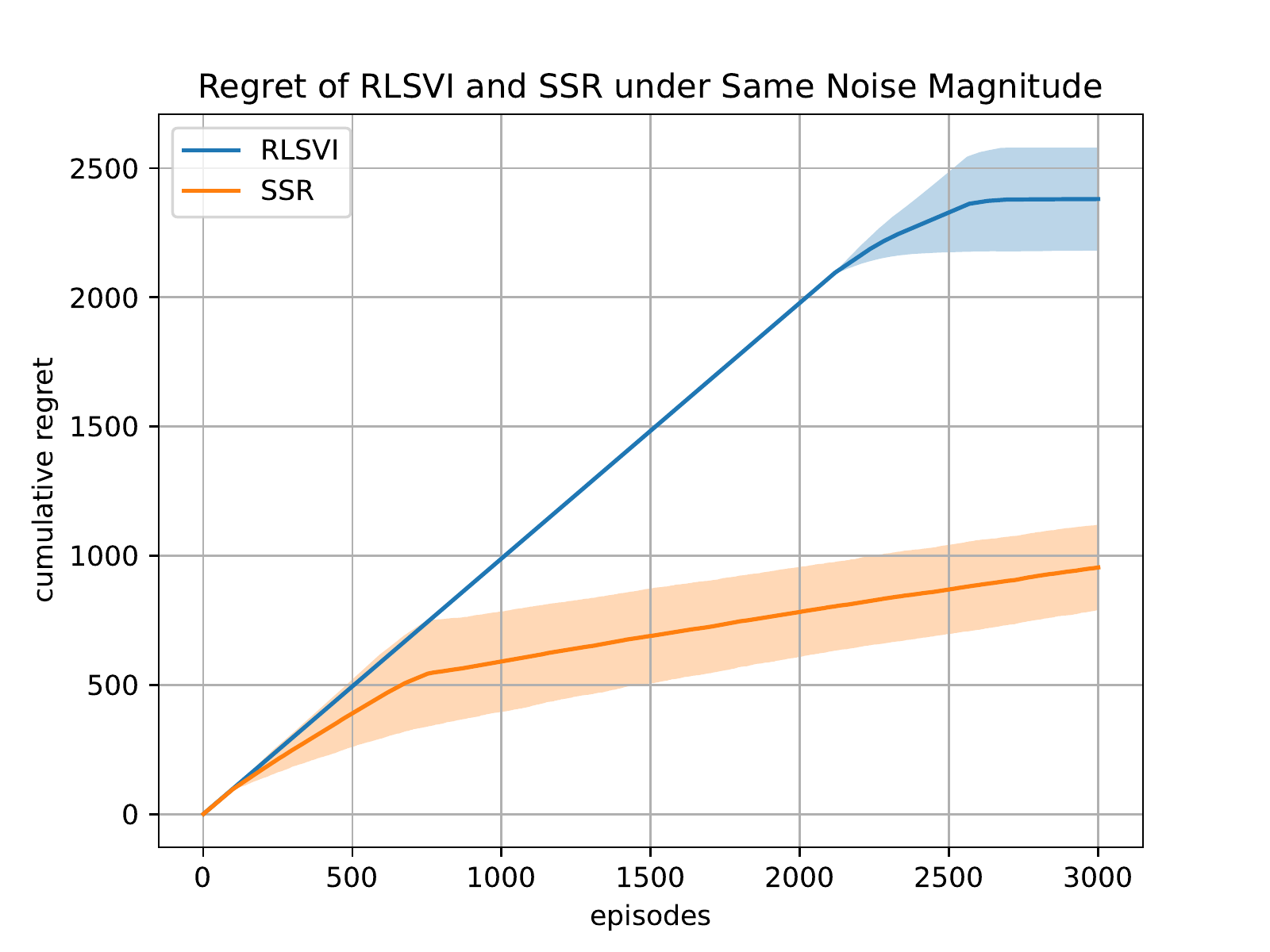}
    \caption{Empirical evaluation of RLSVI and \algoname\ in deep sea environments with $N=25$, where both algorithms use the same noise magnitude.}
    \label{fig:ablation}
\end{figure}

\end{document}